\newtheorem{thm}{Theorem}[section]
\newtheorem{lem}{Lemma}[section]
\newtheorem{cor}{Corollary}[section]
\newtheorem{asmp}{Assumption}[section]
\newtheorem{defn}{Definition}[section]
\def\##1\#{\begin{align}#1\end{align}}
\def\$#1\${\begin{align*}#1\end{align*}}
\def\ceil#1{\lceil #1 \rceil}
\def\1{\bm{1}}
\def\tw{\widetilde{\w}}
\def\sV{{\mathbb{V}}}
\def\AM{{\mathcal A}}
\def\BM{{\mathcal B}}
\def\CM{{\mathcal C}}
\def\DM{{\mathcal D}}
\def\EM{{\mathcal E}}
\def\GM{{\mathcal G}}
\def\FM{{\mathcal F}}
\def\IM{{\mathcal I}}
\def\HM{{\mathcal H}}
\def\JM{{\mathcal J}}
\def\KM{{\mathcal K}}
\def\NM{{\mathcal N}}
\def\OM{{\mathcal O}}
\def\RM{{\mathcal R}}
\def\SM{{\mathcal S}}
\def\VM{{\mathcal V}}
\def\RB{{\mathbb R}}
\def\EB{{\mathbb E}}
\def\VB{{\mathbb V}}
\def\HVB{\widehat{\mathbb V}}
\def\PB{{\mathbb P}}
\def\HPB{\widehat{\mathbb P}}
\def\ph{\mbox{\boldmath$\phi$\unboldmath}}
\def\argmax{\mathop{\rm argmax}}
\def\argmin{\mathop{\rm argmin}}
\def\A{{\boldsymbol A}}
\def\B{{\boldsymbol B}}
\def\d{{\boldsymbol d}}
\def\e{{\boldsymbol e}}
\def\H{{\boldsymbol H}}
\def\I{{\boldsymbol I}}
\def\V{{\boldsymbol V}}
\def\v{{\boldsymbol v}}
\def\w{{\boldsymbol w}}
\def\x{{\boldsymbol x}}
\def\Z{{\boldsymbol Z}}
\def\0{{\boldsymbol 0}}
\def\1{{\boldsymbol 1}}
\def\Bphi{{\boldsymbol \phi}}
\def\TBphi{\widetilde{{\boldsymbol \phi}}}
\def\Bf{{\boldsymbol f}}
\def\Bmu{{\boldsymbol \mu}}
\def\BH{{\boldsymbol H}}
\def\Btheta{{\boldsymbol \theta}}
\def\Bdelta{{\boldsymbol \delta}}
\def\Beps{{\boldsymbol \varepsilon}}
\def\Bpsi{{\boldsymbol \psi}}
\def\est{{ {\boldsymbol \theta}}}
\def\estau{{ {\tau}}}
\def\overoptQ{\overline{Q}}
\def\optQ{\widehat{Q}}
\def\pesQ{\widecheck{Q}}
\def\overpesQ{\underline{Q}}
\def\overoptV{\overline{V}}
\def\optV{\widehat{V}}
\def\pesV{\widecheck{V}}
\def\overpesV{\underline{V}}
\def\BoveroptV{\overline{\V}}
\def\BoptV{\widehat{\V}}
\def\BpesV{\widecheck{\V}}
\def\BoverpesV{\underline{\V}}
\def\TOM{\widetilde{\mathcal O}}
\def\ttau{\widetilde{\tau}}
\def\TH{\widetilde{{\boldsymbol H}}}
\def\tw{\widetilde{w}}
\def\TRM{\widetilde{\mathcal R}}
\def\HR{\widehat{R}}
\def\kl{k_{\mathrm{last}}}
\begin{document}

\title{ \LARGE Variance-aware robust reinforcement learning with linear function approximation under heavy-tailed rewards}    

\author{
	Xiang Li\thanks{School of Mathematical Sciences, Peking University;  E-mail: \texttt{lx10077@pku.edu.cn}. The work was done when Xiang Li was a visiting student in the IVGS program at the Department of Statistical Sciences, University of Toronto.}\and 
	Qiang Sun\thanks{Department of Statistical Sciences, University of Toronto; E-mail: \texttt{qiang.sun@utoronto.ca}. }  
}
\date{ }

\maketitle

\vspace{-0.25in}

\begin{abstract}

This paper presents two algorithms, AdaOFUL and VARA, for online sequential decision-making in the presence of heavy-tailed rewards with only finite variances. For linear stochastic bandits, we address the issue of heavy-tailed rewards by modifying the adaptive Huber regression and proposing AdaOFUL. AdaOFUL achieves a state-of-the-art regret bound of $\widetilde{\mathcal{O}}\big(d\big(\sum_{t=1}^T \nu_{t}^2\big)^{1/2}+d\big)$ as if the rewards were uniformly bounded, where $\nu_{t}^2$ is the observed conditional variance of the reward at round $t$, $d$ is the feature dimension, and $\widetilde{\mathcal{O}}(\cdot)$ hides logarithmic dependence. Building upon AdaOFUL, we propose VARA for linear MDPs, which achieves a tighter variance-aware regret bound of $\widetilde{\mathcal{O}}(d\sqrt{H\mathcal{G}^*K})$. Here, $H$ is the length of episodes, $K$ is the number of episodes, and $\mathcal{G}^*$ is a smaller instance-dependent quantity that can be bounded by other instance-dependent quantities when additional structural conditions on the MDP are satisfied. Our regret bound is superior to the current state-of-the-art bounds in three ways: (1) it depends on a tighter instance-dependent quantity and has optimal dependence on $d$ and $H$, (2) we can obtain further instance-dependent bounds of $\mathcal{G}^*$ under additional structural conditions on the MDP, and (3) our regret bound is valid even when rewards have only finite variances, achieving a level of generality unmatched by previous works. Overall, our modified adaptive Huber regression algorithm may serve as a useful building block in the design of algorithms for online problems with heavy-tailed rewards.
\end{abstract}

\tableofcontents

\section{Introduction}\label{sec:intro}

In reinforcement learning (RL), an agent interacts with an environment by taking actions and receiving rewards. The goal of the agent is to learn a policy that maximizes its expected cumulative reward over time. Since the agent does not initially know the optimal policy, it needs to explore different actions and learn from the feedback it receives \citep{sutton2018reinforcement}.  Reinforcement learning has demonstrated remarkable empirical successes in various applications, including robotics \citep{lillicrap2015continuous}, dialogue systems \citep{li2016deep}, and Go play \citep{silver2016mastering}. Despite its empirical successes, the theoretical understanding of RL is still in its early stages. An important theoretical goal is to characterize  the regret of an agent, defined as the  difference between the cumulative reward obtained by the agent and the cumulative reward that could have been obtained by following the optimal policy from the beginning. In other words, regret  measures the suboptimality of the agent.

Regret analysis in reinforcement learning (RL) has mainly focused on tabular Markov decision processes (MDPs) with finite and small state and action spaces~\citep{azar2017minimax,sidford2018near,dann2019policy,yang2019sample,tossou2019near,zhang2020almost,zanette2020learning,he2021nearly}. 
However, many real-world applications have high-dimensional or even infinite-dimensional state and action spaces, where function approximation is necessary for computational tractability and better generalization. Linear function approximation is a widely studied technique in RL, particularly in the context of linear MDPs, where the value or policy function can be parameterized linearly~\citep{yang2020reinforcement,jin2020provably,jin2020reward,wagenmaker2022reward,zanette2020learning,ayoub2020model,zhou2021nearly}. In this linear setting,  \cite{jin2020provably} presented the first provable RL algorithm  that achieves  $\TOM(\sqrt{d^3H^4K})$ regret with $H$ being the episode length and $K$ the number of episodes, which was then improved to the optimal $\TOM(\sqrt{d^2H^3K})$ regret by~\cite{he2022nearly}. Recently, there has been a shift in focus from designing algorithms with worst-case regrets to those with variance-aware regrets \citep{pananjady2020instance,khamaru2021temporal,li2021polyak,yin2021towards,min2021variance}. Variance-aware regrets depend on the variances of rewards and value functions and provide stronger guarantees than worst-case bounds by characterizing problem-dependent performances across different problem instances.


However, the aforementioned works assume that rewards are either uniformly bounded or sub-Gaussian, limiting their applicability to real-world problems with heavy-tailed behaviors. For instance, stock returns in financial markets \citep{cont2001empirical, hull2012risk}, microarray data analysis \citep{posekany2011biological}, and advertiser values in online advertising \citep{arnosti2016adverse} exhibit heavy-tailed behaviors. To address this limitation, some researchers truncate rewards to achieve sub-linear worst-case regret bounds for multi-arm bandits \citep{bubeck2013bandits}, linear bandits \citep{medina2016no,shao2018almost,xue2021nearly}, and tabular MDPs \citep{zhuang2021no} with finite variances. However, these truncation-based methods have estimation errors that depend on the absolute moments of observations, rather than their central moments, and thus are non-diminishing in the noiseless case, indicating their suboptimality.


\paragraph{Our contributions} 
This paper proposes new algorithms for achieving tight variance-aware regret bounds that depend on the central moments instead of absolute moments,  for linear bandits and linear MDPs with heavy-tailed awards that have only finite variances. We use the term ``heavy-tailed rewards'' throughout the paper to refer to such rewards for simplicity.  
Our method is motivated by adaptive Huber regression~\citep{sun2020adaptive, sun2021we}, which was originally proposed for analyzing offline independently and identically distributed (i.i.d.) data. It uses the (pseudo-) Huber loss  to estimate the unknown coefficient  with a universal robustification parameter.  We adapt this method for online bandits and carefully choose different robustification parameters to handle non-i.i.d. data. The resulting algorithm, called AdaOFUL (short for \underline{Ada}ptive Huber regression based \underline{OFUL}), achieves the state-of-the-art regret bound $ \widetilde{\OM}\left(
d \sqrt{\sum_{t \in [T]} \nu_t^2} +d  \right)$ for linear bandits with heavy-tailed rewards, where $\nu_{t}^2$ is the observed conditional variance of the random reward at step $t$ and $d$ is the feature dimension.
Such a variance-aware regret bound has only been obtained in the literature of linear bandits with sub-Gaussian or uniformly bounded rewards~\citep{kirschner2018information, zhou2022computationally}. In contrast, truncation-based methods are suboptimal due to their estimation errors that depend on absolute moments instead of central moments. Our regret bound depends on the central moments instead, and is thus tighter. 


Building upon AdaOFUL, we then propose the \underline{V}ariance-\underline{A}ware \underline{R}egret via the \underline{A}datptive Huber regression (VARA) algorithm for linear MDPs with heavy-tailed rewards. 
Our algorithm improves upon state-of-the-art methods \citep{wagenmaker2021first,hu2022nearly,he2022nearly} by using finer variance estimators, resulting in a regret bound of $\TOM(d\sqrt{H \GM^* K})$, where $H$ is the episode length and $\GM^*$ is a variance-aware quantity bounded by the sum of weighted per-step conditional variances. Our regret bound is superior to the current state-of-the-art bounds in three ways. First, it depends on a tighter instance-dependent quantity $\GM^*$ and has optimal dependence on $d$ and $H$. Second, assuming additional structural conditions on the underlying MDP, we can obtain further instance-dependent bounds of $\GM^*$, including range-dependent and concentrability-dependent bounds. Third, our regret bound $\TOM(d\sqrt{H \GM^* K})$ is valid even when rewards have only finite variances, which achieves a level of generality that is unmatched by previous works. 

Our findings indicate that heavy-tailed rewards do not pose a limitation for developing online RL algorithms with linear function approximations that are provably efficient. Our proposed modified adaptive Huber regression algorithm can be used as a general approach to adapt existing online algorithms designed for light-tailed rewards to handle heavy-tailed ones while maintaining tight dependence on variance for regret bounds.

\paragraph{Outline}
The rest of the paper proceeds as  follows. Section~\ref{sec:notation}  introduces the heavy-tailed linear bandits and linear MDPs. We state our main results for heavy-tailed linear bandits in Section~\ref{sec:linear-bandit} and for linear MDPs in Section~\ref{sec:mdp}. We review related work in Section~\ref{sec:related} and conclude in Section~\ref{sec:conclusion}. All proofs are collected in the appendix.

\paragraph{Notation}
We use $\|\cdot\|$ to denote the $\ell_2$-norm in $\RB^d$, and $\mathrm{Ball}_d(B)$ {the $\ell_2$-norm ball in $\RB^d$ with radius $B > 0$.}
For a positive definite matrix $\H \in \RB^{d \times d}$, let $\|\x\|_{\H} = \sqrt{\x^\top\H \x}$ for a vector $\x \in \RB^d$.
For two semidefinite positive matrices $\H_1, \H_2$, we denote $\H_1 \succeq \H_2$ if $\H_2 - \H_1$ is semidefinite positive.
For an integer $K \in \mathbb{N}^+$, let $[K]:= \{1, 2,\cdots, K\}$. For a set $\cA$, $|\cA|$ denotes its cardinality. 
For real numbers $a \le b$ and $x \in \RB$, we use $x_{[a, b]} := \max\{a, \min \{x, b\}\}$ to denote the projection of $x$ onto the closed interval $[a, b]$. SOTA is short for state-of-the-art.

\section{Preliminaries}\label{sec:notation}

In this section, we introduce both stochastic linear bandits and linear MDPs with heavy-tailed rewards.

\subsection{Heavy-tailed Stochastic Linear Bandit}

Below, we define heavy-tailed stochastic linear bandits, where the mean-zero random noises $\varepsilon_t$ have only bounded variances. We emphasize that in linear bandits, data are collected in an adaptive manner, and therefore, the distribution of $\varepsilon_t$ depends on $\Bphi_t$. Moreover, the choice of $\Bphi_t$ depends on all past observations ${(\Bphi_{s}, y_{s}, \nu_s)}_{s < t}$.

\begin{defn}[Heavy-tailed stochastic linear bandit]\label{def:heavy}
	Let $\{\DM_t\}_{t\ge 1}$ denote a fixed sequence of decision sets and $\{ \FM_t\}_{t \ge 1}$ a filtration.
	At round $t$,  the agent chooses $\Bphi_t \in \DM_{t}$ and then observes the reward $y_t$ and its conditional variance $\nu_t^2$.
	We assume $y_t = \langle \Bphi_t,  \Btheta^*\rangle + \varepsilon_t$ where $\Btheta^* \in \RB^d$ is a vector unknown to the agent and $\varepsilon_t \in \RB$ is a martingale difference random noise such that $\EB[\varepsilon_t|\FM_{t-1}] = 0$ and  $\EB[\varepsilon_t^2|\FM_{t-1}] =\nu_t^2$.
	Both $\nu_t$ and $\Bphi_t$ are $\FM_{t-1}$-measurable and $\|\Bphi_t\| \le L$.
	We assume $\|\Btheta^*\| \le B$ with $B$ known {\it  a priori}.
The agent aims  to minimize the regret, formally defined as
	\begin{equation}
		\label{eq:regret}
		\mathrm{Reg}(T) := \sum_{t=1}^T \left[  \sup_{\Bphi \in \DM_t} \langle \Bphi, \Btheta^* \rangle - \langle \Bphi_t, \Btheta^* \rangle  \right].
	\end{equation}
\end{defn}

\subsection{Linear MDP with Heavy-tailed Rewards}

An episodic finite horizon MDP is denoted by a tuple $\mathcal{M} = (\SM, \AM, H, \{r_h\}_{h \in[H]}, \{\PB_h\}_{h \in [H]})$ where $\SM$ is the state space, $\AM$  the action space, $H \in \mathbb{Z}^+$  the length of each episode, $\PB_h: \SM \times \AM \to \Delta(\SM)$  the transition probability function,  and $r_h: \SM \times \AM \to \RB$  the expected reward function. 
A linear MDP   assumes that  both the transition probability and the expected reward are  linear in a known state-action feature map $\Bphi(\cdot, \cdot) \in \RB^d$~\citep{bradtke1996linear,melo2007q,yang2019sample,jin2020provably}.

\begin{defn}[Linear MDP]
	$\mathcal{M}$ is called a time-inhomogeneous linear MDP, if there
	exist some known feature map $\Bphi(s, a): \SM \times \AM  \to \mathrm{Ball}_d(1)$, unknown signed measures $ \{\Bmu_h^*\}_{h \in [H]} \subseteq \RB^{d \times |\SM|}$, and unknown coefficients $\{ \Btheta_h^*\}_{h \in [H]} \subseteq \mathrm{Ball}_d(W)$ such that
	\[
	r_h(s, a) =  \langle \Bphi(s, a), \Btheta_h^* \rangle,
	\PB_h(\cdot|s, a) = \langle \Bphi(s, a), \Bmu_h^*(\cdot) \rangle,
	\]
	for any $(s, a) \in \SM \times \AM, ~  h \in [H],$
	where $\| \Bmu_h^*(\SM)\| := \| \sum_{s \in \SM} \Bmu_h^*(s)\|  \le \sqrt{d}$ for all $h \in [H]$.
\end{defn}

For a time-inhomogeneous MDP, we denote its deterministic and time-dependent policy by $\pi = \{\pi_h\}_{h \in [H]}$. 
Let $\{(s_h, a_h)\}_{h \in [H]}$ be state-action pairs such that $a_h = \pi_h(s_h)$ and $s_{h+1} \sim \PB_h(\cdot|s_h, a_h)$.
Define the occupancy measure for the policy $\pi$ at the $h$-th round  by $d_h^{\pi}(s, a) = \PB^{\pi}(s_h=s, a_h=a|s_1)$ where $(a_1, s_2, a_2, \cdots, s_h, a_h)$ is a trajectory starting from $s_1$ and following the policy $\pi$.
The state-action function $Q_h^\pi(\cdot,\cdot)$ and value function $V_h^\pi(\cdot)$ at the $h$-th round are defined as 
\begin{gather*}
	Q_h^\pi(\cdot,\cdot) = \EB [\sum_{i=h}^H r_i(s_i, a_i)|(s_h, a_h) =( \cdot, \cdot)]
\end{gather*}
and $	V_h^\pi(\cdot) = \sum_{a \in \AM} 	Q_h^\pi(\cdot,a)$ respectively. 
For any value function $V$, write 
\[
[\PB_h V](s, a) = \EB_{s' \sim \PB_h(\cdot|s, a)} V(s'),~~~ [\VB_h V](s, a) = [\PB_h V^2](s, a) - [\PB_h V]^2(s, a).
\]
With a slight abuse of notation, let $[\PB_h R_{h}](s, a)$ and $[\VB_hR_h](s, a)$ denote the expectation and variance of the random reward $R_h(s, a)$ at the $h$-th round given state-action pair $(s,a)$.
We consider linear MDPs with heavy-tailed random rewards that satisfy the following assumptions. 

\begin{asmp}[Realizable reward]\label{asmp:realizable}
We assume that the following hold. 
\begin{enumerate}
\item For all $(s, a) \in \SM \times \AM$ and $h \in [H]$, the random reward $R_h(s,a)$ is independent of $s_{h+1}(s,a)$, where $s_{h+1}(s,a) \sim \PB_h(\cdot | s,a)$ represents the next state transitioned from $(s,a)$ at the $h$-th round. 

\item There exists known feature maps $\TBphi(s, a): \SM \times \AM  \to \mathrm{Ball}_d(1)$ and unknown coefficients  $\{\Bpsi_h^*\}_{h \in [H]}  \subseteq \mathrm{Ball}_d(W)$ so that $[\PB_h R_{h}^2](s, a) = \langle\TBphi(s, a), \Bpsi_h^* \rangle$ for all $(s, a) \in \SM \times \AM$ and $h \in [H]$. 
  
\end{enumerate}
\end{asmp}

\begin{asmp}[Bounded variance]
	\label{asmp:H-V}
We assume that the following hold. 
	\begin{enumerate}
		\item There exist known constants $\sigma_{R}, \sigma_{R^2} > 0$ such that $[\VB_hR_h](s, a)  \le \sigma_{R}^2$ and $[\VB_hR_h^2](s, a)  \le \sigma_{R^2}^2$ for all $(s, a) \in \SM \times \AM$ and $h \in [H]$.
		\item 	For any policy $\pi$, we have $0 \le \EB R_\pi \le \mathcal{\HM}$ and $\Var(R_{\pi}) \le \VM^2$ where $R_\pi = \sum_{h =1}^H R_h(s_h, a_h)$ denotes the sum of random rewards along the trajectory following  $\pi$.
	\end{enumerate}
\end{asmp}

Assumption \ref{asmp:realizable} assumes that the random reward at each round is independent of future states and its second moment can be realized using a known feature map. Under this assumption, linear MDPs can recover tabular MDPs by setting the size of the state-action space as $d = |\SM||\AM|$ and using the canonical basis $\Bphi(s, a) = \TBphi(s, a) = \e_{(s, a)}$ in $\RB^d$. Assumption \ref{asmp:H-V} places upper bounds on the means of the cumulative rewards, as well as on the variances of the random rewards, the squared random rewards, and the cumulative rewards. This assumption generalizes the uniformly bounded reward assumption found in the literature, which requires $0 \le \sup_{(s, a) \in \SM \times \AM} \sup_{h \in [H]}R_h(s, a) \le 1$. As far as we know, Assumption \ref{asmp:H-V} is the weakest moment condition on random rewards in the RL literature.

\paragraph{Learning protocol} 
Let $\FM_{h,k}$ denote the $\sigma$-field generated by all random variables up to, and including, the $h$-th round and $k$-th episode. At the beginning of each episode $k$, the environment selects the initial state $s_{1, k}$. The agent proposes a policy $\pi_k = \{ \pi_h^k \}_{h \in [H]}$ based on the history up to the end of episode $k-1$, and then executes $\pi_k$ to generate a new trajectory $ \{ (s_{h, k}, a_{h, k}, r_{h, k}) \}_{h \in [H]}$. Here  $a_{h, k} = \pi_h^k(s_{h, k}), r_{h, k} \sim R_h(s_{h, k}, a_{h, k})$ and $s_{h+1, k}\sim \PB(\cdot|s_{h, k}, a_{h, k})$.  The agent aims to minimize the cumulative regret over $K$ episodes, given by:
\[
\mathrm{Reg}(K) := \sum_{k=1}^K (V_1^*-V_1^{\pi_k})(s_{1, k}).
\]

\section{Variance-aware Regret for Linear Bandits}
\label{sec:linear-bandit}

In this section, we introduce the AdaOFUL algorithm for linear bandits and demonstrate its ability to achieve state-of-the-art variance-aware regret even when faced with heavy-tailed rewards, as though the rewards were uniformly bounded.

\subsection{Algorithm Description}


This section presents the AdaOFUL algorithm for heavy-tailed linear bandits. The AdaOFUL algorithm is given in Algorithm \ref{algo:adap}.  AdaOFUL follows the principle of \underline{O}ptimism in the \underline{F}ace of \underline{U}ncertainty (OFU)~\citep{abbasi2011improved} to solve the heavy-tailed heterogeneous linear bandit problem.
At each round $t$, it maintains  a confidence set
   \begin{equation}
    \label{eq:CI}
    \CM_{t-1} := \left\{ \est \in \mathrm{Ball}_d(B):
    \| \est	- \est_{t-1}\|_{\H_{t-1}} \le  \beta_{t-1} \right\}
    \end{equation} 
such that $\Btheta^* \in \CM_{t}$ uniformly for all $t \ge 1$ with high probability when  the exploration radius $\beta_{t-1}$ is properly chosen.  
Unlike the standard OFUL algorithm \citep{abbasi2011improved} which directly selects the most optimistic estimator $\widetilde\Btheta_t$ to make an arm selection $\Bphi_{t}$, AdaOFUL uses adaptive Huber regression to compute a new estimator $\Btheta_t$ that takes into account the heavy-tailed rewards. The agent then selects the arm $\Bphi_{t}$ that maximizes the inner product $\langle \Bphi, \Btheta \rangle$ with the new estimator $\Btheta_t$. After playing the selected arm, the agent observes the reward $y_t$ and its conditional variance $\nu_t$. The last step of round $t$ updates the exploration radius $\beta_t$ and the shape matrix $\H_t$ for the confidence set construction in the next round.

\begin{algorithm}[t!]
	\DontPrintSemicolon
	\SetAlgoLined
	\SetNoFillComment
	\SetKwInOut{Require}{Require}
	\SetKwInOut{Ini}{Initialization}
\Ini{$\H_0 = \lambda\I, \Btheta_0 = \0, \beta_0 = \sqrt{\lambda} B, c_0 = \frac{1}{6\sqrt{3\log\frac{2T^2}{\delta}}}, c_1 = \frac{1}{42 \cdot\log\frac{2T^2}{\delta}}, \sigma_{\min} = \frac{1}{\sqrt{T}}$.}
\For{$t=1$ {\bfseries to} $T$}{
    Construct the confidence set $\CM_{t-1}$ as in~\eqref{eq:CI} \;
    
    Solve  $(\Bphi_{t}, \cdot) = \argmax_{ \Bphi \in \DM_{t}, \Btheta \in \CM_{t-1}} \langle \Bphi, \Btheta \rangle$.\;
    
    Play $\Bphi_{t}$ and observe $(y_t, \nu_t)$.\;
    
    Set $\sigma_t, w_t$ and $\tau_t$ according to~\eqref{eq:parameters} and record $\{\sigma_s, w_s, \tau_s: 1\leq s\leq t \}$.\;
    
    Compute $\Btheta_{t}$ by minimizing~\eqref{eq:Theta_T}.\;
    
    Define $\beta_t$ as in~\eqref{eq:beta} and set $ \H_t = \H_{t-1} + \frac{\Bphi_t}{\sigma_t}\frac{\Bphi_t^\top}{\sigma_t}$.
	}
	\caption{Adaptive Huber regression based OFUL (AdaOFUL) }
	\label{algo:adap}
\end{algorithm}


\paragraph{Pseudo-Huber regression}

The pseudo-Huber loss~\citep{hastie2009elements, sun2021we} is  defined as 
\begin{equation}
	\label{eq:l}
	\ell_{\tau}(x) = \tau (\sqrt{\tau^2 + x^2} - \tau), 
\end{equation}
which is a smooth approximation to the well-known Huber loss~\citep{huber1964robust}. Similar to the Huber loss, the pseudo-Huber loss resembles a quadratic function for small values of $|x|$ and is approximately linear  when $x$ is large in magnitude, making the loss strongly convex when close to the origin and less sensitive to changes in the tails. 
The parameter $\tau$ controls the balance between the quadratic and linear regions and is referred to as the robustification parameter by \cite{sun2020adaptive} in the case of the Huber loss. Since the value of the robustification parameter needs to be adaptive to the data for an optimal tradeoff between robustness and unbiasedness, we shall also refer to the pseudo-Huber regression with a data-adaptive $\tau$ as adaptive pseudo-Huber regression or simply adaptive Huber regression, in line with \citep{sun2020adaptive}.

To compute the pseudo-Huber estimator $\Btheta_{t}$ for $\Btheta^*$, given the history $\{ (\Bphi_s, y_s, \nu_s) \}_{s \in [t]}$ up to time $t$, we solve the following convex optimization problem~\citep{sun2021we}:
\begin{gather}\label{eq:Theta_T}
	\est_{t}  := \argmin_{\Btheta \in \mathrm{Ball}_d(B)} L_t(\est)  ~\text{with} ~
		L_t(\est) :=
		\frac{\lambda}{2}\| \Btheta\|^2 + \sum_{s=1}^t 
		\ell_{\tau_s}\left(  \frac{y_s -  \langle \Bphi_s, \Btheta \rangle }{\sigma_s}  \right)
\end{gather}
where $\sigma_t$'s are surrogate conditional variances, and $\tau_t$'s are the robustification parameters, given by:
\begin{equation}\label{eq:parameters}
\hspace{-5pt}	\sigma_t = \max\left\{ \nu_t, \sigma_{\min},\frac{\|\Bphi_{t}\|_{\H_{t-1}^{-1}}}{c_0}, \frac{\sqrt{LB}\|\Bphi_t\|^{\frac{1}{2}}_{\H_{t-1}^{-1}}}{c_1^{\frac{1}{4}}d^{\frac{1}{4}}}
	\right\}, 
	w_t = \left\|\frac{\Bphi_t}{\sigma_t}\right\|_{\H_{t-1}^{-1}},~ \tau_t = \tau_0\frac{\sqrt{1+w_t^2}}{w_t}, 
\end{equation}
in which $\sigma_{\min}$ is a small positive constant to avoid singularity, $\tau_0$ is a hyper-parameter, $w_t$'s are importance measures, $c_0$ and $c_1$ are specified in Algorithm \ref{algo:adap}, and $L$ and $B$ are constants deinfed in Definition \ref{def:heavy}.

As shown in \eqref{eq:parameters}, the robustification parameter $\tau_t$ is set differently for each data point $(\Bphi_{t}, y_t, \nu_t)$ in the pseudo-Huber regression. This is a significant departure from the case of i.i.d. data, where all robustification parameters are typically set to the same value $\tau$, as i.i.d. data are naturally weighted equally \citep{sun2020adaptive}. In linear bandits, the data is generated adaptively, where the choice of $\Bphi_t$ can depend on all past observations. Since observations collected in later rounds are less important as they are based on previous observations and contribute less to the estimation accuracy, we assign greater weight to earlier observations. To measure the importance of the $t$-th observation, we use $w_t = \left\|\Bphi_t\right\|_{\H_{t-1}^{-1}}/\sigma_t$ as the importance measure for the $t$-th observation and set $\tau_t = \tau_0 {\sqrt{1+w_t^2}}/{w_t}$ as the corresponding robustification parameter.

When taking $\tau_0 = \infty$, the optimization problem in \eqref{eq:Theta_T} reduces to weighted regularized least-squares, which has been proven to achieve worst-case optimality for linear bandits with uniformly bounded or sub-Gaussian rewards \citep{kirschner2018information,zhou2022computationally}. However, an appropriate value of $\tau_0$ is necessary to balance robustness against heavy-tailed rewards and asymptotic unbiasedness. In Corollary~\ref{cor:tau-regret}, we will demonstrate that setting $\tau_0 = \TOM(\sqrt{d})$ is sufficient to achieve the state-of-the-art regret bound.


Lastly, we choose $\sigma_t \ge \sqrt{LB} {\|\Bphi_t\|^{\frac{1}{2}}_{\H_{t-1}^{-1}}}/(c_1^{\frac{1}{4}}d^{\frac{1}{4}})$, which implies $c_1d \ge {L^2B^2w_t^2}/({\sigma_t^2})$.
This condition is used to lower bound the Hessian matrix $\nabla^2 L_T(\est)$.
For any $\est \in  \mathrm{Ball}_d(B)$, we expect $\nabla^2 L_T(\est) \approx \H_T$ up to universal constant factors to proceed with theoretical analysis.
A direct computation yields $\nabla^2 L_T(\est) \preceq \H_T$, while for the other direction we show  $\nabla^2 L_T(\est) \succeq  \left(c - \sup_{t \in [T]} \left|\frac{\langle\Bphi_t, \est^*-\est\rangle}{\tau_t\sigma_t}\right|^2 \right)\H_T$ for some universal constant $c >0$ with high probability.
With the last condition on $\sigma_t$, for any feasible solution $\est \in  \mathrm{Ball}_d(B)$, the following quantity 
\$
\left|\frac{\langle\Bphi_t, \est^*-\est\rangle}{\tau_t\sigma_t}\right|^2
 \le \frac{
\|\Bphi_t\|^2 \|\est^*-\est\|^2}{\tau_t^2\sigma_t^2} \le \frac{4w_t^2L^2B^2}{\tau_0^2 \sigma_t^2} \le \frac{4c_1d}{\tau_0^2}
\$ 
can be sufficiently small provided that $\tau_0^2 \gtrsim c_1 d$.

\subsection{Regret Analysis}\label{sec:bandit}

We first validate that the optimism holds with high probability in Theorem~\ref{thm:heavy} and then establish a high probability bound for the regret in Theorem~\ref{thm:main-bandit}.
\begin{thm}\label{thm:heavy}
Let $\kappa = d \cdot \log\left(1 + {T L^2}/({d\lambda \sigma_{\min}^2})\right)$.
For the heavy-tailed linear bandit in Definition~\ref{def:heavy}, if  $\tau_0  \sqrt{\log({2T^2}/{\delta})}\ge \max\{\sqrt{2\kappa}, 2\sqrt{d} LB \}$,  then with probability at least $1-3\delta$, it holds that,  for all $0 \le t \le T$,
\[
\|\Btheta_{t}-\Btheta^*\|_{\H_t} \le \beta_t, 
\]
where
\begin{equation}\label{eq:beta}
\beta_t
= 32\bigg(
\underbrace{ \frac{\kappa}{\tau_0}}_{\text{bias term}}
+ \underbrace{\vphantom{ \left(\frac{a^{0.3}}{b}\right) } 
 \sqrt{\kappa \log\frac{2t^2}{\delta}}}_{\text{variance term}}
+ \underbrace{ \vphantom{ \left(\frac{a^{0.3}}{b}\right) } 
 \tau_0\log\frac{2t^2}{\delta} }_{\text{range term}}
\bigg) 
+ \underbrace{\vphantom{ \left(\frac{a^{0.3}}{b}\right) } 
 5\sqrt{\lambda} B}_{\text{ridge term}}. 
\end{equation}
\end{thm}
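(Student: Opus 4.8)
## Proof Proposal for Theorem 3.1 (Validity of the Confidence Sets)

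The plan is to establish the bound $\|\Btheta_t - \Btheta^*\|_{\H_t} \le \beta_t$ by analyzing the first-order optimality condition of the convex program~\eqref{eq:Theta_T} and decomposing the resulting error into the three named pieces (bias, variance, range) plus the ridge contribution. First I would write the gradient identity $\nabla L_t(\Btheta_t) = 0$ (or the variational inequality on the ball $\mathrm{Ball}_d(B)$ if $\Btheta_t$ lies on the boundary), which reads $\lambda \Btheta_t - \sum_{s=1}^t \frac{\Bphi_s}{\sigma_s}\,\psi_{\tau_s}\!\big(\frac{y_s - \langle\Bphi_s,\Btheta_t\rangle}{\sigma_s}\big) = 0$, where $\psi_\tau(x) = \tau x/\sqrt{\tau^2+x^2}$ is the derivative of $\ell_\tau$. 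Subtracting the analogous expression evaluated at $\Btheta^*$ and using a mean-value-type expansion of $\psi_{\tau_s}$ along the segment between $\Btheta^*$ and $\Btheta_t$, I would obtain $\mathbf{G}_t (\Btheta_t - \Btheta^*) = \lambda \Btheta^* - \sum_s \frac{\Bphi_s}{\sigma_s}\psi_{\tau_s}\!\big(\frac{\varepsilon_s}{\sigma_s}\big) + (\text{bias from truncating }\psi)$, where $\mathbf{G}_t = \lambda\I + \sum_s \frac{\Bphi_s}{\sigma_s}\frac{\Bphi_s^\top}{\sigma_s}\,w_s'$ with weights $w_s'$ coming from the integral remainder of $\psi_{\tau_s}''$.

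Next I would invoke the Hessian lower bound sketched in the paragraph preceding the theorem: using the specific choice of $\sigma_t$ that forces $\big|\langle\Bphi_t,\Btheta^*-\Btheta\rangle/(\tau_t\sigma_t)\big|^2 \le 4c_1 d/\tau_0^2$, together with the hypothesis $\tau_0^2\log(2T^2/\delta) \ge 4\kappa \ge 4 d \log(1+\cdots) \gtrsim c_1 d$, I would show $\nabla^2 L_t(\Btheta) \succeq c\,\H_t$ for all $\Btheta\in\mathrm{Ball}_d(B)$ on the good event, hence $\mathbf{G}_t \succeq c\,\H_t$ as well. This converts the identity above into $\|\Btheta_t-\Btheta^*\|_{\H_t} \lesssim \lambda^{1/2}\|\Btheta^*\|_{\H_t^{-1}}\cdot\lambda^{1/2} + \|\sum_s \frac{\Bphi_s}{\sigma_s}\psi_{\tau_s}(\varepsilon_s/\sigma_s)\|_{\H_t^{-1}} + \|\text{bias}\|_{\H_t^{-1}}$, where the first term gives the $\sqrt\lambda B$ ridge term (using $\H_t \succeq \lambda\I$). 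The deterministic bias term comes from $|\psi_\tau(x) - x| \le x^2/(2\tau)$ applied to $x = \varepsilon_s/\sigma_s$ combined with $\EB[\varepsilon_s^2|\FM_{s-1}] = \nu_s^2 \le \sigma_s^2$, so that $\sum_s \frac{1}{\sigma_s\tau_s}\cdot\frac{\nu_s^2}{\sigma_s^2}\cdot\|\cdot\|$ telescopes against a log-determinant / elliptical-potential bound into $\kappa/\tau_0$; here I use $\tau_s = \tau_0\sqrt{1+w_s^2}/w_s \ge \tau_0/w_s$ so that $w_s/\tau_s \le 1/\tau_0$ and the potential sum $\sum_s w_s^2 \le \kappa$ kicks in.

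The main obstacle, which I would treat as the technical heart of the proof, is the self-normalized concentration of the stochastic term $\mathbf{M}_t := \sum_{s=1}^t \frac{\Bphi_s}{\sigma_s}\,\psi_{\tau_s}(\varepsilon_s/\sigma_s)$ in the $\H_t^{-1}$-norm. Because $\psi_{\tau_s}$ is bounded by $\tau_s$ and $1$-Lipschitz, each summand is a bounded martingale difference (after centering — note $\EB[\psi_{\tau_s}(\varepsilon_s/\sigma_s)|\FM_{s-1}]$ is not exactly zero, but its size is controlled by the same $\nu_s^2/(2\tau_s\sigma_s^2)$ bias estimate and gets absorbed into the bias term). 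I would apply a Bernstein-type self-normalized martingale bound (in the spirit of the Freedman/Abbasi-Yadkori–Pál–Szepesvári inequality, or the version used in variance-aware linear bandits such as \citealp{zhou2022computationally}): the predictable quadratic variation of $\psi_{\tau_s}(\varepsilon_s/\sigma_s)$ is at most $\EB[\varepsilon_s^2/\sigma_s^2|\FM_{s-1}] \le 1$, yielding a variance-type contribution $\sqrt{\kappa\log(t^2/\delta)}$, while the almost-sure bound $|\psi_{\tau_s}(\cdot)/\sigma_s|\cdot\|\Bphi_s\|$ scaled appropriately produces the range-type contribution $\tau_0\log(t^2/\delta)$. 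Carefully tracking the weights $w_s$ and the normalization by $\sigma_s$, and verifying that the "good event" (on which the Hessian bound and the concentration simultaneously hold) has probability at least $1-3\delta$ via a union bound over the three sources of randomness, is where the bookkeeping is delicate; the constant $32$ and the split in~\eqref{eq:beta} fall out of combining these three estimates with the factor $1/c$ from the Hessian lower bound.
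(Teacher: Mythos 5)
Your proposal is correct and follows essentially the same route as the paper: first-order optimality plus a high-probability Hessian lower bound $\nabla^2 L_t(\Btheta)\succeq \tfrac14\H_t$ reduces the problem to bounding $\|\nabla L_t(\Btheta^*)\|_{\H_t^{-1}}$, which splits into the ridge term, a deterministic bias of order $\kappa/\tau_0$ from $|\psi_\tau(x)-x|\le x^2/(2\tau)$, and a self-normalized Bernstein/Freedman bound giving the variance and range terms. The only difference is one of packaging: the paper proves the self-normalized concentration by hand via a Woodbury-identity recursion on $\|\d_t\|_{\H_t^{-1}}^2$ with a self-bounding induction over the events $\{\|\d_n\|_{\H_n^{-1}}\le\alpha_n\}$ (needed because the $\tau_s$ are data-dependent and the summands carry a small conditional bias), rather than citing an off-the-shelf inequality, but you correctly identified both of those complications.
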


Theorem~\ref{thm:heavy} establishes that $\est^*$ is contained in the set $\CM_t:=\left\{ \est \in \mathrm{Ball}_d(B):
\| \est	- \est_{t}\|_{\H_{t}} \le  \beta_{t} \right\}$ for all $t \ge 0$ with high probability.
It is proved by using Bernstein-type concentration inequality for self-normalized vector-valued martingales with additional care paid to deal with heavy-tailed rewards. 
To explain  the source of each term in~\eqref{eq:beta}, we first note that at a high level, 
 $\|\Btheta_{t}-\Btheta^*\|_{\H_t} \lesssim \|\nabla L_t(\est^*) \|_{\H_t^{-1}}$.
Following~\citet{zhou2021nearly}, an iterative analysis yields that $\|\nabla L_t(\est^*) \|_{\H_t^{-1}}^2 = \sum_{t=1}^T (X_t + Y_t)$ for two sequences of bounded random variables $X_t, Y_t \in \FM_t$.
To illustrate the proof idea, we explain how to bound $\sum_{t=1}^T X_t$, since $\sum_{t=1}^T Y_t$ can be bounded similarly. Here $\{X_t\}_{t \in [T]}$ is not a martingale difference sequence but $\{X_t-\EB[X_t|\FM_{t-1}]\}_{t \in [T]}$ is. We apply a standard Bernstein inequality to provide an upper bound for $\sum_{t=1}^T (X_t -\EB[X_t|\FM_{t-1}])$ that contributes to the variance and range terms. Thanks to the different robustification parameters $\tau_t$, we can control $\sum_{t=1}^T \EB[X_t|\FM_{t-1}]$ deterministically within $\OM\left({\kappa^2}/{\tau_0^2} \right)$, resulting in the bias term. Finally, the last ridge term $5\sqrt{\lambda} B$ exists because we use ridge regression to ensure that the Hessian is always invertible.

\begin{thm}\label{thm:main-bandit}
Let $\sigma_{\min} = {1}/{\sqrt{T}}$. Then with probability at least $1-3\delta$, we have
\[
\mathrm{Reg}(T)
\le 2\beta_T  \cdot \left[
\sqrt{2\kappa} \cdot \sqrt{\sum_{t =1}^T \nu_{t}^2 + 1} 
+ \frac{2L\kappa}{c_0^2\sqrt{\lambda}}+\frac{2LB\kappa}{\sqrt{c_1d}}
\right]
\]
where $\beta_T$ is defined in~\eqref{eq:beta}, and $c_0, c_1 = \TOM(1)$ are positive constants given in Algorithm~\ref{algo:adap}. 
\end{thm}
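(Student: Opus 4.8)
The plan is the usual optimism-based regret decomposition, closed by a self-bounding argument for the sum of surrogate variances $\sigma_t$. Throughout, work on the event of Theorem~\ref{thm:heavy}, which has probability at least $1-3\delta$ and on which $\Btheta^*\in\CM_{t-1}$ for every $t\in[T]$. Let $\widetilde{\Btheta}_t\in\CM_{t-1}$ be such that $(\Bphi_t,\widetilde{\Btheta}_t)$ attains $\max_{\Bphi\in\DM_t,\,\Btheta\in\CM_{t-1}}\langle\Bphi,\Btheta\rangle$, as in Algorithm~\ref{algo:adap}. Since $\Btheta^*\in\mathrm{Ball}_d(B)$ is feasible in this maximization, for every $\Bphi\in\DM_t$ one has $\langle\Bphi,\Btheta^*\rangle\le\langle\Bphi_t,\widetilde{\Btheta}_t\rangle$, so the round-$t$ regret is at most $\langle\Bphi_t,\widetilde{\Btheta}_t-\Btheta^*\rangle$. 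Applying Cauchy–Schwarz, $\langle\Bphi_t,\widetilde{\Btheta}_t-\Btheta^*\rangle\le\|\Bphi_t\|_{\H_{t-1}^{-1}}\|\widetilde{\Btheta}_t-\Btheta^*\|_{\H_{t-1}}$, together with the triangle inequality $\|\widetilde{\Btheta}_t-\Btheta^*\|_{\H_{t-1}}\le\|\widetilde{\Btheta}_t-\Btheta_{t-1}\|_{\H_{t-1}}+\|\Btheta_{t-1}-\Btheta^*\|_{\H_{t-1}}\le 2\beta_{t-1}$ (both $\widetilde{\Btheta}_t$ and $\Btheta^*$ lie in $\CM_{t-1}$) and $\beta_{t-1}\le\beta_T$, I get $\mathrm{Reg}(T)\le 2\beta_T\sum_{t=1}^T\|\Bphi_t\|_{\H_{t-1}^{-1}}=2\beta_T\sum_{t=1}^T\sigma_t w_t$, the last equality being the definition $w_t=\|\Bphi_t/\sigma_t\|_{\H_{t-1}^{-1}}$.

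It then remains to bound $S:=\sum_{t=1}^T\sigma_t w_t=\sum_{t=1}^T\|\Bphi_t\|_{\H_{t-1}^{-1}}$. By Cauchy–Schwarz, $S\le(\sum_t\sigma_t^2)^{1/2}(\sum_t w_t^2)^{1/2}$. For the $w$-sum, the third term in the definition~\eqref{eq:parameters} of $\sigma_t$ forces $w_t=\|\Bphi_t\|_{\H_{t-1}^{-1}}/\sigma_t\le c_0<1$, hence $w_t^2=\min\{1,w_t^2\}$; since $\H_t=\H_{t-1}+(\Bphi_t/\sigma_t)(\Bphi_t/\sigma_t)^\top$ and $\|\Bphi_t/\sigma_t\|^2\le L^2/\sigma_{\min}^2=TL^2$, the elliptical potential (determinant–trace) lemma gives $\sum_t w_t^2\le 2\log(\det\H_T/\det\H_0)\le 2\kappa$. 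For the $\sigma$-sum, bound $\sigma_t^2$ by the sum of the squares of the four terms in its maximum: $\sum_t\sigma_t^2\le\sum_t\nu_t^2+T\sigma_{\min}^2+c_0^{-2}\sum_t\|\Bphi_t\|_{\H_{t-1}^{-1}}^2+\tfrac{LB}{\sqrt{c_1 d}}\sum_t\|\Bphi_t\|_{\H_{t-1}^{-1}}$. Here $T\sigma_{\min}^2=1$, and using $\|\Bphi_t\|_{\H_{t-1}^{-1}}\le L/\sqrt\lambda$ (from $\H_{t-1}\succeq\lambda\I$) once more gives $\|\Bphi_t\|_{\H_{t-1}^{-1}}^2\le\tfrac{L}{\sqrt\lambda}\|\Bphi_t\|_{\H_{t-1}^{-1}}$, so both of the last two sums are controlled by $S$. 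Writing $A:=\sum_{t=1}^T\nu_t^2+1$ and $C:=\tfrac{L}{c_0^2\sqrt\lambda}+\tfrac{LB}{\sqrt{c_1 d}}$, this reads $\sum_t\sigma_t^2\le A+CS$.

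Combining the last two estimates yields the self-bounding inequality $S^2\le 2\kappa\,(A+CS)$. Solving the quadratic in $S$ and using $\kappa C+\sqrt{\kappa^2C^2+2\kappa A}\le 2\kappa C+\sqrt{2\kappa A}$ gives $S\le\sqrt{2\kappa A}+2\kappa C=\sqrt{2\kappa}\,\sqrt{\sum_{t=1}^T\nu_t^2+1}+\tfrac{2L\kappa}{c_0^2\sqrt\lambda}+\tfrac{2LB\kappa}{\sqrt{c_1 d}}$, and multiplying by $2\beta_T$ reproduces exactly the claimed bound. The only genuinely non-routine point is that $\sum_t\sigma_t^2$ cannot be estimated in one shot: the $c_0$- and $c_1$-terms built into $\sigma_t$ re-express the variance proxy through $\sum_t\|\Bphi_t\|_{\H_{t-1}^{-1}}=S$ itself, so the argument must be closed by setting up and solving this self-referential quadratic. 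The optimism decomposition, the triangle inequality, and the potential lemma are all standard, the single subtlety being that $w_t\le c_0<1$ permits discarding the truncation in the potential lemma.
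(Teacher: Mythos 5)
Your proposal is correct and matches the paper's bound exactly, but the way you close the argument differs from the paper in the one step that matters. The optimism decomposition is identical: on the event of Theorem~\ref{thm:heavy} you both reduce the regret to $2\beta_T\sum_{t=1}^T\|\Bphi_t\|_{\H_{t-1}^{-1}}=2\beta_T\sum_t\sigma_t w_t$ via Cauchy--Schwarz and the containment of $\widetilde{\Btheta}_t$ and $\Btheta^*$ in $\CM_{t-1}$. The paper then partitions $[T]$ into three index sets according to which of the candidates $\{\nu_t,\sigma_{\min}\}$, $\|\Bphi_t\|_{\H_{t-1}^{-1}}/c_0$, or $\sqrt{LB}\|\Bphi_t\|^{1/2}_{\H_{t-1}^{-1}}/(c_1^{1/4}d^{1/4})$ attains the maximum defining $\sigma_t$, and bounds each piece directly (Cauchy--Schwarz only on the first set; on the other two the identity $\sigma_t w_t = \sigma_t w_t^2/w_t$ combined with the explicit form of $\sigma_t$ collapses the sum to $\sum_t\min\{1,w_t^2\}\le 2\kappa$ times a constant), so no self-referential inequality ever appears. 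You instead apply a single global Cauchy--Schwarz, observe that $\sum_t\sigma_t^2\le A+CS$ re-expresses the variance proxy through $S$ itself, and solve the resulting quadratic $S^2\le 2\kappa(A+CS)$. Both routes are valid and, pleasingly, land on exactly the same constants ($2\kappa C$ from your quadratic equals the paper's $\JM_2$ and $\JM_3$ contributions). The paper's partition is marginally more elementary and localizes which term of $\sigma_t$ is responsible for each piece of the regret; your self-bounding version is more compact and generalizes more mechanically to other choices of surrogate variance, at the cost of the (correctly handled) quadratic. All the supporting details you invoke --- $w_t\le c_0<1$ so the truncation in the elliptical potential lemma is vacuous, $\|\Bphi_t\|_{\H_{t-1}^{-1}}\le L/\sqrt{\lambda}$, $T\sigma_{\min}^2=1$, and bounding the square of a maximum by the sum of squares --- check out.
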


Theorem \ref{thm:main-bandit} provides a regret bound in a general form that depends on $\beta_T$. As shown in \eqref{eq:parameters}, $\beta_t$ is a hyperbolic function of the robustification parameter $\tau_0$. Increasing $\tau_0$ decreases the bias term $\OM\left({\kappa}/{\tau_0}\right)$ while increasing the range term $\OM\left( \tau_0\log({2t^2}/{\delta})\right)$. Therefore, choosing $\tau_0$ carefully is essential to achieve the optimal trade-off between unbiasedness and robustness. Setting $\tau_0 = \TOM(\sqrt{d})$ minimizes the right-hand side of~\eqref{eq:beta}. This, combined with Theorem~\ref{thm:main-bandit}, yields the simplified regret bound~\eqref{eq:reg2} in the following corollary.

\begin{cor}\label{cor:tau-regret}
Let $\tau_0 = \max\big\{ \sqrt{2\kappa}, 2\sqrt{d} \big\}/ \sqrt{\log({2T^2}/{\delta})}$ and $\lambda = {d}/{B^2}$, then 
\$
\beta_T 
\le 64\left(2\sqrt{\kappa\log({2T^2}/{\delta})} + \sqrt{d\log({2T^2}/{\delta})}\right) + 5\sqrt{d}.
\$
Consequently, the regret bound in Theorem \ref{thm:main-bandit} reduces to
\begin{equation}\label{eq:reg2}
\mathrm{Reg}(T) 
= \widetilde{\OM}\left(d \sqrt{\sum_{t \in [T]} \nu_t^2} +d \cdot \max\{LB, 1\}\right),
\end{equation}
where $\widetilde{\OM}(\cdot)$ hides constant factors and logarithmic dependence on $T$.
\end{cor}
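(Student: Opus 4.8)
The plan is to obtain Corollary~\ref{cor:tau-regret} by substituting the prescribed hyperparameters into the two quantitative results already established: the closed form~\eqref{eq:beta} for the exploration radius $\beta_T$ (Theorem~\ref{thm:heavy}) and the general regret bound of Theorem~\ref{thm:main-bandit}. No new probabilistic argument is needed; everything is deterministic bookkeeping.

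First I would check that $\tau_0=\max\{\sqrt{2\kappa},2\sqrt d\}/\sqrt{\log(2T^2/\delta)}$ meets the precondition $\tau_0\sqrt{\log(2T^2/\delta)}\ge\max\{\sqrt{2\kappa},2\sqrt d\,LB\}$ of Theorem~\ref{thm:heavy}: the left side equals $\max\{\sqrt{2\kappa},2\sqrt d\}$, so the precondition holds once $\kappa\ge d$ (true for $T$ not too small) and, treating $L,B$ as $\widetilde{\OM}(1)$ or taking $T$ large enough that $\kappa\gtrsim d(LB)^2$, once $2\sqrt d\,LB\le\max\{\sqrt{2\kappa},2\sqrt d\}$. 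Then I would plug $\tau_0$ into the three bracketed terms of~\eqref{eq:beta} (using $t\le T$): the bias term gives $\kappa/\tau_0\le\sqrt{\kappa\log(2T^2/\delta)/2}$ since $\max\{\sqrt{2\kappa},2\sqrt d\}\ge\sqrt{2\kappa}$; the variance term stays $\sqrt{\kappa\log(2T^2/\delta)}$; and the range term gives $\tau_0\log(2T^2/\delta)=\max\{\sqrt{2\kappa},2\sqrt d\}\sqrt{\log(2T^2/\delta)}\le\sqrt2\,\sqrt{\kappa\log(2T^2/\delta)}+2\sqrt{d\log(2T^2/\delta)}$ by $\max\{a,b\}\le a+b$. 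Summing, the coefficient of $\sqrt{\kappa\log(2T^2/\delta)}$ is $\tfrac1{\sqrt2}+1+\sqrt2<4$ and that of $\sqrt{d\log(2T^2/\delta)}$ is $2$; multiplying the resulting bound by $32$ and adding the ridge term $5\sqrt\lambda B=5\sqrt d$ (from $\lambda=d/B^2$) reproduces the stated bound on $\beta_T$.

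For the regret I would first note $\kappa=d\log(1+TL^2/(d\lambda\sigma_{\min}^2))=d\log(1+T^2L^2B^2/d^2)=\widetilde{\OM}(d)$ after inserting $\lambda=d/B^2$, $\sigma_{\min}=1/\sqrt T$, hence $\beta_T=\widetilde{\OM}(\sqrt d)$ and $\sqrt{2\kappa}=\widetilde{\OM}(\sqrt d)$. Substituting these into Theorem~\ref{thm:main-bandit}: the dominant term $2\beta_T\sqrt{2\kappa}\sqrt{\sum_t\nu_t^2+1}=\widetilde{\OM}(d)\big(\sqrt{\sum_t\nu_t^2}+1\big)=\widetilde{\OM}\big(d\sqrt{\sum_t\nu_t^2}+d\big)$ by $\sqrt{a+1}\le\sqrt a+1$; and because $1/c_0^2$ and $1/\sqrt{c_1}$ are $\widetilde{\OM}(1)$ while $\sqrt\lambda=\sqrt d/B$, the two residual terms obey $\tfrac{2L\kappa}{c_0^2\sqrt\lambda}=\widetilde{\OM}(LB\sqrt d)$ and $\tfrac{2LB\kappa}{\sqrt{c_1 d}}=\widetilde{\OM}(LB\sqrt d)$, each of which, times $2\beta_T=\widetilde{\OM}(\sqrt d)$, contributes $\widetilde{\OM}(LBd)$. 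Collecting the pieces yields $\mathrm{Reg}(T)=\widetilde{\OM}\big(d\sqrt{\sum_t\nu_t^2}+d+LBd\big)=\widetilde{\OM}\big(d\sqrt{\sum_t\nu_t^2}+d\max\{LB,1\}\big)$, i.e.~\eqref{eq:reg2}. The only step needing real care is this $\kappa$-bookkeeping: certifying $\kappa=\widetilde{\OM}(d)$ so the $T$-dependence remains logarithmic, and the precondition check for $\tau_0$; if one prefers not to absorb $LB$, enlarging $\tau_0$ by a factor $\max\{LB,1\}$ fixes it at the cost of inflating only the already-present lower-order term.
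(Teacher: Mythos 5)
Your proposal is correct and follows the only natural route, which is also the paper's (implicit) one: substitute the prescribed $\tau_0$ and $\lambda$ into \eqref{eq:beta} and into Theorem~\ref{thm:main-bandit}, and your constant bookkeeping ($32\,(1/\sqrt2+1+\sqrt2)<128$, ridge term $5\sqrt{\lambda}B=5\sqrt d$, and $\kappa=\widetilde{\OM}(d)$ after inserting $\lambda=d/B^2$ and $\sigma_{\min}=1/\sqrt T$) checks out. Your observation that the prescribed $\tau_0$ only meets the stated precondition $\tau_0\sqrt{\log(2T^2/\delta)}\ge 2\sqrt d\,LB$ of Theorem~\ref{thm:heavy} when $LB\le 1$ or $\kappa\gtrsim d(LB)^2$ is a legitimate catch (the proofs of Lemma~\ref{lem:hessian} in fact only use the weaker threshold $2\sqrt d$), and your suggested fix of inflating $\tau_0$ by $\max\{LB,1\}$ is consistent with the stated regret.
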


Corollary~\ref{cor:tau-regret} demonstrates that AdaOFUL achieves the state-of-the-art regret bound in the presence of heavy-tailed rewards, comparable to the case where rewards are uniformly bounded or sub-Gaussian. The regret upper bound in the noiseless case reduces to $\widetilde{\OM}(d)$, and in the noisy case, it reduces to $\widetilde{\OM}\left( d \sqrt{\sum_{t \in [T]} \nu_t^2}
\right)$. In the worst case scenario where $\nu_t = \Theta(1)$ for all $t \ge 1$, the regret bound reduces to $\TOM(d\sqrt{T})$, which matches the worst-case minimax lower bound~\citep{dani2008stochastic}. Hence, our variance-aware regret bound \eqref{eq:reg2} is tighter than the pessimistic worst-case bound $\TOM(d\sqrt{T})$ when $\sum_{t=1}^T \nu_t^2 \ll T$. To the best of our knowledge, such a variance-aware regret bound has only been obtained in the literature for sub-Gaussian rewards~\citep{kirschner2018information} or uniformly bounded rewards~\citep{zhou2022computationally}. We are the first to provide a variance-aware regret bound for heavy-tailed stochastic linear bandits.

\section{Variance Aware Regret for Linear MDPs}\label{sec:mdp}


\subsection{High-level Algorithm Description}\label{sec:algo}

In this section, we present VARA, an algorithm collected in Algorithm~\ref{algo:linear}, that extends AdaOFUL to solve linear MDPs with heavy-tailed rewards.
The VARA algorithm is built on LSVI-UCB++\citep{he2022nearly}, an algorithm proposed recently to achieve minimax optimality for linear MDPs.
LSVI-UCB++\citep{he2022nearly} uses weighted ridge regression, where the weights depend on some proper variance estimators $\sigma_{h, k}$'s. 
The variance estimation techniques in LSVI-UCB++ can also be used to obtain variance-aware regrets. These techniques include (i) separate variance estimation, (ii) monotonicity of value functions, and (iii) rare-switching value function update.

We present a detailed algorithm description in the next subsection, focusing on the differences between VARA and LSVI-UCB++ \citep{he2022nearly}.
To obtain variance-aware regrets under heavy-tailed rewards, we made two improvements to LSVI-UCB++.
First, while LSVI-UCB++ assumes a deterministic, uniformly bounded, and known reward function, we use AdaOFUL to estimate the parameters $\Btheta_{h}^*$ and $\Bpsi_h^*$ for both the expected reward functions and their second-order moments. This complicates the construction of the variance estimators $\sigma_{h, k}$ and requires a more detailed analysis of their impacts on the final regrets (see Lemma \ref{lem:sum-bonus}).
Second, previous works use the Azuma-Hoeffding inequality to analyze the concentration effect in the suboptimality gap, which leads to regret of $\TOM(\sqrt{K})$. Instead, we use a variance-aware Bernstein inequality and produce a much tighter upper bound  of $\TOM(1)$  for the concentration effect(see Lemma~\ref{lem:sub-gap}). We explain the analytical novelty in detail in Appendix~\ref{proof:mdp-regret}. 

\begin{algorithm}[t!]
	\DontPrintSemicolon
	\SetAlgoLined
	\SetNoFillComment
	\SetKwInOut{Require}{Require}
	\SetKwInOut{Ini}{Initialization}
	\Require{ $K, H, \HM, \VM, W, \sigma_{R}, \sigma_{R^2},\tau_0, \ttau_0$. }
	\Ini{$\H_{h,0} =\TH_{h,0} = \lambda \I, c_0 = \frac{1}{6\sqrt{3\log\frac{2HK^2}{\delta}}}, c_1 = \frac{1}{42 \cdot \frac{2HK^2}{\delta}}, \lambda = \frac{1}{\HM^2+W^2}, k_{\mathrm{last}} = 1$.}
	\setcounter{AlgoLine}{0}
	\For{episode $k=1$ {\bfseries to} $K$ }{
		$\overoptV_{H+1}^k(\cdot) = \overpesV_{H+1}^k(\cdot) =0$ \;
		\For{round $h=H$ {\bfseries to} $1$ }{  
			\eIf{there exists a stage $h' \in [H]$ such that $\mathrm{det}(\H_{h', k-1}) \ge 2 \mathrm{det}
				(\H_{h', k_{\mathrm{last}}-1})$}{
				$\optQ_h^k(\cdot, \cdot) = \langle \Bphi(\cdot, \cdot), \Btheta_{h, k-1} + \Bmu_{h, k-1}  \BoveroptV_{h+1}^{k} \rangle + \beta \|\Bphi(\cdot, \cdot)\|_{\H_{h, k-1}^{-1}}$. \;
				
				$\pesQ_h^k(\cdot, \cdot) = \langle \Bphi(\cdot, \cdot), \Btheta_{h, k-1} + \Bmu_{h, k-1}  \BoverpesV_{h+1}^{k} \rangle - \beta \|\Bphi(\cdot, \cdot)\|_{\H_{h, k-1}^{-1}}$. \;
				
				$\overoptQ_h^k(\cdot, \cdot) =
				\min \left\{\optQ_h^k(\cdot, \cdot), \overoptQ_h^{k-1}(\cdot, \cdot), \HM \right\}$, $\overpesQ_h^k(\cdot, \cdot) = \max \left\{
				\pesQ_h^k(\cdot, \cdot), \overpesQ_h^{k-1}(\cdot, \cdot),  0 \right\}$.  \;  
				
				Record the last updating episode $\kl  = k$.
			}{$\overoptQ_h^k(\cdot, \cdot) = \overoptQ_h^{k-1}(\cdot, \cdot), \overpesQ_h^k(\cdot, \cdot)=\overpesQ_h^{k-1}(\cdot, \cdot)$.}
			
			$\overoptV_h^k(\cdot) = \max_{a} \overoptQ_h^k(\cdot, a), \overpesV_h^k(\cdot) =  \max_{a} \overpesQ_h^k(\cdot, a)$.\;
			
			$\pi_h^k(\cdot) \in \argmax_{a} \overoptQ_h^k(\cdot, a)$. \;  
		} 
		Receive the initial state $s_{1, k}$.\;
		
		\For{round $h=1$ {\bfseries to} $H$}{
			Play $a_{h, k}  = \pi_h^k(s_{h, k})$ and observe $r_{h, k} \sim R_h(s_{h, k}, a_{h, k}), s_{h+1, k}\sim \PB(\cdot|s_{h, k}, a_{h, k})$. \;
			
			Observe feature vectors $\Bphi_{h, k} = \Bphi(s_{h, k}, a_{h, k})$ and $\TBphi_{h, k} =\TBphi(s_{h,k}, a_{h, k})$. \;
			
			Set the bonus as $b_{h, k} = \max\{
			{\|\Bphi_{h, k}\|}_{\H_{h, k-1}^{-1}}, {\| \TBphi_{h, k}\|}_{\TH_{h, k-1}^{-1}}
			\}$. \;
			
			Set the estimated variance $\sigma_{h,k}$ as in~\eqref{eq:sigma}. \;
			
			Compute $\Bmu_{h, k}$ via~\eqref{eq:mu_hk}. \;
			
			Compute $\Btheta_{h, k}$ via~\eqref{eq:theta_hk} with $\tau_{h, k} = \tau_0 \sqrt{1+w_{h, k}^2}/{w_{h, k}}$ and $w_{h, k} =\sigma_{h, k}^{-1} \left\|\Bphi_{h, k}\right\|_{\H_{h, k-1}^{-1}}$. \;
			
			Compute $\Bpsi_{h, k}$ via~\eqref{eq:psi_hk} with $\ttau_{h, k} = \ttau_0 \sqrt{1+\tw_{h, k}^2}/{\tw_{h, k}}$ and $\tw_{h, k} =\sigma_{h, k}^{-1} \left\|\TBphi_{h, k}\right\|_{\TH_{h, k-1}^{-1}}$.  \;
			
			Update $\H_{h, k} = \H_{h, k-1}+ \sigma_{h, k}^{-2} \Bphi_{h, k}\Bphi_{h, k}^\top$ and $\TH_{h, k} = \TH_{h, k-1}+ \sigma_{h, k}^{-2} \TBphi_{h, k}\TBphi_{h, k}^\top$.
		}
	}
	\caption{The VARA algorithm}
	\label{algo:linear}
\end{algorithm}
	\vspace{-0.1in}

\subsection{Detailed Algorithm Description}
For each episode $k$, we  perform optimistic value iterations (Lines 3-11), compute the greedy policy $\pi_{h}^k$ with respective to the pessimistic value function $\overoptQ_h^k$ (Line 12), and  then execute it to collect a new trajectory of data (Lines 16-17).
The rest of Algorithm~\ref{algo:linear} updates maintained estimators, including 
the conditional variances $\sigma_{h, k}^2$  (Line 19),  the transition parameters $\Bmu_{h, k}$ (Line 20), the reward parameters $\Btheta_{h, k}, \Bpsi_{h, k}$ (Lines 21-22), and the Hessian matrices $\H_{h,k}, \TH_{h, k}$ (Line 23).
In what follows, we discuss in detail the key steps of Algorithm \ref{algo:linear} in more detail.

\paragraph{Reward estimation}
Since rewards are collected in an adaptive manner and have only finite second moments, we use the same strategy adopted in AdaOFUL to estimate $\Btheta_h^*$:
\begin{equation}\label{eq:theta_hk}
	\est_{h, k}  
	:= \argmin_{\Btheta \in \mathrm{Ball}_d(W)}  \left\{ 
	L_{h, k}^{(R)}(\est):=\frac{\lambda}{2}\| \Btheta\|^2 + \sum_{j=1}^{k}
	\ell_{\tau_{h, j}}\left(  \frac{r_{h, j} -  \langle \Bphi_{h, j}, \Btheta \rangle }{\sigma_{h, j}}  \right) \right\}.
\end{equation}
Following the spirit of Theorem~\ref{thm:heavy}, we set $\tau_0 = \TOM(\sqrt{d})$ with its detailed expression provided in~\eqref{eq:tau} of the online supplement. 

\paragraph{Transition estimation}

Let $\Bdelta(s) \in \RB^{|\SM|}$ be a one-hot vector that is zero everywhere except for the entry corresponding to the state $s$, which is one. We define $\Beps_{h, k} = \PB_h(\cdot|s_{h, k}, a_{h, k}) - \Bdelta(s_{h+1, k})$. As $\EB[\Beps_{h, k}|\FM_{h, k}] = \0$, $\Bdelta(s_{h+1, k})$ is an unbiased estimator of $\PB_h(\cdot|s_{h, k}, a_{h, k}) = \Bmu_h^\top \Bphi(s_{h, k}, a_{h, k}) = \Bmu_h^\top \Bphi_{h, k}$. Thus, we can learn $\Bmu_h$ by regressing $\Bdelta(s_{h+1, k})$ on $\Bphi_{h, k}:= \ph(s_{h, k}, a_{h, k})$:
\begin{equation}\label{eq:mu_hk}
	\Bmu_{h, k} := \argmin_{\Bmu \in \RB^{d \times |\SM|}} \left\{ L_{h, k}^{(P)}(\Bmu):= 
	\frac{\lambda}{2}\| \Bmu\|_F^2 + \sum_{j=1}^{k}
	\left\|  \frac{\Bmu_h^\top \Bphi_{h, k}-\Bdelta(s_{h+1, j})}{\sigma_{h, j}} \right\|^2\right\}
\end{equation}
where $\|\cdot\|_F$ denotes the Frobenius norm. This problem admits a closed-form solution given by $\Bmu_{h, k}  = \H_{h, k}^{-1} \sum_{j=1}^k \sigma_{h, j}^{-2} \Bphi_{h, j} \Bdelta(s_{h+1, j})^\top$.

\paragraph{Variance estimation for rewards}
In linear MDPs, estimating the variance of the reward $R_h(s, a)$ is straightforward.
Since $\PB_h R_h^2(s, a) =\langle\TBphi(s, a), \Bpsi_h^*\rangle$, we estimate $\Bpsi_h^*$ by 
\begin{equation}
	\label{eq:psi_hk}
	\Bpsi_{h, k}  := \argmin_{ \Bpsi \in \mathrm{Ball}_d(W) } \left\{  L_{h, k}^{(R^2)}(\Bpsi):=
	\frac{\lambda}{2}\| \Bpsi\|^2 + \sum_{j=1}^{k}
	\ell_{\ttau_{h, j}}\left(  \frac{r_{h, j}^2 -  \langle \TBphi_{h, j}, \Bpsi \rangle }{\sigma_{h, j}}  \right)\right\}
\end{equation}
where $\ttau_{h, k} = \ttau_0 \sqrt{1+\tw_{h, k}^2}/\tw_{h, k}$ is the corresponding robustification parameter and $\tw_{h, k} = \|\TBphi_{h, k}\|_{\H_{h,k-1}^{-1}}$ is the importance weight.
We then estimate $[\VB_h R_h](s_h, a_h)$ by
	\begin{equation}
		\label{eq:variance-R}
		[\HVB_h R_h](s_{h, k}, a_{h, k}) = \langle \TBphi_{h, k}, \Bpsi_{h, k-1}  \rangle -  \left[
		\langle \Bphi_{h, k}, \est_{h, k-1} \rangle_{[0, \HM]} 
		\right]^2.
	\end{equation}

	\paragraph{Variance estimation}
 Inspired by~\citet{hu2022nearly}, we set the variance estimator $\sigma_{h,k}$ to be
	\begin{equation}
		\label{eq:sigma}
		\sigma_{h,k}^2 = \max\left\{  \sigma_{\min}^2,\,  
		d^3H \cdot E_{h, k},\, J_{h, k},\, 
		c_0^{-2} b_{h, k}^2,\, \left( \frac{W}{\sqrt{c_1 d}} + \HM d^{2.5} H\right) b_{h, k}
		\right\}
	\end{equation}
	where $\sigma_{\min}$ is a small positive constant to avoid singularity, $b_{h, k} = \max\{{\|\Bphi_{h, k}\|}_{\H_{h, k-1}^{-1}}, {\| \TBphi_{h, k}\|}_{\TH_{h, k-1}^{-1}}\}$ is the bonus term, $E_{h, k}$ and $J_{h, k}$ 
	are defined as
 \begin{gather}
\label{eq:J}
J_{h, k} = [\HVB_{h, k}R_h+ \HVB_{h, k} \overoptV_{h+1}^k](s_{h, k}, a_{h, k}) + R_{h, k}  + U_{h, k},\\  
E_{h, k} =\min\left\{
\HM^2, 2\HM\beta_0 \cdot \|\Bphi_{h, k}\|_{\H_{h, k-1}^{-1}}
+ \HM \cdot	\left[ \HPB_{h, k}(\overoptV_{h+1}^k - \overpesV_{h+1}^k)\right](s_{h, k}, a_{h, k})
\right\}, \label{eq:E}
 \end{gather}
in which $\beta_0 = \TOM\left(\sigma_{\min}^{-1}{\HM}\sqrt{d^3H}\right)$ is an initial exploration radius,  $\widehat{\PB}_{h, k} (\cdot| s, a) = \Bmu_{h, k-1}^\top \Bphi(s, a)$ is the empirical transition kernel at the $h$-th round and $k$-the episode,  $\HVB_h(\cdot)$ the empirical variance operator defined in~\eqref{eq:variance-R}, and $R_{h, k}, U_{h, k}$ are defined as
 \begin{gather}
    R_{h, k} :=\beta_{R^2}  \| \TBphi_{h, k}\|_{\TH_{h, k-1}^{k-1}}  + 2\HM \beta_R \|\Bphi_{h, k}\|_{\H_{h, k-1}^{-1}},\label{eq:R}\\
    U_{h, k} = \min\left\{
    \VM^2, 11\HM \beta_0\cdot \|\Bphi_{h, k}\|_{\H_{h, k-1}^{-1}} + 4\HM \cdot \HPB_{h, k}(\overoptV_{h+1}^k -\overpesV_{h+1}^k)(s_{h, k}, a_{h, k}) 
    \right\}, \label{eq:U} 
 \end{gather}
with $\beta_R = \TOM(\sqrt{d}), \beta_{R^2} = \TOM\left(\sqrt{d} +\sqrt{d} \frac{\sigma_{R^2}}{\sigma_{\min}} \right)$ being two  initial exploration radiuses.
 In Appendix~\ref{append:variance}, we explain in detail why  $\sigma_{h,k}$'s are taken in the above way.
	
\subsection{Regret Analysis}
This section presents the statistical, space, and computational complexities of Algorithm~\ref{algo:linear}.
\begin{thm}
    \label{thm:mdp}
    Consider a linear MDP satisfying Asumption~\ref{asmp:realizable} and~\ref{asmp:H-V}.
    For any $\delta \in (0, 1)$, with probability at least $1-10\delta$, Algorithm~\ref{algo:linear} achieves 
    \begin{align}
    \label{eq:MDP-regret}
    \begin{split}
         &  \sum_{k=1}^K (V_1^*-V_1^{\pi_k})(s_{1, k})=\TOM\left(
    	d\sqrt{H K  \GM^*}   + H  d \sqrt{K} \sigma_{\min}\right)
     \\& + \TOM\left(\frac{H^{2.5}d^{6}\HM^2 + Hd^2 \sigma_{R^2}}{\sigma_{\min}} +  H^3 d^{5} \HM +  H d \sigma_{R} + Hd^2
    	\right), 
    \end{split}
    \end{align}
    where $\sigma_{\min}$ is a manually set lower bound for all variance estimators  $\sigma_{h,k}$'s, 
    \begin{equation}
    \label{eq:G}
    \GM^*= \min \left\{
    \sum_{h=1}^H \sum_{(s, a)}  \widetilde{d}_h^K(s, a) \cdot [\VB_h R_h + \VB_h V_{h+1}^{*}](s, a),
    \VM^2 
    \right\},
\end{equation}
\begin{equation}
\label{eq:dh}
    \widetilde{d}_h^K(s, a) = \frac{1}{K}\sum_{k=1}^K d_h^{\pi_k}(s, a),
\end{equation}
and $d_h^{\pi_k}(s, a) = \PB^{\pi_k}(s_h=s, a_h=a|s_0=s_{1,k})$ is the probability of reaching $(s_{h,k}, a_{h, k}) = (s, a)$ at the $h$-th step when the agent starts from $s_{1, k}$ and follows the policy $\pi_k$.
\end{thm}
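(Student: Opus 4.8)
\textbf{Proof proposal for Theorem~\ref{thm:mdp}.}

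The plan is to carry out an optimistic value-iteration regret decomposition that mirrors the analysis of LSVI-UCB++, but with three adaptations: (i) replacing the exact/known reward by the adaptive Huber estimates of $\Btheta_h^*$ and $\Bpsi_h^*$, (ii) propagating the variance-aware bonuses through a Bernstein-type recursion, and (iii) converting per-step conditional variances into the instance-dependent quantity $\GM^*$. First I would invoke the concentration guarantees for the three regressions: a version of Theorem~\ref{thm:heavy} applied at each stage $h$ yields $\|\Btheta_{h,k}-\Btheta_h^*\|_{\H_{h,k}}\le\beta_R$ and $\|\Bpsi_{h,k}-\Bpsi_h^*\|_{\TH_{h,k}}\le\beta_{R^2}$, while a self-normalized vector-valued martingale bound (as in \citet{he2022nearly}) handles the transition term $\|\Bmu_{h,k}\hat V-\PB_h\hat V\|$ against any value function in the relevant covering net. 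Conditioning on the intersection of these high-probability events (absorbing the constants into the $1-10\delta$), I would show by backward induction on $h$ that $\overpesV_h^k\le V_h^{\pi_k}\le V_h^*\le\overoptV_h^k$ for all $k$, using that $\beta\|\Bphi\|_{\H^{-1}}$ dominates the combined estimation error of reward plus transition terms; the monotone truncation by $\overoptQ_h^{k-1}$ and $\HM$ (resp.\ $\overpesQ_h^{k-1}$ and $0$) preserves the sandwich and also gives monotonicity of the optimistic/pessimistic value functions in $k$.

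Next I would bound the regret $\sum_k(V_1^*-V_1^{\pi_k})(s_{1,k})\le\sum_k(\overoptV_1^k-\overpesV_1^k)(s_{1,k})$ and unroll it along the executed trajectory. Standard telescoping turns this into $\TOM\!\big(\sum_{k,h}\beta\,\|\Bphi_{h,k}\|_{\H_{h,k-1}^{-1}}\big)$ plus a martingale-difference sum from the transition noise; the latter is controlled by a \emph{Bernstein} (not Azuma--Hoeffding) inequality whose predictable quadratic variation is $\sum_{k,h}[\VB_h(\overoptV_{h+1}^k-\overpesV_{h+1}^k)]$, which by the sandwich and a further recursion is itself of lower order, giving the $\TOM(1)$-type contribution referred to in Lemma~\ref{lem:sub-gap}. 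For the main bonus sum I would apply Cauchy--Schwarz, $\sum_{k,h}\beta\|\Bphi_{h,k}\|_{\H_{h,k-1}^{-1}}\le\beta\sqrt{HK}\big(\sum_{k,h}\|\Bphi_{h,k}\|^2_{\H_{h,k-1}^{-1}}\big)^{1/2}$, and use the elliptical-potential lemma together with the definition of $\sigma_{h,k}$ in~\eqref{eq:sigma} to get $\sum_{k,h}\|\Bphi_{h,k}\|^2_{\H_{h,k-1}^{-1}}=\TOM(d\sum_{k,h}\sigma_{h,k}^{-2}\|\Bphi_{h,k}\|^2_{\H_{h,k-1}^{-1}}\cdot\max_{k,h}\sigma_{h,k}^2)$; more carefully, since $\sigma_{h,k}^2\gtrsim J_{h,k}$ and $J_{h,k}$ over-estimates $[\VB_hR_h+\VB_hV_{h+1}^*](s_{h,k},a_{h,k})$ up to the listed correction terms (this is exactly the content of Lemma~\ref{lem:sum-bonus}), the weighted sum $\sum_{k,h}\sigma_{h,k}^{-2}\|\Bphi_{h,k}\|^2_{\H_{h,k-1}^{-1}}=\TOM(dH)$ by the potential lemma, and $\sum_{k,h}\sigma_{h,k}^2\le\sum_{k,h}J_{h,k}+(\text{lower order})$. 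The remaining task is to replace the empirical $\sum_{k,h}[\VB_hR_h+\VB_hV_{h+1}^{\pi_k}]$-type sum along executed trajectories by the population quantity: using the law of total variance across the trajectory one shows $\sum_h[\VB_hR_h+\VB_hV_{h+1}^*]$ telescopes (up to cross terms controlled by the sandwich) to $\Var(R_{\pi_k})\le\VM^2$, and averaging over $k$ and writing expectations via the occupancy measures $d_h^{\pi_k}$ gives the $\widetilde d_h^K$-weighted form, hence the $\min\{\cdot,\VM^2\}$ in~\eqref{eq:G}.

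Finally I would collect terms: the leading term is $\beta\sqrt{HK}\cdot\sqrt{dH}\cdot\sqrt{\GM^*+(\text{corrections})}/\text{(normalization)}$, and substituting $\beta=\TOM(\sqrt d)$ gives $\TOM(d\sqrt{HK\GM^*})$; the $\sigma_{\min}$-floor in $\sigma_{h,k}$ contributes the additive $\TOM(Hd\sqrt K\,\sigma_{\min})$, and the remaining additive terms ($H^{2.5}d^6\HM^2/\sigma_{\min}$, $Hd^2\sigma_{R^2}/\sigma_{\min}$, $H^3d^5\HM$, $Hd\sigma_R$, $Hd^2$) come from the correction terms $E_{h,k},R_{h,k},U_{h,k}$ in $\sigma_{h,k}^2$, the initial radii $\beta_0,\beta_R,\beta_{R^2}$, and the elliptical-potential logarithms. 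The main obstacle, which I expect to occupy the bulk of the work, is controlling the interaction between the adaptive Huber reward estimation and the variance estimator $\sigma_{h,k}$: the bias introduced by pseudo-Huber truncation must be shown to be absorbed into $\beta_R,\beta_{R^2}$ while simultaneously the quantities $[\HVB_{h,k}R_h]$, $E_{h,k}$, $J_{h,k}$, $U_{h,k}$ built from these estimates must remain valid (over-)estimates of the true conditional variances with high probability — this is the technical heart of Lemma~\ref{lem:sum-bonus} and requires carefully tracking how the $\tau_0=\TOM(\sqrt d)$ choice interacts with the extra $d$-powers appearing in~\eqref{eq:sigma}.
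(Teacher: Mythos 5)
Your plan reproduces the paper's own architecture almost step for step: the coarse/refined confidence events for $\Btheta_h^*$, $\Bpsi_h^*$ and the transition term, optimism by backward induction, a variance-aware Bernstein (rather than Azuma--Hoeffding) recursion that makes the martingale contribution to the suboptimality gap additive of order $\TOM(H\HM)$, the Cauchy--Schwarz split $\sum b_{h,k}=\sum\sigma_{h,k}(b_{h,k}/\sigma_{h,k})$ combined with the elliptical potential lemma, the reduction of $\sum_{k,h}J_{h,k}$ to $\GM^*K$ via the occupancy-measure/total-variance argument, and the final bookkeeping of the $\sigma_{\min}$-dependent correction terms. This is exactly the route taken in Appendix~\ref{proof:mdp} and Lemmas~\ref{lem:sub-gap}--\ref{lem:sum-J}.

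One step does not go through as written: the entry inequality $\sum_k(V_1^*-V_1^{\pi_k})(s_{1,k})\le\sum_k(\overoptV_1^k-\overpesV_1^k)(s_{1,k})$ requires $\overpesV_1^k\le V_1^{\pi_k}$, but the paper only establishes the sandwich $\overpesV_h^k\le V_h^*\le\overoptV_h^k$ (Lemma~\ref{lem:opt-pes}). With the algorithm's definition $\overpesV_h^k(\cdot)=\max_a\overpesQ_h^k(\cdot,a)$ and $\pi_h^k$ greedy with respect to $\overoptQ_h^k$, there is no ordering between $\overpesV_h^k$ and $V_h^{\pi_k}$ (both are merely below $V_h^*$), and the episode-wise induction you would need breaks because $\overpesQ_h^k$ is a running maximum over past episodes built against earlier policies. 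The paper instead bounds $\sum_k(\overoptV_1^k-V_1^{\pi_k})$ directly by telescoping with the martingale differences $\PB_h(\overoptV_{h+1}^k-V_{h+1}^{\pi_k})-(\overoptV_{h+1}^k-V_{h+1}^{\pi_k})(s_{h+1,k})$, and uses the optimistic--pessimistic gap $\overoptV-\overpesV$ only inside the variance estimators and in Lemma~\ref{lem:opt-pes-gap}. Replacing your first inequality by this direct telescoping repairs the argument without changing anything downstream.
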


\paragraph{Trade-off by $\sigma_{\min}$}

Theorem~\ref{thm:mdp} reveals a trade-off that arises from the choice of $\sigma_{\min}$.
The second term in~\eqref{eq:MDP-regret} is positively dependent on $\sigma_{\min}$ due to the sum of bonuses $\sum_{k=1}^K\sum_{h=1}^H b_{h, k}$, while the third term is negatively dependent on $\sigma_{\min}$ due to the initial exploration radiuses $\beta_{R^2}, \beta_0$ being linearly dependent on ${1}/{\sigma_{\min}}$. 
A larger $\sigma_{\min}$ leads to smaller initial exploration radiuses and higher accuracies of the confidence sets used in variance estimation, resulting in a decrease in regret.
However, it also leads to larger accumulated bonuses, which can be seen as a cost for exploration, and increases regret.
The choice of $\sigma_{\min}$ must balance these opposing effects.
Corollary~\ref{cor:initial} implies that choosing the optimal $\sigma_{\min}^* = \sqrt{H^{1.5}d^{5}\HM^2 + d \sigma_{R^2}} \cdot K^{-\frac{1}{4}}$ yields a regret barrier of $\TOM\left( H d \cdot \sqrt{d^5 \HM^2 + d \sigma_{R^2}} \cdot \sqrt[4]{K}\right)$.
When $K$ is sufficiently large, the regret bound in~\eqref{eq:sim-regret} can be further simplified to $\TOM\left(d\sqrt{H \GM^* K}\right)$.
To the best of our knowledge, Theorem~\ref{thm:mdp} is the first to derive the variance-aware regret for linear MDPs, especially with heavy-tailed rewards.

 \begin{cor}
 \label{cor:initial}
 Under the same setting of Theorem~\ref{thm:mdp}, if we set $\sigma_{\min} = \sqrt{H^{1.5}d^{5}\HM^2 + d \sigma_{R^2}} \cdot K^{-\frac{1}{4}}$, the regret of VARA is bounded by
 \begin{equation}
 	 	 		\label{eq:sim-regret}
 \begin{gathered}
 	\TOM\left(
 	d\sqrt{H \GM^* K} 
 	+  H d \sqrt{d^5\HM^2 + d\sigma_{R^2}}  \cdot \sqrt[4]{K} +
 	 H^3 d^{5} \HM
 	+H d \sigma_{R} + Hd^2
 	\right).
 \end{gathered}
 \end{equation}
 \end{cor}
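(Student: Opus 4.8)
The statement is obtained by specializing the general bound \eqref{eq:MDP-regret} of Theorem~\ref{thm:mdp} to the prescribed value of $\sigma_{\min}$, so the argument is essentially an optimization of a single scalar parameter. First I would isolate the two terms in \eqref{eq:MDP-regret} that depend on $\sigma_{\min}$, namely the accumulated-bonus term $A\,\sigma_{\min}$ with $A = \TOM(Hd\sqrt{K})$ and the estimation-error term $B/\sigma_{\min}$ with $B = \TOM\!\left(H^{2.5}d^{6}\HM^2 + Hd^2\sigma_{R^2}\right)$; all remaining summands $d\sqrt{HK\GM^*}$, $H^3 d^5 \HM$, $Hd\sigma_R$, $Hd^2$ are independent of $\sigma_{\min}$ (in particular $\GM^*$ in \eqref{eq:G} does not involve $\sigma_{\min}$) and will be carried through unchanged.

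Next I would apply the elementary fact that $A\sigma + B/\sigma$ is minimized over $\sigma>0$ at $\sigma = \sqrt{B/A}$ with minimal value $2\sqrt{AB}$. A direct computation gives
\[
\sqrt{B/A} \;=\; \sqrt{\frac{H^{2.5}d^{6}\HM^2 + Hd^2\sigma_{R^2}}{Hd\sqrt{K}}} \;=\; \sqrt{H^{1.5}d^{5}\HM^2 + d\,\sigma_{R^2}}\cdot K^{-1/4},
\]
which is exactly the choice of $\sigma_{\min}$ in the statement, confirming it is the balancing choice. Substituting, the two terms collapse to $2\sqrt{AB} = \TOM\!\left(\sqrt{Hd\sqrt{K}\,(H^{2.5}d^{6}\HM^2 + Hd^2\sigma_{R^2})}\right)$, and then using $\sqrt{x+y}\le\sqrt{x}+\sqrt{y}$ to split the radicand and collecting powers of $H$, $d$, $\HM$, $\sigma_{R^2}$, $K$ yields the single displayed term $Hd\sqrt{d^5\HM^2 + d\sigma_{R^2}}\cdot\sqrt[4]{K}$ (again via subadditivity of the square root on the two summands inside). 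Adding back the four $\sigma_{\min}$-free terms gives precisely \eqref{eq:sim-regret}.

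The only points requiring care, rather than true obstacles, are bookkeeping: (i) verifying that the chosen $\sigma_{\min}$ is admissible for Theorem~\ref{thm:mdp}, i.e. that it lies in whatever range the theorem tacitly requires (it is positive and, for $K$ not too small, below the natural scale of the problem, so the hidden logarithmic and constant factors in $\TOM$ are unaffected); and (ii) tracking the exponents of $H$ and $d$ faithfully through the two applications of $\sqrt{x+y}\le\sqrt x+\sqrt y$, so that the $\HM$- and $\sigma_{R^2}$-contributions land in the stated form. Finally, for the remark following the corollary, I would note that $d\sqrt{H\GM^*K}$ dominates every other summand once $K$ is large enough that $\sqrt{K}$ outgrows the $K^{1/4}$ and $K^0$ terms, which holds provided $\GM^*$ is bounded away from zero; this justifies the simplification to $\TOM\!\left(d\sqrt{H\GM^*K}\right)$.
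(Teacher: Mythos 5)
Your approach is exactly the one the paper intends (the corollary is stated without a separate proof, as a direct specialization of Theorem~\ref{thm:mdp}): isolate the two $\sigma_{\min}$-dependent terms $A\sigma_{\min}$ and $B/\sigma_{\min}$ with $A=\TOM(Hd\sqrt{K})$ and $B=\TOM(H^{2.5}d^{6}\HM^{2}+Hd^{2}\sigma_{R^2})$, balance them at $\sigma_{\min}=\sqrt{B/A}$, and carry the $\sigma_{\min}$-free terms through. Your identification of the balancing point is correct, since $B=Hd\,(H^{1.5}d^{5}\HM^{2}+d\sigma_{R^2})$ gives $\sqrt{B/A}=\sqrt{H^{1.5}d^{5}\HM^{2}+d\sigma_{R^2}}\cdot K^{-1/4}$, matching the prescribed choice, and your observations that $\GM^*$ does not depend on $\sigma_{\min}$ and that the choice is admissible are both fine.

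The one place your write-up does not hold up is the final ``collecting powers'' step, which is precisely the bookkeeping you flagged but did not carry out. The balanced value is
\[
2\sqrt{AB}\;=\;\TOM\Bigl(Hd\,\sqrt{H^{1.5}d^{5}\HM^{2}+d\sigma_{R^2}}\cdot K^{1/4}\Bigr)
\;=\;\TOM\bigl(H^{1.75}d^{3.5}\HM\,K^{1/4}+Hd^{1.5}\sqrt{\sigma_{R^2}}\,K^{1/4}\bigr),
\]
whereas the displayed term $Hd\sqrt{d^{5}\HM^{2}+d\sigma_{R^2}}\cdot K^{1/4}$ expands to $\TOM(Hd^{3.5}\HM\,K^{1/4}+Hd^{1.5}\sqrt{\sigma_{R^2}}\,K^{1/4})$. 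The $\sigma_{R^2}$-components agree, but the $\HM$-components differ by a factor of $H^{0.75}$: the $H^{1.5}$ sitting inside the square root of $\sigma_{\min}^{*}$ does not cancel. So your computation does not actually ``yield the single displayed term''; it yields a term larger by $H^{0.75}$ in its $\HM$ part (equivalently, the corollary as printed drops this factor). You should either state the balanced term as $Hd\sqrt{H^{1.5}d^{5}\HM^{2}+d\sigma_{R^2}}\cdot K^{1/4}$ or explicitly note the discrepancy with \eqref{eq:sim-regret}, rather than asserting the exponents match. This does not affect the leading $d\sqrt{H\GM^*K}$ term or the large-$K$ conclusion, but as written the step would fail a line-by-line check.
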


\paragraph{Instance-dependent quantity $\GM^*$}
The quantity $\GM^*$ is given by~\eqref{eq:G}. Firstly, it is bounded above by $\VM^2$ in Assumption~\ref{asmp:H-V}, which sets an upper bound on the variance of the sum of random rewards received when following any policy. Secondly, $\GM^*$ is no greater than the sum of per-round conditional variances $[\VB_h R_h + \VB_h V_{h+1}^{*}](s, a)$, weighted by an averaged occupancy measure $\widetilde{d}h^K(s, a) := \frac{1}{K}\sum{k=1}^K d_h^{\pi_k}(s, a)$. The function $\widetilde{d}_h^K(\cdot, \cdot)$ introduces a probability measure on $\SM \times \AM$ for any fixed $h \in [H]$, in accordance with the definition of $d_h^{\pi}$.

\subsection{Other Instance-dependent Regrets}

Our variance-aware regret has two key features. Firstly, we do not require any prior knowledge of $\GM^*$ to achieve variance awareness. Secondly, the additional conditions imposed on the MDP structure lead to other instance-dependent regrets. In the following, we also impose Assumption~\ref{asmp:bounded-R} for a fair comparison with related work. However, we would like to emphasize that all of our results are obtained in the presence of heavy-tailed rewards.

\begin{asmp}
\label{asmp:bounded-R}
We assume uniformly bounded rewards where $0 \le R_h(s, a) \le 1$ for all $h \in [H]$ and $(s, a) \in \SM \times \AM$.
\end{asmp}

\paragraph{Worst-case regret}
Under Assumption~\ref{asmp:bounded-R}, $\VM^2 = H^2$ according to the law of total variance~\citep{azar2013minimax}. Consequently, we can infer that $\GM^* \le H^2$, and the regret reduces to the minimax optimal $\TOM(dH\sqrt{HK})$~\citep{he2022nearly}. The authors achieved this regret by directly setting $\sigma_{\min} = 1/H$, without taking into account the trade-off introduced by $\sigma_{\min}$. Although this was sufficient for their worst-case scenario, it was not suitable for our goal of achieving variance awareness. If we also set $\sigma_{\min} = 1/H$, the second term in \eqref{eq:MDP-regret} becomes $d\sqrt{K}$, and we cannot determine the dominant term between $d\sqrt{HK \GM^*}$ and $d\sqrt{K}$. However, once we balance the trade-off of $\sigma_{\min}$, the second term becomes much smaller, making $\TOM(d\sqrt{H\GM^*K})$ the dominant term.

\paragraph{Range-dependent regret} 
Let $\SM_{s, a}$ be the set of immediate successor states after one transition from state $s$ upon taking action $a$, which is also the support set of $\PB(\cdot|s, a)$. Define $\Phi_{\mathrm{succ}}$ as the maximum value function range when restricted to the immediate successor states: 
\[
\Phi_{\mathrm{succ}} := \sup_{h \in [H]} \sup_{(s, a)} [ \sup_{s' \in \SM_{s, a}} V_{h+1}^*(s') - \inf_{s' \in \SM_{s, a}} V_{h+1}^*(s')]. 
\]
Since the variance is upper bounded by one-fourth of the square range of a random
variable, we have 
\[
\sup_{h \in [H]}\sup_{(s, a)}[\VB_h V_{h+1}^*](s, a) \le \frac{1}{4} \Phi_{\mathrm{succ}}^2
~\text{and thus}~\GM^* \le H (\sigma_{R}^2 +  \Phi_{\mathrm{succ}}^2).
\]
Therefore, our regret reduces to $\TOM\left(dH \sqrt{(\sigma_{R}^2 +  \Phi_{\mathrm{succ}}^2) K}\right)$.
It is worth noting that similar range-dependent regrets have been derived for tabular MDPs with bounded rewards~\citep{bartlett2009regal,fruit2018efficient,zanette2019tighter}, but to the best of our knowledge, we obtain the first such result for linear MDPs with heavy-tailed rewards.

 \paragraph{First-order regret} 
 The first-order regret that scales proportionally to $V_1^*$, where $V_1^*:= V_1^*(s_{1})$ is the value of the optimal value policy at the initial state $s_1$, has been studied for tabular MDPs~\citep{jin2020reward} and linear MDPs~\citep{wagenmaker2021first}.\footnote{They assume all episodes start from the same initial state so that $s_{1, k} \equiv s_1$. However, our regret can be easily extended to the setting where initial states are different, in which case  one should replace $V_1^*K$ with $\sum_{k=1}^K V_1^*(s_{1, k})$ in the regret upper bound.}
However, under Assumption~\ref{asmp:bounded-R}, the corresponding instance-dependent quantity $H^2V_1^*$ can be much larger than $\GM^*$. This is because
\begin{align*}
	\GM^* 
	&\le \sum_{h=1}^H \sum_{(s, a)} \widetilde{d}_h^k(s, a) \cdot [\VB_h R_h + \VB_h V_{h+1}^{*}](s, a) \\
	&\overset{(a)}{\le} H \sum_{h=1}^H \sum_{(s, a)} \widetilde{d}_h^k(s, a) \cdot[r_h +  \PB_h V_{h+1}^{*}](s, a)\\
	&\overset{(b)}{\le} H \sum_{h=1}^H \sum_{(s, a)} \widetilde{d}_h^k(s, a) \cdot V_h^*(s)\overset{(c)}{\le}  H^2 V_1^*
\end{align*}
where $(a)$ uses $0 \le R_h \le 1$ and $0 \le V_{h+1}^* \le H-1$ under Assumption~\ref{asmp:bounded-R}, $(b)$ uses the optimality condition $V_h^*(s) = [r_h + \PB_h V_{h+1}^*](s, a)$ for any $(s, a) \in \SM \times \AM$, and $(c)$ uses $V_{h+1}^*(s) \le V_h^*(s)$ for any $h \in [H]$ and $s \in \SM$ and $\sum_{a \in \SM}\widetilde{d}_h^k(s_1, a) = 1$ since each episode starts at a fixed state $s_1$.
Moreover, even replacing $\cG^*$ with the coarse upper bound $H^2V_1^*$, our regret bound becomes $\TOM(\sqrt{d^2H^3V_1^*K})$, which has a better dependence on $d$ than $\TOM(\sqrt{d^3H^3V_1^*K})$ in~\citep{wagenmaker2021first}.

\paragraph{Concentrability-dependent regret} 
Let $R_{\pi^*}$ denote the sum of random rewards collected in a trajectory following the optimal policy $\pi^*$.
It is straightforward  to see that $\Var(R_{\pi^*})=\sum_{h=1}^H \sum_{(s, a)} d_h^{\pi^*} (s, a) \cdot [\VB_h R_h + \VB_h V_{h+1}^{*}](s, a)$.
Since $\GM^* \le \sup_{\pi} \sum_{h=1}^H \sum_{(s, a)} d_h^{\pi} (s, a) \cdot [\VB_h R_h + \VB_h V_{h+1}^{*}](s, a)$, we can show that $\GM^* \le  C^\dagger \cdot \Var(R_{\pi^*})$ where $C^\dagger$ is a data coverage measure defined as
\[
C^{\dagger} := \sup_{\pi} \frac{\sum_{h=1}^H \sum_{(s, a)} d_h^{\pi} (s, a) \cdot [\VB_h R_h + \VB_h V_{h+1}^{*}](s, a)}{\sum_{h=1}^H \sum_{(s, a)} d_h^{\pi^*} (s, a) \cdot [\VB_h R_h + \VB_h V_{h+1}^{*}](s, a)}.
\]
Therefore, our regret reduces to $\TOM(d\sqrt{C^\dagger \Var(R_{\pi^*})HK})$ given $C^\dagger < \infty$.
The $C^\dagger$ is a counterpart of the generalized concentrability coefficient which quantifies the effect of the distribution shift in offline RL~\citep{chen2019information,xie2021bellman,cheng2022adversarially}.
Specifically, given a data distribution $\mu = \{ \mu_h\}_{h \in [H]}$, the generalized concentrability coefficient $\mathfrak{C}_{\text {conc}}(\mu)$ for linear function approximation is defined as
\[
\mathfrak{C}_{\text {conc}}(\mu) := \sup_{\mathrm{linear} \ Q} \sup_{\pi} \frac{\sum_{h=1}^H \sum_{(s, a)} d_h^{\pi} (s, a) \cdot [ \mathbb{B}_h Q_h](s, a)}{\sum_{h=1}^H \sum_{(s, a)}  \mu_h(s, a) \cdot [ \mathbb{B}_h Q_h](s, a)}, 
\]
where $Q = \{ Q_h\}_{h \in [H]}$ is any Q-value function that is linear in features $\Bphi$ and $[\mathbb{B}_h Q_h](s, a) :=[Q_{h} -r_h- \PB_h V_{h+1}]^2(s, a)$ is the Bellman residual with $V_{h+1}(\cdot) = \sup_{a \in \AM} Q_{h+1}(\cdot, a)$ being the corresponding value function.

\subsection{Space and Computational Complexities}

\begin{thm}[Space and computational complexity]
\label{thm:space}
Solving a $H$-horizon finite MDP in $K$ episodes, VARA takes $\OM(d^3H^2 + d|\AM|HK)$ space and has a running time of $\TOM(d^4|\AM|H^3K + HK(d + H^{-3/4} d^{-3/2}K^{3/4}))$.
\end{thm}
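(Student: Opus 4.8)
The plan is to walk through Algorithm~\ref{algo:linear} line by line, tallying memory and arithmetic separately, with the rare-switching rule as the main lever: by the standard elliptical-potential (determinant-doubling) argument already underlying Theorem~\ref{thm:mdp}, the total number of episodes at which the optimistic/pessimistic value functions are updated is $N_{\mathrm{sw}}=\TOM(dH)$, so only $\TOM(dH)$ distinct ``layers'' of $\overoptQ_h^k,\overpesQ_h^k$ are ever created, and between switches everything is frozen and can be maintained incrementally.

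For the \emph{space} bound I would enumerate the nontrivial objects. (i) The shape matrices $\H_{h,k},\TH_{h,k}$ together with the Sherman--Morrison inverses: it suffices to keep, per stage $h$, the current matrices plus the snapshots at the last switch, costing $\OM(d^2H)$. (ii) The stored trajectory statistics $\{(\Bphi_{h,j},\TBphi_{h,j},r_{h,j},\sigma_{h,j},s_{h+1,j})\}_{h\in[H],j\in[K]}$: $\OM(dHK)$. (iii) The feature vectors $\Bphi(s,a)$ cached at every visited state and every action, reused to evaluate $\overoptV_h^k,\overpesV_h^k$ at historical states during the backward pass: $\OM(d|\AM|HK)$. (iv) The representations of $\overoptQ_h^k,\overpesQ_h^k$: since these are running pointwise minima/maxima refreshed only at the $N_{\mathrm{sw}}$ switching episodes, each stage stores at most $N_{\mathrm{sw}}$ layers, each layer being one $d$-vector (the linear coefficient $\Btheta_{h,k-1}+\Bmu_{h,k-1}\BoveroptV_{h+1}^k$) plus one $d\times d$ inverse-Hessian snapshot, i.e. $\OM(d^2)$ per layer; over $H$ stages this is $\TOM(d^3H^2)$. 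Summing (i)--(iv) and dropping lower-order terms gives $\OM(d^3H^2+d|\AM|HK)$.

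For the \emph{time} bound the dominant cost is the value re-computation performed at each of the $N_{\mathrm{sw}}$ switching episodes. There all $H$ backward updates are redone, and forming $\Bmu_{h,k-1}\BoveroptV_{h+1}^k=\H_{h,k-1}^{-1}\sum_{j<k}\sigma_{h,j}^{-2}\Bphi_{h,j}\,\overoptV_{h+1}^k(s_{h+1,j})$ forces an evaluation of $\overoptV_{h+1}^k$ at up to $K$ historical next-states, each evaluation being a max over $|\AM|$ actions of a min over $\le N_{\mathrm{sw}}$ layers at $\OM(d^2)$ per layer; multiplying, $N_{\mathrm{sw}}\cdot H\cdot K\cdot|\AM|\cdot N_{\mathrm{sw}}\cdot\OM(d^2)=\TOM(d^4|\AM|H^3K)$. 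At the non-switching episodes the value functions are frozen, so the running sums defining $\Bmu_{h,k}$, the Sherman--Morrison updates, the bonuses $b_{h,k}$ and the variance proxies $\sigma_{h,k}^2$ of~\eqref{eq:sigma} are maintained incrementally at $\OM(d^2)$ per round (subsumed by the first term), and the greedy-action computation plus the $\OM(d)$-scale updates contribute the $\OM(HKd)$ term. The remaining piece is solving the pseudo-Huber problems~\eqref{eq:theta_hk} and~\eqref{eq:psi_hk} (and the closed-form~\eqref{eq:mu_hk}) at every round: warm-started from the previous episode's iterate, a number of first-order steps that grows only polynomially in the local condition number suffices, and since every weight $\sigma_{h,j}^{-2}$ and robustification parameter scales with $\sigma_{\min}^{-1}$, this cost is $\TOM(\mathrm{poly}(d,H)\,\sigma_{\min}^{-1})$ per round; plugging in the optimal $\sigma_{\min}=\Theta(K^{-1/4})$ from Corollary~\ref{cor:initial} produces the $HK\cdot\TOM(H^{-3/4}d^{-3/2}K^{3/4})$ term. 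Adding the three contributions gives the claimed running time.

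I expect the genuine difficulty to lie in the time analysis: (a) bounding the iteration complexity of the convex pseudo-Huber subroutines, i.e. showing that warm-starting keeps the number of gradient/Newton steps small even as the conditioning of $L_{h,k}^{(R)},L_{h,k}^{(R^2)}$ degrades with $\sigma_{\min}\to 0$, which I would do using the curvature control from Section~\ref{sec:bandit} (the bound $\nabla^2 L_t(\est)\succeq c\,\H_t$ on $\mathrm{Ball}_d(B)$); and (b) the bookkeeping needed to certify that the value functions truly change only at switching episodes, so that incremental maintenance between switches is legitimate. Once these are in place, the exponents follow mechanically by substituting $N_{\mathrm{sw}}=\TOM(dH)$ and the optimal $\sigma_{\min}$.
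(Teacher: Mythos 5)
Your proposal is correct and follows essentially the same route as the paper: the same space enumeration (stored trajectory features, cached $\Bphi(s,a)$'s, and $\TOM(dH)$ value-function layers of size $\OM(d^2)$ each across $H$ stages), and the same dominant time term coming from re-evaluating $\overoptV_{h+1}^k$ at all $K$ historical next-states over $|\AM|$ actions and $\TOM(dH)$ layers at each of the $\TOM(dH)$ switching episodes, plus an accelerated first-order method for the pseudo-Huber solves. The one quibble is your intermediate expression $\TOM(\mathrm{poly}(d,H)\,\sigma_{\min}^{-1})$ for the per-round Huber cost: the smoothness constant of $L_{h,k}^{(R)}$ is $\lambda+k/\sigma_{\min}^2$, so the Nesterov iteration count is $\TOM\bigl(\sqrt{K}/(\sqrt{\lambda}\,\sigma_{\min})\bigr)$ and carries an extra $\sqrt{K}$ beyond $\sigma_{\min}^{-1}$; only with that factor does $\sigma_{\min}\propto K^{-1/4}$ yield your (correctly stated) final $K^{3/4}$ exponent.
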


On one hand, VARA achieves the same space complexity as~LSVI-UCB++, but is slightly worse than the original LSVI-UCB~\citep{jin2020provably} that needs $\OM(d^2H + d|\AM|HK)$ space. 
This is because the technique of monotone value function update requires remembering at most $\TOM(dH)$ latest value functions, incurring a slightly worse dependence on $d$ and $H$.
On the other hand, the computational complexity of VARA $\TOM(d^4|\AM|H^3K + HK(d + H^{-3/4} d^{-3/2}K^{3/4}))$ is slightly worse than~LSVI-UCB++'s $\TOM(d^4|\AM|H^3K)$ in terms of the dependence on $K$.
 This is because the adaptive Huber regression estimator does not have a closed-form solution, and we assume we use the Nesterov accelerated method as a solver, which incurs a slightly larger computational complexity.
 However, VARA's computational complexity is better than LSVI-UCB's $\TOM(d^2|\AM|HK^2)$ due to the rare-switching mechanism.
 
\section{Related Work}
\label{sec:related}

\paragraph{Heavy-tailed rewards in online decision making}
On one hand, there exists a large body of work that studies heavy-tailed feedbacks in multi-arm bandits, including deterministic \citep{vakili2013deterministic} and non-deterministic settings \citep{bubeck2013bandits,carpentier2014extreme,lattimore2017scale,bhatt2022nearly}. To handle heavy-tailed rewards, robust mean estimation methods such as median of means and truncation have been applied to linear bandits \citep{medina2016no,shao2018almost,lu2019optimal,xue2021nearly}, achieving the minimax optimal $\widetilde{\OM}(d\sqrt{T})$ regret.
On the other hand, there is a lack of RL algorithms designed to handle heavy-tailed rewards. One exception is~\citep{zhuang2021no}, which modifies UCRL2 and Q-Learning by using truncated rewards and achieves minimax optimal regret in tabular MDPs. However, none of these methods for linear bandits or MDPs provide variance-aware regrets, even if variance information is available. Moreover, simple truncation methods are not optimal in the noiseless setting. 


\paragraph{Variance-aware regret for linear bandits}

A weighted ridge regression-based algorithm proposed by \citet{kirschner2018information} achieves the same regret~\eqref{eq:reg2} by assuming each $\varepsilon_t$ is $\nu_t$-sub-Gaussian. More recently, \citet{zhou2022computationally} obtained the same regret assuming each $\varepsilon_t$ is uniformly bounded and has finite conditional variance $\nu_t^2$. In the case where the information of conditional variances $\{\nu_t\}_{t \ge 0}$  is unknown, \cite{zhang2021improved} and \cite{kim2021improved}   achieved regret bounds that involve sub-optimal dependence on $d$. All of these works consider light-tailed noises, which are either sub-Gaussian or uniformly bounded.

\paragraph{Robust approach to instance-dependent bounds}

Recent research explores the robust mean estimation approach to obtain instance-dependent regrets, leveraging the observation that robust estimators can achieve estimation errors that only depend on the noise scale. Such estimators often have better theoretical guarantees than non-robust ones, whose estimation errors additionally depend on the range of the problem noise. For instance, \citet{pananjady2020instance} use the median-of-means technique~\citep{lecue2020robust} to achieve local minimax optimality that depends on the standard deviations of the optimal value function and random rewards for synchronous tabular MDPs. In linear bandits, \citet{wagenmaker2021first} use Catoni's estimator~\citep{catoni2012challenging} to estimate the mean of $\v^\top \H_T^{-1} \Bphi_t y_t/\sigma_t^2$ for a fixed unit-norm vector $\v$. In contrast, we modify the adaptive Huber regression to estimate $\Btheta^*$ directly. This difference makes their bounds depend on the second moments of $y_t$'s, while ours only relies on their variances. Moreover, all of these works, except ours, still assume light-tailed rewards.

\paragraph{Variance-aware quantity $\GM^*$}

In the context of online episodic MDPs, \citet{zanette2019tighter} first derived a variance-aware regret bound in the tabular setting with uniformly bounded rewards. Their model-based algorithm, Euler, achieves a regret that can be bounded by either $\TOM(\sqrt{\mathbb{Q}^SAHK})$ or $\TOM(\sqrt{\GM^2 SAK})$, where $\mathbb{Q}^* = \max_{(s, a, h)} (\VB_hR_h+ \VB_h V_{h+1}^*)(s, a) $ is the maximum per-round conditional variance and $\GM$ is a deterministic upper bound on the maximum attainable reward on a single trajectory for any policy $\pi$, such that $\sum_{h=1}^H R_h(s_h, \pi(s_h)) \le \GM$.
One can show that our instance-dependent quantity $\GM^*$ is smaller than $\min\{ H \mathbb{Q}^*, \GM^2 \}$ therein. 
Later,~\citet{jin2020reward} adopted  a  modified analysis of Euler  to obtain the regret bound $\TOM(\sqrt{SAH^3V_1^* K})$ with $V_1^* = V_1^*(s_1)$.\footnote{Unlike our setting, they assume all initial states are the same, denoted as $s_1$. 
Furthermore, the original regret $\TOM(\sqrt{SAH  \cdot V_1^* K})$ by~\citet{jin2020reward} was derived for an MDP where the reward function equals to one deterministically only at a single $(h, s)$ pair.
In this way, they have $0 \le V_1^* \le 1$.
To convert it in the considered setting where $0 \le V_1^* \le H$, an additional factor of $H$ should be multiplied to their regret.
}
In linear MDPs, there are several recent works on obtaining regret bounds for model-free algorithms. For example, \citet{wagenmaker2021first} proposed an optimistic algorithm with a regret bound that scales as $\widetilde{\OM}(\sqrt{d^3H^3V_1^* K})$. However, this algorithm is computationally inefficient. A computationally efficient alternative suffers from a slightly worse regret $\widetilde{\OM}(\sqrt{d^4H^3V_1^* K})$.
All of these works utilize the instance-dependent quantity $H^2 V_1^*$ (assuming $\HM = H$). However, as we argued, our proposed quantity $\GM^*$ is smaller than $H^2V_1^*$, which implies that our algorithm may achieve better performance than these previous works.

\paragraph{Other instance-dependent bounds}
In the infinite-horizon setting, \citet{pananjady2020instance,khamaru2021temporal,li2021polyak} provided variance-aware sample complexities for Q-Learning and its variants in tabular MDPs, given a generative model that produces independent samples for all state-action pairs in every round.
Variance-aware performance guarantees have also been established for offline RL optimization~\citep{yin2021towards}, off-policy evaluation~\citep{min2021variance}, stochastic approximation~\citep{mou2020linear,mou2022optimal}. 
Another approach to instance-dependence bounds focuses on the minimum suboptimality gap, which is the minimum gap between the best and second-best actions over all states~\citep{he2021logarithmic, wagenmaker2022beyond, wagenmaker2022instance, dong2022asymptotic}. However, due to the differences in the settings, we cannot make a meaningful comparison between these bounds and ours.

\section{Conclusion}
\label{sec:conclusion}

This paper introduces two new algorithms, AdaOFUL for linear bandits and VARA for linear MDPs, both of which are modifications of the original adaptive Huber regression and designed to handle online sequential decision-making. With only the assumption of bounded reward variances, our algorithms achieve either state-of-the-art or tighter variance-aware regrets. Additionally, in linear MDPs, the instance-dependent quantity $\GM^*$ can be bounded by other instance-dependent quantities when additional structure assumptions are available. Our modified adaptive Huber regression can be a useful building block for algorithm design in online problems with heavy-tailed rewards.

\bibliography{bib/Qlearning,bib/stat,bib/robust,bib/lmdp}
\bibliographystyle{plainnat}

\appendix
\begin{appendix}
	\onecolumn
	\begin{center}
	{\huge \textbf{Appendix}}
\end{center}


\paragraph{Overview}
 In Appendix~\ref{append:variance}, we explain the rationale behind the variance estimator used by VARA. Appendix~\ref{proof:bandit} contains proofs for Theorem~\ref{thm:heavy} and~\ref{thm:main-bandit} specifically for linear bandits. The theoretical analysis for VARA is presented in Appendix~\ref{proof:mdp}, where we offer a proof sketch for Theorem~\ref{thm:mdp}, while all related technical lemmas are deferred to Appendices~\ref{proof:lemmas-mdp} and~\ref{proof:auxiliary}. We also highlight the differences between our analysis and previous work. In Appendix~\ref{proof:space-mdp}, we provide a proof for Theorem~\ref{thm:space} that analyzes the space and computational complexity of VARA.

\section{Variance Estimation for Value Functions}
\label{append:variance}
In order to achieve worst-case optimality, \citet{he2022nearly} proposes two important techniques we adopt in Algorithm~\ref{algo:linear}. 

The first is the monotonicity of value functions.
Specifically, we aim to enforce a decrease in $k$ for the actual optimistic value function $\overoptQ_h^k(\cdot,\cdot)$ and an increase in $k$ for the actual pessimistic value function $\overpesQ_h^k(\cdot,\cdot)$. 
This concept is explained in detail below.
In linear MDPs, we have $[\PB_h V_{h+1}](s, a) = \langle\Bphi(s, a), \Bmu_h^* \V_{h+1}\rangle$  for any value function $V = \{ V_{h}\}_{h \in [H]}$ and $[\PB_h R_h](s, a) = \langle\Bphi(s, a), \Btheta_h^*\rangle$ for all $h \in [H]$.
One crucial aspect of typical analysis (including ours) is demonstrating the high probability of the following event outlined in Appendix~\ref{proof:high-prob-event}. 
Specifically, for all $h \in [H]$ and $k \in [K]$, we need to establish that the following equations hold simultaneously with high pribability:
\begin{gather}
	\label{eq:high-prob-event}
	\begin{split}
		\|\Btheta_{h, k} - \Btheta_{h}^*\|_{\H_{h,k}} \le \beta_R = \TOM(\sqrt{d}), \\
		\|(\Bmu_{h,k-1}-\Bmu_h^*)\BoveroptV_{h+1}^k\|_{\H_{h,k-1}} \le \beta_V= \TOM(\sqrt{d}), \\
		\|(\Bmu_{h,k-1}-\Bmu_h^*)\BoverpesV_{h+1}^k\|_{\H_{h,k-1}} \le \beta_V= \TOM(\sqrt{d}).
	\end{split}
\end{gather}
Conditional on  the event that all inequalities in \eqref{eq:high-prob-event} hold, we can easily verify that 
\begin{gather*}
    |\langle \Bphi(\cdot, \cdot), \Btheta_{h, k-1} \rangle- [\PB_h R_h](\cdot,\cdot)| \le \beta_R \|\Bphi(\cdot, \cdot)\|_{\H_{h,k-1}^{-1}},\\
    |\langle \Bphi(\cdot, \cdot), \Bmu_{h, k-1} \V_{h+1} \rangle -[\PB_h V_{h+1}](\cdot, \cdot)| \le \beta_V \|\Bphi(\cdot, \cdot)\|_{\H_{h,k-1}^{-1}}
\end{gather*}
for both $\V_{h+1} \in \{\BoverpesV_{h+1}^k, \BoveroptV_{h+1}^k\}$.
Therefore, we define the temporary optimistic value function by 
\[
\optQ_h^k(\cdot, \cdot) = \langle \Bphi(\cdot, \cdot), \Btheta_{h, k-1} + \Bmu_{h, k-1}\BoveroptV_{h+1}^{k} \rangle + \beta \|\Bphi(\cdot, \cdot)\|_{\H_{h, k-1}^{-1}},
\]
and the temporary pessimistic value function by 
\[
\pesQ_h^k(\cdot, \cdot) = \langle \Bphi(\cdot, \cdot), \Btheta_{h, k-1} + \Bmu_{h, k-1} \BoveroptV_{h+1}^{k} \rangle -\beta \|\Bphi(\cdot, \cdot)\|_{\H_{h, k-1}^{-1}}
\]
where $\beta := \beta_R + \beta_V = \TOM(\sqrt{d})$.
The actual optimistic value function $\overoptQ_h^k(\cdot,\cdot)$ is the minimum function of history temporary optimistic value functions $\optQ_h^k(\cdot, \cdot)$, and the actual pessimistic value function $\overpesQ_h^k(\cdot,\cdot)$ is the maximum function of history temporary pessimistic value functions $\pesQ_h^k(\cdot, \cdot)$ (Line 7 in Algorithm~\ref{algo:linear}). 
In this way, $\overoptQ_h^k(\cdot, \cdot)$ is always non-increasing in $k$ and $\overpesQ_h^k(\cdot, \cdot)$ is always non-decreasing in $k$.

The second is the rare-switching value function update, which updates the value function only when the determinant of the covariance matrix significantly exceeds the previous value (Line 6 in Algorithm~\ref{algo:linear}). 
This approach allows the complexity, as measured by the metric entropy, of the function class to which $\overoptV_h^k(\cdot)$ or $\overpesV_h^k(\cdot)$ belongs to be independent of $K$. 
Notably, the metric entropy is linearly dependent on $\TOM(dH)$.
Moreover, on the event~\eqref{eq:high-prob-event}, we can establish optimism and pessimism in Lemma~\ref{lem:opt-pes}, i.e., for all $k \in [K]$ and $h \in [H]$,
\begin{equation}
	\label{eq:opt-pes}
	\overpesV_{h+1}^k(\cdot)
	\le V_{h+1}^*(\cdot)
	\le \overoptV_{h+1}^k(\cdot).
\end{equation}

	Directly estimating the variance of the optimistic value function $\overoptV_{h+1}^k(\cdot)$ will encounter the dependence issue, which is discussed in~\citep{jin2020provably} and will introduce an additional $\sqrt{d}$ factor in the regret due to the covering-based decoupling argument. 
	To eliminate this factor, after noting the inequality
	\[
	[\VB_h\overoptV_{h+1}^k](\cdot, \cdot) \le
	2[\VB_hV_{h+1}^*](\cdot, \cdot) + 2[\VB_h(\overoptV_{h+1}^k-V_{h+1}^*)](\cdot,\cdot),
	\]
	\citet{hu2022nearly} decompose the optimistic value function  $\overoptV_{h+1}^k(\cdot)$ into the optimal value function $V_{h+1}^*(\cdot)$ and the sub-optimality
	gap $[\overoptV_{h+1}^k-V_{h+1}^*](\cdot)$ and estimate their variances $[\VB_hV_{h+1}^*](\cdot, \cdot)$ and $[\VB_h(\overoptV_{h+1}^k-V_{h+1}^*)](\cdot,\cdot)$ separately.
	The key insight is that: (i) as $V_{h+1}^*$ is deterministic, 
 there is no additional $\sqrt{d}$ dependence in estimating $[\VB_hV_{h+1}^*](\cdot, \cdot)$, and (ii) as $\overoptV_{h+1}^k$ gradually converges to $V_{h+1}^*$, though a uniform argument is still used, the incurred $\sqrt{d}$ factor in the estimation of $[\VB_h(\overoptV_{h+1}^k-V_{h+1}^*)]$ has ignorable effects on the final regret.
	We now describe the way we estimate these two variances.
	\begin{itemize}[leftmargin=*]
		\item For $[\VB_hV_{h+1}^*](\cdot, \cdot)$, since $V_{h+1}^*$ is unknown, a natural choice is to estimate it by the optimistic value function $\overoptV_{h+1}^k$.
		Hence, we  estimate $ [\VB_h V_{h+1}^*](s_{h, k}, a_{h, k})$ via
		\begin{equation}
			\label{eq:HVB}
			[\HVB_{h, k} \overoptV_{h+1}^k](s_{h, k}, a_{h, k}) := [\widehat{\PB}_{h, k} (\overoptV_{h+1}^k)^2](s_{h, k}, a_{h, k})_{[0, \HM^2]} - \left[
			[\widehat{\PB}_{h, k} \overoptV_{h+1}^k](s_{h, k}, a_{h, k})_{[0, \HM^2]}
			\right]^2.
		\end{equation}
		To measure estimation accuracy, we introduce an error term $U_{h, k}$ to guarantee that with high probability, $\left|
		[\HVB_{h, k} \overoptV_{h+1}^k](s_{h, k}, a_{h, k}) - [\VB_h V_{h+1}^*](s_{h, k}, a_{h, k})
		\right| \le U_{h, k}$ holds uniformly over all $h, k$ where
		\begin{equation}
			\tag{\ref{eq:U}}
			U_{h, k} =
			\min\left\{
			\VM^2, 11\HM \beta_0\cdot \|\Bphi_{h, k}\|_{\H_{h, k-1}^{-1}} + 4\HM \cdot \HPB_{h, k}(\overoptV_{h+1}^k -\overpesV_{h+1}^k)(s_{h, k}, a_{h, k})
			\right\}
		\end{equation}
		and $\beta_0 = \TOM\left(\frac{\HM}{\sigma_{\min}} \sqrt{d^3H}\right)$ is an exploration radius.
		\item For $[\VB_h(\overoptV_{h+1}^k-V_{h+1}^*)](\cdot, \cdot)$, in order to meet the measurability condition of a concentration inequality (Lemma~\ref{lem:self-bern}), we require
		\begin{equation}
			\label{eq:sigma-bound}
			\sigma_{h,k}^2 \ge d^3 H \cdot \sup_{k \le j \le K} [\VB_h(\overoptV_{h+1}^j- V_{h+1}^*)](s_{h, k}, a_{h, k}).
		\end{equation}
		Note that $\sigma_{h,k}$ is $\FM_{h, k}$-measurable while $\overoptV_{h+1}^k(\cdot)$ is $\FM_{H, k-1}$-measurable.
		The condition \eqref{eq:sigma-bound} essentially requires a $\FM_{h,k}$-measurable upper bound for $[\VB_h(\overoptV_{h+1}^j- V_{h+1}^*)](s_{h, k}, a_{h, k})$ even if $j \ge k$.
		Fortunately, we have for any $k \le j\le K$, 
		\begin{align}
			[\VB_h(\overoptV_{h+1}^j-V_{h+1}^*)](s_{h, k}, a_{h, k})
			&\le[ \PB_h(\overoptV_{h+1}^j-V_{h+1}^*)^2](s_{h, k}, a_{h, k})\nonumber \\
			&\overset{(a)}{\le} \HM [ \PB_h(\overoptV_{h+1}^j-V_{h+1}^*)](s_{h, k}, a_{h, k})\nonumber \\
			&\overset{(b)}{\le}\HM [ \PB_h(\overoptV_{h+1}^j-\overpesV_{h+1}^j)](s_{h, k}, a_{h, k})\nonumber \\
			&\overset{(c)}{\le}\HM [ \PB_h(\overoptV_{h+1}^k-\overpesV_{h+1}^k)](s_{h, k}, a_{h, k})
			\label{eq:sigma-bound1}
		\end{align}
		where $(a)$ uses $|\overoptV_{h+1}^j-V_{h+1}^*|(\cdot) \le \HM$ and the optimism of $\overoptV_{h+1}^j(\cdot)$, $(b)$ follows from the pessimism in~\eqref{eq:opt-pes}, and $(c)$ uses the monotonicity of value functions.
		The RHS of~\eqref{eq:sigma-bound1} is $\FM_{h,k}$-measurable but intractable due to the population expectation $\PB_h(\cdot)$.
		By replacing $\PB_h(\cdot)$ with the tractable $\HPB_{h, k}(\cdot)$, we introduce $E_{h, k}$ to overestimate the RHS of~\eqref{eq:sigma-bound1} where
		\begin{equation}
			\tag{\ref{eq:E}}
			E_{h, k} =
			\min\left\{
			\HM^2, 2\HM\beta_0 \cdot \|\Bphi_{h, k}\|_{\H_{h, k-1}^{-1}}
			+ \HM \cdot	\left[ \HPB_{h, k}(\overoptV_{h+1}^k - \overpesV_{h+1}^k)\right](s_{h, k}, a_{h, k})
			\right\}.
		\end{equation}
		Hence,~\eqref{eq:sigma-bound} is guaranteed by $\sigma_{h,k}^2 \ge d^3H \cdot E_{h, k}$.
		The extra $d^3H$ factor is introduced to offset the error caused by the covering number argument.
	\end{itemize}

\section{Proof for Section~\ref{sec:bandit}}
\label{proof:bandit}
\subsection{Proof of Theorem~\ref{thm:heavy}}
Let $z_t(\Btheta) =\frac{y_t -  \langle \Bphi_t, \Btheta \rangle}{\sigma_t}$ for simplicity.
Notice that the gradient is given by
\[
\nabla L_T(\Btheta) := \lambda \Btheta - \sum_{t=1}^T \frac{\tau_t z_t(\Btheta) }{\sqrt{ \tau_t^2 +  z_t(\Btheta)^2}  } \frac{\Bphi_t}{\sigma_t}.
\]
Our estimator is the solution of~\eqref{eq:Theta_T}.
By Proposition 1.3 in~\citep{bubeck2015convex},
the first-order stationary condition of the constrained convex optimization~\eqref{eq:Theta_T} implies that $\langle \nabla L_{T}(\Btheta_T), \Btheta_T- \Btheta \rangle \le 0$ for all $\Btheta \in \mathrm{Ball}_d(B)$.
More specifically, due to $\|\Btheta^*\| \le B$, we have
\begin{equation}
	\label{eq:first-order}
	\langle \nabla L_{T}(\Btheta_T), \Btheta_T- \Btheta^* \rangle \le 0.
\end{equation}
By the convexity of $L_T(\cdot)$, it follows that
\begin{align}
	\label{eq:help1}
	0 \le \langle  \Btheta_T - \Btheta^*, \nabla L_{T}(\Btheta_T) - \nabla L_{T}(\Btheta^*)\rangle .
\end{align}


In Lemma~\ref{lem:hessian}, we show that with high probability and up to a constant factor, $\nabla^2 L_{T}(\Btheta)$ is a good approximation of $\H_T$ uniformly for all $T\ge 1$ and $\|\Btheta\| \le B$.
Its proof is deferred to Section~\ref{proof:hessian}.
In the following, we will frequently mention a quantity denoted by $\kappa$.
\begin{equation}
	\label{eq:kappa}
	\kappa := d \log\left(1 + \frac{T L^2}{d\lambda \sigma_{\min}^2}\right) .
\end{equation}
\begin{lem}
	\label{lem:hessian}
	Assume $\EB[z_t^2(\Btheta^*)|\FM_{t-1}] \le b^2$ for all $t \ge 1$.
	If we set
	\[
	\tau_0 \sqrt{\log \frac{2T^2}{\delta}}\ge \max\{ \sqrt{2\kappa} b, 2\sqrt{d}   \},
	\]
	with probability at least $1-2\delta$, we have that for all $T \ge 0$,
	\[
	\frac{1}{4}\H_T
	\preceq \nabla^2 L_T(\Btheta) \preceq \H_T
	\quad \text{for any} \quad
	\|\Btheta\| \le B.
	\]
\end{lem}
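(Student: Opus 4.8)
\textbf{Proof plan for Lemma~\ref{lem:hessian}.}
The plan is to compute the Hessian of $L_T$ explicitly, show the easy upper bound $\nabla^2 L_T(\Btheta) \preceq \H_T$ deterministically, and then establish the matching lower bound $\nabla^2 L_T(\Btheta) \succeq \frac14 \H_T$ uniformly over the ball by controlling a ``contamination'' term with a self-normalized martingale concentration inequality. A direct differentiation of $\ell_{\tau_t}$ gives
\#
\nabla^2 L_T(\Btheta) = \lambda \I + \sum_{t=1}^T \frac{\tau_t^3}{\left(\tau_t^2 + z_t(\Btheta)^2\right)^{3/2}}\, \frac{\Bphi_t}{\sigma_t}\frac{\Bphi_t^\top}{\sigma_t},
\#
so that the scalar weights lie in $(0,1]$, which immediately yields $\nabla^2 L_T(\Btheta) \preceq \lambda\I + \sum_{t=1}^T \sigma_t^{-2}\Bphi_t\Bphi_t^\top = \H_T$. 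For the lower bound, I would use the elementary inequality $(1+u^2)^{-3/2} \ge 1 - \tfrac32 u^2$ (or simply $1 - cu^2$ for a suitable constant) with $u = z_t(\Btheta)/\tau_t$, reducing the task to showing
\#
\sum_{t=1}^T \frac{z_t(\Btheta)^2}{\tau_t^2}\,\frac{\Bphi_t}{\sigma_t}\frac{\Bphi_t^\top}{\sigma_t} \preceq c'\, \H_T
\#
uniformly over $\|\Btheta\| \le B$ for a small constant $c'$.

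Next I would split $z_t(\Btheta) = z_t(\Btheta^*) + \langle \Bphi_t, \Btheta^* - \Btheta\rangle/\sigma_t$ and bound $z_t(\Btheta)^2 \le 2 z_t(\Btheta^*)^2 + 2\langle\Bphi_t,\Btheta^*-\Btheta\rangle^2/\sigma_t^2 = 2\varepsilon_t^2/\sigma_t^2 + 2\langle\Bphi_t,\Btheta^*-\Btheta\rangle^2/\sigma_t^2$. The deterministic part is exactly the quantity estimated near~\eqref{eq:parameters}: using $\sigma_t \ge \sqrt{LB}\|\Bphi_t\|_{\H_{t-1}^{-1}}^{1/2}/(c_1 d)^{1/4}$ one gets $\langle\Bphi_t,\Btheta^*-\Btheta\rangle^2/(\tau_t^2\sigma_t^2) \le 4c_1 d/\tau_0^2$, which by the hypothesis $\tau_0^2\log(2T^2/\delta) \ge 4d$ (and $c_1 = \TOM(1)$ small) contributes at most a small constant times $\sigma_t^{-2}\Bphi_t\Bphi_t^\top$, hence at most a small constant times $\H_T$ after summation. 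The stochastic part requires controlling $M_T := \sum_{t=1}^T \frac{\varepsilon_t^2}{\tau_t^2\sigma_t^2}\,\frac{\Bphi_t}{\sigma_t}\frac{\Bphi_t^\top}{\sigma_t}$; here I would write $\tau_t^{-2} = w_t^2/(\tau_0^2(1+w_t^2)) \le w_t^2/\tau_0^2$ with $w_t = \|\Bphi_t/\sigma_t\|_{\H_{t-1}^{-1}}$, so each summand is bounded in operator norm by $w_t^4/(\tau_0^2)\cdot (\varepsilon_t/\sigma_t)^2$ and is small relative to $\H_t - \H_{t-1}$. The conditional expectation of $\varepsilon_t^2/\sigma_t^2$ is $\nu_t^2/\sigma_t^2 \le b^2$ (or $\le 1$ by the choice $\sigma_t \ge \nu_t$), so a matrix/self-normalized Bernstein-type bound with the condition $\tau_0^2\log(2T^2/\delta)\ge 2\kappa b^2$ and the elliptical potential lemma $\sum_t w_t^2 \le 2\kappa$ controls $M_T$ by $\OM(\kappa b^2/\tau_0^2)\cdot\H_T \preceq \OM(1/\log(2T^2/\delta))\cdot\H_T$, which is $\le \tfrac14\H_T$ when the logarithm is at least an absolute constant.

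I expect the main obstacle to be the uniform-in-$\Btheta$ control: the ball $\mathrm{Ball}_d(B)$ is a continuum, so a naive union bound fails. The cleanest route is the split above, which isolates the $\Btheta$-dependence into the \emph{deterministic} cross term (handled pointwise, no union bound needed) and leaves only the $\Btheta$-free martingale $M_T$ for the concentration step; this sidesteps covering arguments entirely. The secondary subtlety is bookkeeping the constants so that all the pieces—the $1-\tfrac32u^2$ slack, the deterministic $\OM(c_1 d/\tau_0^2)$ term, and the stochastic $\OM(\kappa b^2/\tau_0^2)$ term—simultaneously fit under the $\tfrac34$ budget, which is exactly what the two stated lower bounds on $\tau_0\sqrt{\log(2T^2/\delta)}$ are calibrated to guarantee. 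The ``for all $T\ge 0$'' uniformity is then obtained by running the Bernstein inequality in its anytime (supermartingale) form, so no extra union bound over $T$ is incurred; the $2\delta$ total failure probability reflects the two concentration events (one for each of the two bounds combined, or one per sign if handled separately).
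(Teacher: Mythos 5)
Your overall architecture matches the paper's: compute the Hessian, get the upper bound for free from the weights lying in $(0,1]$, and for the lower bound isolate the $\Btheta$-dependence into a deterministic cross term (handled pointwise via the constraint $\sigma_t \ge \sqrt{LB}\,\|\Bphi_t\|_{\H_{t-1}^{-1}}^{1/2}/(c_1 d)^{1/4}$, no covering needed) while concentrating a $\Btheta$-free martingale sum. The paper implements this by writing the weight at $\Btheta$ as the weight at $\Btheta^*$ plus a difference of cubes, bounding the difference deterministically and applying a scalar Bernstein inequality to $X_t = 1 - \tau_t/\sqrt{\tau_t^2 + z_t^2(\Btheta^*)}$; you instead linearize via $(1+u^2)^{-3/2} \ge 1 - \tfrac{3}{2}u^2$. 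These are close in spirit, and your deterministic piece and the elliptical-potential bookkeeping are fine.

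The genuine gap is in the concentration step for the stochastic piece. After your quadratic relaxation, the term you must control is $\sum_{t} \tau_t^{-2} z_t(\Btheta^*)^2\, \sigma_t^{-2}\Bphi_t\Bphi_t^\top$, whose scalar coefficients $z_t(\Btheta^*)^2/\tau_t^2 = \varepsilon_t^2/(\tau_t^2\sigma_t^2)$ are \emph{unbounded} and, under the standing assumption, have only a finite conditional first moment ($\EB[z_t^2(\Btheta^*)\,|\,\FM_{t-1}]\le b^2$); nothing is assumed about $\EB[z_t^4(\Btheta^*)\,|\,\FM_{t-1}]$. A Bernstein/Freedman-type bound (matrix or scalar) needs either an almost-sure bound on the increments or a variance proxy, i.e.\ fourth moments of $\varepsilon_t$ here, so the step ``a matrix/self-normalized Bernstein-type bound controls $M_T$'' does not go through as written; with only first moments of the nonnegative increments, Markov's inequality would give a $1/\delta$ rather than $\log(1/\delta)$ dependence, incompatible with the stated condition $\tau_0^2\log(2T^2/\delta)\ge 2\kappa b^2$. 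This is precisely the boundedness that the Huber weight provides and that your Taylor bound discards: when $z_t(\Btheta^*)$ is huge, the true deficiency $1-\bigl(\tau_t/\sqrt{\tau_t^2+z_t^2(\Btheta^*)}\bigr)^3$ stays in $[0,1]$ while $\tfrac{3}{2}z_t^2(\Btheta^*)/\tau_t^2$ blows up. The fix is cheap — use the truncated bound $(1+u^2)^{-3/2}\ge 1-\tfrac{3}{2}\min\{u^2,1\}$ (valid since the left side is positive), so the increments $\min\{z_t^2(\Btheta^*)/\tau_t^2,1\}$ are bounded by $1$, have conditional mean at most $b^2/\tau_t^2$, and conditional second moment at most their mean; then Freedman's inequality applies exactly as in the paper's treatment of $\H_{1,T}$. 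Without that truncation (or an equivalent device preserving boundedness), the proof fails under heavy-tailed rewards, which is the entire point of the lemma.
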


By the mean value theorem for vector-valued functions, we have
\[
\nabla L_{T}(\widehat{\Btheta}_{T}) - \nabla L_{T}(\Btheta^*)
= \int_0^1 \nabla^2  L_{T}((1-\eta) \Btheta^* + \eta \widehat{\Btheta}_{T}) \mathrm{d} \eta \cdot (  \widehat{\Btheta}_T - \Btheta^*).
\]
Using Lemma~\ref{lem:hessian} and $\|(1-\eta) \Btheta^* + \eta {\Btheta}_{T}) \| \le B$ for all $\eta \in [0, 1]$, we have
\begin{equation}
	\label{eq:help2}
	\frac{1}{4}\|{\Btheta}_{T} - \Btheta^* \|_{\H_T}^2  \le \langle  {\Btheta}_{T}- \Btheta^*, \nabla L_{T}({\Btheta}_{T}) - \nabla L_{T}(\Btheta^*)\rangle.
\end{equation}
By~\eqref{eq:help2},~\eqref{eq:help1} and~\eqref{eq:first-order}, we have
\begin{align*}
	\frac{1}{4}\|{\Btheta}_{T} - \Btheta^* \|_{\H_T}^2 
	&\le  \langle  \Btheta_T - \Btheta^*, \nabla L_{T}(\Btheta_T) - \nabla L_{T}(\Btheta^*)\rangle\\
	&\le  \langle  \Btheta_T - \Btheta^*, - \nabla L_{T}(\Btheta^*)\rangle\\
 &\le  \|  {\Btheta}_T - \Btheta^* \|_{\H_T}  \| \nabla L_{T}(\Btheta^*)\|_{\H_T^{-1}}
\end{align*}
which implies that
\begin{equation}
	\label{eq:target1}
	\|{\Btheta}_{T} - \Btheta^* \|_{\H_T} \le4\| \nabla L_{T}(\Btheta^*)\|_{\H_T^{-1}}.
\end{equation}

In the following, we are going to provide a high-probability bound for $ \| \nabla L_{\tau}(\Btheta^*)\|_{\H_T^{-1}}$.
In particular, we have the following lemma whose proof is provided in Section~\ref{proof:grad}.
\begin{lem}
	\label{lem:grad}
	Assume $\EB[z_t^2(\Btheta^*)|\FM_{t-1}] \le b^2$ for all $t \ge 1$.
	With probability at least $1-\delta$,  for all $T \ge 1$, it follows that
	\[
	\| \nabla L_{T}(\Btheta^*)\|_{\H_T^{-1}} 
	\le 8\left[ \frac{\kappa b^2}{\tau_0} + b \sqrt{\kappa\log\frac{2T^2}{\delta}} + \tau_0\log\frac{2T^2}{\delta}   \right]+ \sqrt{\lambda} B
	\]
	where $\kappa$ is defined in~\eqref{eq:kappa}.
\end{lem}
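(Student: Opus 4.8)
The goal is to bound $\|\nabla L_T(\Btheta^*)\|_{\H_T^{-1}}$, where $\nabla L_T(\Btheta^*) = \lambda\Btheta^* - \sum_{t=1}^T \tau_t z_t(\Btheta^*)(\tau_t^2 + z_t(\Btheta^*)^2)^{-1/2}\Bphi_t/\sigma_t$. The plan is to split this into the ridge part $\lambda\Btheta^*$, which contributes $\|\lambda\Btheta^*\|_{\H_T^{-1}} \le \sqrt{\lambda}\|\Btheta^*\| \le \sqrt{\lambda}B$ since $\H_T \succeq \lambda\I$, and the ``data'' part $\M_T := \sum_{t=1}^T \tau_t z_t(\Btheta^*)(\tau_t^2 + z_t(\Btheta^*)^2)^{-1/2}\Bphi_t/\sigma_t$. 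Write $\psi_t := \tau_t z_t(\Btheta^*)/\sqrt{\tau_t^2 + z_t(\Btheta^*)^2}$ for the bounded (by $\tau_t$ in absolute value, in fact by $|z_t(\Btheta^*)|$ too) influence-function applied to the scaled noise $z_t(\Btheta^*) = \varepsilon_t/\sigma_t$. The key decomposition, following \citet{zhou2021nearly}, is $\M_T = \sum_t (\psi_t - \EB[\psi_t|\FM_{t-1}])\Bphi_t/\sigma_t + \sum_t \EB[\psi_t|\FM_{t-1}]\Bphi_t/\sigma_t$. The first sum is a self-normalized vector-valued martingale to which a Bernstein-type concentration inequality (self-normalized martingale bound of Freedman/Bernstein type, as in Lemma~\ref{lem:self-bern}) applies, yielding roughly $\sqrt{\kappa}$ times the square-root of the accumulated conditional variance plus a range term times $\kappa$; the second sum is a deterministic-in-magnitude bias term coming from the asymmetric truncation introduced by the Huber influence.

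For the martingale part, I would bound $\|\sum_t(\psi_t - \EB[\psi_t|\FM_{t-1}])\Bphi_t/\sigma_t\|_{\H_T^{-1}}$ using the self-normalized Bernstein inequality: the per-step conditional variance of $\psi_t - \EB[\psi_t|\FM_{t-1}]$ is at most $\EB[\psi_t^2|\FM_{t-1}] \le \EB[z_t(\Btheta^*)^2|\FM_{t-1}] \le b^2$ (using $|\psi_t| \le |z_t(\Btheta^*)|$), and the weight $w_t = \|\Bphi_t/\sigma_t\|_{\H_{t-1}^{-1}}$ times the magnitude bound $|\psi_t| \le \tau_t$ gives a per-step ``range'' bound $\tau_t w_t = \tau_0\sqrt{1+w_t^2} \le \tau_0(1 + w_t)$ — this is precisely why $\tau_t$ is defined as $\tau_0\sqrt{1+w_t^2}/w_t$, so that the product $\tau_t w_t$ is controlled uniformly by $O(\tau_0)$ up to the $\sqrt{1+w_t^2}$ factor which aggregates nicely. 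Combined with the elliptic potential / log-determinant bound $\sum_t w_t^2 \lesssim \kappa$ (so $\sum_t \min\{1,w_t^2\} \lesssim \kappa$), the self-normalized Bernstein bound delivers something like $\lesssim b\sqrt{\kappa\log(2T^2/\delta)} + \tau_0\log(2T^2/\delta)$ with probability at least $1-\delta$ uniformly in $T$. This accounts for the variance term and the range term.

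The bias term $\|\sum_t \EB[\psi_t|\FM_{t-1}]\Bphi_t/\sigma_t\|_{\H_T^{-1}}$ is where the main work lies and is the step I expect to be the principal obstacle. Since $\EB[z_t(\Btheta^*)|\FM_{t-1}] = 0$ (as $\EB[\varepsilon_t|\FM_{t-1}] = 0$), the conditional mean $\EB[\psi_t|\FM_{t-1}]$ is the bias of the Huber influence function against the true mean; a Taylor/Lipschitz argument gives $|\EB[\psi_t|\FM_{t-1}]| = |\EB[\psi_t - z_t(\Btheta^*)|\FM_{t-1}]| \le \EB[|z_t(\Btheta^*)|^3/\tau_t^2 \,|\, \FM_{t-1}]$ roughly — more carefully, $|x - \tau x/\sqrt{\tau^2+x^2}| \le |x|^3/\tau^2$, or one uses the second-moment bound $\EB[z_t^2|\FM_{t-1}] \le b^2$ together with $|\psi_t - z_t| \le z_t^2/\tau_t$ (since $|x - \tau x/\sqrt{\tau^2+x^2}| \le x^2/\tau$) to get $|\EB[\psi_t|\FM_{t-1}]| \le \EB[z_t^2|\FM_{t-1}]/\tau_t \le b^2/\tau_t$. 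Then $\|\sum_t \EB[\psi_t|\FM_{t-1}]\Bphi_t/\sigma_t\|_{\H_T^{-1}} \le \sum_t |\EB[\psi_t|\FM_{t-1}]| \cdot \|\Bphi_t/\sigma_t\|_{\H_T^{-1}} \le \sum_t (b^2/\tau_t) w_t$ (using $\H_T \succeq \H_{t-1}$ so $\|\cdot\|_{\H_T^{-1}} \le \|\cdot\|_{\H_{t-1}^{-1}}$), and $w_t/\tau_t = w_t^2/(\tau_0\sqrt{1+w_t^2}) \le w_t^2/\tau_0$, giving a bound $\le (b^2/\tau_0)\sum_t w_t^2 \lesssim \kappa b^2/\tau_0$. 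Assembling the three pieces — ridge $\sqrt{\lambda}B$, variance/range from Bernstein, and bias $\kappa b^2/\tau_0$ — and absorbing constants into the factor $8$ yields the claimed bound. I would carry out the steps in the order: (1) first-order/gradient identity and the $\lambda\Btheta^*$ split; (2) the $\psi_t - \EB[\psi_t|\FM_{t-1}]$ decomposition; (3) verify the measurability and boundedness hypotheses of the self-normalized Bernstein inequality and the elliptic-potential bound $\sum_t w_t^2 \lesssim \kappa$; (4) apply Bernstein to get the variance and range terms; (5) bound the bias term via the cubic/quadratic Lipschitz estimate on the Huber influence and the potential bound; (6) combine, with a union bound over the single application that already holds uniformly in $T$. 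The delicate point throughout is keeping the weighting by $\sigma_t$ and $\tau_t$ consistent so that $w_t$ (not $\|\Bphi_t\|$) is the quantity that enters both the variance proxy and the range, and that the $\tau_t w_t = \tau_0\sqrt{1+w_t^2}$ cancellation is exploited correctly.
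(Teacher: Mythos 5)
Your proposal is correct in outline but takes a genuinely different route from the paper. The paper does not center the influence function $\psi_t$; instead it expands $\|\d_T\|_{\H_T^{-1}}^2$ recursively via the rank-one (Woodbury) update of $\H_T^{-1}$, obtaining a cross term and a quadratic term (Lemmas~\ref{lem:I1} and~\ref{lem:I2}), applies the scalar Freedman inequality to each, and closes the argument with a bootstrap over the events $A_t=\{\|\d_n\|_{\H_n^{-1}}\le\alpha_n,\ n\le t\}$ so that the cross term (which involves $\d_{t-1}$ itself) can be controlled self-consistently; the bias $\kappa b^2/\tau_0$ emerges as the conditional mean of the cross term. You instead write $\d_T$ as a genuinely centered vector-valued martingale plus an explicit bias sum, invoke the self-normalized Bernstein inequality (the paper's Lemma~\ref{lem:self-bern}, which it states and uses only in the MDP section) for the martingale part, and bound the bias by the triangle inequality together with $|\EB[\psi_t|\FM_{t-1}]|\le b^2/(2\tau_t)$ and the elliptical potential bound. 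Both arguments hinge on the same two cancellations — $\tau_t w_t=\tau_0\sqrt{1+w_t^2}=O(\tau_0)$ (since $w_t\le c_0\le 1$) and $w_t/\tau_t\le\min\{1,w_t^2\}/\tau_0$ — and both produce the same three-term structure. Your route is more modular and avoids the bootstrap entirely, at the cost of relying on Lemma~\ref{lem:self-bern} as a black box; the paper's recursion is essentially a self-contained derivation of that same concentration phenomenon (following \citet{zhou2021nearly}) and tracks constants slightly more tightly. The only caveat is numerical: with the constants as stated in Lemma~\ref{lem:self-bern} your range term comes out as $8\sqrt{2}\,\tau_0\log(4T^2/\delta)$ rather than $8\tau_0\log(2T^2/\delta)$, so the absolute constant $8$ in the lemma would need to be enlarged slightly, which propagates harmlessly into $\beta_t$ and the final regret.
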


Combing~\eqref{eq:target1}, Lemma~\ref{lem:hessian} and Lemma~\ref{lem:grad}, we know that if 
$\tau_0\sqrt{\log \frac{2T^2}{\delta}}= \max\{ \sqrt{2\kappa}b, 2\sqrt{d}  \}$, with probability at least $1-2\delta$, we have that  $\|{\Btheta}_t - \Btheta^*\|_{\H_t} < \beta_t$ for all $1 \le t \le T$ where
\[
\beta_t = 32\left[b\sqrt{\frac{\kappa}{2}\log \frac{2t^2}{\delta}} +
b\sqrt{\kappa \log\frac{2t^2}{\delta}}+ (\sqrt{2\kappa} b+2\sqrt{d}  )\sqrt{\log\frac{2t^2}{\delta}  } \right] + 4\sqrt{\lambda} B.
\]
It implies that $\Btheta^*$ indeed locals in all constructed confidence regimes, i.e., for all $1 \le t \le T$,
\[
\Btheta^* \in \CM_t = \left\{ \Btheta \in \mathrm{Ball}_d(B) :  \|\Btheta_t - \Btheta\|_{\H_t} \le  \beta_t  \right\}.
\]
Notice that by the choice of $\beta_0$ and $\H_0$, we still have $\Btheta^* \in \CM_0$.
Finally, $b=1$ in our case completes the proof.

\subsection{Proof of Theorem~\ref{thm:main-bandit}}

Now, we turn to the regret~\eqref{eq:regret}.
Recall that at iteration $t$, we set 
\[
(\Bphi_t, *) = \argmax_{\Bphi \in \DM_t, \est \in \CM_{t-1}} \langle\Bphi,  \est \rangle.
\]
Due to $\sup_{\Bphi \in \cup_{t \ge 0}\DM_t} |\langle \Bphi, \Btheta^* \rangle |  \le R:=LB$, it follows that
\begin{align*}
	\mathrm{Reg}(T) &:= \sum_{t=1}^T \left[  \sup_{\Bphi \in \DM_t} \langle \Bphi, \Btheta^* \rangle - \langle \Bphi_t, \Btheta^* \rangle  \right]\\
	&\le \sum_{t=1}^T \left[ \sup_{\Bphi \in \DM_t, \est \in \CM_{t-1}} 
	\langle \Bphi, \Btheta \rangle - \langle \Bphi_t, \Btheta^* \rangle \right]\\
	&= \sum_{t=1}^T \left[ \sup_{ \est \in \CM_{t-1}} \langle \Bphi_t, \Btheta \rangle - \langle \Bphi_t, \Btheta^* \rangle \right]\\
	&\le \sum_{t=1}^T\| \Bphi_t\|_{\H_{t-1}^{-1}(\est_{t-1})} \cdot  \sup_{ \est \in \CM_{t-1}} \| \est - \est^*\|_{\H_{t-1}}
\end{align*}
Notice that with probability $1-\delta$, $\est^* \in \CM_t$ for all $t \ge 1$, i.e., $\| \est_t - \est^*\|_{\H_t} \le \beta_t$.
Hence, 
\[
\sup_{ \est \in \CM_t} \| \est - \est^*\|_{\H_t}
\le  \sup_{ \est \in \CM_t} \| \est - \est_t\|_{\H_t}+ \| \est_t - \est^*\|_{\H_t} \le 2 \beta_t.
\]
Notice that $\beta_t$ is increasing in $t$ and $w_t = \left\|\frac{\Bphi_t}{\sigma_t}\right\|_{\H_{t-1}^{-1}}$.
Therefore,
\begin{align}
	\label{eq:reg}
	\mathrm{Reg}(T)
	&\le 2\beta_T \sum_{t=1}^T \|\Bphi_t\|_{\H_{t-1}^{-1}}
	= 2 \beta_T \sum_{t=1}^T\sigma_t w_t
	=2 \beta_T \sum_{t=1}^T\sigma_t\min\{1, w_t\}.
\end{align}
The last equality uses $w_t\le 1$ (which is due to $\sigma_t \ge \|\Bphi_{t}\|_{\H_{t-1}^{-1}}/c_0$ and $c_0 \le 1$).
Notice that $\|\Bphi_t\|/\sigma_t  \le \|\Bphi_t\|/\sigma_{\min}  \le L/\sigma_{\min}$.
Then by Lemma~\ref{lem:w-sum}, 
\begin{equation}
	\label{eq:w-sum0}
	\sum_{t=1}^T \min\left\{1, \left\| \frac{\Bphi_t}{\sigma_t}\right\|_{\H_{t-1}^{-1}}^2 \right\}
	= 	\sum_{t=1}^T \min\left\{1,w_t^2\right\} 
	\le 2d \log\left(1 + \frac{T L^2}{d\lambda \sigma_{\min}^2}\right) = 2 \kappa.
\end{equation}

Recall that 
\[
\sigma_t = \max\left\{ \nu_t, \sigma_{\min},\frac{\|\Bphi_{t}\|_{\H_{t-1}^{-1}}}{c_0}, \frac{\sqrt{LB}\|\Bphi_t\|^{\frac{1}{2}}_{\H_{t-1}^{-1}}}{c_1^{\frac{1}{4}}d^{\frac{1}{4}}}
\right\}.
\]
According to what value $\sigma_t$ takes, we decompose $[T]$ into three sets $[T] \subseteq \cup_{i=1}^3 \JM_i$ where
\begin{gather*}
\JM_1 = \{
t \in [T]: \sigma_t \in \{ \nu_t, \sigma_{\min}\}
\},\\
\JM_2 = \left\{
t \in [T]: \sigma_t  = \frac{\|\Bphi_{t}\|_{\H_{t-1}^{-1}}}{c_0} \right\}, \\
\JM_3 = \left\{
t \in [T] : \sigma_t =\sqrt{LB} \frac{\|\Bphi_t\|^{\frac{1}{2}}_{\H_{t-1}^{-1}}}{c_1^{\frac{1}{4}}d^{\frac{1}{4}}}
\right\}.
\end{gather*}

First, it follows that
\begin{align}
	\label{eq:reg1-1}
	\sum_{t \in \JM_1}\sigma_t   \min\left\{ 1,  w_t  \right\}
	&\le 	\sum_{t \in \JM_1} \max\{ \nu_t, \sigma_{\min} \}  \min\left\{ 1,  w_t  \right\} \nonumber \\
	&\le \sum_{t \in [T]} \max\{ \nu_t, \sigma_{\min} \}  \min\left\{ 1,  w_t  \right\} \nonumber \\
	&\overset{(a)}{\le} \sqrt{\sum_{t \in [T]} (\nu_t^2 + \sigma_{\min}^2)} \sqrt{\sum_{t\in[T]}  \min\left\{ 1,  w_t^2  \right\} } \nonumber \\
	&\overset{(b)}{\le} \sqrt{2\kappa} \cdot \sqrt{\sum_{t \in [T]} \nu_{t}^2 + 1} .
\end{align}
Here $(a)$ holds due to Cauchy-Schwarz inequality and $(b)$ uses~\eqref{eq:w-sum0} and $\sigma_{\min} = \frac{1}{\sqrt{T}}$. 

Second, for any $t \in \JM_2$, we have $w_t = \left\|\frac{\Bphi_t}{\sigma_t}\right\|_{\H_{t-1}^{-1}} = c_0 \le 1$. Therefore,
\begin{align}
	\label{eq:reg1-2}
	\sum_{t \in \JM_2}\sigma_t   \min\left\{ 1,  w_t  \right\}
	&=	\sum_{t \in \JM_2}\sigma_t w_t
	= \frac{1}{c_0}\sum_{t \in \JM_2}\sigma_t w_t^2
	\le\frac{\sup_{t \in \JM_2} \sigma_t}{c_0}\sum_{t \in \JM_2}  w_t^2\nonumber\\
	&\le \frac{\sup_{t \in [T]}\|\Bphi_{t}\|_{\H_{t-1}^{-1}} }{c_0^2} \cdot \sum_{t \in \JM_2}  \min\{  1, w_t^2\} \nonumber\\
	&\le \frac{\sup_{t \in [T]}\|\Bphi_{t}\|_{\H_{t-1}^{-1}} }{c_0^2} \cdot \sum_{t \in [T]}  \min\{  1, w_t^2\} \le  \frac{2L\kappa}{c_0^2\sqrt{\lambda}}
\end{align}
where the last inequality uses $\|\Bphi_{t}\|_{\H_{t-1}^{-1}} \le  \frac{1}{\sqrt{\lambda}} \|\Bphi_{t}\| \le \frac{L}{\sqrt{\lambda}}$ for all $t\ge1$ and~\eqref{eq:w-sum0}.

Finally, for any $t \in \JM_3$, we have $L^2B^2w_t^2  = c_1d\sigma_t^2$ due to $w_t^2 = \left\|\frac{\Bphi_{t}}{\sigma_t}\right\|_{\H_{t-1}^{-1}}^2$.
It implies $\sigma_t = LBw_t/ \sqrt{c_1 d}  = LB \min\{1, w_t\}/\sqrt{c_1d}$ with the fact that $w_t \le 1$.
Therefore,
\begin{align}
	\label{eq:reg1-3}
	\sum_{t \in \JM_3}\sigma_t   \min\left\{ 1,  w_t  \right\}
	=	\frac{LB}{\sqrt{c_1d}}\cdot \sum_{t \in \JM_3}   \min\left\{ 1,  w_t^2  \right\}
	\le \frac{LB}{\sqrt{c_1d}}\cdot \sum_{t \in [T]}   \min\left\{ 1,  w_t^2  \right\} \le \frac{2LB\kappa}{\sqrt{c_1d}}.
\end{align}
Plugging~\eqref{eq:reg1-1},~\eqref{eq:reg1-2} and~\eqref{eq:reg1-3} into~\eqref{eq:reg}, we have
\[
\mathrm{Reg}(T)
\le 2\beta_T  \left[
\sqrt{2\kappa} \cdot \sqrt{\sum_{t \in [T]} \nu_{t}^2 + 1} 
+ \frac{2L\kappa}{c_0^2\sqrt{\lambda}}+\frac{2LB\kappa}{\sqrt{c_1d}}
\right].
\]

\subsection{Proof of Lemma~\ref{lem:hessian}}
\label{proof:hessian}
Recall that  $z_t(\Btheta) =\frac{y_t -  \langle \Bphi_t, \Btheta \rangle}{\sigma_t}$.
Direct computation yields that
\[
\nabla^2 L_T(\Btheta)
= \lambda \I  + \sum_{t=1}^T \left(  \frac{\tau_t}{\sqrt{\tau_t^2 + z_t^2(\Btheta)}} \right)^3 \frac{\Bphi_t \Bphi_t^\top}{\sigma_t^2}.
\]
Clearly, for any $\Btheta \in \RB^d$,
\[
\nabla^2 L_T(\Btheta) \preceq  \lambda \I  + \sum_{t=1}^T  \frac{\Bphi_t \Bphi_t^\top}{\sigma_t^2} =  \H_T.
\]
For the other direction, we decompose it into four terms and analyze them respectively.
\begin{align}
	\begin{split}
		\label{eq:hessian}
		\nabla^2 L_T(\Btheta)
		&=  \H_T -  
		\underbrace{
			\sum_{t=1}^T \left[1- \left(  \frac{\tau_t}{\sqrt{\tau_t^2 + z_t^2(\Btheta^*)}} \right)^3\right] \frac{\Bphi_t \Bphi_t^\top}{\sigma_t^2}}_{\H_{1, T}}\\
		& \qquad
		+ 	\underbrace{
			\sum_{t=1}^T \left[\left(  \frac{\tau_t}{\sqrt{\tau_t^2 + z_t^2(\Btheta)}} \right)^3- \left(  \frac{\tau_t}{\sqrt{\tau_t^2 + z_t^2(\Btheta^*)}} \right)^3   \right] \frac{\Bphi_t \Bphi_t^\top}{\sigma_t^2}}_{\H_{2,T}}.
	\end{split}
\end{align}
where $\EB_t[\cdot] = \EB[\cdot|\FM_{t-1}]$ for simplicity.

Since $\nu_t, \Btheta_{t-1} \in \FM_{t-1}$, from Algorithm~\ref{algo:adap}, we have $\sigma_t, w_t, \tau_t \in \FM_{t-1}$.

\paragraph{Analysis of $\H_{1, T}$}
Notice that for any unit norm $\v \in \RB^d$, it follows that
\begin{align*}
	\v^\top \H_{1, T} \v
	&= \sum_{t=1}^T\left[1- \left(  \frac{\tau_t}{\sqrt{\tau_t^2 + z_t^2(\Btheta^*)}} \right)^3\right] \left\langle  \frac{\Bphi_t}{\sigma_t}, \v \right\rangle^2\\
	&\le3 \sum_{t=1}^T\left[1- \frac{\tau_t}{\sqrt{\tau_t^2 + z_t^2(\Btheta^*)}}\right] \left\langle  \frac{\Bphi_t}{\sigma_t}, \v \right\rangle^2\\
	&\le3 \sum_{t=1}^T\left[1- \frac{\tau_t}{\sqrt{\tau_t^2 + z_t^2(\Btheta^*)}}\right] \cdot \sup_{t \in [T]} \left\langle  \frac{\Bphi_t}{\sigma_t}, \v \right\rangle^2\\
	&\le3 \sum_{t=1}^T\left[1- \frac{\tau_t}{\sqrt{\tau_t^2 + z_t^2(\Btheta^*)}}\right] \cdot \sup_{t \in [T]}  \left\| \frac{\Bphi_t}{\sigma_t}\right\|_{\H_T^{-1}}^2  \cdot \v^\top \H_T \v\\
	&\overset{(a)}{\le} 3\sum_{t=1}^T\left[1- \frac{\tau_t}{\sqrt{\tau_t^2 + z_t^2(\Btheta^*)}}\right] \cdot \sup_{t \in [T]}  \left\| \frac{\Bphi_t}{\sigma_t}\right\|_{\H_t^{-1}}^2  \cdot \v^\top \H_T \v\\
	&\overset{(b)}{=} 3\sum_{t=1}^T\left[1- \frac{\tau_t}{\sqrt{\tau_t^2 + z_t^2(\Btheta^*)}}\right] \cdot \sup_{t \in [T]} \frac{w_t^2}{1+w_t^2} \cdot \v^\top \H_T \v,
\end{align*}
where $(a)$ uses $\H_T^{-1} \preceq \H_t^{-1}$ for all $t \in [T]$ and $(b)$ follows from
\[
\left\| \frac{\Bphi_t}{\sigma_t}\right\|_{\H_t^{-1}}^2 
= \frac{\Bphi_t^\top}{\sigma_t} \left( \H_{t-1}^{-1}  - \frac{\H_{t-1}^{-1} \frac{\Bphi_t}{\sigma_t} \frac{\Bphi_t^\top}{\sigma_t}\H_{t-1}^{-1}}{1+\frac{\Bphi_t^\top}{\sigma_t} \H_{t-1}^{-1}\frac{\Bphi_t^\top}{\sigma_t}} \right)\frac{\Bphi_t}{\sigma_t}
= w_t^2 - \frac{w_t^4}{1+w_t^2} =  \frac{w_t^2}{1+w_t^2}.
\]
By the arbitrariness of $\v$, we know that
\begin{equation}
	\label{eq:H1-0}
	\H_{1, T} \preceq
	3 \sum_{t=1}^T\left[1- \frac{\tau_t}{\sqrt{\tau_t^2 + z_t^2(\Btheta^*)}}\right] \cdot  \sup_{t \in [T]}\frac{w_t^2}{1+w_t^2}  \cdot  \H_T .
\end{equation}

Let $X_t = 1-  \frac{\tau_t}{\sqrt{\tau_t^2 + z_t^2(\Btheta^*)}}$.
It is obvious that $0 \le X_t \le 1$.
We then focus on concentration of $\sum_{t=1}^T X_t$.
To that end, we need a variance-aware Bernstein inequality Lemma~\ref{lem:bern} for martingales.
Lemma~\ref{lem:bern} implies that with probability at least $1-\frac{\delta}{T^2}$, we have  
\[ 
\sum_{t=1}^T X_t
\le \sum_{t=1}^T \EB_t X_t + 3 \sqrt{\sum_{t=1}^T \Var[X_t|\FM_{t-1}]  \cdot \log\frac{2KT^2}{\delta}} + 5\log\frac{2KT^2}{\delta}
\]
where $K := 1+\ceil{2\log_2V} $ and $V^2$ is an upper bound satisfying $\sum_{t=1}^T \EB[X_t^2|\FM_{t-1}]  \le V^2$.

First notice that for any $t \ge 1$, we have
\begin{align}
	\begin{split}
		\label{eq:help3}
		\EB_t X_t
		&=1- \EB_t  \frac{\tau_t}{\sqrt{\tau_t^2 + z_t^2(\Btheta^*)}}
		= \EB_t \frac{z_t^2({\Btheta^*})}{\sqrt{\tau_t^2 + z_t^2(\Btheta^*)} (\sqrt{\tau_t^2 + z_t^2(\Btheta^*)}+\tau_t)} \\
		&\le \frac{1}{2\tau_t^2} \EB_t z_t^2(\Btheta^*) \le 
		\frac{b^2}{2\tau_t^2} \le \frac{b^2}{2\tau_0^2} \frac{w_t^2}{1+w_t^2}
	\end{split}
\end{align}
which implies that
\[
\sum_{t=1}^T \EB_t X_t \le  \frac{b^2}{2\tau_0^2} \frac{w_t^2}{1+w_t^2}
\le  	\frac{b^2}{2\tau_0^2}  \sum_{t=1}^T \min\{ 1, w_t^2\}
\le \frac{\kappa b^2}{\tau_0^2}
\]
where the last inequality uses Lemma~\ref{lem:w-sum} and thus
\[
\sum_{t=1}^T \min\{ 1, w_t^2\}
=
\sum_{t=1}^T \min\left\{1, \left\| \frac{\Bphi_t}{\sigma_t}\right\|_{\H_{t-1}^{-1}}^2 \right\} \le 2d \log\left(1 + \frac{T L^2}{d\lambda \sigma_{\min}^2}\right) = 2 \kappa.
\]

Secondly, we have
\begin{align*}
	\Var[X_t|\FM_{t-1}]
	&\le 	\EB[X_t^2|\FM_{t-1}] 
	\le 	\EB_t   \left(1-  \frac{\tau_t}{\sqrt{\tau_t^2 + z_t^2(\Btheta^*)}}\right)^2
	\overset{(*)}{\le}  \frac{1}{4} \frac{\EB_t z_t^2(\Btheta^*)}{\tau_t^2} \le  \frac{b^2}{4\tau_t^2} 
\end{align*}
where $(*)$ uses $1- \frac{\tau_t}{\sqrt{\tau_t^2 + z_t^2(\Btheta^*)}} \le \frac{z_t^2(\Btheta^*)}{2\tau_t \sqrt{\tau_t^2+z_t^2(\Btheta^*)}}$ which is also used in~\eqref{eq:help3}.
As a result, we have
\[
\sum_{t=1}^T 	\Var[X_t|\FM_{t-1}]
\le \sum_{t=1}^T \frac{b^2}{4\tau_t^2}  \le  \frac{b^2}{4\tau_0^2} \sum_{t=1}^T \frac{w_t^2}{1+w_t^2} \le \frac{\kappa b^2}{2\tau_0^2}.
\]
Once requiring $\tau_0^2 \ge 2\kappa b^2$, we have $\sum_{t=1}^T \Var[X_t|\FM_{t-1}] \le 1$ and thus we can set $V=1$ and obtain $K=1$.
Putting them together, if $\tau_0^2 \ge \frac{2\kappa b^2}{\log\frac{2T^2}{\delta}}$, with probability at least $1-\delta$, we have
\begin{align}
	\label{eq:H1-1}
	\sum_{t=1}^T X_t
	& \le \frac{\kappa b^2}{\tau_0^2} + \frac{3b}{\tau_0}  \sqrt{\frac{\kappa \log\frac{2T^2}{\delta}}{2}} + 5\log \frac{2T^2}{\delta} \nonumber \\
	&\le \frac{1}{2}\log\frac{2T^2}{\delta} + \frac{3}{2}\log\frac{2T^2}{\delta} + 5\log \frac{2T^2}{\delta}
	\nonumber \\
	&\le 9\log \frac{2T^2}{\delta} = \frac{1}{12c_0^2}
\end{align}
where the last equation is due to the definition of $c_0$.
Finally, taking a union bound for the last inequality from $T=1$ to $\infty$ and using the fact that $\sum_{t=1}^{\infty} t^{-2} < 2$, we have $\sum_{t=1}^T X_t \le \frac{1}{12c_0^2}$ for all $T \ge 1$ with probability at least $1-2\delta$.

On the other hand, by the choice of $\sigma_t$, we have $\sigma_t^2 \ge \frac{1}{c_0^2} \cdot \| \Bphi_{t}\|_{\H_{t-1}^{-1}}^2$, which implies
\begin{equation}
	\label{eq:H1-2}\sup_{t \in [T]} \frac{w_t^2}{1+w_t^2} 
	\le \sup_{t \in [T]} w_t^2 \le c_0^2 .
\end{equation}
Plugging~\eqref{eq:H1-1} and~\eqref{eq:H1-2} into~\eqref{eq:H1-0}, we have
\begin{equation}
	\label{eq:H1}
	\H_{1, T} \preceq \frac{1}{4} \H_T.
\end{equation}

\paragraph{Analysis of $\H_{2,T}$}
We first notice that
\begin{align}
	\label{eq:help4}
	&\left|\left(  \frac{\tau_t}{\sqrt{\tau_t^2 + z_t^2(\Btheta)}} \right)^3- \left(  \frac{\tau_t}{\sqrt{\tau_t^2 + z_t^2(\Btheta^*)}} \right)^3   \right|
	\le 3	\left|\frac{\tau_t}{\sqrt{\tau_t^2 + z_t^2(\Btheta)}} -   \frac{\tau_t}{\sqrt{\tau_t^2 + z_t^2(\Btheta^*)}}   \right| \nonumber \\
	& \qquad \le  \frac{3 \tau_t}{\sqrt{\tau_t^2 + z_t^2(\Btheta)} \sqrt{\tau_t^2 + z_t^2(\Btheta^*)}} \frac{|z_t^2(\Btheta)-z_t^2(\Btheta^*)|}{\sqrt{\tau_t^2 + z_t^2(\Btheta)} + \sqrt{\tau_t^2 + z_t^2(\Btheta^*)}}.
\end{align} 
Notice that $z_t(\Btheta) = z_t(\Btheta^*) + \langle \frac{\Bphi_t}{\sigma_t}, \Btheta-\Btheta^*\rangle$.
It then follow that for any $c > 0$
\begin{gather*}
	z_t^2(\Btheta) \le \left(1+\frac{1}{c}\right) z_t^2(\Btheta^*) + (1+c) \left\langle\frac{\Bphi_t}{\sigma_t}, \Btheta-\Btheta^*\right\rangle^2;\\
	z_t^2(\Btheta^*) \le \left(1+\frac{1}{c}\right) z_t^2(\Btheta) + (1+c) \left\langle\frac{\Bphi_t}{\sigma_t}, \Btheta-\Btheta^*\right\rangle^2,
\end{gather*}
By discussing which is larger between $	z_t^2(\Btheta)$ and $z_t^2(\Btheta^*) $, we have
\begin{equation}
	\label{eq:help5}
	|z_t^2(\Btheta)-z_t^2(\Btheta^*)|
	\le \frac{1}{c} \min \left\{ z_t^2(\Btheta), z_t^2(\Btheta^*) \right\}
	+ \left(1+c\right) \left\langle\frac{\Bphi_t}{\sigma_t}, \Btheta-\Btheta^*\right\rangle^2.
\end{equation}
Plugging~\eqref{eq:help5} into~\eqref{eq:help4}, we have that
\begin{align}
	\label{eq:help6}
	&\left|\left(  \frac{\tau_t}{\sqrt{\tau_t^2 + z_t^2(\Btheta)}} \right)^3- \left(  \frac{\tau_t}{\sqrt{\tau_t^2 + z_t^2(\Btheta^*)}} \right)^3   \right| \nonumber \\
	& \qquad	\le \frac{3 \tau_t}{\tau_t^2 +  \min \left\{ z_t^2(\Btheta), z_t^2(\Btheta^*)\right\} } \frac{\frac{1}{c} \min \left\{ z_t^2(\Btheta), z_t^2(\Btheta^*)\right\}}{2\sqrt{\tau_t^2 +  \min \left\{ z_t^2(\Btheta), z_t^2(\Btheta^*)\right\}  }} + \frac{3(1+c)}{2\tau_t^2} \left\langle\frac{\Bphi_t}{\sigma_t}, \Btheta-\Btheta^*\right\rangle^2\nonumber\\
	&\qquad \le \frac{3}{2c} + \frac{3(1+c)}{2\tau_t^2} \left\langle\frac{\Bphi_t}{\sigma_t}, \Btheta-\Btheta^*\right\rangle^2
	\overset{(a)}{\le}  \frac{3}{2c} + \frac{6(1+c)}{\tau_t^2} \frac{L^2B^2}{\sigma_t^2} \nonumber \\
	&\qquad  \le  \frac{3}{2c} + \frac{6(1+c)}{\tau_0^2} \frac{w_t^2L^2B^2}{\sigma_{t}^2}
	\overset{(b)}{\le}   \frac{3}{2c} + \frac{6(1+c)c_1d}{\tau_0^2} 
\end{align}
where $(a)$ uses $\left\langle\frac{\Bphi_t}{\sigma_t}, \Btheta-\Btheta^*\right\rangle \le \left\| \frac{\Bphi_t}{\sigma_t} \right\| \left( \|\Btheta\| + \|\Btheta^*\| \right) \le \frac{2LB}{\sigma_{t}}$ due to $\left\| \Bphi_t \right\| \le L$ and $\Btheta, \Btheta^* \in \mathrm{Ball}_d(B)$ and
$(b)$ uses the following result.
By the definition of $\sigma_{t}$, we have $\sigma_t \ge {\sqrt{LB}\|\Bphi_t\|^{\frac{1}{2}}_{\H_{t-1}^{-1}}}/{c_1^{\frac{1}{4}}d^{\frac{1}{4}}}$ which implies  $\sigma_t^2 \ge \frac{w_t^2L^2B^2}{c_1d}$.
As a result of~\eqref{eq:help6}, by definition of $\H_{3,T}$, we have
\begin{equation}
	\label{eq:H3}
	-  \left( \frac{3}{2c} + \frac{6(1+c)c_1d}{\tau_0^2} \right)  \sum_{t=1}^T \frac{\Bphi_t}{\sigma_t}\frac{\Bphi_t^\top}{\sigma_t}
	\preceq \H_{2,T}
\end{equation}

 \paragraph{Putting pieces together}
Plugging~\eqref{eq:H1} and~\eqref{eq:H3} into~\eqref{eq:hessian}, with probability at least $1-\delta$, for any $T \ge 1$ and for all $\Btheta \in \mathrm{Ball}_d(B)$,  we have 
\begin{align*}
	\nabla^2 L_{T} (\Btheta)
	&\succeq \H_T - \frac{1}{4} \H_T
	-  \left( \frac{3}{2c} + \frac{6(1+c)c_1d}{\tau_0^2}  \right)  \sum_{t=1}^T \frac{\Bphi_t}{\sigma_t}\frac{\Bphi_t^\top}{\sigma_t}\\
	&\succeq \frac{3\lambda}{4} \I + \left( 1 - \frac{1}{4} -\frac{3}{2c} -\frac{6(1+c)c_1d}{\tau_0^2} \right) \sum_{t=1}^T \frac{\Bphi_t}{\sigma_t} \frac{\Bphi_t^\top}{\sigma_t}.
\end{align*}
Notice that $c_1 = \frac{1}{42 \cdot \frac{2T^2}{\delta}}$.
If we set $c=6$ and $ \tau_0\sqrt{\frac{2T^2}{\delta}} \ge \max\{ \sqrt{2\kappa} b , 2\sqrt{d}   \}$, we have
\[
\max \left\{\frac{3}{2c} , \frac{6(1+c)c_1d}{\tau_0^2} \right\} \le \frac{1}{4}.
\]

As a result, we have
\[
\nabla^2 L_{T} (\Btheta)
\succeq \frac{3\lambda}{4} \I + \frac{1}{4} \sum_{t=1}^T \frac{\Bphi}{\sigma_t} \frac{\Bphi^\top}{\sigma_t} \succeq \frac{1}{4} \H_T.
\]

\subsection{Proof of Lemma~\ref{lem:grad}}
\label{proof:grad}
For simplicity, we denote $z_t^* = z_t(\Btheta^*)$ for short.
By triangle inequality, we have
\begin{align}
	\label{eq:grad}
	\|\nabla L_{T}(\Btheta^*) \|_{\H_T^{-1}}
	&\le  \| \lambda \Btheta^* \|_{\H_T^{-1}} + 
	\left\| \sum_{t=1}^T \frac{\tau_t z_t^* }{\sqrt{ \tau_t^2 +  (z_t^*)^2}  } \frac{\Bphi_t}{\sigma_t} \right\|_{\H_T^{-1}} \nonumber \\
	&\le  \| \lambda \Btheta^* \|_{\H_T^{-1}}+ \underbrace{ \left\| \sum_{t=1}^T \frac{\tau_t z_t^* }{\sqrt{ \tau_t^2 +  (z_t^*)^2}  } \frac{\Bphi_t}{\sigma_t} \right\|_{\H_T^{-1}}}_{:=\d_T}.
\end{align}

\paragraph{For the residual term $ \| \lambda \Btheta^* \|_{\H_T^{-1}}$}
Notice that $\H_T \succeq \lambda \I$ and thus $ \H_T^{-1} \preceq \lambda^{-1} \I_d$.
Therefore, $\| \lambda \Btheta^* \|_{\H_T^{-1}}  \le \sqrt{\lambda} B$.

\paragraph{For the self-normalized term $ \| \d_T \|_{\H_T^{-1}}$}
The fact that $\BH_T = \BH_{T-1} + \frac{\Bphi_T \Bphi_T^\top}{\sigma_T^2}$ together with the Woodbury matrix identity implies that
\begin{equation}
	\label{eq:woodbury}
	\BH_T^{-1} =\BH_{T-1}^{-1} - \frac{\BH_{T-1}^{-1}\Bphi_T \Bphi_T^\top\BH_{T-1}^{-1} }{\sigma_T^2 (1+w_T^2)} 
	\quad \text{where} \quad
	w_T^2 :=  \frac{ \Bphi_T^\top \H_{T-1}^{-1} \Bphi_T}{\sigma_T^2}
	= \left\|\frac{\Bphi_T}{\sigma_T}\right\|_{\H_{T-1}}^2.
\end{equation}
Clearly, $w_T$ is $\FM_{T-1}$-measurable and thus is predictable.
By definition of $\d_T$ and~\eqref{eq:woodbury}, 
\begin{align}
	\label{eq:help7}
	\|  \d_T\|_{\H_T^{-1}}^2 
	&= \left(  \d_{T-1} + \frac{\estau_T z_T^*}{\sqrt{ \estau_T^2 +  (z_T^*)^2} } \frac{\Bphi_T}{\sigma_T} \right)^{\top}\BH_{T}^{-1}\left(  \d_{T-1} + \frac{\estau_T z_T^*}{\sqrt{ \estau_T^2 +  (z_T^*)^2} } \frac{\Bphi_T}{\sigma_T} \right) \nonumber \\
	&=\|  \d_{T-1}\|_{\H_{T-1}^{-1}}^2 -  \frac{1}{1+w_T^2} \left(  	\frac{\d_{T-1}^\top  \H_{T-1}^{-1} \Bphi_T}{\sigma_T} \right)^2 \nonumber\\
	& \qquad \qquad+ 
	\frac{2\estau_T z_T^*}{\sqrt{\estau_T^2 +  (z_T^*)^2} } \frac{\d_{T-1}^\top \BH_{T}^{-1}\Bphi_T}{\sigma_T} 
	+ \frac{\tau_T^2 (z_T^*)^2}{\tau_T^2 + (z_T^*)^2} \frac{ \Bphi_T^\top \BH_{T}^{-1}\Bphi_T}{\sigma_T^2} \nonumber \\
	&\le \|  \d_{T-1}\|_{\H_{T-1}^{-1}}^2 +  
	\underbrace{	 \frac{2\estau_T z_T^*}{\sqrt{\estau_T^2 +  (z_T^*)^2} } \frac{\d_{T-1}^\top \BH_{T}^{-1}\Bphi_T}{\sigma_T}  }_{I_1}
	+ \underbrace{  \frac{\tau_T^2 (z_T^*)^2}{\tau_T^2 + (z_T^*)^2} \frac{ \Bphi_T^\top \BH_{T}^{-1}\Bphi_T}{\sigma_T^2}}_{I_2}.
\end{align}
For $I_1$, by~\eqref{eq:woodbury}, we have
\begin{align*}
	I_1 &=  \frac{2\estau_T z_T^*}{\sqrt{\estau_T^2 +  (z_T^*)^2}} \frac{1}{\sigma_T}\d_{T-1}^\top  \left( \BH_{T-1}^{-1} -  \frac{\BH_{T-1}^{-1}\Bphi_T \Bphi_T^\top\BH_{T-1}^{-1} }{\sigma_T^2 (1+w_T^2)}  \right)  \Bphi_T\\
	&= \frac{2\estau_T z_T^*}{\sqrt{\estau_T^2 +  (z_T^*)^2} }
	\frac{1}{1+w_T^2} 
	\frac{\d_{T-1}^\top  \H_{T-1}^{-1} \Bphi_T}{\sigma_T}.
\end{align*}
For $I_2$, we have
\begin{align*}
	I_2  &= \frac{\estau_T^2 (z_T^*)^2}{\estau_T^2 + (z_T^*)^2} 
	\frac{ \Bphi_T^\top \BH_{T}^{-1}\Bphi_T}{\sigma_T^2}\\
	&= 
	\frac{\estau_T^2  (z_T^*)^2}{\estau_T^2 + (z_T^*)^2}  \frac{1}{\sigma_T^2}
	\Bphi_T^\top \left( \BH_{T-1}^{-1} -  \frac{\BH_{T-1}^{-1}\Bphi_T \Bphi_T^\top\BH_{T-1}^{-1} }{\sigma_T^2 (1+w_T^2)}  \right)  \Bphi_T\\
	&= \frac{\estau_T^2 (z_T^*)^2}{\estau_T^2 + (z_T^*)^2}  \left(  w_T^2 - \frac{w_T^4}{1+w_T^2} \right) \\
	&= \frac{\estau_T^2  (z_T^*)^2}{\estau_T^2 + (z_T^*)^2} \frac{w_T^2}{1+w_T^2}.
\end{align*}
Using the equations for $I_1, I_2$ and iterating~\eqref{eq:help7}, we have
\[
\|  \d_T\|_{\H_T^{-1}}^2 
\le \sum_{t=1}^T
\frac{\estau_t z_t^*}{\sqrt{\estau_t^2 +  (z_t^*)^2}}
\frac{2}{1+w_t^2} 	\frac{\d_{t-1}^\top  \H_{t-1}^{-1} \Bphi_t}{\sigma_t}+ 
\sum_{t=1}^T  \frac{\estau_t^2  (z_t^*)^2}{\estau_t^2 + (z_t^*)^2} \frac{w_t^2}{1+w_t^2}.
\]

Recall that
\[
\kappa = d\log\left(1 + \frac{T L^2}{d\lambda \sigma_{\min}^2}\right) .
\]

\begin{lem}
	\label{lem:I1}
	Assume $\EB[(z_t^*)^2|\FM_{t-1}] \le b^2$ for all $t \ge 1$.
	Let $A_{t}$ denotes the event where $\|\d_{n}\|_{\BH_{n}^{-1}} \le \alpha_n$ for all $n \in [t]$.
	With probability at least $1-\delta/2$, we have for all $T \ge 1$,
	\begin{equation*}
		\sum_{t=1}^T
		\frac{2\tau_t z_t^*\1_{A_{t-1}}}{(\tau_t^2 +  (z_t^*)^2)^{1/2} }
		\frac{1}{1+w_t^2} 	\frac{\d_{t-1}^\top  \H_{t-1}^{-1} \Bphi_t}{\sigma_t} \le 4\max_{t \in [T]}\alpha_t  \cdot \left[  \frac{\kappa b^2}{4\tau_0}  + b\sqrt{\kappa \log\frac{2T^2}{\delta}} + \frac{2\tau_0}{3}\log\frac{2T^2}{\delta}\right].
	\end{equation*}
\end{lem}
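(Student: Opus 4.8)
The crux is that the summand is \emph{not} a martingale difference. Writing $\psi_\tau(x):=\tau x/\sqrt{\tau^2+x^2}=\ell_\tau'(x)$, the $t$-th term equals $a_t\,\psi_{\tau_t}(z_t^*)$ with $a_t:=\frac{2\,\1_{A_{t-1}}}{1+w_t^2}\cdot\frac{\d_{t-1}^\top\H_{t-1}^{-1}\Bphi_t}{\sigma_t}$, which is $\FM_{t-1}$-measurable (indeed $A_{t-1}\in\FM_{t-1}$, since each $\|\d_n\|_{\H_n^{-1}}$ is $\FM_n$-measurable), but $\EB[\psi_{\tau_t}(z_t^*)\mid\FM_{t-1}]\ne 0$ because $\psi_{\tau_t}$ is nonlinear. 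The plan is to split, with $\EB_t[\cdot]:=\EB[\cdot\mid\FM_{t-1}]$,
\[
\sum_{t=1}^T a_t\psi_{\tau_t}(z_t^*)=\sum_{t=1}^T a_t\big(\psi_{\tau_t}(z_t^*)-\EB_t[\psi_{\tau_t}(z_t^*)]\big)+\sum_{t=1}^T a_t\,\EB_t[\psi_{\tau_t}(z_t^*)],
\]
bound the second (bias) sum deterministically, and control the first sum---a bounded martingale---by a Bernstein-type inequality. Throughout I use the elementary estimates $|\psi_\tau(x)|\le\min\{\tau,|x|\}$ and $|\psi_\tau(x)-x|\le x^2/(2\tau)$, the identity $\|\Bphi_t/\sigma_t\|_{\H_{t-1}^{-1}}=w_t$, and the fact that on $A_{t-1}$, Cauchy--Schwarz gives $|\d_{t-1}^\top\H_{t-1}^{-1}\Bphi_t|/\sigma_t\le\alpha_{t-1}w_t$, hence $|a_t|\le 2\max_{s\le T}\alpha_s\cdot w_t/(1+w_t^2)$.

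For the bias sum, since $\EB_t[z_t^*]=0$ we have $|\EB_t[\psi_{\tau_t}(z_t^*)]|=|\EB_t[\psi_{\tau_t}(z_t^*)-z_t^*]|\le\frac1{2\tau_t}\EB_t[(z_t^*)^2]\le\frac{b^2}{2\tau_t}$; substituting $\tau_t=\tau_0\sqrt{1+w_t^2}/w_t$ turns this into $\le\frac{b^2 w_t}{2\tau_0\sqrt{1+w_t^2}}$. Combining with the bound on $|a_t|$, the bias sum is at most $\frac{b^2\max_s\alpha_s}{\tau_0}\sum_{t=1}^T\frac{w_t^2}{(1+w_t^2)^{3/2}}\le\frac{b^2\max_s\alpha_s}{\tau_0}\sum_{t=1}^T\ln(1+w_t^2)\le\frac{\kappa b^2\max_s\alpha_s}{\tau_0}$, where the last inequality is the elliptical-potential estimate $\sum_t\ln(1+w_t^2)=\ln\det\H_T-\ln\det\H_0\le\kappa$ (equivalently Lemma~\ref{lem:w-sum}). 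This produces the first term of the claimed bound, and it is precisely here that the \emph{data-adaptive} choice of $\tau_t$ is essential: a constant robustification parameter would leave the non-summable $\sum_t w_t/(1+w_t^2)$ instead.

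For the martingale sum, put $D_t:=a_t\big(\psi_{\tau_t}(z_t^*)-\EB_t[\psi_{\tau_t}(z_t^*)]\big)$; since $|\psi_{\tau_t}(z_t^*)|\le\tau_t$ we obtain the \emph{deterministic} bounds $|D_t|\le 2\tau_t|a_t|\le 4\tau_0\max_s\alpha_s/\sqrt{1+w_t^2}\le 4\tau_0\max_s\alpha_s=:R$ and $\EB_t[D_t^2]\le a_t^2\EB_t[(z_t^*)^2]\le a_t^2 b^2\le 4(\max_s\alpha_s)^2 b^2\,\frac{w_t^2}{1+w_t^2}$, so $\sum_t\EB_t[D_t^2]\le 4(\max_s\alpha_s)^2 b^2\kappa=:\sigma^2$ by the same potential argument. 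Applying a Bernstein/Freedman-type inequality for bounded martingale differences (Lemma~\ref{lem:bern}, after normalizing $D_t$ to be $[0,1]$-valued, or Lemma~\ref{lem:self-bern}) with these deterministic $R,\sigma^2$, together with a union bound over $T\ge1$ with per-$T$ failure probability $\delta/(4T^2)$ (so that $\sum_{T\ge1}T^{-2}<2$ gives total failure $<\delta/2$), yields that with probability at least $1-\delta/2$, for all $T\ge1$,
\[
\sum_{t=1}^T D_t\le\sqrt{2\sigma^2\log\tfrac{4T^2}{\delta}}+\tfrac{2R}{3}\log\tfrac{4T^2}{\delta}\le 4\max_s\alpha_s\Big(b\sqrt{\kappa\log\tfrac{2T^2}{\delta}}+\tfrac{2\tau_0}{3}\log\tfrac{2T^2}{\delta}\Big),
\]
where I absorbed absolute constants using $\log\tfrac{4T^2}{\delta}\le 2\log\tfrac{2T^2}{\delta}$ for $\delta<1$. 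Adding this to the bias bound of the previous paragraph finishes the proof. The main obstacle is the first step---showing the non-martingale (conditional-expectation) part is negligible---which hinges on the quadratic surrogate estimate for $\psi_\tau$ and, crucially, on matching the bias to the $w_t^2/(1+w_t^2)$ potential through the adaptive $\tau_t=\tau_0\sqrt{1+w_t^2}/w_t$; once $R$ and $\sigma^2$ are rendered deterministic via the indicator $\1_{A_{t-1}}$, the martingale step is a routine application of the available concentration lemmas.
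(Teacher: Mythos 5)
Your proof is correct and follows essentially the same route as the paper's: the same split into a deterministically bounded conditional-expectation (bias) term controlled by $|\EB_t[\psi_{\tau_t}(z_t^*)]|\le b^2/(2\tau_t)$ together with the elliptical potential, plus a Freedman bound on the centered martingale with range and variance made deterministic by the indicator $\1_{A_{t-1}}$, followed by a union bound over $T$. The only (immaterial) discrepancy is bookkeeping: allocating failure probability $\delta/(4T^2)$ per $T$ leaves your range term as $\tfrac{8\tau_0}{3}\max_s\alpha_s\log\tfrac{4T^2}{\delta}$, a factor slightly larger than the stated $\tfrac{8\tau_0}{3}\max_s\alpha_s\log\tfrac{2T^2}{\delta}$; using $\delta/(2T^2)$ per $T$, as the paper does, recovers the stated constant.
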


\begin{lem}
	\label{lem:I2}
	Assume $\EB[(z_t^*)^2|\FM_{t-1}] \le b^2$ for all $t \ge 1$.
	For a fixed $\tau \ge 0$, with probability at least $1-\delta/2$, the follow inequality uniformly holds for all $T \ge 1$,
	\begin{equation*}
		\sum_{t=1}^T  \frac{\tau_t^2  (z_t^*)^2}{\tau_t^2 + (z_t^*)^2} \frac{w_t^2}{1+w_t^2} \le  
		\left[\sqrt{2\kappa} b + \tau_0 \sqrt{\log \frac{2T^2}{\delta}}\right]^2.
	\end{equation*}
\end{lem}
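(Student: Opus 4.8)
\textbf{Proof proposal for Lemma~\ref{lem:I2}.} The plan is to rewrite the summand as a deterministically bounded, ``self-truncating'' quantity, split it into a predictable part plus a martingale part, and then tune a free exponent in an exponential supermartingale to recover the square-root form on the right-hand side. Write $S_t := \frac{\tau_t^2 (z_t^*)^2}{\tau_t^2 + (z_t^*)^2}\cdot\frac{w_t^2}{1+w_t^2}$. Since $\tau_t = \tau_0\sqrt{1+w_t^2}/w_t$ by~\eqref{eq:parameters}, we have $\tau_t^2\,\frac{w_t^2}{1+w_t^2} = \tau_0^2$, hence
\[
S_t \;=\; \frac{\tau_0^2\,(z_t^*)^2}{\tau_t^2 + (z_t^*)^2}\;\le\;\tau_0^2,
\]
and moreover $S_t\le (z_t^*)^2\,\tau_0^2/\tau_t^2 = (z_t^*)^2\,\frac{w_t^2}{1+w_t^2}\le (z_t^*)^2\min\{1,w_t^2\}$. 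Because $\sigma_t,w_t,\tau_t\in\FM_{t-1}$, the hypothesis $\EB[(z_t^*)^2\mid\FM_{t-1}]\le b^2$ gives $\EB[S_t\mid\FM_{t-1}]\le b^2\min\{1,w_t^2\}$ and $\EB[S_t^2\mid\FM_{t-1}]\le\tau_0^2\,\EB[S_t\mid\FM_{t-1}]$. The deterministic bound $S_t\le\tau_0^2$ is the crux: although $z_t^*$ is only assumed to have bounded conditional second moment, the adaptive choice of $\tau_t$ clips $S_t$ at level $\tau_0^2$, which is precisely what makes a Bernstein-type argument applicable despite heavy tails.

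For the predictable part, the elliptic-potential bound (Lemma~\ref{lem:w-sum}) gives $\sum_{t=1}^T\min\{1,w_t^2\}=\sum_{t=1}^T\min\{1,\|\Bphi_t/\sigma_t\|_{\H_{t-1}^{-1}}^2\}\le 2\kappa$ with $\kappa$ as in~\eqref{eq:kappa}, so $\sum_{t=1}^T\EB[S_t\mid\FM_{t-1}]\le 2\kappa b^2$ and, deterministically, $\sum_{t=1}^T\EB[S_t^2\mid\FM_{t-1}]\le 2\kappa b^2\tau_0^2$. For the fluctuation $\sum_{t=1}^T (S_t-\EB[S_t\mid\FM_{t-1}])$ I would apply the variance-aware Bernstein inequality for martingales (Lemma~\ref{lem:bern}) with range $\tau_0^2$ and the variance proxy above; equivalently, since $\lambda S_t\le\lambda\tau_0^2\le 1$ for $\lambda\le\tau_0^{-2}$, the inequality $e^u\le 1+u+u^2$ on $[0,1]$ shows that $M_t^\lambda := \exp\!\big(\lambda\sum_{s\le t}S_s-\lambda(1+\lambda\tau_0^2)\sum_{s\le t}\EB[S_s\mid\FM_{s-1}]\big)$ is a nonnegative supermartingale, so Ville's inequality yields, with probability at least $1-\delta'$ and for all $T$ simultaneously,
\[
\sum_{t=1}^T S_t \;\le\; (1+\lambda\tau_0^2)\sum_{t=1}^T\EB[S_t\mid\FM_{t-1}] + \frac1\lambda\log\frac1{\delta'}\;\le\;(1+\lambda\tau_0^2)\,2\kappa b^2 + \frac1\lambda\log\frac1{\delta'}.
\]
Choosing $\lambda=\min\{\tau_0^{-2},\,\sqrt{\log(1/\delta')/(2\kappa b^2\tau_0^2)}\}$ and splitting into the two cases according to whether $\tau_0^2\log(1/\delta')$ is at most or at least $2\kappa b^2$ bounds the right-hand side by $2\kappa b^2 + 2\sqrt{2\kappa}\,b\,\tau_0\sqrt{\log(1/\delta')} + \tau_0^2\log(1/\delta')=\big(\sqrt{2\kappa}\,b+\tau_0\sqrt{\log(1/\delta')}\big)^2$. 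Setting $\delta'=\delta/(2T^2)$ and union-bounding over $T\ge1$ via $\sum_{T\ge1}T^{-2}<2$ (or, with the supermartingale version, which is already uniform in $T$) produces the factor $\log(2T^2/\delta)$, and $b=1$ in our setting gives the stated bound.

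The main obstacle is making the concentration's leading constant come out exactly as $\big(\sqrt{2\kappa}\,b+\tau_0\sqrt{\log}\big)^2$ rather than a looser additive combination of a variance term and a range term; this is what dictates the specific tuning of $\lambda$ and the short case analysis above. Conceptually, however, everything hinges on the Step-1 observation that the data-adaptive robustification level $\tau_t=\tau_0\sqrt{1+w_t^2}/w_t$ forces $S_t\le\tau_0^2$, converting a sum of heavy-tailed terms into a sum of bounded ones to which standard martingale concentration applies.
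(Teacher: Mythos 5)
Your proposal is correct and follows essentially the same route as the paper: the identity $\tau_t^2 w_t^2/(1+w_t^2)=\tau_0^2$ giving the deterministic bound $S_t\le\tau_0^2$, the conditional-mean and conditional-variance bounds combined with the elliptical potential lemma to get $2\kappa b^2$ and $2\kappa b^2\tau_0^2$, a Bernstein/Freedman-type concentration for the centered sum, and a union bound over $T$ with $\sum_T T^{-2}<2$. The only difference is cosmetic: the paper invokes the off-the-shelf Freedman inequality (Lemma~\ref{lem:freedman}) with range $\tau_0^2$ and then verifies the resulting three terms fit under $\bigl(\sqrt{2\kappa}\,b+\tau_0\sqrt{\log(2T^2/\delta)}\bigr)^2$, whereas you re-derive the same bound via a tuned exponential supermartingale.
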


For any $T \ge 1$, we define
\begin{equation}
	\label{eq:alpha}
	\alpha_T = 8\left[ \frac{\kappa b^2}{\tau_0} +  b\sqrt{\kappa\log\frac{2T^2}{\delta}} + \tau_0\log\frac{2T^2}{\delta}   \right].
\end{equation}
As a result of Lemma~\ref{lem:I1} and Lemma~\ref{lem:I2}, with probability at least $1-\delta$, for all $T \ge 1$,
\[
\sum_{t=1}^T
\frac{2\tau_t z_t^*\1_{A_{t-1}}}{\sqrt{\tau_t^2 +  (z_t^*)^2} }
\frac{1}{1+w_t^2} 	\frac{\d_{t-1}^\top  \H_{t-1}^{-1} \Bphi_t}{\sigma_t} 
\le \frac{\alpha_T^2}{2} 
\quad \text{and} \quad
\sum_{t=1}^T  \frac{\tau_t^2  (z_t^*)^2}{\tau_t^2 + (z_t^*)^2} \frac{w_t^2}{1+w_t^2} \le \frac{\alpha_T^2}{2}.
\]
We can conclude that all $\{A_t\}_{t \ge 0}$ is true and thus $	\|  \d_T\|_{\H_T^{-1}}  \le \alpha_T$ for all $T \ge 1$.

\subsection{Proof of Lemma~\ref{lem:I1}}
\label{proof:I1}
\begin{proof}[Proof of Lemma~\ref{lem:I1}]
	We will make use of the Freedman inequality Lemma~\ref{lem:freedman} to prove our result.	
	Recall that $\tau_t = \tau_0\frac{\sqrt{1+w_t^2}}{w_t}$.
	Set $Y_t = \frac{\tau_t z_t^*}{\sqrt{\tau_t^2 +  (z_t^*)^2}}
	\frac{2}{1+w_t^2} 	\frac{\d_{t-1}^\top  \H_{t-1}^{-1} \Bphi_t\1_{A_{t-1}}}{\sigma_t}$ with the event $A_{t-1}$ defined in the lemma.
	For simplicity, we denote $X_t = Y_t - \EB[Y_t|\FM_{t-1}]$.
	Notice that 
	\[
	\left| \frac{\d_{t-1}^\top  \H_{t-1}^{-1} \Bphi_t}{\sigma_t}  \cdot \1_{A_{t-1}} \right|
	\le  \| \d_{t-1} \1_{A_{t-1}} \|_{\H_{t-1}^{-1}} \cdot \left\| \frac{\Bphi_t}{\sigma_t}\right\|_{\H_{t-1}^{-1}}
	\le \alpha_{t-1} w_t.
	\]
	As a result, we have
	\[
	|Y_t| \le \tau_t  \alpha_{t-1} \cdot  \frac{2w_t}{1 + w_t^2} \le 2\tau_0 \alpha_{t-1}
	\quad \text{and thus} \quad
	|X_t| \le  |Y_t| + |\EB[Y_t|\FM_{t-1}]| \le 4\tau_0 \alpha_{t-1}.
	\]
	We also find that 
	\begin{align*}
		\EB[X_t^2|\FM_{t-1}] 
		&\overset{(a)}{\le} \EB[Y_t^2|\FM_{t-1}] 
		= \EB \left[ \left(\frac{2w_t}{1+w_t^2} \right)^2 \|\d_{t-1}\|_{\BH_{t-1}^{-1}}^2\1_{A_{t-1}}
		\frac{\tau_t^2 (z_t^*)^2}{\tau_t^2 +  (z_t^*)^2}
		\bigg|\FM_{t-1}\right]
		\\
		&	\overset{(b)}{\le} 
		\left(\frac{2w_t}{1+w_t^2} \right)^2 \alpha_{t-1}^2 b^2
		\le \min\{ 1, 2 w_t \}^2 \alpha_{t-1}^2 b^2
		\le 4 \min\{ 1, w_t^2 \} \alpha_{t-1}^2 b^2
	\end{align*}
	where $(a)$ uses $\EB(X-\EB X)^2 \le \EB X^2$ for any random variable $X$ and $(b)$ uses $\EB[\varepsilon_t^2|\FM_{t-1}] \le b^2\sigma_t^2$ due to $\EB[ (z_t^*)^2|\FM_{t-1}] \le b^2$.
	
	Notice that $\|\Bphi_t\|/\sigma_t  \le \|\Bphi_t\|/\sigma_{\min}  \le L/\sigma_{\min}$.
	Then by Lemma~\ref{lem:w-sum}, we have
	\begin{equation}
		\label{eq:w-sum}
		\sum_{t=1}^T \min\{1, w_t^2 \} \le 2d \log\left(1 + \frac{T L^2}{d\lambda \sigma_{\min}^2}\right) := 2 \kappa.
	\end{equation}
	Hence, by~\eqref{eq:w-sum},
	\begin{align*}
		\sum_{t=1}^T \EB[X_t^2|\FM_{t-1}]  
		&\le 4\sum_{t=1}^T \min\{ 1, w_t^2 \} \alpha_{t-1}^2b^2
		\le 4 \max_{t \in [T]}\alpha_t^2   \cdot \sum_{t=1}^T \min\{ 1, w_t^2 \} b^2\\
		&\le \max_{t \in [T]}\alpha_t^2 \cdot8db^2\log\left(1 + \frac{T L^2}{d\lambda \sigma_{\min}^2}\right) \le 8\kappa b^2 \cdot \max_{t \in [T]}\alpha_t^2 .
	\end{align*}

	On the other hand, using $\EB[z_t^*|\FM_{t-1}] = 0$ we have
	\begin{align*}
		\left|\EB\left[ \frac{\tau_t z_t^*}{  \sqrt{\tau_t^2 +  (z_t^*)^2 } }\bigg|\FM_{t-1}\right] \right|
		= \left|\EB\left[ \left(\frac{\tau_t }{  \sqrt{\tau_t^2 +  (z_t^*)^2 } }-1\right)  z_t^*\bigg|\FM_{t-1}\right] \right|
		\le  \EB \left[\frac{(z_t^*)^2}{2\tau_t}\bigg|\FM_{t-1}\right] \le  \frac{b^2}{2\tau_t}
	\end{align*}
	which implies
	\begin{align*}
		\left|\sum_{t=1}^T \EB[Y_t|\FM_{t-1}] \right|
		&\le \sum_{t=1}^T\frac{b^2}{2\tau_t} \frac{w_t}{1+w_t^2} \alpha_{t-1}
		\le \frac{b^2}{2\tau_0}\sum_{t=1}^T  \frac{w_t^2}{1+w_t^2}  \alpha_{t-1}\\
		&\le \sup_{t \in [T]} \alpha_t \cdot \frac{b^2}{2\tau}\sum_{t=1}^T \min\{1, w_t^2\}  \le
		\sup_{t \in [T]} \alpha_t \cdot \frac{\kappa b^2}{\tau_0}.
	\end{align*}
	By Freedman inequality in Lemma~\ref{lem:freedman}, it follows that for a given $T$ and $\tau_0$, with probability $1-\frac{\delta}{2T^2}$,
	\begin{align*}
		\sum_{t=1}^T Y_t 
		&\le 
		\left|\sum_{t=1}^T \EB[Y_t|\FM_{t-1}] \right|+
		4\max_{t \in [T]}\alpha_t  \cdot \left[  b \sqrt{\kappa \log\frac{2T^2}{\delta}} + \frac{2\tau_0}{3}\log\frac{2T^2}{\delta}\right]\\
		&\le 4\max_{t \in [T]}\alpha_t  \cdot \left[
		\frac{\kappa b^2}{4\tau_0} + b \sqrt{\kappa \log\frac{2T^2}{\delta}} + \frac{2\tau_0}{3}\log\frac{2T^2}{\delta}
		\right].
	\end{align*}
	Finally, taking a union bound for the last inequality from $T=1$ to $\infty$ and using the fact that $\sum_{t=1}^{\infty} t^{-2} < 2$ complete the proof.
\end{proof}

\subsection{Proof of Lemma~\ref{lem:I2}}
\label{proof:I2}

\begin{proof}[Proof of Lemma~\ref{lem:I2}]
	Set $Y_t =  \frac{\tau_t^2  (z_t^*)^2}{\tau_t^2 + (z_t^*)^2} \frac{w_t^2}{1+w_t^2}$ and $X_t = Y_t - \EB[Y_t|\FM_{t-1}]$.
	Recall that $\tau_t = \tau_0 \frac{\sqrt{1+w_t^2}}{w_t}$.
	Clearly, we have $|Y_t|\le  \tau_t^2\frac{w_t^2}{1+w_t^2} \le \tau_0^2$ and thus $|X_t| = |Y_t-\EB[Y_t|\FM_{t-1}]|
	\le \max\{ |Y_t|, |\EB[Y_t|\FM_{t-1}]| \}
	\le \tau_0^2$.
	We also find that 
	\begin{align*}
		\EB[X_t^2|\FM_{t-1}] 
		&\overset{(a)}{\le} \EB[Y_t^2|\FM_{t-1}] 
		\le \left(\frac{w_t^2}{1+w_t^2}\right)^2\EB\left[  \left( \frac{\tau_t^2  (z_t^*)^2}{\tau_t^2 + (z_t^*)^2}\right)^2\bigg| \FM_{t-1} \right]\\
		&\le \tau_t^2\left( \frac{w_t^2}{1+w_t^2}\right)^2\EB\left[(z_t^*)^2| \FM_{t-1} \right]
		\overset{(b)}{\le} 
		\tau_0^2b^2 \frac{w_t^2}{1+w_t^2}
	\end{align*}
	where $(a)$ uses $\EB(X-\EB X)^2 \le \EB X^2$ for any random variable $X$ and $(b)$ uses $\EB[ (z_t^*)^2|\FM_{t-1}] \le b^2$ due to $\EB[\varepsilon_t^2|\FM_{t-1}] \le b^2 \nu_t^2$.
	Hence, by~\eqref{eq:w-sum}, we have
	\[
	\sum_{t=1}^T \EB[X_t^2|\FM_{t-1}] \le \tau_0^2b^2 \sum_{t=1}^T \frac{w_t^2}{1+w_t^2} \le \tau_0^2b^2  \sum_{t=1}^T \min\{ 1, w_t^2 \} \le 2 \kappa \tau_0^2b^2 .
	\]
	On the other hand,
	\begin{align*}
		\sum_{t=1}^T \EB[Y_t|\FM_{t-1}] 
		&= \sum_{t=1}^T\frac{w_t^2}{1+w_t^2}\EB\left[  \frac{\tau_t^2  (z_t^*)^2}{\tau_t^2 + (z_t^*)^2}\bigg| \FM_{t-1} \right]\\
		&\le \sum_{t=1}^T\frac{w_t^2}{1+w_t^2}\EB\left[ (z_t^*)^2 | \FM_{t-1} \right]\le \sum_{t=1}^T \min\{1, w_t^2\}b^2 
		\le 2\kappa b^2 .
	\end{align*}
	By Lemma~\ref{lem:freedman}, it follows that with probability $1-\frac{\delta}{2 T^2}$,
	\[
	\sum_{t=1}^T Y_t \le \sum_{t=1}^T \EB[Y_t|\FM_{t-1}]  +  2\tau_0 b\sqrt{\kappa\log\frac{2T^2}{\delta}} + \frac{2\tau_0^2}{3}\log\frac{2T^2}{\delta}
	\]
	for a given $T$ and $\tau_0$.
	Putting all pieces together, it follows that with probability $1-\frac{\delta}{2 T^2}$,
	\[
	\sum_{t=1}^T Y_t  \le 
	\left[\sqrt{2\kappa} b+ \tau_0 \sqrt{\log \frac{2T^2}{\delta}}\right]^2.
	\]
	Finally, taking a union bound for the last inequality from $T=1$ to $\infty$ and using the fact that $\sum_{t=1}^{\infty} t^{-2} < 2$ complete the proof.

\end{proof}

\section{Proof of Theorem~\ref{thm:mdp}}
\label{proof:mdp}

\paragraph{Measurability}
Let $\FM_{h,k}$ denote the $\sigma$-field generated by all random variables up tp and including the $h$-th step and $k$-th episode.
More specifically, let $I_{h, k} = \{ (i, j): i \in [H], j \in [k-1] \ \text{or} \ i \in [h], j=k \}$ denote the set of index pairs up to and including the $h$-th step and $k$-th episode and then $\FM_{h, k} = \sigma \left( \cup_{(i, j) \in I_{h, k} }\left\{ s_{i, j}, a_{i, j}, r_{i, j} \right\} \right).$
We make a convention that $\FM_{0, k} = \FM_{H, k-1}$.
From our algorithm, we know that (i) $Q_h^k, V_h^k, \pi_h^k \in \FM_{H, k-1}$ for any $Q \in \{ \overoptQ, \optQ, \pesQ, \overpesQ \}$ and $V \in \{ \overoptV, \optV, \pesV, \overpesV \}$, and (ii) 
\[
 \Bmu_{h-1, k},\Btheta_{h, k}, \Bpsi_{h, k},\sigma_{h, k},U_{h, k}, J_{h, k}, E_{h, k}, \Bphi_{h, k}, \TBphi_{h, k}, w_{h, k}, \tw_{h, k}, \tau_{h, k}, \ttau_{h, k}, \H_{h, k}, \TH_{h, k} \in \FM_{h, k}.
\]

\subsection{High-Probability Events}
\label{proof:high-prob-event}
Let $\kappa = d \log\left(1 + \frac{K}{d\lambda \sigma_{\min}^2}\right)$.
We first introduce the following high-probability events.

\begin{enumerate}[leftmargin=*]
	\item We define $\BM_{R^2}$ as the event that the following inequalities hold for all $h \in [H]$ and $k \in [K] \cup \{0\}$,
	\[
		\Bpsi_h^* \in \TRM_{h, k} : = \left\{
	\|\Bpsi\| \le W : \|\Bpsi_{h,k}-\Bpsi \|_{\TH_{h, k}^{-1}} \le \beta_{R^2}
	\right\} 
	\]
	where
	\[
		\beta_{R^2} = 128 \left(\frac{\sqrt{\kappa} \sigma_{R^2}}{\sigma_{\min}}+\sqrt{d}\right) \sqrt{\log \frac{2HK^2}{\delta}}+ 5\sqrt{\lambda} W.
	\]
\item We define $\BM_0$ as the event that the following inequalities hold for all $h \in [H]$ and $k \in [K] $,
\begin{gather*}
	\max \left\{
	 \left\| (\Bmu_h^*-\Bmu_{h, k-1}) \BoveroptV_{h+1}^k \right\|_{\H_{h, k-1}}, \left\| (\Bmu_h^*-\Bmu_{h, k-1}) \BoverpesV_{h+1}^k \right\|_{\H_{h, k-1}} 
	\right\}
	 \le \beta_0,\\
\left\| (\Bmu_h^*-\Bmu_{h, k-1}) [\BoveroptV_{h+1}^k]^2 \right\|_{\H_{h, k-1}}  \le \HM \beta_0
\end{gather*}
where
\begin{gather}
		\beta_0 =  \frac{4\HM}{\sigma_{\min}} \sqrt{
			d^3 H \iota_0^2+\log \frac{2H}{\delta}  } + 3\sqrt{d \lambda} \HM \nonumber \\
		\iota_0 = \max\left\{
		 \log \left(1+ \frac{8LK}{\lambda\HM \sqrt{d} \sigma_{\min}^2}\right),\log \left( 1 +\frac{ 32 B^2K^2}{ \sqrt{d}\lambda^3 \HM^2 \sigma_{\min}^4 } \right),  \log \left( 1 + \frac{K}{\lambda\sigma_{\min}^2} \right) 
		\right\}. \label{eq:iota}
\end{gather}
Here we choose $B \ge 3(\beta_R + \beta_V)$ and $L = W + \HM \sqrt{\frac{dK}{\lambda}}$.
	\item We define $\BM_{R}$ as the event that the following inequalities hold for all $h \in [H]$ and $k \in [K]\cup \{0\}$,
\begin{gather}
	\Btheta_h^* \in \RM_{h, k} : = \left\{
	\|\Btheta\| \le W : \|\Btheta_{h,k}-\Btheta_h^* \|_{\H_{h, k}} \le \beta_R
	\right\}, \nonumber \\
	\left| [\HVB_h \HR_h-\VB_h R_h](s_{h, k}, a_{h, k}) \right| \le R_{h, k} :=\beta_{R^2}  \| \TBphi_{h, k}\|_{\TH_{h, k-1}^{k-1}}  + 2\HM \beta_R \|\Bphi_{h, k}\|_{\H_{h, k-1}^{-1}}, \tag{\ref{eq:R}}
\end{gather}
where
\begin{gather*}
	\beta_R = 128 (\sqrt{\kappa}+\sqrt{d}) \sqrt{\log \frac{2HK^2}{\delta}}+ 5\sqrt{\lambda} W.
\end{gather*}
\item We define $\BM_h$ as the event such that for all episode $k \in [K]$, all stages $h \le h' \le H$,
\begin{gather}
			\label{eq:fine-CI}
\max \left\{
	\left\| (\Bmu_{h'}^*-\Bmu_{h', k-1}) \BoveroptV_{h'+1}^k \right\|_{\H_{h', k-1}}, \left\| (\Bmu_{h'}^*-\Bmu_{h', k-1}) \BoverpesV_{h'+1}^k \right\|_{\H_{h', k-1}}\right\} \le  \beta_V
\end{gather}
where
\begin{gather}
	\beta_V = \OM\left(  \sqrt{d} \iota_1^2 +  \sqrt{d \lambda} \HM  \right) \nonumber \\
		\iota_1 = \max\left\{
	\iota_0,  \log \frac{4H K^2}{\delta}, \log\left(
	1 + \frac{4L\sqrt{d^3H}}{\sigma_{\min}}
	\right), \log \left( 1 + \frac{8\sqrt{d^7}HB^2}{\lambda \sigma_{\min}^2} \right)
	\right\}.  \label{eq:iota1}
\end{gather}
For simplicity, we further define $\BM_V := \BM_1$.
\end{enumerate}
Our ultimate goal is to show $\BM_V$ holds with high probability, a target used in previous work~\citep{hu2022nearly,he2022nearly}.
More specifically, we first obtain coarse confidence sets for all parameters in the sense that the confidence radius (that is $\beta_{R^2}$ and $\beta_0$) is loose.
In our analysis,  $\BM_{R^2} \cap \BM_0$ serves as the ‘coarse’ event where the concentration results hold with a larger confidence radius, and $\BM_R \cap \BM_V$ serves as a ‘refined’ event where the confidence radius (that is $\beta_{R}$ and $\beta_V$) is much is tighter.
Our first result is that $\BM_{R^2} \cap \BM_0$ holds with high probability as shown in Lemma~\ref{lem:CI-rewards-variance} and~\ref{lem:CI-Value}.
Their proofs are collected in Appendix~\ref{proof:CI-rewards-variance} and~\ref{proof:CI-Value00}.

\begin{lem}
	\label{lem:CI-rewards-variance}
	If we set 
	\begin{equation}
		\label{eq:ttau}
		\ttau_0  = \max\left\{  \frac{\sqrt{2\kappa} \sigma_{R^2}}{\sigma_{\min}} ,2\sqrt{d}  \right\} \bigg/\sqrt{\log \frac{2HK^2}{\delta}},
	\end{equation}
	the event $\BM_{R^2}$ holds with probability at least $1-3\delta$.
\end{lem}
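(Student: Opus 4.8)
The plan is to derive $\BM_{R^2}$ by invoking, for each stage $h\in[H]$ separately and at confidence level $\delta/H$, the adaptive‑Huber concentration analysis behind Theorem~\ref{thm:heavy} --- concretely Lemma~\ref{lem:hessian} together with Lemma~\ref{lem:grad} --- applied to the pseudo‑Huber regression~\eqref{eq:psi_hk} that fits $\Bpsi_h^*$. First I would set up the reduction: fixing $h$ and treating the episode index $j$ as the ``time'' variable, Assumption~\ref{asmp:realizable}(2) lets me write $r_{h,j}^2=\langle\TBphi_{h,j},\Bpsi_h^*\rangle+\xi_{h,j}$ with $\xi_{h,j}:=r_{h,j}^2-[\PB_hR_h^2](s_{h,j},a_{h,j})$, which --- conditionally on the information available just before $r_{h,j}$ is revealed --- is a martingale difference with conditional variance $[\VB_hR_h^2](s_{h,j},a_{h,j})\le\sigma_{R^2}^2$ by Assumption~\ref{asmp:H-V}(1). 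Because the weights satisfy $\sigma_{h,j}\ge\sigma_{\min}$, the normalized noise $z_{h,j}=\xi_{h,j}/\sigma_{h,j}$ obeys $\EB[z_{h,j}^2\mid\cdot]\le\sigma_{R^2}^2/\sigma_{\min}^2=:b^2$, so the whole argument is run with this $b$ in place of the value $b=1$ of the bandit; the feature radius is $L=1$ (as $\TBphi\in\mathrm{Ball}_d(1)$), the parameter radius is $W$, and the relevant elliptical‑potential quantity is exactly $\kappa=d\log(1+K/(d\lambda\sigma_{\min}^2))$.

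Next I would verify that $\sigma_{h,j}$ --- although chosen in~\eqref{eq:sigma} for value‑function variance estimation --- is an admissible surrogate variance for this $\TBphi$‑regression, i.e. satisfies the three structural inequalities the proofs of Lemmas~\ref{lem:hessian} and~\ref{lem:grad} actually use. From the $\max$ in~\eqref{eq:sigma}: $\sigma_{h,j}\ge\sigma_{\min}$; $\sigma_{h,j}\ge c_0^{-1}b_{h,j}\ge c_0^{-1}\|\TBphi_{h,j}\|_{\TH_{h,j-1}^{-1}}$, hence $\tw_{h,j}\le1$; and $\sigma_{h,j}^2\ge\frac{W}{\sqrt{c_1d}}b_{h,j}\ge\frac{W}{\sqrt{c_1d}}\|\TBphi_{h,j}\|_{\TH_{h,j-1}^{-1}}$, which is precisely the Hessian‑lower‑bound condition $\sigma_{h,j}^2\ge LW\|\TBphi_{h,j}\|_{\TH_{h,j-1}^{-1}}/\sqrt{c_1d}$ at $L=1$; this is where the summands $c_0^{-2}b_{h,k}^2$ and $(\frac{W}{\sqrt{c_1d}}+\HM d^{2.5}H)b_{h,k}$ inside~\eqref{eq:sigma} are consumed. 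I would also note that $\sigma_{h,j},\tw_{h,j},\ttau_{h,j},\TBphi_{h,j}$ are all functions of quantities observed before $r_{h,j}$, hence measurable with respect to the pre‑reward $\sigma$‑field --- the analogue of $\FM_{t-1}$‑measurability in the bandit.

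With these pieces in hand, the choice~\eqref{eq:ttau} gives $\ttau_0\sqrt{\log\frac{2HK^2}{\delta}}=\max\{\sqrt{2\kappa}\,b,\,2\sqrt d\}$, which matches the hypothesis of Lemma~\ref{lem:hessian} at confidence $\delta/H$ (taking $T=K$ so that $\log\frac{2K^2}{\delta/H}=\log\frac{2HK^2}{\delta}$). Combining Lemmas~\ref{lem:hessian} and~\ref{lem:grad} exactly as in the proof of Theorem~\ref{thm:heavy} yields, with probability at least $1-3\delta/H$ and for all $k\in[K]\cup\{0\}$,
\[
\|\Bpsi_{h,k}-\Bpsi_h^*\|_{\TH_{h,k}}\ \le\ 32\Big(\tfrac{\kappa b^2}{\ttau_0}+b\sqrt{\kappa\log\tfrac{2HK^2}{\delta}}+\ttau_0\log\tfrac{2HK^2}{\delta}\Big)+5\sqrt{\lambda}\,W .
\]
Substituting $\ttau_0$ and simplifying as in Corollary~\ref{cor:tau-regret} --- using $\kappa b^2/\ttau_0\le\tfrac{1}{\sqrt2}b\sqrt{\kappa\log\frac{2HK^2}{\delta}}$, $\ttau_0\log\frac{2HK^2}{\delta}\le(\sqrt{2\kappa}\,b+2\sqrt d)\sqrt{\log\frac{2HK^2}{\delta}}$, and $b=\sigma_{R^2}/\sigma_{\min}$ --- bounds the right‑hand side by $128\big(\sqrt\kappa\,\sigma_{R^2}/\sigma_{\min}+\sqrt d\big)\sqrt{\log\frac{2HK^2}{\delta}}+5\sqrt{\lambda}\,W=\beta_{R^2}$, so $\Bpsi_h^*\in\TRM_{h,k}$ for all $k\ge1$; the case $k=0$ is immediate from $\TH_{h,0}=\lambda\I$, $\Bpsi_{h,0}=\0$ and $\|\Bpsi_h^*\|\le W$. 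A union bound over $h\in[H]$ then gives the event $\BM_{R^2}$ with total failure probability at most $H\cdot(3\delta/H)=3\delta$.

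I expect the main obstacle to be not any single computation but the bookkeeping of re‑reading the bandit lemmas (Lemmas~\ref{lem:hessian} and~\ref{lem:grad}, and the Freedman/Bernstein sub‑lemmas they rely on) with a \emph{general} variance‑proxy bound $b$ and with the weaker lower bound $\sigma_{h,j}\ge\sigma_{\min}$ replacing the bandit's $\sigma_t\ge\nu_t$ (which had forced $b=1$), and then checking that the only consequences of this weakening are the factor $b=\sigma_{R^2}/\sigma_{\min}$ in the confidence radius and the rescaled robustification level $\ttau_0=\Theta(\sqrt{\kappa}\,\sigma_{R^2}/\sigma_{\min})$; everything else in those proofs --- the elliptical‑potential bound, the self‑normalized martingale steps, the Hessian comparison --- goes through verbatim once the three structural inequalities of the second paragraph are established.
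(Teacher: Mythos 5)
Your proposal is correct and follows essentially the same route as the paper's own proof: reduce to the general-variance-proxy ($b=\sigma_{R^2}/\sigma_{\min}$) version of the Theorem~\ref{thm:heavy} analysis with $L=1$, $B=W$, $T=K$, $\delta\mapsto\delta/H$, verify that the $\max$ in~\eqref{eq:sigma} supplies the needed structural inequalities on $\sigma_{h,j}$, and union bound over $h$. The only cosmetic point is that your second paragraph should record the sharper conclusion $\tw_{h,j}\le c_0$ (not merely $\le 1$), since that is what the Hessian comparison step of Lemma~\ref{lem:hessian} actually consumes; your derivation already yields it.
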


\begin{lem}
		\label{lem:CI-Value}
		The event $\BM_0$ holds with probability at least $1-3\delta$.
\end{lem}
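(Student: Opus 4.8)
The plan is to verify the three concentration bounds defining $\BM_0$ by treating each as a self-normalized martingale concentration problem, one for the vector-valued estimation error $(\Bmu_h^*-\Bmu_{h,k-1})V$ with $V\in\{\BoveroptV_{h+1}^k,\BoverpesV_{h+1}^k\}$ and one for the squared-value version $(\Bmu_h^*-\Bmu_{h,k-1})[\BoveroptV_{h+1}^k]^2$. First I would fix $h$ and expand the closed-form ridge solution $\Bmu_{h,k-1}=\H_{h,k-1}^{-1}\sum_{j=1}^{k-1}\sigma_{h,j}^{-2}\Bphi_{h,j}\Bdelta(s_{h+1,j})^\top$ to write, for any fixed vector $V$,
\[
(\Bmu_{h,k-1}-\Bmu_h^*)V=\H_{h,k-1}^{-1}\Big(\sum_{j=1}^{k-1}\sigma_{h,j}^{-2}\Bphi_{h,j}\big([\Bdelta(s_{h+1,j})-\PB_h(\cdot|s_{h,j},a_{h,j})]^\top V\big)\Big)-\lambda\H_{h,k-1}^{-1}\Bmu_h^*V,
\]
so that $\|(\Bmu_{h,k-1}-\Bmu_h^*)V\|_{\H_{h,k-1}}$ splits into a self-normalized martingale term plus a ridge bias term bounded by $\sqrt{\lambda}\,\|\Bmu_h^*V\|\le\sqrt{d\lambda}\,\|V\|_\infty$, which accounts for the $3\sqrt{d\lambda}\HM$ piece of $\beta_0$. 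The martingale increments are $\xi_j:=\sigma_{h,j}^{-1}\big([\Bdelta(s_{h+1,j})-\PB_h(\cdot|s_{h,j},a_{h,j})]^\top V\big)$, which form a bounded martingale difference sequence with respect to $\FM_{h,j}$ since $V$ is $\FM_{H,k-1}$-measurable and $s_{h+1,j}$ is drawn fresh; their conditional variance is $\sigma_{h,j}^{-2}[\VB_hV](s_{h,j},a_{h,j})$, and by construction $\sigma_{h,j}^2\ge d^3H\cdot E_{h,j}$ dominates (a surrogate of) this variance. I would then invoke the Bernstein-type self-normalized vector martingale bound (the same tool behind Theorem~\ref{thm:heavy}, Lemma~\ref{lem:self-bern}) to get a bound of order $\HM\sqrt{d^3H\,\iota_0^2+\log(2H/\delta)}/\sigma_{\min}$, matching the stated $\beta_0$.

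The one genuine subtlety — and the main obstacle — is the uniform (covering) argument: $V=\BoveroptV_{h+1}^k$ is not a fixed vector but a data-dependent value function whose realization changes with $k$. I would handle this exactly as in~\citet{he2022nearly,hu2022nearly}: the rare-switching update (Line~6 of Algorithm~\ref{algo:linear}) ensures $\BoveroptV_{h+1}^k$ and $\BoverpesV_{h+1}^k$ belong to a function class of metric entropy $\TOM(dH)$ that is independent of $K$, because the value functions are parametrized by $\Btheta_{h,k},\Bmu_{h,k},\beta$ together with truncation/monotonicity operations; one covers this class at scale roughly $1/(K L)$, applies the fixed-$V$ concentration on each net point with a union bound, and absorbs the covering cardinality into the logarithmic factor $\iota_0$ defined in~\eqref{eq:iota} (the three terms inside the $\max$ there are precisely the log-covering-number contributions from the $\Btheta$, $\Bmu$, and $\H$ components, with the ranges $L=W+\HM\sqrt{dK/\lambda}$ and $B\ge 3(\beta_R+\beta_V)$ chosen to contain all candidate parameters). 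The discretization error — replacing $V$ by its nearest net point — is controlled by the Lipschitz dependence of the bound on $\|V-V'\|_\infty$ and the weights $\sigma_{h,j}^{-2}\le\sigma_{\min}^{-2}$, contributing only lower-order terms.

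For the squared-value inequality $\|(\Bmu_h^*-\Bmu_{h,k-1})[\BoveroptV_{h+1}^k]^2\|_{\H_{h,k-1}}\le\HM\beta_0$, I would repeat the same argument with $V$ replaced by $V^2$: since $0\le\BoveroptV_{h+1}^k\le\HM$, we have $\|V^2\|_\infty\le\HM^2$ and the martingale increments pick up an extra factor $\HM$ (as $[\Bdelta-\PB_h]^\top V^2$ is bounded by $\HM^2$ and its conditional variance by $\HM^2[\VB_h V](s,a)$, again dominated by $\sigma_{h,j}^2$ up to the $d^3H$ inflation), and the covering argument is unchanged up to the extra $\log(1+8B^2K^2/(\sqrt{d}\lambda^3\HM^2\sigma_{\min}^4))$ term appearing in $\iota_0$. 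Finally I would take a union bound over the three events and over $h\in[H]$; since each of the (at most three) concentration statements fails with probability at most $\delta$ after the union over $h$ and the net, the total failure probability is at most $3\delta$, giving the claimed $1-3\delta$. Throughout, the measurability bookkeeping from Appendix~\ref{proof:mdp} ($\sigma_{h,k},\H_{h,k-1},\Bmu_{h,k-1}\in\FM_{h,k}$ while $\BoveroptV_{h+1}^k\in\FM_{H,k-1}$) is what makes the increments legitimate martingale differences.
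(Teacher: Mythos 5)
There is a genuine gap in your choice of concentration tool, and it is a circularity. You propose to bound the martingale term with the Bernstein-type self-normalized inequality, justifying the variance proxy by the claim that ``by construction $\sigma_{h,j}^2\ge d^3H\cdot E_{h,j}$ dominates (a surrogate of) this variance.'' But at the point where $\BM_0$ must be established, no such domination is available: $E_{h,j}$ involves the \emph{empirical} transition $\HPB_{h,j}$, and showing that it upper-bounds the true conditional variance of $\overoptV_{h+1}^k$ (or of $\overoptV_{h+1}^k-V_{h+1}^*$) is exactly the content of Lemmas~\ref{lem:var0} and~\ref{lem:var1}, which are proved \emph{conditional on} $\BM_0$ (together with $\BM_R$ and $\BM_{h+1}$, via optimism/pessimism from Lemma~\ref{lem:opt-pes}). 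The event $\BM_0$ is deliberately the coarse, unconditional anchor of the whole bootstrap: the paper proves it via item (iii) of Lemma~\ref{lem:value-ci}, i.e.\ the Hoeffding-type self-normalized bound (Lemma~\ref{lem:self-hoff}) that uses only the crude magnitude bound $|\eta_j|\le 2C_0/\sigma_{\min}$ and requires no variance-domination event $\AM_{h,k}$ at all. This is also why the stated $\beta_0$ carries the prefactor $\HM/\sigma_{\min}$ on its entire leading term; a Bernstein bound with a legitimate variance proxy would instead produce the $C_\sigma$-scaled leading term that appears only later in the refined radius $\beta_V$ (Lemma~\ref{lem:fine-CI}).

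The rest of your outline is sound and matches the paper: the split into ridge bias ($\le\sqrt{d\lambda}\,\|V\|_\infty$, respectively $\sqrt{d\lambda}\,\HM^2$ for the squared value function) plus a self-normalized martingale; the covering argument over the rare-switching function classes $\VM^{\pm}$ and $[\VM^+]^2$ with entropy $\OM(d^3H\iota_0^2)$ absorbed into $\iota_0$; the discretization error controlled via $\sigma_{h,j}^{-2}\le\sigma_{\min}^{-2}$; and the union bound over $h$ and the three inequalities yielding $1-3\delta$. To repair the proof, simply replace the Bernstein step by the Hoeffding-type bound with $|\eta_j|\le 2C_0/\sigma_{\min}$, which needs no conditioning and directly yields the claimed $\beta_0$.
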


These coarse confidence sets are then used to estimate variance for the reward functions and value functions.
A key step is to show the adapted variance $\sigma_{h, k}$'s are indeed upper bounds of these variances (that is $[\VB_h R_h](s_{h,k}, a_{h, k}) + [\VB_h V_{h+1}^*](s_{h,k}, a_{h, k})$) for all $h \in [H]$.
A frequently used argument is backward induction. 
That is given the estimation is optimistic at the stage $h+1$, we then show the optimistic estimation is maintained at the stage $h$.
Induction over the stage $h$ would complete the proof.
The following lemma provides estimation error bounds for $[\VB_h R_h](s_{h,k}, a_{h, k})$ and shows that the event $\BM_{R}$ holds with high probability.
Its proof is deferred in Appendix~\ref{proof:CI-rewards}.

\begin{lem}
	\label{lem:CI-rewards}
	If we set 
	\begin{equation}
		\label{eq:tau}
		\tau_0  = \max\{ \sqrt{2\kappa}, 2\sqrt{d}  \} \bigg/\sqrt{\log \frac{2HK^2}{\delta}},
	\end{equation}
	the event $\BM_{R}$ holds with probability at least $1-6\delta$.
\end{lem}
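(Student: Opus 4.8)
The plan is to view, for each fixed stage $h\in[H]$, the data stream $\{(\Bphi_{h,k},r_{h,k},\sigma_{h,k})\}_{k\in[K]}$ as an instance of the heavy-tailed linear bandit of Definition~\ref{def:heavy}: the unknown parameter is $\Btheta_h^*$, the feature bound is $L=1$ (since $\|\Bphi_{h,k}\|\le1$), the norm bound is $B=W$, the noise $\varepsilon_{h,k}:=r_{h,k}-\langle\Bphi_{h,k},\Btheta_h^*\rangle$ has conditional variance $[\VB_hR_h](s_{h,k},a_{h,k})$ (by Assumption~\ref{asmp:realizable}(1) it is a martingale difference for the within-episode filtration), and the estimator~\eqref{eq:theta_hk} is exactly the adaptive Huber estimator of Algorithm~\ref{algo:adap} under the choice~\eqref{eq:tau}. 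Granting this reduction, the confidence-set half of $\BM_R$, namely $\|\Btheta_{h,k}-\Btheta_h^*\|_{\H_{h,k}}\le\beta_R$, will follow from the argument behind Theorem~\ref{thm:heavy} (Lemmas~\ref{lem:hessian} and~\ref{lem:grad} with $b=1$), and the variance-estimation half will be deduced from it together with the already-established event $\BM_{R^2}$.

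I would first reduce the second inequality of $\BM_R$ to the first. Using~\eqref{eq:variance-R}, $[\HVB_h\HR_h](s_{h,k},a_{h,k})=\langle\TBphi_{h,k},\Bpsi_{h,k-1}\rangle-\big(\langle\Bphi_{h,k},\Btheta_{h,k-1}\rangle_{[0,\HM]}\big)^2$, whereas $[\VB_hR_h](s_{h,k},a_{h,k})=\langle\TBphi_{h,k},\Bpsi_h^*\rangle-r_h(s_{h,k},a_{h,k})^2$ with $r_h(s,a)=\langle\Bphi(s,a),\Btheta_h^*\rangle\in[0,\HM]$. Subtracting, the $\Bpsi$-part is $\langle\TBphi_{h,k},\Bpsi_{h,k-1}-\Bpsi_h^*\rangle$, at most $\beta_{R^2}\|\TBphi_{h,k}\|_{\TH_{h,k-1}^{-1}}$ on $\BM_{R^2}$ by Cauchy--Schwarz; the $\Btheta$-part is, since $x\mapsto x^2$ is $2\HM$-Lipschitz on $[0,\HM]$ and projection onto $[0,\HM]$ is non-expansive, at most $2\HM\,|\langle\Bphi_{h,k},\Btheta_{h,k-1}-\Btheta_h^*\rangle|\le2\HM\beta_R\|\Bphi_{h,k}\|_{\H_{h,k-1}^{-1}}$ whenever $\|\Btheta_{h,k-1}-\Btheta_h^*\|_{\H_{h,k-1}}\le\beta_R$. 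Thus the error bound $|[\HVB_h\HR_h-\VB_hR_h](s_{h,k},a_{h,k})|\le R_{h,k}$ holds as soon as the $\Btheta$-confidence bound holds at episode $k-1$.

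The core step is the $\Btheta$-confidence bound itself. Its only substantive hypothesis in Lemmas~\ref{lem:hessian} and~\ref{lem:grad} is $\EB[(\varepsilon_{h,j}/\sigma_{h,j})^2\mid\FM_{h,j-1}]\le b^2$ with $b=1$, i.e. $\sigma_{h,j}^2\ge[\VB_hR_h](s_{h,j},a_{h,j})$. This looks circular, because $\sigma_{h,j}$ is constructed from the empirical variance and the radius $R_{h,j}$: from~\eqref{eq:sigma} and~\eqref{eq:J} one only gets $\sigma_{h,j}^2\ge J_{h,j}\ge[\HVB_{h,j}R_h](s_{h,j},a_{h,j})+R_{h,j}$, so $\sigma_{h,j}^2\ge[\VB_hR_h](s_{h,j},a_{h,j})$ requires the error bound of the previous paragraph. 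I would break the loop by a forward induction on $k$: the base case $k=0$ is immediate ($\Btheta_{h,0}=\vzero$, $\H_{h,0}=\lambda\I$, $\sqrt{\lambda}W\le\beta_R$); for the step, assuming $\BM_R$ at all episodes $\le k-1$, the second paragraph makes $R_{h,j}$ valid for every $j\le k$ (each using only $\|\Btheta_{h,j-1}-\Btheta_h^*\|_{\H_{h,j-1}}\le\beta_R$ with $j-1\le k-1$), hence $\sigma_{h,j}^2\ge[\VB_hR_h](s_{h,j},a_{h,j})$ for all $j\le k$ and $b=1$ is legitimate along the length-$k$ stream; Lemmas~\ref{lem:hessian} and~\ref{lem:grad} applied verbatim (with $L=1$, $B=W$, and $\log\frac{2HK^2}{\delta}$ replacing $\log\frac{2T^2}{\delta}$, which matches the $HK^2$-scaling of $c_0,c_1,\tau_0$ in Algorithm~\ref{algo:linear}) then yield $\|\Btheta_{h,k}-\Btheta_h^*\|_{\H_{h,k}}\le\beta_R$, and the error bound at episode $k$ follows too, closing the induction. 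The residual side conditions of those lemmas, $\sigma_{h,k}^2\ge c_0^{-2}\|\Bphi_{h,k}\|_{\H_{h,k-1}^{-1}}^2$ and $\sigma_{h,k}^2\ge\frac{W}{\sqrt{c_1d}}\|\Bphi_{h,k}\|_{\H_{h,k-1}^{-1}}$, are guaranteed by the last two arguments of the maximum in~\eqref{eq:sigma}.

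For the probabilities, the concentration events underlying Lemmas~\ref{lem:hessian} and~\ref{lem:grad}, taken over all $h\in[H]$ simultaneously, hold with probability at least $1-3\delta$ --- the union over $h$ being absorbed into the extra factor $H$ inside $\log\frac{2HK^2}{\delta}$ --- and $\BM_{R^2}$ holds with probability at least $1-3\delta$ by Lemma~\ref{lem:CI-rewards-variance}; intersecting, $\BM_R$ holds with probability at least $1-6\delta$. The hard part will be precisely the bootstrapping above: ensuring at every inductive step that each ingredient of $\sigma_{h,k}$ and of $[\HVB_{h,k}\HR_h](\cdot)$ depends only on estimators from strictly earlier episodes, so that the induction is well-founded, and --- more technically --- setting up the stage-$h$ filtration so that $\varepsilon_{h,k}$ is a martingale difference and the proof of Theorem~\ref{thm:heavy} transfers word for word.
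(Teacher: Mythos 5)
Your proposal follows essentially the same route as the paper: forward induction over $k$, the same two-term decomposition of the variance-estimation error (a $\Bpsi$-part controlled by $\BM_{R^2}$ via Cauchy--Schwarz and a $\Btheta$-part controlled by the $2\HM$-Lipschitz property of squaring on $[0,\HM]$), reduction of the confidence-set half to Theorem~\ref{thm:heavy} with $b=1$, $L=1$, $B=W$, and the same $3\delta+3\delta$ probability accounting. The one device you flag as "the hard part" but do not execute is exactly how the paper closes the loop: it multiplies the standardized noise by the indicator of the $\GM_{h,j}$-measurable event $\EM_{h,j}=\{\sigma_{h,j}^2\ge[\VB_hR_h](s_{h,j},a_{h,j})\}$, applies Theorem~\ref{thm:heavy} once to this truncated stream (which unconditionally satisfies the variance hypothesis), and then uses the deterministic induction only to show that on $\BM_{R^2}$ all indicators equal one, so the truncated and true estimators coincide --- this avoids re-invoking concentration inside each inductive step.
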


In Lemma~\ref{lem:opt-pes}, we show that that our constructed value functions $\overoptV$ and $\overpesV$ are indeed optimistic and pessimistic estimators of the true value functions under the event defined before.
Its proof is deferred in Appendix~\ref{proof:opt-pes}.

\begin{lem}[Optimism and pessimism]
	\label{lem:opt-pes}
	For any $h \in [H]$, if $\BM_R \cap \BM_{h}$ holds, for any $k \in [K] \cup \{0\}$,
	\[
	\overpesV_{h}^k(\cdot)  \le V_h^*(\cdot) \le \overoptV_{h}^k(\cdot).
	\]
\end{lem}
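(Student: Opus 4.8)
The plan is to prove, by a double induction, the pointwise strengthening $\overpesQ_{h'}^k(\cdot,\cdot)\le Q_{h'}^*(\cdot,\cdot)\le\overoptQ_{h'}^k(\cdot,\cdot)$ for every $k\in[K]\cup\{0\}$ and every stage $h'\in\{h,h+1,\dots,H+1\}$; the asserted two‑sided bound on the value functions then follows by maximizing over actions, since $\overoptV_{h'}^k=\max_a\overoptQ_{h'}^k(\cdot,a)$, $\overpesV_{h'}^k=\max_a\overpesQ_{h'}^k(\cdot,a)$, and $V_{h'}^*=\max_a Q_{h'}^*(\cdot,a)$. The outer induction is \emph{backward in the stage} $h'$, so that when I treat stage $h'$ I already have $\overpesV_{h'+1}^k\le V_{h'+1}^*\le\overoptV_{h'+1}^k$ for all $k$; the base case $h'=H+1$ is immediate because all three functions vanish identically. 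For each fixed $h'$ the inner induction is \emph{forward in the episode} $k$, whose base case $k=0$ uses the initialization $\overoptQ_{h'}^0\equiv\HM$, $\overpesQ_{h'}^0\equiv 0$ together with $0\le Q_{h'}^*\le\HM$ (which follows from Assumption~\ref{asmp:H-V} and nonnegativity of the expected per‑step rewards).

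For the inductive step at $(h',k)$ with $k\ge 1$, I condition on the event $\BM_R\cap\BM_h$ from the lemma and first consider an episode on which the rare‑switching rule fires. Writing the Bellman optimality identity in the linear‑MDP form $Q_{h'}^*(s,a)=\langle\Bphi(s,a),\Btheta_{h'}^*\rangle+\langle\Bphi(s,a),\Bmu_{h'}^*\BoveroptV_{h'+1}^k\rangle+[\PB_{h'}(V_{h'+1}^*-\overoptV_{h'+1}^k)](s,a)$ and subtracting it from $\optQ_{h'}^k(s,a)$ gives
\begin{align*}
\optQ_{h'}^k(s,a)-Q_{h'}^*(s,a)
&=\langle\Bphi(s,a),\Btheta_{h',k-1}-\Btheta_{h'}^*\rangle+\langle\Bphi(s,a),(\Bmu_{h',k-1}-\Bmu_{h'}^*)\BoveroptV_{h'+1}^k\rangle\\
&\quad+[\PB_{h'}(\overoptV_{h'+1}^k-V_{h'+1}^*)](s,a)+\beta\,\|\Bphi(s,a)\|_{\H_{h',k-1}^{-1}}.
\end{align*}
The transition term is $\ge 0$ by the outer hypothesis $\overoptV_{h'+1}^k\ge V_{h'+1}^*$ and positivity of $\PB_{h'}$; the first two inner products are bounded in absolute value by $\beta_R\|\Bphi(s,a)\|_{\H_{h',k-1}^{-1}}$ and $\beta_V\|\Bphi(s,a)\|_{\H_{h',k-1}^{-1}}$ via Cauchy--Schwarz in the $\H_{h',k-1}$‑geometry, invoking the confidence bound of $\BM_R$ for $\Btheta_{h',k-1}$ and that of $\BM_h$ for $(\Bmu_{h',k-1}-\Bmu_{h'}^*)\BoveroptV_{h'+1}^k$ (legitimate precisely because $h\le h'\le H$, so \eqref{eq:fine-CI} applies at stage $h'$). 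Since $\beta=\beta_R+\beta_V$, the bonus exactly cancels the two estimation errors and $\optQ_{h'}^k(s,a)\ge Q_{h'}^*(s,a)$. The pessimistic side is identical with $\BoverpesV$ in place of $\BoveroptV$, the transition term $[\PB_{h'}(\overpesV_{h'+1}^k-V_{h'+1}^*)](s,a)\le 0$ by $\overpesV_{h'+1}^k\le V_{h'+1}^*$, and $-\beta\|\cdot\|_{\H_{h',k-1}^{-1}}$ replacing $+\beta\|\cdot\|_{\H_{h',k-1}^{-1}}$, yielding $\pesQ_{h'}^k(s,a)\le Q_{h'}^*(s,a)$.

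To finish the step I pass from $\optQ,\pesQ$ to $\overoptQ,\overpesQ$. On an update episode $\overoptQ_{h'}^k=\min\{\optQ_{h'}^k,\overoptQ_{h'}^{k-1},\HM\}$, and each of the three arguments dominates $Q_{h'}^*$ (the first by the display above, the second by the inner hypothesis at $(h',k-1)$, the third because $Q_{h'}^*\le\HM$), so $\overoptQ_{h'}^k\ge Q_{h'}^*$; symmetrically $\overpesQ_{h'}^k=\max\{\pesQ_{h'}^k,\overpesQ_{h'}^{k-1},0\}\le Q_{h'}^*$. On a non‑update episode $\overoptQ_{h'}^k=\overoptQ_{h'}^{k-1}$ and $\overpesQ_{h'}^k=\overpesQ_{h'}^{k-1}$, so the bounds are inherited directly from the inner hypothesis. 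This closes both inductions, and maximizing over $a$ delivers $\overpesV_h^k\le V_h^*\le\overoptV_h^k$.

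I expect the main obstacle to be organizational rather than any single estimate: one must keep the backward stage induction (which supplies the two‑sided control of $\overoptV_{h'+1}^k,\overpesV_{h'+1}^k$) correctly interlocked with the forward episode induction (which absorbs the monotone truncations against $\overoptQ_{h'}^{k-1},\overpesQ_{h'}^{k-1}$ and both branches of the rare‑switching test), and one must verify that the single radius $\beta=\beta_R+\beta_V$ fixed in Algorithm~\ref{algo:linear} is calibrated exactly to the sum of the two independent errors controlled by $\BM_R$ and $\BM_h$, with no slack if optimism and pessimism are to be maintained simultaneously. A minor but necessary point is pinning down the $k=0$ base case from the algorithm's initialization together with $0\le Q_{h'}^*\le\HM$.
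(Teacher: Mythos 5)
Your proof is correct and follows essentially the same route as the paper: backward induction over the stage, the Bellman decomposition of $\optQ_{h'}^k - Q_{h'}^*$ into two estimation errors (controlled by $\BM_R$ and $\BM_{h}$ via Cauchy--Schwarz, absorbed by the bonus $\beta=\beta_R+\beta_V$) plus a nonnegative transition term supplied by the stage-$(h'+1)$ hypothesis. The only difference is bookkeeping — you handle the monotone truncation by a forward induction over episodes, while the paper unrolls $\overoptQ_{h'}^k$ as an explicit minimum over the history of updating episodes — and these are interchangeable.
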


With the established optimism and pessimism, we can establish upper bounds for the estimation errors of the three terms, namely $[\VB_h V_{h+1}^*](s_{h,k}, a_{h, k})$,  $\left[
\VB_h(\overoptV_{h+1}^k - V_{h+1}^*)\right](s_{h, k}, a_{h, k})$, and $\left[
\VB_h(\overpesV_{h+1}^k - V_{h+1}^*)\right](s_{h, k}, a_{h, k})$ in the following lemmas.
Their proofs are deferred in Appendix~\ref{proof:var0} and~\ref{proof:var1}.

\begin{lem}
	\label{lem:var0}
	On the event $\BM_0 \cap \BM_{h+1}$, it follows that for all $k \in [K]$
	\[
	\left|
	\left[\VB_h V_{h+1}^* - \HVB_h \overoptV_{h+1}^k\right](s_{h, k}, a_{h, k}) 
	\right| \le U_{h, k}
	\]
	where
	\begin{equation}
		\tag{\ref{eq:U}}
		U_{h, k} =
		\min\left\{
		\VM^2, 11\HM \beta_0\cdot \|\Bphi_{h, k}\|_{\H_{h, k-1}^{-1}} + 4\HM \cdot \HPB_{h, k}(\overoptV_{h+1}^k -\overpesV_{h+1}^k)(s_{h, k}, a_{h, k})
		\right\}
	\end{equation}
	with $\HPB_{h, k}(\cdot|s, a) = \Bmu_{h, k-1}^\top \Bphi(s, a)$.
\end{lem}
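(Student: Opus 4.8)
The plan is to expand the population and the empirical variance according to their definitions and to split the discrepancy $[\VB_h V_{h+1}^* - \HVB_h \overoptV_{h+1}^k](s_{h,k},a_{h,k})$ into a \emph{second-moment} part $[\PB_h(V_{h+1}^*)^2] - [\widehat{\PB}_{h,k}(\overoptV_{h+1}^k)^2]_{[0,\HM^2]}$ and a \emph{first-moment-squared} part $[\PB_h V_{h+1}^*]^2 - \bigl([\widehat{\PB}_{h,k}\overoptV_{h+1}^k]_{[0,\HM]}\bigr)^2$, each evaluated at $(s_{h,k},a_{h,k})$. Two ingredients drive every estimate. First, on the event in force, Lemma~\ref{lem:opt-pes} (used together with $\BM_R$) supplies the sandwich $\overpesV_{h+1}^k(\cdot)\le V_{h+1}^*(\cdot)\le\overoptV_{h+1}^k(\cdot)$, so that $0\le \overoptV_{h+1}^k-V_{h+1}^*\le \overoptV_{h+1}^k-\overpesV_{h+1}^k$ pointwise; since $V_{h+1}^*,\overoptV_{h+1}^k\in[0,\HM]$, the relevant products and squares lie in $[0,\HM^2]$, the truncations in \eqref{eq:HVB} only move the empirical moments toward the population ones, and it is enough to bound the untruncated discrepancies. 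Second, on $\BM_0$ one has $\|(\Bmu_h^*-\Bmu_{h,k-1})\BoveroptV_{h+1}^k\|_{\H_{h,k-1}}\le\beta_0$, the same bound with $\BoverpesV_{h+1}^k$ in place of $\BoveroptV_{h+1}^k$, and $\|(\Bmu_h^*-\Bmu_{h,k-1})[\BoveroptV_{h+1}^k]^2\|_{\H_{h,k-1}}\le\HM\beta_0$; since $\widehat{\PB}_{h,k}(\cdot|s,a)=\Bmu_{h,k-1}^\top\Bphi(s,a)$, the identity $[(\PB_h-\widehat{\PB}_{h,k})V](s_{h,k},a_{h,k})=\langle\Bphi_{h,k},(\Bmu_h^*-\Bmu_{h,k-1})\V\rangle$ and Cauchy--Schwarz turn each occurrence of $\PB_h$ into $\widehat{\PB}_{h,k}$ at a cost proportional to $\|\Bphi_{h,k}\|_{\H_{h,k-1}^{-1}}$.

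For the second-moment part I would factor $(V_{h+1}^*)^2-(\overoptV_{h+1}^k)^2=(V_{h+1}^*-\overoptV_{h+1}^k)(V_{h+1}^*+\overoptV_{h+1}^k)$, use $0\le V_{h+1}^*+\overoptV_{h+1}^k\le 2\HM$ with the sandwich to get $\bigl|[\PB_h((V_{h+1}^*)^2-(\overoptV_{h+1}^k)^2)]\bigr|\le 2\HM[\PB_h(\overoptV_{h+1}^k-\overpesV_{h+1}^k)]$, then replace $\PB_h$ by $\widehat{\PB}_{h,k}$ (cost $2\beta_0\|\Bphi_{h,k}\|_{\H_{h,k-1}^{-1}}$, from the two $\BM_0$ bounds on $\BoveroptV_{h+1}^k$ and $\BoverpesV_{h+1}^k$), and control the leftover $\bigl|[(\PB_h-\widehat{\PB}_{h,k})(\overoptV_{h+1}^k)^2]\bigr|\le\HM\beta_0\|\Bphi_{h,k}\|_{\H_{h,k-1}^{-1}}$ by the $\BM_0$ bound on $[\BoveroptV_{h+1}^k]^2$; this part is then at most $2\HM\,\widehat{\PB}_{h,k}(\overoptV_{h+1}^k-\overpesV_{h+1}^k)(s_{h,k},a_{h,k})+5\HM\beta_0\|\Bphi_{h,k}\|_{\H_{h,k-1}^{-1}}$. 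For the first-moment-squared part I would use $|a^2-b^2|=|a-b|\,|a+b|$ with $a=[\PB_h V_{h+1}^*](s_{h,k},a_{h,k})$ and $b$ the truncated empirical first moment, both in $[0,\HM]$ so $|a+b|\le2\HM$; then $|a-b|\le \bigl|[\PB_h(\overoptV_{h+1}^k-V_{h+1}^*)]\bigr|+\bigl|[(\PB_h-\widehat{\PB}_{h,k})\overoptV_{h+1}^k]\bigr|\le [\PB_h(\overoptV_{h+1}^k-\overpesV_{h+1}^k)]+\beta_0\|\Bphi_{h,k}\|_{\H_{h,k-1}^{-1}}$, and converting the leading term to $\widehat{\PB}_{h,k}$ gives $|a-b|\le \widehat{\PB}_{h,k}(\overoptV_{h+1}^k-\overpesV_{h+1}^k)(s_{h,k},a_{h,k})+3\beta_0\|\Bphi_{h,k}\|_{\H_{h,k-1}^{-1}}$, so this part is at most $2\HM\,\widehat{\PB}_{h,k}(\overoptV_{h+1}^k-\overpesV_{h+1}^k)(s_{h,k},a_{h,k})+6\HM\beta_0\|\Bphi_{h,k}\|_{\H_{h,k-1}^{-1}}$. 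Summing the two contributions yields exactly $11\HM\beta_0\|\Bphi_{h,k}\|_{\H_{h,k-1}^{-1}}+4\HM\,\widehat{\PB}_{h,k}(\overoptV_{h+1}^k-\overpesV_{h+1}^k)(s_{h,k},a_{h,k})$, which is the first argument of the $\min$ in \eqref{eq:U}. The other argument $\VM^2$ is the crude fallback: $[\VB_h V_{h+1}^*](s_{h,k},a_{h,k})$ and $[\HVB_h\overoptV_{h+1}^k](s_{h,k},a_{h,k})$ are nonnegative and bounded above through the boundedness of the value functions and Assumption~\ref{asmp:H-V}, so their difference is also $\le\VM^2$; taking the minimum of the two bounds completes the proof.

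The mechanical parts are the triangle-inequality bookkeeping and the constant chasing. The step that requires genuine care is matching each replacement $\PB_h\to\widehat{\PB}_{h,k}$ to the correct confidence bound in $\BM_0$: the plain value function $\overoptV_{h+1}^k$ and the gap $\overoptV_{h+1}^k-\overpesV_{h+1}^k$ are handled by the $\beta_0$-bounds, whereas the squared value function $(\overoptV_{h+1}^k)^2$ needs the separate $\HM\beta_0$-bound on $\|(\Bmu_h^*-\Bmu_{h,k-1})[\BoveroptV_{h+1}^k]^2\|_{\H_{h,k-1}}$ — which is exactly why that inequality is built into the definition of $\BM_0$ — and one must make sure the truncations $[\cdot]_{[0,\HM^2]}$ are invoked only to discard error, never to create it. It should also be checked that the optimism/pessimism sandwich of Lemma~\ref{lem:opt-pes} is available on the event under consideration, i.e. that $\BM_R$ holds there as well.
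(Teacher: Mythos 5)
Your proposal is correct and follows essentially the same route as the paper: the same ingredients ($\BM_0$ confidence bounds, the optimism/pessimism sandwich, Cauchy--Schwarz to convert $\PB_h\to\HPB_{h,k}$ at cost $\beta_0\|\Bphi_{h,k}\|_{\H_{h,k-1}^{-1}}$) produce the identical constants $11\HM\beta_0$ and $4\HM$; the only difference is that you group terms by moments (second moment vs.\ squared first moment) whereas the paper pivots through $\VB_h\overoptV_{h+1}^k$, and the two bookkeepings bound the same four elementary discrepancies. Your closing caveat is well taken --- the paper's own proof also invokes Lemma~\ref{lem:opt-pes}, hence implicitly $\BM_R$, even though $\BM_R$ is absent from the stated hypothesis $\BM_0\cap\BM_{h+1}$ --- and note that neither you nor the paper actually verifies the $\VM^2$ arm of the $\min$ in~\eqref{eq:U} (your appeal to boundedness is not immediate since $\HVB_h\overoptV_{h+1}^k$ can be negative after truncation), though this arm is only ever used as a crude cap downstream.
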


\begin{lem}
	\label{lem:var1}
	On the event $\BM_0 \cap \BM_R \cap \BM_{h+1}$, it follows that for all $j \le k \le K$
	\[
	\max \left\{
	\left[
	\VB_h(\overoptV_{h+1}^k - V_{h+1}^*)\right](s_{h, j}, a_{h, j})
	, 	\left[
	\VB_h(\overpesV_{h+1}^k - V_{h+1}^*)\right](s_{h, j}, a_{h, j})
	\right\}
	\le E_{h, j}
	\]
	where
	\begin{equation}
 \tag{\ref{eq:E}}
		E_{h, j} =
		\min\left\{
		\HM^2, 2\HM\beta_0 \|\Bphi_{h, j}\|_{\H_{h, j-1}^{-1}}
		+ \HM \cdot	\left[ \HPB_{h, j}(\overoptV_{h+1}^j - \overpesV_{h+1}^j)\right](s_{h, j}, a_{h, j})
		\right\}
	\end{equation}
	with $\HPB_{h, j}(\cdot|s, a) = \Bmu_{h, j-1}^\top \Bphi(s, a)$.
\end{lem}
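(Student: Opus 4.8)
The plan is to reduce the lemma to the deterministic chain of inequalities already sketched in Appendix~\ref{append:variance}, supplying the two ingredients it glosses over: making the dominating bound $\FM_{h,j}$-measurable (so it may legitimately control the future value functions $\overoptV_{h+1}^k,\overpesV_{h+1}^k$ with $k\ge j$, which are only $\FM_{H,k-1}$-measurable), and replacing the population transition $\PB_h$ by the empirical $\HPB_{h,j}$ at a cost controlled by $\BM_0$. Throughout, fix $h$, fix $j\le k\le K$, and abbreviate $(s,a)=(s_{h,j},a_{h,j})$.

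For the optimistic part, first bound the variance by the raw second moment, $[\VB_h(\overoptV_{h+1}^k-V_{h+1}^*)](s,a)\le[\PB_h(\overoptV_{h+1}^k-V_{h+1}^*)^2](s,a)$. On $\BM_R\cap\BM_{h+1}$, Lemma~\ref{lem:opt-pes} gives $\overpesV_{h+1}^k\le V_{h+1}^*\le\overoptV_{h+1}^k$, and combined with the clipping in Algorithm~\ref{algo:linear} (so $0\le\overpesV_{h+1}^k$ and $\overoptV_{h+1}^k\le\HM$) this pins $\overoptV_{h+1}^k-V_{h+1}^*$ into $[0,\HM]$, whence $(\overoptV_{h+1}^k-V_{h+1}^*)^2\le\HM(\overoptV_{h+1}^k-V_{h+1}^*)\le\HM(\overoptV_{h+1}^k-\overpesV_{h+1}^k)$ pointwise. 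The monotonicity of the value functions in the episode index (non-increasing $\overoptV$, non-decreasing $\overpesV$, as recalled in Appendix~\ref{append:variance}) together with $j\le k$ then yields $\overoptV_{h+1}^k-\overpesV_{h+1}^k\le\overoptV_{h+1}^j-\overpesV_{h+1}^j$; this is the step that strips off the future index $k$ and leaves an $\FM_{h,j}$-measurable dominating quantity. Applying the monotone operator $\PB_h$ gives $[\VB_h(\overoptV_{h+1}^k-V_{h+1}^*)](s,a)\le\HM[\PB_h(\overoptV_{h+1}^j-\overpesV_{h+1}^j)](s,a)$.

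It remains to pass to $\HPB_{h,j}$ and add the trivial cap. Since $\PB_h(\cdot|s,a)=\langle\Bphi(s,a),\Bmu_h^*\rangle$ and $\HPB_{h,j}(\cdot|s,a)=\langle\Bphi(s,a),\Bmu_{h,j-1}\rangle$, writing $[\PB_hW-\HPB_{h,j}W](s,a)=\langle\Bphi(s,a),(\Bmu_h^*-\Bmu_{h,j-1})W\rangle$ for $W=\overoptV_{h+1}^j$ and $W=\overpesV_{h+1}^j$ separately, then invoking Cauchy--Schwarz and the two bounds $\|(\Bmu_h^*-\Bmu_{h,j-1})\overoptV_{h+1}^j\|_{\H_{h,j-1}}\le\beta_0$, $\|(\Bmu_h^*-\Bmu_{h,j-1})\overpesV_{h+1}^j\|_{\H_{h,j-1}}\le\beta_0$ from $\BM_0$, gives $\HM[\PB_h(\overoptV_{h+1}^j-\overpesV_{h+1}^j)](s,a)\le\HM[\HPB_{h,j}(\overoptV_{h+1}^j-\overpesV_{h+1}^j)](s,a)+2\HM\beta_0\|\Bphi_{h,j}\|_{\H_{h,j-1}^{-1}}$. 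Combined with the trivial bound $[\VB_h(\overoptV_{h+1}^k-V_{h+1}^*)](s,a)\le\HM^2$ (variance of a $[0,\HM]$-valued variable) this is exactly $E_{h,j}$ as in~\eqref{eq:E}. The pessimistic case is verbatim the same after noting $V_{h+1}^*-\overpesV_{h+1}^k\in[0,\HM]$ and $(V_{h+1}^*-\overpesV_{h+1}^k)^2\le\HM(V_{h+1}^*-\overpesV_{h+1}^k)\le\HM(\overoptV_{h+1}^k-\overpesV_{h+1}^k)$ (using $V_{h+1}^*\le\overoptV_{h+1}^k$), then repeating the last two displays. The computation is routine; the one genuinely delicate point---and the main obstacle---is precisely the index bookkeeping in the middle step: because $\sigma_{h,j}$ and hence $E_{h,j}$ must be $\FM_{h,j}$-measurable while $\overoptV_{h+1}^k$ for $k>j$ is not, the bound cannot be stated for a single $k$, and it is the monotonicity of $\overoptV,\overpesV$ in the episode index (together with optimism/pessimism from Lemma~\ref{lem:opt-pes}) that makes a single $\FM_{h,j}$-measurable quantity dominate the variance uniformly over all $k\ge j$.
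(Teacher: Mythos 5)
Your proposal is correct and follows essentially the same route as the paper's proof: bound the variance by the second moment, use optimism/pessimism and the $[0,\HM]$ range to pass to $\HM\,\PB_h(\overoptV_{h+1}^k-\overpesV_{h+1}^k)$, invoke monotonicity in the episode index to replace $k$ by $j$, and then swap $\PB_h$ for $\HPB_{h,j}$ at a cost of $2\HM\beta_0\|\Bphi_{h,j}\|_{\H_{h,j-1}^{-1}}$ via the event $\BM_0$. Your explicit discussion of the $\FM_{h,j}$-measurability issue and of the trivial $\HM^2$ cap only makes more transparent what the paper leaves implicit.
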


With the last four lemmas, one can easily prove $\sigma_{h, k}$ indeed serves as an upper bound of the true variance of $V_{h+1}^*$ at stage $h$.
Therefore, by the backward induction, we can prove the following lemma whose proof is in Appendix~\ref{proof:fine-CI}.
\begin{lem}
	\label{lem:fine-CI}
On the event $\BM_0 \cap \BM_{R}$, the event $\BM_V$ holds with probability at least $1-2\delta$.
\end{lem}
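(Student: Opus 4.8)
\textbf{Proof plan for Lemma~\ref{lem:fine-CI}.}
The plan is to run a backward induction over the stage index $h$, from $h=H$ down to $h=1$, to show that on $\BM_0 \cap \BM_R$ the refined confidence event $\BM_V = \BM_1$ holds with probability at least $1-2\delta$. The induction hypothesis at stage $h+1$ is the event $\BM_{h+1}$, i.e. that for all $k \in [K]$ and all $h+1 \le h' \le H$ the tight bound~\eqref{eq:fine-CI} holds with radius $\beta_V$. The base case $\BM_{H+1}$ is vacuous since $\overoptV_{H+1}^k = \overpesV_{H+1}^k = 0$. For the inductive step I would assume $\BM_0 \cap \BM_R \cap \BM_{h+1}$ and show $\BM_h$ holds up to an additional failure probability that, once summed over $h \in [H]$ and combined via a union bound, totals at most $2\delta$.

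The core of the inductive step is to verify that the variance surrogate $\sigma_{h,k}$ defined in~\eqref{eq:sigma} is a genuine (and $\FM_{h,k}$-measurable) upper bound for the conditional variance $[\VB_h \overoptV_{h+1}^k](s_{h,k},a_{h,k})$ and, more precisely, for $d^3H \sup_{k \le j \le K}[\VB_h(\overoptV_{h+1}^j - V_{h+1}^*)](s_{h,k},a_{h,k})$ so that the measurability requirement~\eqref{eq:sigma-bound} of the self-normalized Bernstein inequality (Lemma~\ref{lem:self-bern}) is met. First I would invoke Lemma~\ref{lem:opt-pes} (applicable since $\BM_R \cap \BM_{h+1}$ holds) to get optimism/pessimism~\eqref{eq:opt-pes} at stage $h+1$, which is exactly what licenses the chain of inequalities~\eqref{eq:sigma-bound1} bounding the sub-optimality-gap variance by $\HM[\PB_h(\overoptV_{h+1}^k - \overpesV_{h+1}^k)]$. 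Then Lemma~\ref{lem:var1} converts the intractable $\PB_h$ into the empirical $\HPB_{h,k}$ at the cost of the additive term in $E_{h,k}$, and Lemma~\ref{lem:var0} does the analogous job for $[\VB_h V_{h+1}^*]$ via the $U_{h,k}$ term. Decomposing $[\VB_h\overoptV_{h+1}^k] \le 2[\VB_h V_{h+1}^*] + 2[\VB_h(\overoptV_{h+1}^k - V_{h+1}^*)]$ and feeding in the $E_{h,k}$, $J_{h,k}$, $U_{h,k}$, and $R_{h,k}$ pieces shows each of the terms in the $\max$ defining $\sigma_{h,k}^2$ dominates the relevant quantity, so $\sigma_{h,k}^2$ is the required measurable upper bound.

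With $\sigma_{h,k}$ validated, the second half of the step is a concentration argument for the two self-normalized sums
\[
\left\| (\Bmu_h^* - \Bmu_{h,k-1})\BoveroptV_{h+1}^k \right\|_{\H_{h,k-1}}, \qquad \left\| (\Bmu_h^* - \Bmu_{h,k-1})\BoverpesV_{h+1}^k \right\|_{\H_{h,k-1}}.
\]
Here I would write the error as a weighted martingale $\sum_{j<k} \sigma_{h,j}^{-2}\Bphi_{h,j}\langle \Beps_{h,j}, \V_{h+1}^j\rangle$ plus a ridge term, apply the Bernstein-type self-normalized bound (Lemma~\ref{lem:self-bern}) with the per-step conditional variance controlled by $\sigma_{h,j}^2$ — which is precisely why the $d^3H$ inflation in~\eqref{eq:sigma} was built in, to absorb the metric-entropy/covering cost of the value-function class (which is $\TOM(dH)$-dimensional by the rare-switching update). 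A standard covering argument over the function class containing $\overoptV_{h+1}^k$ and $\overpesV_{h+1}^k$, combined with the elliptical-potential bound (Lemma~\ref{lem:w-sum}), yields the radius $\beta_V = \OM(\sqrt{d}\iota_1^2 + \sqrt{d\lambda}\HM)$. Summing the covering-level failure probabilities over $h \in [H]$ and $k \in [K]$ gives the claimed $1-2\delta$. \textbf{The main obstacle} I anticipate is the measurability gymnastics around~\eqref{eq:sigma-bound}: $\sigma_{h,k}$ must bound $[\VB_h(\overoptV_{h+1}^j - V_{h+1}^*)]$ \emph{uniformly over future episodes $j \ge k$} while itself being only $\FM_{h,k}$-measurable, and threading the monotonicity of value functions together with the empirical-transition approximation through Lemmas~\ref{lem:var0}–\ref{lem:var1} without a circular dependency (the variance estimator feeds the confidence sets which feed back into the value-function monotonicity used to justify the estimator) is the delicate part; laying out the backward induction so that at stage $h$ one only uses stage-$(h+1)$ conclusions is what breaks the apparent circularity.
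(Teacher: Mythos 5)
Your overall architecture --- backward induction over $h$, validating $\sigma_{h,k}$ as an $\FM_{h,k}$-measurable variance upper bound via Lemmas~\ref{lem:opt-pes}, \ref{lem:var0} and \ref{lem:var1}, then a self-normalized Bernstein bound with a covering argument --- matches the paper's proof, and your diagnosis of the measurability subtlety in~\eqref{eq:sigma-bound} is exactly the right concern. However, there is a genuine gap in the concentration step. You propose to run the covering argument ``over the function class containing $\overoptV_{h+1}^k$ and $\overpesV_{h+1}^k$'' directly. The covering entropy of that class is $\TOM(d^3H)$ (by the rare-switching bound, Lemmas~\ref{lem:covering-1} and~\ref{lem:rare-update}), and the conditional variance of $\langle \Beps_{h,j}, \BoveroptV_{h+1}^k\rangle$ is of order $\sigma_{h,j}^2$ itself (the $d^3H\cdot E_{h,j}$ term in~\eqref{eq:sigma} only deflates the variance of the \emph{gap} $\overoptV_{h+1}^k - V_{h+1}^*$, not of $\overoptV_{h+1}^k$). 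Plugging $C_\sigma = \OM(1)$ and $\log N_0 = \OM(d^3H\iota^2)$ into Lemma~\ref{lem:value-ci}(ii) gives a radius of order $\sqrt{d\cdot d^3H}\,\iota^{1.5} = d^2\sqrt{H}\,\iota^{1.5}$, not the claimed $\beta_V = \OM(\sqrt{d}\,\iota_1^2 + \sqrt{d\lambda}\HM)$. Your $\beta_V$ therefore does not follow from the argument as written.

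The missing idea is that the paper splits the self-normalized norm itself, via the triangle inequality, as $\|(\Bmu_h^*-\Bmu_{h,k-1})\BoveroptV_{h+1}^k\|_{\H_{h,k-1}} \le \|(\Bmu_h^*-\Bmu_{h,k-1})\V_{h+1}^*\|_{\H_{h,k-1}} + \|(\Bmu_h^*-\Bmu_{h,k-1})(\BoveroptV_{h+1}^k-\V_{h+1}^*)\|_{\H_{h,k-1}}$. The first term involves the \emph{deterministic} function $V_{h+1}^*$, so item~(i) of Lemma~\ref{lem:value-ci} applies with no covering at all (this is where Lemma~\ref{lem:var0} and the $U_{h,k}$, $J_{h,k}$ terms guarantee $\sigma_{h,k}^2 \ge [\VB_h V_{h+1}^*]$, i.e.\ $C_\sigma=1$), yielding $\beta_1 = \OM(\sqrt{d}\,\iota_1)$. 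Only the second, gap term requires the covering argument, and there Lemma~\ref{lem:var1} gives $C_\sigma = 1/\sqrt{d^3H}$, which exactly cancels the $\OM(d^3H\iota_1^2)$ entropy and yields $\beta_2 = \OM(\sqrt{d}\,\iota_1^2)$. You invoke the analogous variance decomposition $[\VB_h\overoptV_{h+1}^k]\le 2[\VB_h V_{h+1}^*]+2[\VB_h(\overoptV_{h+1}^k-V_{h+1}^*)]$ only to validate $\sigma_{h,k}$, but the decomposition must also be carried into the martingale concentration itself; without it the extra $\sqrt{d}$-type factor from the covering argument (the ``dependence issue'' the whole construction is designed to avoid) reappears in the leading term.
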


\subsection{Regret Analysis}
\label{proof:mdp-regret}
In the previous subsection, we know that with probability at least $1-15\delta$, the event $\BM_V \cap \BM_R$ holds.
Based on Lemma~\ref{lem:opt-pes}, the optimism implies that
\[
\mathrm{Reg}(K) := \sum_{k=1}^K (V_1^*-V_1^{\pi_k})(s_{1, k})
\le
\sum_{k=1}^K (\overoptV_1^k-V_1^{\pi_k})(s_{1, k}).
\]
We then relate the suboptimality gap $ \sum_{k=1}^k (\overoptV_1^k-V_1^{\pi_k})(s_{1, k})$ to the term $ \sum_{k=1}^K \sum_{h=1}^H \|\Bphi_{h, k}\|_{\H_{h, k-1}^{-1}}$ in Lemma~\ref{lem:sub-gap}.
We emphasize that the bound in Lemma~\ref{lem:sub-gap} is much finer than previous bounds (e.g., Lemma B.1 in~\citep{he2022nearly}) in the sense that the rest term is $\TOM(H\HM)$ instead of previous $\TOM(\sqrt{HK}\HM)$.
This is because we adapt a variance-aware Berinstain inequality to relate the variance of $\sum_{k=1}^k (\overoptV_1^k-V_1^{\pi_k})(s_{1, k})$ with its expectations and use a recursion argument to simplify the final expression, while previous work directly apply Azuma-Hoeffding inequality to analyze the concentration of $\sum_{k=1}^k (\overoptV_1^k-V_1^{\pi_k})(s_{1, k})$, which inevitably introduces the additional $\TOM(\sqrt{K})$ dependence.
Its proof is collected in Appendix~\ref{proof:sub-gap}.
\begin{lem}[Suboptimality gap]
	\label{lem:sub-gap}
	With probability at least $1-\delta$, on the event $\BM_R \cap \BM_V$, it follows that
	\begin{gather*}
		\sum_{k =1}^K(\overoptV_{1}^k - V_1^{\pi_k})(s_{1, k})
		\le 6\beta  \sum_{k=1}^K \sum_{h=1}^H \|\Bphi_{h, k}\|_{\H_{h, k-1}^{-1}}   + 38 H \HM  \log\frac{4\ceil{\log_2 HK}}{\delta} 	\	\text{and} \  \\
		\sum_{k=1}^K \sum_{h=1}^H\PB_h(\overoptV_{h+1}^k - V_{h+1}^{\pi_k})(s_{h, k}, a_{h, k}) 
		\le 8H\beta \sum_{k=1}^K \sum_{h=1}^H  \|\Bphi_{h, k}\|_{\H_{h, k-1}^{-1} } + 38 H^2 \HM \log\frac{4\ceil{\log_2 HK}}{\delta}.
	\end{gather*}	
\end{lem}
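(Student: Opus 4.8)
The plan is to run the usual optimism-based decomposition for value iteration, but to replace the concluding Azuma--Hoeffding step by a variance-aware (empirical Bernstein) concentration together with a self-bounding recursion; this is precisely the ingredient that upgrades the residual from $\TOM(\sqrt{HK}\,\HM)$ to $\TOM(H\HM)$. Throughout we condition on $\BM_R\cap\BM_V$ and write $\beta:=\beta_R+\beta_V$.

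First I would establish the one-step optimism inequality
\[
(\overoptV_h^k-V_h^{\pi_k})(s_{h,k})\le [\PB_h(\overoptV_{h+1}^k-V_{h+1}^{\pi_k})](s_{h,k},a_{h,k})+2\beta\|\Bphi_{h,k}\|_{\H_{h,k-1}^{-1}} .
\]
Since $\pi_h^k$ is greedy for $\overoptQ_h^k$ and $\overoptQ_h^k\le\optQ_h^k$, the confidence bounds \eqref{eq:fine-CI} together with $\|\Btheta_{h,k-1}-\Btheta_h^*\|_{\H_{h,k-1}}\le\beta_R$ give $\optQ_h^k(s,a)\le r_h(s,a)+[\PB_h\overoptV_{h+1}^k](s,a)+2\beta\|\Bphi(s,a)\|_{\H_{h,k-1}^{-1}}$, while $V_h^{\pi_k}(s_{h,k})=r_h(s_{h,k},a_{h,k})+[\PB_hV_{h+1}^{\pi_k}](s_{h,k},a_{h,k})$ holds exactly. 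Next I set $\xi_{h,k}:=[\PB_h(\overoptV_{h+1}^k-V_{h+1}^{\pi_k})](s_{h,k},a_{h,k})-(\overoptV_{h+1}^k-V_{h+1}^{\pi_k})(s_{h+1,k})$; because $\overoptV_{h+1}^k-V_{h+1}^{\pi_k}$ is $\FM_{h,k}$-measurable, $\{\xi_{h,k}\}$ lexicographically ordered in $(k,h)$ is a martingale-difference sequence with $|\xi_{h,k}|\le\HM$, and by Lemma~\ref{lem:opt-pes} (optimism $\overoptV\ge V^*\ge V^{\pi_k}$) together with the $[0,\HM]$-clipping of $\overoptV$ one has $0\le\overoptV_{h+1}^k-V_{h+1}^{\pi_k}\le\HM$, so $\mathrm{Var}[\xi_{h,k}\mid\FM_{h,k}]=[\VB_h(\overoptV_{h+1}^k-V_{h+1}^{\pi_k})](s_{h,k},a_{h,k})\le \HM\,[\PB_h(\overoptV_{h+1}^k-V_{h+1}^{\pi_k})](s_{h,k},a_{h,k})$.

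Unrolling the one-step inequality over $h$ (with $\overoptV_{H+1}^k=V_{H+1}^{\pi_k}=0$) and summing over $k$ gives, writing $g_{h,k}:=2\beta\|\Bphi_{h,k}\|_{\H_{h,k-1}^{-1}}$ and $M:=\sum_{k,h}\xi_{h,k}$,
\[
\sum_{k=1}^K(\overoptV_1^k-V_1^{\pi_k})(s_{1,k})\le 2\beta\sum_{k=1}^K\sum_{h=1}^H\|\Bphi_{h,k}\|_{\H_{h,k-1}^{-1}}+M ,
\]
and unrolling the identity $[\PB_h(\overoptV_{h+1}^k-V_{h+1}^{\pi_k})](s_{h,k},a_{h,k})=(\overoptV_{h+1}^k-V_{h+1}^{\pi_k})(s_{h+1,k})+\xi_{h,k}$ in the first factor yields
\[
S:=\sum_{k=1}^K\sum_{h=1}^H[\PB_h(\overoptV_{h+1}^k-V_{h+1}^{\pi_k})](s_{h,k},a_{h,k})\le 2H\beta\sum_{k=1}^K\sum_{h=1}^H\|\Bphi_{h,k}\|_{\H_{h,k-1}^{-1}}+(H+1)M .
\]
Then I would apply the adaptive Bernstein bound Lemma~\ref{lem:bern} (equivalently Lemma~\ref{lem:self-bern}) to $M$, using the crude deterministic variance bound $\sum_{k,h}\EB[\xi_{h,k}^2\mid\FM_{h,k}]\le HK\HM^2$ so that the peeling index is $\TOM(\log HK)$, obtaining with probability at least $1-\delta$ that $M\le 3\sqrt{\HM S\,\iota}+5\HM\iota$ with $\iota$ the logarithmic factor $\log\tfrac{4\lceil\log_2 HK\rceil}{\delta}$. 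Inserting the bound on $S$, applying $3\sqrt{\HM(H+1)M\,\iota}\le\tfrac12 M+\tfrac92\HM(H+1)\iota$ and $\sqrt{(\beta\sum_{k,h}\|\Bphi_{h,k}\|_{\H_{h,k-1}^{-1}})(\HM H\iota)}\le\tfrac12\beta\sum_{k,h}\|\Bphi_{h,k}\|_{\H_{h,k-1}^{-1}}+\tfrac12\HM H\iota$, and solving the resulting quadratic for $M$, gives $M\le\beta\sum_{k,h}\|\Bphi_{h,k}\|_{\H_{h,k-1}^{-1}}+cH\HM\iota$ with an explicit constant $c$; plugging this into the two displays above delivers the two claimed inequalities with constants $6\beta$, $38H\HM\iota$ and $8H\beta$, $38H^2\HM\iota$.

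The hard part will be the self-bounding step: $M$ is controlled only through $S$ and $S$ only through $M$, so the variance-aware estimate has to be fed back through a quadratic inequality, and keeping the absolute constants small enough to land on $6$, $8$, $38$ is the delicate bookkeeping. One also has to be careful that the proxy $\VB_h(\overoptV_{h+1}^k-V_{h+1}^{\pi_k})\le\HM\,\PB_h(\overoptV_{h+1}^k-V_{h+1}^{\pi_k})$ is legitimate, which relies on both the optimism/pessimism of Lemma~\ref{lem:opt-pes} (via \eqref{eq:opt-pes}) and the $[0,\HM]$-clipping of the value functions, and that the doubly indexed family $\{\xi_{h,k}\}$ genuinely forms a martingale-difference sequence under the lexicographic filtration $\{\FM_{h,k}\}$. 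The remaining manipulations are routine value-iteration algebra.
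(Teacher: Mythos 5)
Your overall strategy is exactly the paper's: the optimistic one-step decomposition, the martingale $\xi_{h,k}$ with conditional variance bounded by $\HM\,[\PB_h(\overoptV_{h+1}^k-V_{h+1}^{\pi_k})](s_{h,k},a_{h,k})$ via optimism and clipping, a variance-aware Freedman bound in place of Azuma--Hoeffding, and a self-bounding quadratic to close the loop. However, two steps do not survive scrutiny as written. First, your one-step inequality with constant $2\beta$ presumes $\overoptQ_h^k\le\optQ_h^k$ with $\optQ_h^k$ built from $\Btheta_{h,k-1}$, $\Bmu_{h,k-1}$, $\H_{h,k-1}$. Because of the rare-switching update, $\overoptQ_h^k$ equals the temporary function computed at the last update episode $\kl\le k$, i.e.\ it involves $\Btheta_{h,\kl-1}$, $\Bmu_{h,\kl-1}\BoveroptV_{h+1}^{\kl}$ and the bonus $\beta\|\Bphi\|_{\H_{h,\kl-1}^{-1}}$; the events $\BM_R\cap\BM_V$ control the estimation error in the $\H_{h,\kl-1}$-norm, and passing from $\|\Bphi_{h,k}\|_{\H_{h,\kl-1}^{-1}}$ to $\|\Bphi_{h,k}\|_{\H_{h,k-1}^{-1}}$ costs a factor of $2$ by the determinant-ratio lemma (Lemma~\ref{lem:matrix-ratio}), so the correct one-step constant is $4\beta$, not $2\beta$. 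This is why the paper's final constants are $6\beta$ and $8H\beta$; your version would need to be rerun from $4\beta$ (the claimed bounds still come out, but your stated intermediate inequality is false for the algorithm as defined).

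Second, and more seriously, the inequality $S\le 2H\beta\sum_{k,h}\|\Bphi_{h,k}\|_{\H_{h,k-1}^{-1}}+(H+1)M$ is not valid. Unrolling the recursion inside $S$ produces a \emph{weighted} martingale sum $\sum_{k,h}b_h\xi_{h,k}$ with $h$-dependent weights $b_h\in[1,H]$, and since the increments $\xi_{h,k}$ take both signs you cannot dominate this by $(H+1)\sum_{k,h}\xi_{h,k}=(H+1)M$; the subsequent substitution of $(H+1)M$ into the Bernstein bound for $M$ therefore has no justification. The repair is the one the paper uses: treat $\sum_{k,h}b_h\xi_{h,k}$ as its own martingale (increments bounded by $2H\HM$, conditional variance bounded by $H^2\HM\,[\PB_h(\overoptV_{h+1}^k-V_{h+1}^{\pi_k})](s_{h,k},a_{h,k})$), apply Lemma~\ref{lem:bern} to it directly to get a bound of the form $3H\sqrt{\iota\HM S}+10H\HM\iota$, and then solve the resulting quadratic in $S$ via $x\le|a|\sqrt{x}+b^2\Rightarrow x\le 2(a^2+b^2)$ before feeding $S$ back into the bound for $M$. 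With these two corrections your argument coincides with the paper's proof and yields the stated constants.
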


Using a similar argument, we provide a finer bound for the gap between optimistic and pessimistic value functions $	\sum_{k=1}^K \sum_{h=1}^H\PB_h(\overoptV_{h+1}^k - \overpesV_{h+1}^{k})(s_{h, k}, a_{h, k})$ in Lemma~\ref{lem:opt-pes-gap}.
Its proof is provided in Appendix~\ref{proof:opt-pes-gap}.

\begin{lem}[Gap between optimistic and pessimistic value functions]
	\label{lem:opt-pes-gap}
	With probability at least $1-\delta$, on the event $\BM_V \cap \BM_R$, it follows that
	\[
	\sum_{k=1}^K \sum_{h=1}^H\PB_h(\overoptV_{h+1}^k - \overpesV_{h+1}^{k})(s_{h, k}, a_{h, k}) 
	\le 12H\beta \sum_{k=1}^K \sum_{h=1}^H  \|\Bphi_{h, k}\|_{\H_{h, k-1}^{-1} } + 38H^2 \HM \log\frac{4\ceil{\log_2 HK}}{\delta}.
	\]
\end{lem}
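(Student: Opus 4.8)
The plan is to mirror the argument of Lemma~\ref{lem:sub-gap}, since the gap between the optimistic and pessimistic value functions obeys essentially the same recursive structure as the suboptimality gap, but now sandwiched on both sides by the true optimal value function. First I would observe that on the event $\BM_R \cap \BM_V$ we have, for every $h \in [H]$ and $k \in [K]$, the pointwise Bellman-type inequality
\[
(\overoptQ_h^k - \overpesQ_h^k)(s,a)
\le \langle \Bphi(s,a), \Bmu_{h,k-1}(\overoptV_{h+1}^k - \overpesV_{h+1}^k) \rangle + 2\beta \|\Bphi(s,a)\|_{\H_{h,k-1}^{-1}},
\]
which follows directly from the definitions of $\optQ_h^k$ and $\pesQ_h^k$ in Algorithm~\ref{algo:linear} together with the clipping rules (Line~7). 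Adding and subtracting $[\PB_h(\overoptV_{h+1}^k - \overpesV_{h+1}^k)](s,a)$ and invoking the coarse confidence bound $\|(\Bmu_{h,k-1} - \Bmu_h^*) V\|_{\H_{h,k-1}} \le \beta_V$ from~\eqref{eq:high-prob-event} (valid on $\BM_V$, applied to both $\overoptV_{h+1}^k$ and $\overpesV_{h+1}^k$), I would get
\[
(\overoptV_h^k - \overpesV_h^k)(s_{h,k})
\le [\PB_h(\overoptV_{h+1}^k - \overpesV_{h+1}^k)](s_{h,k}, a_{h,k}) + 4\beta \|\Bphi_{h,k}\|_{\H_{h,k-1}^{-1}},
\]
where I use that $\pi_h^k$ is greedy for $\overoptQ_h^k$, so the max over actions on the left is attained at $a_{h,k}$, while on the right I upper-bound the pessimistic term's action by $a_{h,k}$ as well (monotonicity and clipping make this legitimate, exactly as in the proof of Lemma~\ref{lem:sub-gap}).

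Next I would unroll this one-step inequality into a martingale decomposition. Writing $\Delta_h^k := (\overoptV_h^k - \overpesV_h^k)(s_{h,k})$ and letting $\xi_{h,k} := [\PB_h(\overoptV_{h+1}^k - \overpesV_{h+1}^k)](s_{h,k},a_{h,k}) - (\overoptV_{h+1}^k - \overpesV_{h+1}^k)(s_{h+1,k})$ be the associated martingale difference with respect to $\FM_{h,k}$, summing over $h$ and $k$ gives
\[
\sum_{k=1}^K \sum_{h=1}^H [\PB_h(\overoptV_{h+1}^k - \overpesV_{h+1}^k)](s_{h,k},a_{h,k})
\le \sum_{k=1}^K \Delta_1^k + \sum_{k=1}^K \sum_{h=1}^H \xi_{h,k} + 4\beta \sum_{k=1}^K \sum_{h=1}^H \|\Bphi_{h,k}\|_{\H_{h,k-1}^{-1}}.
\]
The term $\sum_k \Delta_1^k$ is bounded by the first display of Lemma~\ref{lem:sub-gap} applied to both $\overoptV_1^k - V_1^{\pi_k}$ and (a pessimistic analog) $V_1^{\pi_k} - \overpesV_1^k$, or more directly by re-running the suboptimality-gap recursion with the pessimistic side included; either way it contributes a term of order $\TOM(H\beta \sum_{k,h}\|\Bphi_{h,k}\|_{\H_{h,k-1}^{-1}} + H\HM \log(\ceil{\log_2 HK}/\delta))$. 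The martingale sum $\sum_{k,h}\xi_{h,k}$ is where I would deploy the variance-aware Bernstein inequality (the same tool, e.g.\ Lemma~\ref{lem:self-bern}, used in Lemma~\ref{lem:sub-gap}) rather than Azuma--Hoeffding: the conditional variance of $\xi_{h,k}$ is at most $[\VB_h(\overoptV_{h+1}^k - \overpesV_{h+1}^k)](s_{h,k},a_{h,k}) \le \HM \cdot [\PB_h(\overoptV_{h+1}^k - \overpesV_{h+1}^k)](s_{h,k},a_{h,k})$ using $0 \le \overoptV - \overpesV \le \HM$, so the variance proxy is itself controlled by $\HM$ times the quantity we are bounding. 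This gives a self-bounding inequality of the form $X \le a\sqrt{\HM X \log(\cdot)} + b$, which I solve for $X$ using $\sqrt{uv} \le \frac{1}{2}(u+v)$ to absorb the $\sqrt{X}$ term into the left-hand side; this is precisely the recursion trick that avoids the spurious $\TOM(\sqrt{K})$ factor and yields the constant-order remainder $38H^2\HM \log(4\ceil{\log_2 HK}/\delta)$.

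Assembling these bounds and tracking constants (the factor $12H\beta$ arises from combining the $4\beta$ from the one-step inequality, the contribution of $\sum_k \Delta_1^k$ via Lemma~\ref{lem:sub-gap}, and the self-bounding absorption step) delivers the claimed inequality. The main obstacle I anticipate is bookkeeping the measurability and the interaction between the \emph{rare-switching} updates and the confidence events: because $\overoptV_{h+1}^k$ and $\overpesV_{h+1}^k$ are only recomputed at switching episodes, one must verify that the confidence bound $\|(\Bmu_{h,k-1}-\Bmu_h^*)V\|_{\H_{h,k-1}} \le \beta_V$ from $\BM_V$ genuinely covers the specific value functions appearing at episode $k$ (this is what the covering-number argument and the $\TOM(dH)$ metric entropy buy us, already established in $\BM_V$), and that the monotonicity $\overoptV_{h+1}^k \downarrow$, $\overpesV_{h+1}^k \uparrow$ in $k$ is used correctly when passing from the $Q$-level inequality to the $V$-level inequality. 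None of this is conceptually new relative to Lemma~\ref{lem:sub-gap}; the proof is a careful adaptation, and the only genuinely quantitative step is the self-bounding Bernstein estimate.
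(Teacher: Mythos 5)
Your overall strategy is exactly the paper's: a one‑step Bellman‑type inequality for $\overoptQ_h^k-\overpesQ_h^k$ with an $O(\beta)\|\Bphi_{h,k}\|_{\H_{h,k-1}^{-1}}$ bonus on $\BM_R\cap\BM_V$, a martingale decomposition, the conditional‑variance bound $\EB[\xi_{h,k}^2|\FM_{h,k}]\le \HM\,\PB_h(\overoptV_{h+1}^k-\overpesV_{h+1}^k)(s_{h,k},a_{h,k})$ via $0\le\overoptV-\overpesV\le\HM$, the variance‑aware Freedman inequality (Lemma~\ref{lem:bern}, not Lemma~\ref{lem:self-bern}), and the self‑bounding absorption $x\le|a|\sqrt{x}+b^2\Rightarrow x\le2(a^2+b^2)$. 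Two pieces of bookkeeping need repair, one of which is a genuine error as written. Your displayed decomposition
\[
\sum_{k,h}[\PB_h(\overoptV_{h+1}^k-\overpesV_{h+1}^k)](s_{h,k},a_{h,k})\le\sum_k\Delta_1^k+\sum_{k,h}\xi_{h,k}+4\beta\sum_{k,h}\|\Bphi_{h,k}\|_{\H_{h,k-1}^{-1}}
\]
is not valid: the correct identity is $\sum_{k,h}\PB_h(\cdots)=\sum_k\sum_{h=2}^H\Delta_h^k+\sum_{k,h}\xi_{h,k}$, and since each $\Delta_h^k\ge0$ the inner sum cannot be collapsed to $\Delta_1^k$. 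Unrolling each $\Delta_h^k\le\sum_{i=h}^H[\xi_{i,k}+O(\beta)\|\Bphi_{i,k}\|_{\H_{i,k-1}^{-1}}]$ and summing over $h\ge2$ produces weights $b_h$ of order $H$ on both the bonuses and the martingale differences; this is precisely where the factor $H$ in $12H\beta$ and the $3H\sqrt{\iota}$ (hence $38H^2\HM$) in the Bernstein step come from. Taken literally, your display would prove a bound smaller by a factor of $H$ than the lemma, which contradicts your own later accounting; the paper's proof (Appendix~\ref{proof:opt-pes-gap}) works with the weighted sum $\sum_{k,h}X_{h,k}b_h$, $|b_h|\le H$, from the start.

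The second point is the rare‑switching concern you flag but do not resolve: since $\overoptQ_h^k$ and $\overpesQ_h^k$ are built from the last update episode $\kl$, the one‑step inequality naturally carries $\Bmu_{h,\kl-1}$ and $\|\Bphi_{h,k}\|_{\H_{h,\kl-1}^{-1}}$, not the episode‑$(k-1)$ quantities in your display. The fix is not a covering argument but the determinant‑ratio bound (Lemma~\ref{lem:matrix-ratio} together with $\det(\H_{h,k-1})\le2\det(\H_{h,\kl-1})$), which gives $\|\Bphi_{h,k}\|_{\H_{h,\kl-1}^{-1}}\le2\|\Bphi_{h,k}\|_{\H_{h,k-1}^{-1}}$ at the cost of a factor of $2$; this is why the paper's one‑step bound carries $6\beta$ rather than your $4\beta$. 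Also, there is no ``pessimistic analog'' of Lemma~\ref{lem:sub-gap} for $V_1^{\pi_k}-\overpesV_1^k$ available in the paper, and none is needed: the recursion is run directly on $\overoptV-\overpesV$. With these corrections your argument coincides with the paper's proof.
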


The following issue is to upper bound the term $ \sum_{k=1}^K \sum_{h=1}^H \|\Bphi_{h, k}\|_{\H_{h, k-1}^{-1}}$.
Since the estimation of reward variance concerns the other term $ \sum_{k=1}^K \sum_{h=1}^H \|\TBphi_{h, k}\|_{\TH_{h, k-1}^{-1}}$, we are motivated to analyze them simultaneously via $\sum_{k=1}^K \sum_{h=1}^Hb_{h, k}$ where $b_{h, k} = \max\left\{
\|\Bphi_{h, k}\|_{\H_{h, k-1}^{-1}} ,  \|\TBphi_{h, k}\|_{\TH_{h, k-1}^{-1}} 
\right\}$.
Previous works~\citep{hu2022nearly,he2022nearly} mainly use Cauchy–Schwarz inequality to analyze it and obtain 
\[
\sum_{k=1}^K \sum_{h=1}^Hb_{h, k} \le \sqrt{\left( \sum_{k=1}^K \sum_{h=1}^H \sigma_{h, k}^2 \right) \left( \sum_{k=1}^K \sum_{h=1}^H  \max\{ w_{h, k}^2, \tw_{h, k}^2 \}  \right) } = \TOM\left(  \sqrt{dH}  \cdot \sqrt{ \sum_{k=1}^K \sum_{h=1}^H \sigma_{h, k}^2 }  \right).
\]
where the last equality uses the elliptical potential lemmas in Lemma~\ref{lem:w-sum}.
A standard analysis of the law of total variation would imply $\sqrt{ \sum_{k=1}^K \sum_{h=1}^H \sigma_{h, k}^2 } = \TOM(\sqrt{H^2K})$.
However, this result doesn't satisfy our target for two reasons.
First, due to the use of adaptive Huber regression, our definition of $\sigma_{h, k}$ is more complicated than previous algorithms.
We need a more elaborate analysis to handle the additional terms in the definition of $\sigma_{h, k}$'s.
Second, the previous result considers the worst-case scenario, while our target is to provide a finer variance-aware regret.
Therefore, it is imperative to provide a finer bound for the sum of bonuses $\sum_{k=1}^K \sum_{h=1}^H b_{h, k}$.
We did it in Lemma~\ref{lem:sum-bonus}.

\begin{lem}[Sum of bonuses]
	\label{lem:sum-bonus}
	Set $\lambda = \frac{1}{\HM^2 + W^2}$. 
	Let $\AM_0$ denote the intersection event of Lemma~\ref{lem:sub-gap} and~\ref{lem:opt-pes-gap}.
	With probability at least $1-2\delta$, on the event $\BM_{R} \cap \BM_V \cap \BM_0 \cap \BM_{R^2} \cap \AM_0$, we have
 \begin{align*}
\sum_{k=1}^K \sum_{h=1}^H b_{h, k}  =
&\TOM \left(
\sqrt{d H K  \GM^*}   + H  d^{0.5} K^{0.5} \sigma_{\min}  + \frac{H^{2.5}d^{5.5}\HM^2 + Hd^{1.5} \sigma_{R^2}}{\sigma_{\min}} \right) \\ 
& \qquad\qquad \TOM \left(+  H^3 d^{4.5} \HM +  H d^{0.5} \sigma_{R} + Hd^{1.5}
\right).
 \end{align*}
	where $\TOM(\cdot)$ ignores constant factors and logarithmic dependence.
\end{lem}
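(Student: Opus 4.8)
The plan is to bound $\sum_{k,h} b_{h,k}$ by first invoking Cauchy--Schwarz in the standard way, $\sum_{k=1}^K\sum_{h=1}^H b_{h,k}\le \sqrt{\big(\sum_{k,h}\sigma_{h,k}^2\big)\big(\sum_{k,h}\max\{w_{h,k}^2,\tw_{h,k}^2\}\big)}$, and then controlling each factor. The second factor is $\TOM(dH)$ by the elliptical potential lemma (Lemma~\ref{lem:w-sum}) applied to both $\{\Bphi_{h,k}/\sigma_{h,k}\}$ and $\{\TBphi_{h,k}/\sigma_{h,k}\}$, since $\sigma_{h,k}\ge c_0^{-1}b_{h,k}$ forces $\max\{w_{h,k},\tw_{h,k}\}\le c_0\le 1$. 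So the crux is bounding $\sum_{k,h}\sigma_{h,k}^2$. Using the definition~\eqref{eq:sigma}, I split $\sigma_{h,k}^2$ into the five competing terms and bound each sum separately: the $\sigma_{\min}^2$ term contributes $HK\sigma_{\min}^2$; the $c_0^{-2}b_{h,k}^2$ term is absorbed (it reappears on the left via another elliptical-potential bound, so one isolates it and solves); and the term $\big(\tfrac{W}{\sqrt{c_1d}}+\HM d^{2.5}H\big)b_{h,k}$ is handled by Cauchy--Schwarz against $\sqrt{\sum_{k,h}b_{h,k}}$, again yielding a self-bounding inequality in $S:=\sum_{k,h}b_{h,k}$ of the form $S\lesssim \sqrt{\kappa d H\cdot(\cdots)} + \sqrt{\kappa d H}\cdot\sqrt{(\tfrac{W}{\sqrt{c_1d}}+\HM d^{2.5}H)\sqrt{S}}$, which one resolves for $S$.

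The remaining and most substantial part is bounding $\sum_{k,h}\big(d^3H\cdot E_{h,k}+J_{h,k}\big)$, since these carry the variance-aware content. For $\sum_{k,h}E_{h,k}$: using the definition~\eqref{eq:E} and the crude bound $E_{h,k}\le 2\HM\beta_0\|\Bphi_{h,k}\|_{\H_{h,k-1}^{-1}}+\HM[\HPB_{h,k}(\overoptV_{h+1}^k-\overpesV_{h+1}^k)](s_{h,k},a_{h,k})$, I replace the empirical kernel $\HPB_{h,k}$ by $\PB_h$ up to the confidence-set error (controlled on $\BM_0$), then apply Lemma~\ref{lem:opt-pes-gap} to get $\sum_{k,h}\PB_h(\overoptV_{h+1}^k-\overpesV_{h+1}^k)(s_{h,k},a_{h,k})\lesssim H\beta\cdot S + H^2\HM\log(\cdots)$, so $\sum_{k,h}E_{h,k}\lesssim \HM\beta_0 S+\HM^2 H\beta\cdot S/\HM +\cdots$ — again self-bounding in $S$, with the $d^3H$ prefactor producing the $H^{2.5}d^{\text{(large)}}\HM^2/\sigma_{\min}$ and $H^3 d^{(\cdots)}\HM$ terms after recalling $\beta_0=\TOM(\sigma_{\min}^{-1}\HM\sqrt{d^3H})$ and $\beta=\TOM(\sqrt d)$. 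For $\sum_{k,h}J_{h,k}$ with $J_{h,k}=[\HVB_{h,k}R_h+\HVB_{h,k}\overoptV_{h+1}^k](s_{h,k},a_{h,k})+R_{h,k}+U_{h,k}$: I replace each empirical variance by its population counterpart using Lemmas~\ref{lem:CI-rewards} (error $R_{h,k}$) and~\ref{lem:var0} (error $U_{h,k}$), so up to $\sum_{k,h}(R_{h,k}+U_{h,k})$ — both bounded by $\beta_{R^2},\beta_R$ times $\sum b_{h,k}$, i.e.\ again linear in $S$ — the main surviving quantity is $\sum_{k,h}\big([\VB_h R_h]+[\VB_h V_{h+1}^*]\big)(s_{h,k},a_{h,k})$. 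I then pass from the realized trajectory to expectation via a Freedman/Bernstein martingale bound (as in Lemma~\ref{lem:self-bern}) and identify $\tfrac1K\sum_k \EB\big[\sum_h ([\VB_hR_h]+[\VB_hV_{h+1}^*])(s_{h,k},a_{h,k})\big]=\sum_h\sum_{s,a}\widetilde d_h^K(s,a)\,[\VB_hR_h+\VB_hV_{h+1}^*](s,a)$, which is $\GM^*$ (the $\min$ with $\VM^2$ coming from the trivial bound $[\VB_hV_{h+1}^*]\le\VM^2$ combined with Assumption~\ref{asmp:H-V}). This gives $\sum_{k,h}J_{h,k}\lesssim K\GM^* + (\text{linear in }S) + (\text{lower-order})$.

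Assembling, I obtain a single inequality $S^2\lesssim \kappa dH\big(HK\sigma_{\min}^2 + d^3 H\cdot K\GM^* + \text{linear-in-}S + \text{const}\big)$, and solving this quadratic-type inequality in $S$ — using $a\le b\sqrt a + c\Rightarrow a\le 2b^2+2c$ and its $\sqrt[4]{\cdot}$ analogue for the $\sqrt S$ term — yields the claimed bound $S=\TOM\big(\sqrt{dHK\GM^*}+Hd^{0.5}K^{0.5}\sigma_{\min}+\sigma_{\min}^{-1}(H^{2.5}d^{5.5}\HM^2+Hd^{1.5}\sigma_{R^2})+H^3d^{4.5}\HM+Hd^{0.5}\sigma_R+Hd^{1.5}\big)$ after tracking the exponents through $\beta_0,\beta_{R^2},\beta_R,\beta$. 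The main obstacle is the bookkeeping of self-bounding inequalities: several terms on the right ($c_0^{-2}b_{h,k}^2$, the $b_{h,k}$-linear error terms, the $E_{h,k}$ and $U_{h,k}$ bonuses) feed back into $S$ or $S^2$, so one must carefully collect all of them, verify that the coefficient of the leading $S^2$ on the right is $o(1)$ relative to the left (which is where the $d^3H$ inflation in $\sigma_{\min}^2\ge d^3HE_{h,k}$ and the choice $\lambda=1/(\HM^2+W^2)$ matter), and only then invert. A secondary subtlety is ensuring the martingale concentration that converts realized per-step variances to $\widetilde d_h^K$-weighted expectations is valid with the adaptively chosen $\sigma_{h,k}$ as weights, which is exactly why the measurability condition~\eqref{eq:sigma-bound} was imposed in the algorithm design.
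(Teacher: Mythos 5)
Your overall architecture matches the paper's: reduce $S:=\sum_{k,h}b_{h,k}$ via Cauchy--Schwarz and the elliptical potential lemma to $\sum_{k,h}\sigma_{h,k}^2$, bound $\sum E_{h,k}$ through Lemma~\ref{lem:opt-pes-gap} and $\sum J_{h,k}$ through the confidence sets plus a Freedman-type martingale step that produces $K\GM^*$, and then solve a self-bounding inequality in $S$. The paper organizes this slightly differently --- it partitions $[H]\times[K]$ into $\JM_1,\JM_2,\JM_3$ according to which of the five terms achieves the max in~\eqref{eq:sigma} and applies Cauchy--Schwarz only on $\JM_1$, handling $\JM_2,\JM_3$ directly via $\sum z_{h,k}^2\le 4H\kappa$ --- but your single global Cauchy--Schwarz yields the same exponents, so that difference is cosmetic.

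Two steps in your write-up, however, do not go through as stated. First, the claimed ``absorption'' of the $c_0^{-2}b_{h,k}^2$ term is circular: on the indices where $\sigma_{h,k}^2=c_0^{-2}b_{h,k}^2$ one has $z_{h,k}=c_0$ exactly, so $c_0^{-2}\sum b_{h,k}^2=\sum\sigma_{h,k}^2 z_{h,k}^2/c_0^2\le\sum\sigma_{h,k}^2$ with coefficient exactly $1$, and nothing can be moved to the left. You must instead convert it to a term linear in $S$ (e.g.\ $c_0^{-2}\sum b_{h,k}^2\le c_0^{-2}\lambda^{-1/2}S$ using $b_{h,k}\le\lambda^{-1/2}$), or split off those indices as the paper does to get the clean $O(H\kappa/(c_0^2\sqrt{\lambda}))$ contribution. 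Second, the $\VM^2$ branch of the $\min$ defining $\GM^*$ does not follow from the per-step bound $[\VB_hV_{h+1}^*]\le\VM^2$; that would give $H\VM^2K$, off by a factor of $H$. The paper obtains $\VM^2K$ from the total variance lemma (Lemma~\ref{lem:total-varaince}), i.e.\ the law of total variance applied along each trajectory to $\sum_h[\VB_hR_h+\VB_hV_{h+1}^{\pi_k}]$, combined with controlling $\VB_hV_{h+1}^*-\VB_hV_{h+1}^{\pi_k}$ by the suboptimality gap. A minor further point: the conversion of realized per-step variances to the $\widetilde d_h^K$-weighted expectation is a scalar Freedman argument over episodes (Lemma~\ref{lem:bern}), not the self-normalized vector bound of Lemma~\ref{lem:self-bern}, and it does not rely on the measurability condition~\eqref{eq:sigma-bound}, which is needed only for the confidence-set construction.
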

We emphasize that Lemma~\ref{lem:sum-bonus} is perhaps the most technical lemma in our paper. 
To address the difficulty mentioned early, we divide the full index set $\IM := [H] \times [K]$ into three disjoint subsets $\IM = \cup_{i=1,2,3} \JM_{i}$ according to which value $\sigma_{h, k}$ takes (given $\sigma_{h, k}$ is the maximum value among five quantities).
For those indexes in $\JM_1$ where the bonuses are small enough, we still use the Cauchy–Schwarz inequality to bound $\sum_{(h, k) \in \JM_1} b_{h, k} \le \TOM\left(  \sqrt{dH}  \cdot \sqrt{ \sum_{(h, k) \in \IM}\sigma_{h, k}^2 }  \right)$.
This sum-of-squared-bonus quantity involves $\sum_{(h, k) \in \IM} E_{h, k}$ and $\sum_{(h, k) \in \IM} J_{h, k}$ which we then pay additional efforts to analyze.
For those indexes in $\JM_2$ or $\JM_3$ where the bonuses are relatively large, we directly analyze $\sum_{(h, k) \in \JM_2 \cup \JM_3} b_{h, k}$.
Thanks to the particular structure, $\sum_{(h, k) \in \JM_2 \cup \JM_3} b_{h, k}$ contributes to the non-leading term in the final bound.
Putting pieces together, we complete the proof. A formal proof can be found in Appendix~\ref{proof:sum-bonus}.

At the end of the subsection, we summarize the proof in a few lines.
	\begin{align*}
	\mathrm{Reg}(K) &= \sum_{k=1}^K (V_1^*-V_1^{\pi_k})(s_{1, k})
	\overset{(a)}{\le}\sum_{k=1}^K (\overoptV_1^k-V_1^{\pi_k})(s_{1, k})\\
	& \overset{(b)}{\le}3\beta  \sum_{k=1}^K \sum_{h=1}^H \|\Bphi_{h, k}\|_{\H_{h, k-1}^{-1}}   + 38 H \HM  \log\frac{4\ceil{\log_2 HK}}{\delta} \\
	& \overset{(c)}{\le} 3\beta  \sum_{k=1}^K \sum_{h=1}^Hb_{h, k}  + 38 H \HM  \log\frac{4\ceil{\log_2 HK}}{\delta} \\
	& \overset{(d)}{=}
	\TOM\left(		
	d\sqrt{H K  \GM^*}   + H  d K^{0.5} \sigma_{\min}  + \frac{H^{2.5}d^{6}\HM^2 + Hd^2 \sigma_{R^2}}{\sigma_{\min}} +  H^3 d^{5} \HM +  H d \sigma_{R} + Hd^2
	\right)
\end{align*}
where $(a)$ follows from the optimism result in Lemma~\ref{lem:opt-pes}, $(b)$ follows from the suboptimality gap result in Lemma~\ref{lem:sub-gap}, $(c)$ uses $b_{h, k} = \max\left\{
\|\Bphi_{h, k}\|_{\H_{h, k-1}^{-1}} ,  \|\TBphi_{h, k}\|_{\TH_{h, k-1}^{-1}} 
\right\}$, and $(d)$ follows from sum-of-bonus result in Lemma~\ref{lem:sum-bonus} and $\beta = \beta_R + \beta_V= \TOM(\sqrt{d})$.

\section{Proof of Theorem~\ref{thm:space}}
\label{proof:space-mdp}
\begin{proof}[Proof of Theorem~\ref{thm:space}]
	We consider the two complexities respectively.
	\paragraph{Space Complexity}
	First, in order to perform AdaOFUL, VARA needs to store all seen rewards and feature vectors (i.e., $\Bphi_{h,k}, \TBphi_{h,k}$), which is required by all RL/bandit algorithms robust to heavy-tailed rewards~\citep{shao2018almost,xue2021nearly,zhuang2021no}.
	AdaOFUL also keeps all robustification parameters $\tau_{h,k}, \ttau_{h, k}$.
	It then incurs $\OM(HKd)$ space storage in total.
 
	Second, due to the rare-switching technique, one can show that $\overoptQ_h^k$ (or $\overpesQ_h^k$) is the minimum (or maximum) of at most $\TOM(dH)$ temporary optimistic (or pessimistic) functions (see Lemma~\ref{lem:rare-update}). 
	It means that we need to store at most $\TOM(dH)$ different versions of $\Btheta_{h,k-1}, \Bmu_{h, k-1}\V_{h+1}^k, \H_{h,k-1}$'s.
	This incurs $\OM(d^3H^2)$ space cost.
 
	Last, for all $(h, k) \in [H] \times [K]$, we need to trace $\{\Bphi(s_{h, k}, a)\}_{a \in \AM}$ to evaluate each $\Bmu_{h, k}\V = \H_{h, k}^{-1} \sum_{j=1}^k \sigma_{h, j}^{-2} \Bphi_{h, j}V(s_{h+1, k})$ for $\V \in \{ \BoveroptV_{h+1}^k, [\BoveroptV_{h+1}^k]^2, \BoverpesV_{h+1}^k\}$, which takes $\OM(d|\AM|HK)$ space.
 
	To sum up, VARA takes $\OM(d^3H^2 + d|\AM|HK)$ space.
	
	\paragraph{Computational Complexity}
	First, we use Nesterov accelerated method to compute each $\Btheta_{h, k}$.
	Since the loss function in~\eqref{eq:theta_hk} is $\lambda$-strongly convex and $\left(\lambda + \frac{K}{\sigma_{\min}^2}\right)$-smooth, the computational cost for each $\Btheta_{h, k}$ is $\TOM\left(d \sqrt{1+ \frac{K}{\lambda (\sigma_{\min}^*)^2}}\right)=\TOM(\max\{d, H^{-3/4}d^{-3/2}K^{3/4}\})$ and the total cost is $\TOM(HK(d + H^{-3/4} d^{-3/2}K^{3/4}))$.
 We emphasize that we don't need to compute $\Btheta_{h, k}$ exactly.
		It suffices to terminate at a solution $\widehat{\Btheta}_{h, k}$ once its accuracy satisfies $\|\widehat{\Btheta}_{h, k}-\Btheta_{h, k}\|_{\H_{h, k}} \le \sqrt{d}$.
		The iteration complexity is proportional to the root of the conditional number, i.e., $\TOM(\max\{1, d^{-7/4}K^{3/4}\})$.
		Since each iteration takes $\OM(d)$ operation, the computation complexity is $\TOM(\max\{d, d^{-3/4}K^{3/4}\})$.

	Second, each time when updating the value function, we take the minimum over at most $\TOM(dH)$ quadratic functions.
	Moreover, the Sherman-Morrison formula computes $\H_{h,k}^{-1}$ and its products with any vectors, which takes $\OM(d^2)$ operations.
	As a result, it needs $\TOM(d^3H)$ to evaluate the updated $Q_{h, k}(s, a)$ for a given pair $(s, a)$.
	Hence, computing $Q_{h, k}(s_{h,k}, \cdot)$, choosing $a_{h, k} = \argmax_{a \in \AM} Q_{h, k}(s_{h,k}, a)$, and estimating the variance $\sigma_{h, k}$ lead to $\TOM(d^3H^2|\AM|)$ computational complexity for each episode.
 
	Last, note $\Bmu_{h, k}\V = \H_{h, k}^{-1} \sum_{j=1}^k \sigma_{h, j}^{-2} \Bphi_{h, j}V(s_{h+1, k})$ for any value function $V(\cdot)$.
	If $V$ remains unchanged, we only need to compute the new term $\sigma_{h, k}^{-2} \Bphi_{h, k}V(s_{h+1, k})$, which has an $\TOM(d^3H|\AM|)$ complexity each time.
	If $V$ changes to $V'$, we need to recalculate $\Bmu_{h, k}\V'$, which has an $\TOM(d^3H|\AM|K)$ complexity each time.
	Combining the computational complexity for all horizons and noticing that the number of episodes that trigger the updating criterion is at most $\TOM(dH)$, VARA has a running time of $\TOM(d^4|\AM|H^3K + HK(d + H^{-3/4} d^{-3/2}K^{3/4}))$.
	In terms of the dependence on $K$, it is slightly worse than~LSVI-UCB++'s $\TOM(d^4|\AM|H^3K)$ since the adaptive Huber regression doesn't have a closed-form solution, but is better than LSVI-UCB's $\TOM(d^2|\AM|HK^2)$ due to the rare-switching mechanism.
\end{proof}

\section{Omitted lemmas in Section~\ref{proof:mdp}}
\label{proof:lemmas-mdp}

\subsection{Proof of Lemma~\ref{lem:CI-rewards-variance}}
\label{proof:CI-rewards-variance}
\begin{proof}[Proof of Lemma~\ref{lem:CI-rewards-variance}]	
	The proof idea of Lemma~\ref{lem:CI-rewards-variance} is similar to that of Theorem~\ref{thm:heavy} except for the following changes.
	First, $\TBphi_{h, k} = \TBphi(s_{h,k}, a_{h, k}) \in \RB^{d}$ is instead the feature vector.
	Second, in the particular setting, we should respectively replace $L, B, T, \delta$ therein with $1, W, K, \delta/H$ defined here and redefine $c_0, c_1$ as $c_0 = \frac{1}{6\sqrt{3\log\frac{2HK^2}{\delta}}}, c_1 = \frac{1}{42 \cdot \frac{2HK^2}{\delta}}$ respectively.
		Third, by the choice of $\sigma_{h,k}$, we have $\sigma_{h,k}^2 \ge \left( \frac{W}{\sqrt{c_1 d}} + \HM d^{2.5}H \right) b_{h, k}\ge  \frac{W}{\sqrt{c_1 d}}{\|\TBphi_{h, k}\|}_{\TH_{h, k-1}^{-1}} $, which implies that $ \frac{W^2 \tw_{h, k}^2}{\sigma_{h, k}^2} \le c_1 d$.
	Similarly, due to $\sigma_{h, k}^2 \ge  c_0^{-2} \|\TBphi_{h, k}\|^2_{\TH_{h-1, k}^{-1}}$, we have $\tw_{h, k}^2 \le c_0^2$.
	Last, for simplicity, we define $\varepsilon_{h, k} =  \frac{r_{h, k}^2 -  \langle \TBphi_{h, k}, \Bpsi_h^* \rangle }{\sigma_{h, k}}$ and $\GM_{h, k} = \sigma(\FM_{h-1, k} \cup \left\{ s_{h, k}, a_{h, k} \right\})$.
	Then, we have $\varepsilon_{h, k}  \in \FM_{h, k}$, $\EB[\varepsilon_{h, k} |\GM_{h, k}] = 0$ and $\Var[\varepsilon_{h, k} |\GM_{h, k}] \le \left(\frac{\sigma_{R^2}}{\sigma_{\min}} \right)^2:= b^2$.
	Theorem~\ref{thm:heavy} concerns the case where $b=1$, however, its proof considers the general case where $b$ can be arbitrary.
	As a result, by a similar argument in Appendix~\ref{proof:bandit} (which is doable due to the four conditions mentioned above), once setting $\ttau_0 \sqrt{\log \frac{2HK^2}{\delta}} = \max \left\{ 
	\sqrt{2\kappa} b, 2 \sqrt{d} \right\}$, with probability at least $1-3\delta$, we have for all $h \in [H]$ and $k \in [K]$, $\| \Bpsi_{h, k} - \Bpsi_h^*\|_{\TH_{h,k}} \le \beta_{R^2}$, that is the event $\BM_{R^2}$ holds.
\end{proof}

\subsection{Proof of Lemma~\ref{lem:CI-Value}}
\label{proof:CI-Value00}
	We will make use of the following general result frequently.
		The proof is quite standard~\citep{jin2020provably,wagenmaker2021first,hu2022nearly}.
		We provide proof in Appendix~\ref{proof:CI-Value} for completeness.
	\begin{lem}
		\label{lem:value-ci}
		Fix any $h \in [H]$.
		Consider a specific value function $f(\cdot)$ which satisfies 
		\begin{enumerate}[leftmargin=*,label=(\roman*)]
			\item $\sup_{s \in \SM}|f(s)| \le C_0$;
			\item $f \in \VM$ where $\VM$ is a class of functions with $\NM(\VM, \varepsilon)$ the $\varepsilon$-covering number of $\VM$ with respective to the distance $\mathrm{dist}(f, f'):= \sup_{s \in \SM} |f(s)-f'(s)|$.
		\end{enumerate}
		We assume there exists a deterministic $C_\sigma > 0$ and $\AM_{h, k}$ (which is $\FM_{h, k}$-measurable) such that $\AM_{h, k}  \subseteq \left\{ \sigma_{h, k}^2 \ge (\sV_h f)(s_{h,k}, a_{h, k})/ C_\sigma^2 \right\}$ for all $k \in [K]$.
		Let $\Bmu_{h, k}$ be defined~\eqref{eq:mu_hk} and $\sigma_{h, k}, \H_{h, k}$ be defined in our algorithm.
		Under any of the following conditions, with probability at least $1-\delta/H$, it follows for all $k \in [K] \cup \{0\}$, 
		\begin{equation}
			\label{eq:ci0}
			\Bmu_h^* \in  \left\{
			\Bmu: \left\| (\Bmu-\Bmu_{h, k}) \Bf\right\|_{\H_{h, k}} \le \beta
			\right\}.
		\end{equation}
		\begin{enumerate}[leftmargin=*,label=(\roman*)]
			\item If $f(\cdot)$ is a deterministic function and $\cap_{k \in [K]}\AM_{h, k}$ is true,~\eqref{eq:ci0} holds with
			\[
			\beta =  8C_\sigma \sqrt{d \log \left( 1 + \frac{K}{\sigma_{\min}^2d\lambda} \right) \log \frac{4H K^2}{\delta} } + \frac{8C_0}{ d^{2.5}H}\log\frac{4HK^2}{\delta} + \sqrt{d \lambda} C_0.
			\]
			\item If $f(\cdot)$ is a random function and $\cap_{k \in [K]}\AM_{h, k}$ is true,~\eqref{eq:ci0} holds with
			\[
			\beta = 8C_\sigma \sqrt{d \log \left( 1 + \frac{K}{\sigma_{\min}^2d\lambda} \right) \log \frac{4H K^2N_0}{\delta} } + \frac{8C_0}{ d^{2.5}H \HM} \log\frac{4HK^2N_0}{\delta}   + 3\sqrt{d \lambda} C_0
			\]
			where $N_0 = |\NM(\VM, \varepsilon_0)|$ and $\varepsilon_0 = \min \left\{C_\sigma \sigma_{\min}, \frac{\lambda C_0 \sqrt{d}}{K}\sigma_{\min}^2\right\} $.
			\item  If $f(\cdot)$ is a random function,~\eqref{eq:ci0} holds with
			\begin{gather*}
				\beta =  \frac{2C_0}{\sigma_{\min}} \sqrt{d\log \left( 1 + \frac{K}{\sigma_{\min}^2d\lambda} \right)  + \log \frac{N_1}{\delta} } + 3\sqrt{d \lambda} C_0.
			\end{gather*}
			where $N_1 = |\NM(\VM, \varepsilon_1)|$ and $\varepsilon_1 = \frac{\lambda C_0 \sqrt{d}}{K}\sigma_{\min}^2$.
		\end{enumerate}
	\end{lem}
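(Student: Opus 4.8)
The plan is to reduce the claim to a self-normalized tail bound for the weighted noise process $\sum_{j=1}^{k}\sigma_{h,j}^{-2}\Bphi_{h,j}\,\eta_{h,j}(f)$, where $\eta_{h,j}(f):=f(s_{h+1,j})-[\PB_h f](s_{h,j},a_{h,j})$. First I would expand the closed form $\Bmu_{h,k}=\H_{h,k}^{-1}\sum_{j=1}^{k}\sigma_{h,j}^{-2}\Bphi_{h,j}\Bdelta(s_{h+1,j})^{\top}$ and use $[\PB_h f](s,a)=\langle\Bphi(s,a),\Bmu_h^*\Bf\rangle$ in a linear MDP to get
\[
(\Bmu_{h,k}-\Bmu_h^*)\Bf=\H_{h,k}^{-1}\Bigl(\sum_{j=1}^{k}\sigma_{h,j}^{-2}\Bphi_{h,j}\,\eta_{h,j}(f)-\lambda\,\Bmu_h^*\Bf\Bigr),
\]
so that $\|(\Bmu_{h,k}-\Bmu_h^*)\Bf\|_{\H_{h,k}}\le\bigl\|\sum_{j=1}^{k}\sigma_{h,j}^{-2}\Bphi_{h,j}\eta_{h,j}(f)\bigr\|_{\H_{h,k}^{-1}}+\lambda\|\Bmu_h^*\Bf\|_{\H_{h,k}^{-1}}$. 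The ridge term is bounded by $\sqrt{\lambda}\,\|\Bmu_h^*\Bf\|\le\sqrt{\lambda d}\,C_0$ using $\H_{h,k}\succeq\lambda\I$, $\|\Bmu_h^*(\SM)\|\le\sqrt d$ and $\sup_s|f(s)|\le C_0$; this supplies the $\sqrt{\lambda d}\,C_0$ summand and also settles $k=0$ (where $\Bmu_{h,0}=\0$, $\H_{h,0}=\lambda\I$).

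Next I would record the martingale structure: $\{\eta_{h,j}(f)\}_j$ is a martingale difference sequence for $\GM_{h,j}:=\sigma(\FM_{h-1,j}\cup\{s_{h,j},a_{h,j}\})$, with $\EB[\eta_{h,j}(f)\mid\GM_{h,j}]=0$, $|\eta_{h,j}(f)|\le 2C_0$ and $\EB[\eta_{h,j}(f)^2\mid\GM_{h,j}]=[\VB_h f](s_{h,j},a_{h,j})$; moreover $\Bphi_{h,j}/\sigma_{h,j}$ is $\GM_{h,j}$-predictable because $\sigma_{h,j}\in\FM_{h,j}\subseteq\GM_{h,j}$. For part (i), with $f$ deterministic, the hypothesis $\cap_k\AM_{h,k}$ yields $\EB[\eta_{h,j}(f)^2/\sigma_{h,j}^2\mid\GM_{h,j}]\le C_\sigma^2$, while the terms $\sigma_{h,k}^2\ge\HM d^{2.5}H\,b_{h,k}$ and $\sigma_{h,k}^2\ge c_0^{-2}b_{h,k}^2$ in \eqref{eq:sigma} force the predictable jump $\|\Bphi_{h,j}/\sigma_{h,j}\|_{\H_{h,j-1}^{-1}}\cdot(2C_0/\sigma_{h,j})\le 2C_0 b_{h,j}/\sigma_{h,j}^2$ to be of order $C_0/(\HM d^{2.5}H)$. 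Plugging these into a self-normalized Bernstein inequality for vector-valued martingales (uniform in $k$), together with the elliptical-potential/determinant bound $\log\det(\H_{h,k}/\lambda)\le d\log(1+K/(d\lambda\sigma_{\min}^2))$ from Lemma~\ref{lem:w-sum} and a $\sum_k k^{-2}<2$ union argument against the budget $\delta/H$, produces the $\beta$ claimed in (i).

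For parts (ii) and (iii), where $f$ is random, I would work over an $\varepsilon$-net $\NM(\VM,\varepsilon)$ of $\VM$: apply the fixed-$f$ argument to every net element simultaneously (paying $\log|\NM(\VM,\varepsilon)|$ in the union bound) and then control the error of replacing $f$ by a nearest net point $\bar f$. In the self-normalized term this error is at most $\lambda^{-1/2}\sigma_{\min}^{-2}K\cdot 2\varepsilon$ by the triangle inequality, $\H_{h,k}^{-1}\preceq\lambda^{-1}\I$, $\|\Bphi_{h,j}\|\le 1$ and $|\eta_{h,j}(f)-\eta_{h,j}(\bar f)|\le 2\varepsilon$, which the choice $\varepsilon\le \lambda C_0\sqrt d\,\sigma_{\min}^2/K$ makes $\le 2\sqrt{\lambda d}\,C_0$, explaining the $3\sqrt{\lambda d}\,C_0$ constant. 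In case (ii) the variance hypothesis must also survive discretization; since $|[\VB_h f]-[\VB_h\bar f]|\le 4C_0\varepsilon$ and $\varepsilon_0\le C_\sigma\sigma_{\min}$, on $\AM_{h,k}$ one still gets $\sigma_{h,k}^2\gtrsim[\VB_h\bar f](s_{h,k},a_{h,k})/C_\sigma^2$ up to an absorbed constant, so the Bernstein route applies to $\bar f$ --- this is why $\varepsilon_0=\min\{C_\sigma\sigma_{\min},\lambda C_0\sqrt d\,\sigma_{\min}^2/K\}$. In case (iii) no variance control is assumed, so I would bound $|\eta_{h,j}(f)/\sigma_{h,j}|\le 2C_0/\sigma_{\min}$ and invoke the sub-Gaussian (Abbasi-Yadkori-type) self-normalized inequality over an $\varepsilon_1$-net, giving the range-dependent term $\tfrac{2C_0}{\sigma_{\min}}\sqrt{d\log(1+K/(d\lambda\sigma_{\min}^2))+\log(N_1/\delta)}$.

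The hard part is the covering step for random $f$: one must pick $\varepsilon$ small enough that (a) the discretization error in the self-normalized term is lower order than $\sqrt{\lambda d}\,C_0$ and (b) the hypothesis $\AM_{h,k}\subseteq\{\sigma_{h,k}^2\ge[\VB_h f](s_{h,k},a_{h,k})/C_\sigma^2\}$ transfers to net elements, while still keeping $\log|\NM(\VM,\varepsilon)|$ merely polylogarithmic; and the measurability bookkeeping has to be watched, since $\sigma_{h,j}$ (hence the weights) is $\GM_{h,j}$-measurable whereas $f=\overoptV_{h+1}^k$ is only $\FM_{H,k-1}$-measurable, and the self-normalized inequalities require the weights to be predictable with respect to the filtration for which $\{\eta_{h,j}(f)\}$ is a martingale difference sequence. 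The remaining manipulations are routine algebra and union bounds.
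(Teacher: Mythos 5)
Your proposal follows the paper's proof essentially step for step: the same closed-form expansion and triangle-inequality split into a ridge term bounded by $\sqrt{d\lambda}\,C_0$ and a self-normalized martingale term, the same application of the self-normalized Bernstein inequality for a fixed (deterministic or net-element) $f$ with the predictable-jump bound coming from $\sigma_{h,k}^2 \ge \HM d^{2.5}H\, b_{h,k}$, the same covering argument with the same choices of $\varepsilon_0,\varepsilon_1$ and the same accounting of the discretization error against $\sqrt{d\lambda}\,C_0$, and the Hoeffding-type self-normalized bound for part (iii). The one point you flag but leave open --- making the conditional-variance hypothesis usable when the event $\AM_{h,k}$ is not known in advance --- is resolved in the paper by multiplying the noise by the $\FM_{h,k}$-measurable indicator $1_{\{f'\in\VM_{h,k}\}}$, where $\VM_{h,k}$ is the set of net elements whose conditional variance is dominated by $4C_\sigma^2\sigma_{h,k}^2$, so that each net element gives a genuine martingale difference sequence with bounded conditional variance, and then noting that on $\cap_{k}\AM_{h,k}$ all indicators equal one.
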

		Using the last item suffices to prove Lemma~\ref{lem:CI-Value}.
		\begin{proof}[Proof of Lemma~\ref{lem:CI-Value}]
			Let $\VM^{+}$ denote the class of optimistic value functions mapping from $\SM$ to $\RB$ with the parametric form given in~\eqref{eq:function-pos} and $\VM^{-}$ the class of pessimistic value functions with the parametric form given in~\eqref{eq:function-pes}.
			By Lemma~\ref{lem:covering-1} and Lemma~\ref{lem:rare-update}, 
			\begin{align}
				\label{eq:covering}
				\log \NM(\VM^\pm, \varepsilon) 
				&\le \left[d \log \left(1+ \frac{4L}{\varepsilon}\right) + d^2 \log \left( 1 +\frac{ 8 d^{1/2} B^2}{\lambda \varepsilon^2} \right)\right]
			\end{align}
			where $B \ge \beta_0$ and $L = W + \HM \sqrt{\frac{dK}{\lambda}}$.
			\begin{enumerate}[label=(\roman*)]
				\item Let $\Bf =  \BoveroptV_{h+1}^k$.
				One can find that $\Bf \in \VM_f^+$ with parameter $L  = W + \frac{K \HM}{\lambda \sigma_{\min}^2}$.
				To plug in Lemma~\ref{lem:value-ci}, we first specify the parameters defined therein.
				We have $\|\Bf\|_{\infty} \le  C_0 = \HM$ and $\varepsilon_1 =  \frac{\lambda \HM \sqrt{d}}{K} \sigma_{\min}^2$. 
				By~\eqref{eq:covering}, it follows that
				\begin{align*}
							&	\log \NM(\VM^{+}, \varepsilon_1) \\
					&\le \left[ d \log \left(1+ \frac{4LK}{\lambda\HM \sqrt{d} \sigma_{\min}^2}\right) + d^2 \log \left( 1 +\frac{ 8  B^2K^2}{ \sqrt{d}\lambda^3 \HM^2\sigma_{\min}^4 } \right) \right] \cdot dH \log_2 \left( 1 + \frac{K}{\lambda\sigma_{\min}^2} \right)\\
					&\le\frac{2}{\log 2} d^3 H  \iota_0^2 \le 3 d^3 H\iota_0^2,
				\end{align*}
				By the third condition of Lemma~\ref{lem:value-ci}, with probability at least $1-\frac{\delta}{2H}$, $\left\| (\Bmu_h^*-\Bmu_{h, k-1}) \BoptV_{h+1}^k \right\| \le \beta_0$ for all $k \in [K]$.
				Similarly, we can also show that with probability at least $1-\frac{\delta}{2H}$, $\left\| (\Bmu_h^*-\Bmu_{h, k-1}) \BpesV_{h+1}^k \right\| \le \beta_0$ for all $k \in [K] $.
				Putting them together finishes the proof.
				\item The analysis on $\BoverpesV_{h+1}^k$ is similar to (i).
				\item
				 The analysis on $[\BoveroptV_{h+1}^k]^2$ is similar to (i) except for the following two changes. 
				First, $C_0 = \HM^2$ and $\varepsilon_1' =  \frac{\lambda \HM^2 \sqrt{d}}{K} \sigma_{\min}^2$.
				Second, with $[\VM^+]^2 = \{ f^2: f \in \VM^+ \}$, we have $[\overoptV_{h+1}^k]^2 \in [\VM^+]^2$ and
				\begin{align*}
					\log \NM([\VM^+]^2, \varepsilon_1') 
					&\overset{(a)}{\le} \log \NM(\VM^+, \frac{\varepsilon_1'}{2\HM}) 
					\le  \log \NM(\VM^+, \frac{\varepsilon_1}{2}) \le  3 d^3 H\iota_0^2.
				\end{align*}
				Here $(a)$ uses the fact that the $\frac{\varepsilon_1'}{2\HM}$-cover of $\VM^+$ is a $\varepsilon_1$-cover of $[\VM^+]^2$ (which is also supported by Lemma~\ref{lem:covering-square}).
			\end{enumerate}
		\end{proof}

\subsection{Proof of Lemma~\ref{lem:value-ci}}
\label{proof:CI-Value}
\begin{proof}[Proof of Lemma~\ref{lem:value-ci}]
	Since the case of $k=0$ is trivial, we focus on $k \in [K]$.
	By definition,
	\begin{align*}
		\Bmu_{h,k}  
		&= \H_{h, k}^{-1} \sum_{j=1}^k \sigma_{h, j}^{-2} \Bphi_{h, j} \Bdelta(s_{h+1, j})^\top 
		=\H_{h, k}^{-1} \sum_{j=1}^k \sigma_{h, j}^{-2} \Bphi_{h, j} \left(\Bphi_{h, j}^\top \Bmu_h^* - \Beps_{h, j}  \right)^\top\\
		&=\Bmu_h^* - \lambda \H_{h, k}^{-1} \Bmu_h^* -\H_{h, k}^{-1} \sum_{j=1}^k \sigma_{h, j}^{-2} \Bphi_{h, j}\Beps_{h, j}^\top.
	\end{align*}
	By the triangle inequality, it follows that
	\begin{align*}
		\left\| (\Bmu_h^*-\Bmu_{h, k}) \Bf\right\|_{\H_{h, k}}
		&\le  \lambda \| \H_{h, k}^{-1} \Bmu_h^*  \Bf \|_{\H_{h, k}}
		+ \left\|\H_{h, k}^{-1} \sum_{j=1}^k \sigma_{h, j}^{-2} \Bphi_{h, j}\Beps_{h, j}^\top \Bf\right\|_{\H_{h, k}}\\
		&=\lambda \| \Bmu_h^* \Bf\| _{\H_{h, k}^{-1}} +  \left\| \sum_{j=1}^k \sigma_{h, j}^{-2} \Bphi_{h, j}\Beps_{h, j}^\top \Bf\right\|_{\H_{h, k}^{-1}}\\
		&\le  \sqrt{d \lambda} C_0 +  \left\| \sum_{j=1}^k \sigma_{h, j}^{-2} \Bphi_{h, j}\Beps_{h, j}^\top \Bf\right\|_{\H_{h, k}^{-1}}
	\end{align*}
	where the last inequality uses $\|\Bmu_h^* \Bf\| \le \sqrt{d} C_0$.
	\begin{itemize}[leftmargin=*]
		\item Assume $f(\cdot)$ is a deterministic function.
		To evoke Lemma~\ref{lem:self-bern}, we set $\GM_j = \FM_{h, j}, \x_j = \sigma_{h, j}^{-1} \Bphi_{h, j}, \eta_j = \sigma_{h, j}^{-1} \Beps_{h, j}^\top \Bf  \cdot 1_{\AM_{h, j}}$ and $\Z_k = \lambda\I + \sum_{j=1}^k \sigma_{h, j}^{-2} \Bphi_{h, j}\Bphi_{h, j}^\top= \H_{h, k}$.
		Here $1_{\AM}$ is the indicator function of the event $\AM$.
		
		Clearly $\x_j \in \GM_j, \EB[\eta_j|\GM_j] = 0$ and $\EB[\eta_j^2|\GM_j] \le C_\sigma^2$.
		We also have $\|\x_j\| \le \sigma_{\min}^{-1}, |\eta_j| \le  2C_0 \sigma_{h, j}^{-1}$ and $ \|\x_j\|_{\Z_{j-1}} = w_{h, j}$.
		As a result, $|\eta_j| \min \left\{1, \|\x_j\|_{\Z_{j-1}} \right\} \le 2C_0 \frac{w_{h, j}}{\sigma_{h, j}} \le \frac{2C_0}{\HM d^{2.5}H}$ where the last inequality uses $\sigma_{h, j}^2 \ge \HM d^{2.5} H \|\Bphi_{h, j}\|_{\H_{h, k-1}^{-1}}$ (which is equivalent to $ \frac{w_{h, j}}{\sigma_{h, j}} \le (d^{2.5}H\HM)^{-1}$).
		By Lemma Lemma~\ref{lem:self-bern}, it follows that with probability $1-\frac{\delta}{H}$, for all $k \in [K]$,
		\begin{align*}
			&\left\| \sum_{j=1}^k \sigma_{h, j}^{-2} \Bphi_{h, j}\Beps_{h, j}^\top \Bf 1_{\AM_{h, j}}\right\|_{\H_{h, k}^{-1}}
			= \left\| \sum_{j=1}^k \x_j \eta_j \right\|_{\Z_k^{-1}} \\
			&\le 8C_\sigma \sqrt{d \log \left( 1 + \frac{K}{\sigma_{\min}^2d\lambda} \right) \log \frac{4H K^2}{\delta} } + \frac{8C_0}{\HM d^{2.5}H}\log\frac{4HK^2}{\delta}.
		\end{align*}
		Finally, on the event $\cap_{k \in [K]}\AM_{h, k}$, we will have all the indicator functions equal to one.

		\item  If $f(\cdot)$ is a random function, we would use a covering argument to handle the possible correlation between $f(\cdot)$ and history data, which would unfortunately enlarge $\beta$.
		
		Denote the $\varepsilon_0$-net of $\VM$ by $\NM(\VM, \varepsilon_0)$ where $\varepsilon_0 = \min \left\{C_\sigma \sigma_{\min}, \frac{\lambda C_0 \sqrt{d}}{K}\sigma_{\min}^2\right\} $.
		Hence, for any $f \in \VM$, there exists $\bar{\Bf} \in\NM(\VM, \varepsilon_0)$ such that $\|\bar{\Bf} - \Bf\|_{\infty} = \sup_{s \in \SM}|f(s)-\bar{f}(s)| \le \varepsilon_0$. 
		Then,
		\begin{align*}
			\left\| \sum_{j=1}^k \sigma_{h, j}^{-2} \Bphi_{h, j}\Beps_{h, j}^\top \Bf\right\|_{\H_{h, k}^{-1}}
			&\le \underbrace{	\left\| \sum_{j=1}^k \sigma_{h, j}^{-2} \Bphi_{h, j}\Beps_{h, j}^\top \bar{\Bf}\right\|_{\H_{h, k}^{-1}}}_{(I)}
			+ \underbrace{	\left\| \sum_{j=1}^k \sigma_{h, j}^{-2} \Bphi_{h, j}\Beps_{h, j}^\top (\Bf-\bar{\Bf})\right\|_{\H_{h, k}^{-1}}}_{(II)}
			.
		\end{align*}
		For the term $(II)$, due to $\|\Bphi_{h, j}\| \le 1$ and $|\Beps_{h, j}^\top (\Bf-\bar{\Bf})| \le \|\Beps_{h, j}\|_1 \|\Bf-\bar{\Bf}\|_{\infty} \le 2\varepsilon_0$, we have
		\[
		\left\| \sum_{j=1}^k \sigma_{h, j}^{-2} \Bphi_{h, j}\Beps_{h, j}^\top (\Bf-\bar{\Bf})\right\|_{\H_{h, k}^{-1}}
		\le \frac{2K\varepsilon_0}{\sigma_{\min}^2\sqrt{\lambda}} \le 2 \sqrt{d \lambda} C_0.
		\]
		For the term $(I)$, we define $\VM_{h,k} = \left\{ f' \in \VM:4 C_\sigma^2\sigma_{h, k}^2 \ge (\sV_h f')(s_{h,k}, a_{h, k})\right\}$.
		Since the definition of $\VM_{h,k} $ involves only $\sigma_{h, k}, s_{h, k}, a_{h, k} \in \FM_{h, k}$, for any fixed function $f \in \VM$, $1_{f \in \VM_{h,k}}$ is $\FM_{h, k}$-measurable.
		On the event $\AM_{h, k}$, by definition of $\varepsilon_0$,
		\[
		(\sV_h \bar{f} )(s_{h,k}, a_{h, k})
		\le 2  (\sV_h f )(s_{h,k}, a_{h, k}) + 2 (\sV_h (\bar{f}-f))(s_{h,k}, a_{h, k})
		\le 2 C_\sigma^2 \sigma_{h, k}^2 + 2\varepsilon_0^2 \le 4 C_\sigma^2 \sigma_{h, k}^2.
		\]
		Hence, $\AM_{h, k}  \subseteq \left\{ \sigma_{h, k}^2 \ge (\sV_h f)(s_{h,k}, a_{h, k})/ C_\sigma^2 \right\} \subseteq \{ \exists \bar{f} \in  \NM(\VM, \varepsilon_0) \cap \VM_{h,k} \}$ for all $k \in [K]$.
		
		In the following, we will evoke Lemma~\ref{lem:self-bern} to analyze the term $(I)$.
		For any fixed $f' \in \VM$, we set $\GM_j = \FM_{h, j}, \x_j = \sigma_{h, j}^{-1} \Bphi_{h, j}, \eta_j = \sigma_{h, j}^{-1} \Beps_{h, j}^\top \Bf'  \cdot 1_{f' \in \VM_{h,k}}$ and $\Z_k = \lambda\I + \sum_{j=1}^k \sigma_{h, j}^{-2} \Bphi_{h, j}\Bphi_{h, j}^\top= \H_{h, k}$.
		Moreover, due to the choice of $\sigma_{h, j}$, it follows that
		\begin{align*}
			\left|\eta_j \min \left\{ 1, \| \x_j\|_{\Z_{j-1}^{-1}} \right\}\right|
			\le \left| \frac{C_0}{\sigma_{h, j}}   \right|  \cdot \left\| \frac{\Bphi_{h, j}}{\sigma_{h, j}} \right\|_{\H_{h,j-1}^{-1}} 
			\le C_0 \frac{b_{h,j}}{\sigma_{h, j}^2} \le \frac{C_0}{ d^{2.5}H \HM}.
		\end{align*}
		By Lemma~\ref{lem:self-bern} and the union bound, it follows that with probability $1-\frac{\delta}{H}$, for all $k \in [K]$,
		\begin{align*}
			&\sup_{f' \in \NM(\VM, \varepsilon_0)}\left\| \sum_{j=1}^k \sigma_{h, j}^{-2} \Bphi_{h, j}\Beps_{h, j}^\top \Bf' 1_{f' \in \VM_{h,k}}\right\|_{\H_{h, k}^{-1}}\\
			&\le 8C_\sigma \sqrt{d \log \left( 1 + \frac{K}{\sigma_{\min}^2d\lambda} \right) \log \frac{4H K^2N_0}{\delta}} + \frac{8C_0}{ d^{2.5}H \HM}\log\frac{4HK^2N_0}{\delta}.
		\end{align*}
		where $N_0 = |\NM(\VM, \varepsilon_0)|$. 
		
		As a result, we know that $\left\| \sum_{j=1}^k \sigma_{h, j}^{-2} \Bphi_{h, j}\Beps_{h, j}^\top \bar{\Bf} 1_{\bar{f} \in \VM_{h,k}}\right\|_{\H_{h, k}^{-1}}$ is no more than the RHS of the last inequality.
		On the event $\cap_{k \in [K]}\AM_{h, k}$, we have $\bar{f} \in \cap_{k \in [K]} \VM_{h,k}$ and thus all the indicator functions equal to one, completing the proof.
		
		\item The proof is almost similar to the second item except that we use Lemma~\ref{lem:self-hoff} to analyze the term $(I)$.
		Noticing we also have $|\eta_j| = |\sigma_{h, j}^{-1} \Beps_{h, j}^\top \Bf'| \le \frac{2C_0}{\sigma_{\min}}$.
		By Lemma~\ref{lem:self-hoff} and the union bound, it follows that with probability $1-\frac{\delta}{H}$, for all $k \in [K]$,
		\begin{align*}
			\sup_{f' \in \NM(\VM_f, \varepsilon_1)}\left\| \sum_{j=1}^k \sigma_{h, j}^{-2} \Bphi_{h, j}\Beps_{h, j}^\top \Bf' 1_{f' \in \VM_{h,k}}\right\|_{\H_{h, k}^{-1}}
			\le \frac{2C_0}{\sigma_{\min}} \sqrt{d\log \left( 1 + \frac{K}{\sigma_{\min}^2d\lambda} \right)  + \log\frac{N_1}{\delta} }.
		\end{align*}
		Pay attention that here we don't utilize the variance information so that we change $N_0 := |\NM(\VM, \varepsilon_0)|$ to $N_1 := |\NM(\VM, \varepsilon_1)|$ and don't require $\cap_{k \in [K]}\AM_{h, k}$ is true.
	\end{itemize}
\end{proof}

\subsection{Proof of Lemma~\ref{lem:CI-rewards}}
\label{proof:CI-rewards}
\begin{proof}[Proof of Lemma~\ref{lem:CI-rewards}]
	The proof idea of Lemma~\ref{lem:CI-rewards} is similar to that of Lemma~\ref{lem:CI-rewards-variance} except that we pay more attention on the reward variance.
	
	Given that $\BM_{R^2}$ holds, we have $\Bpsi_h^* \in \TRM_{h, k}$ for all $h \in [H]$ and $k \in [K] \cup \{0\}$.
	
	We will prove the lemma by induction over $k$.
	When $k=0$, we have $\Btheta_{h, 0} = 0, \H_{h, 0} = \lambda \I$ and $\|\Btheta_{h,0}-\Btheta_h^* \|_{\H_{h, 0}} = \sqrt{\lambda} \|\Btheta_h^*\| \le \sqrt{\lambda }W \le \beta_R$ for all $h \in [H]$.
	If we suppose $\Btheta_h^* \in \RM_{h, j}$ holds for all $h \in [H]$ and $j \in [k-1]$, we are going to prove  $\Btheta_h^* \in \RM_{h, k}$ uniformly for $h \in [H]$.
	The first thing we will show is 
	\begin{equation}
		\label{eq:R-variance-1}
		\sigma_{h, j}^2 \ge [\HVB_h R_h](s_{h, j}, a_{h, j}) + R_{h, j}
		\quad \text{for all} \quad h \in [H] \ \text{and} \ j \in [k].
	\end{equation}
	Notice that $[\VB_h R_h](s_{h, k}, a_{h, k}) =   \langle \TBphi_{h, k}, \Bpsi_h^*\rangle - \langle \Bphi_{h, k}, \est_h^*  \rangle^2$.
	We then have for all $h \in [H], j \in [k]$,
	\begin{align*}
		&|[\HVB_h R_h-\VB_h R_h](s_{h, j}, a_{h, j})|\\
		& \le \left|
		\langle \TBphi_{h, j}, \Bpsi_{h, j-1}  \rangle -
		\langle \TBphi_{h, j}, \Bpsi_h^*\rangle
		\right| + \left|
		\langle \Bphi_{h, j}, \est_h^*  \rangle^2-
		\langle \Bphi_{h, j}, \est_{h, j-1}  \rangle_{[0, \HM]}^2
		\right|\\
		&\le| \langle \TBphi_{h, k}, \Bpsi_{h, k-1}  - \Bpsi_h^* \rangle|
		+ 2\HM |\langle \Bphi_{h, k}, \Btheta_{h, j-1}  - \Btheta_h^* \rangle |\\
		&\le \| \TBphi_{h, j}\|_{\TH_{h, j-1}^{-1}} \|\Bpsi_{h, j-1}  - \Bpsi_h^*\|_{\TH_{h, j-1}}
		+ 2\HM  \|\Bphi_{h, j}\|_{\H_{h, j-1}^{-1}}  \| \Btheta_{h, j-1}  - \Btheta_h^*\|_{\H_{h, j-1}}\\
		&\le  \beta_{R^2}  \| \TBphi_{h, j}\|_{\TH_{h, j-1}^{-1}}  + 2\HM \beta_R \|\Bphi_{h, j}\|_{\H_{h, j-1}^{-1}} = R_{h, j}
	\end{align*}
	where the last inequality uses the hypothesis and the condition that $\BM_{R^2}$ holds.
	As a result, we establish~\eqref{eq:R-variance-1}.
	
	Let $\GM_{h, j} = \sigma(\FM_{h-1, j} \cup \left\{ s_{h, j}, a_{h, j} \right\})$.
	One can show that both $R_{h, j}$ and $\sigma_{h, j}^2$ are $\GM_{h,j}$-measurable.
	As a result, the event $\EM_{h, j} := \left\{ \sigma_{h, j}^2 \ge 
	[\VB_h R_h](s_{h, j}, a_{h, j}) 
	\right\}$ is also $\GM_{h,j}$-measurable.
	On the event $\BM_{R^2}$, it is obvious that $\cap_{h \in [H]}\cap_{j \in [k]}\EM_{h,j}$ is true since~\eqref{eq:R-variance-1} is true.
	
	On the other hand, we set $\varepsilon_{h, j} =  \frac{r_{h, j} -  \langle \Bphi_{h, j}, \Btheta_h^* \rangle }{\sigma_{h, j}}1_{\EM_{h, j}}$ as the standardized reward.
	We then have $\varepsilon_{h, j}  \in \FM_{h, j}$, $\EB[\varepsilon_{h, j} |\GM_{h, j}] = 0$ and $\Var[\varepsilon_{h, j} |\GM_{h, j}] \le 1$.
	We define $\widehat{\Btheta}_{h, k}$ as the solution of adaptive Huber regression to the response $ \{r_{h, j} 1_{\EM_{h, j}} \}_{j \in [k]}$ and the feature $ \{ \Bphi_{h, j} 1_{\EM_{h, j}} \}_{j \in [k]}$.
	We also define  $\widehat{\H}_{h, k-1}$ as the counterpart matrix of $\H_{h, k}$ obtained by replacing $\Bphi_{h, k}$ with $ \Bphi_{h, k} 1_{\EM_{h, k}}$.
	We then apply Theorem~\ref{thm:heavy} to analyze the concentration of $\widehat{\Btheta}_{h, k}$.
	With probability at least $1-3\delta$, it follows that $\|\widehat{\Btheta}_{h, k} - \Btheta_{h}^*\|_{\widehat{\H}_{h, k-1}} \le \beta_R$ for all $h \in [H]$ and $k \in [K]$ .
	Because $\BM_{R^2}$ is true, all indicator functions equal to one.
	Therefore, we have $\widehat{\Btheta}_{h, k} = \Btheta_{h, k}$ and $\widehat{\H}_{h, k-1} = \H_{h, k-1}$, implying $\Btheta_h^* \in \RM_{h, k}$ uniformly for $h \in [H]$.
\end{proof}

\subsection{Proof of Lemma~\ref{lem:opt-pes}}
\label{proof:opt-pes}
\begin{proof}[Proof of Lemma~\ref{lem:opt-pes}]
	By symmetry, we only prove the RHS inequality, or say, the optimism inequality.
	We prove it by induction.
	The statement is true for $h = H+1$ since both $V_{H+1}^*(\cdot) = \overoptV_{H+1}^k(\cdot) = 0$ for all $k \in [K]$.
	Assume the statement is also true for $h+1$, implying $ V_{h+1}^*(\cdot) \le \overoptV_{h+1}^k(\cdot) $ for all $k \in [K]$.
	We assume there exists a sequence of updating episodes $1\le k_1 < \cdots < k_{N_k} \le K$ such that
	\begin{equation}
	\label{eq:Q-detial}
			\overoptQ_h^k(\cdot, \cdot)
	= \min_{i \in [N_k]} \left\{ \langle \Bphi(\cdot, \cdot),  \Btheta_{h, k_i-1} + \Bmu_{h, k_i-1}  \BoveroptV_{h+1}^{k_i} \rangle + \beta \|\Bphi(\cdot, \cdot)\|_{\H_{h, k_i-1}^{-1}}, \HM\right\}.
	\end{equation}
	Using $Q_h^*(s, a) = \langle \Bphi(s, a), \Btheta_h^* + \Bmu_h^* \V_{h+1}^* \rangle$, we have for any $(s, a) \in \SM \times \AM$ and $k \in [K]$,
	\begin{align*}
		& \langle \Bphi(\cdot, \cdot),  \Btheta_{h, k-1} + \Bmu_{h, k-1}  \BoveroptV_{h+1}^{k} \rangle + \beta \|\Bphi(\cdot, \cdot)\|_{\H_{h, k-1}^{-1}} - Q_h^*(\cdot, \cdot) \\
		&= \langle \Bphi(\cdot, \cdot),  \Btheta_{h, k-1} - \Btheta_h^* \rangle
		+ \langle \Bphi(\cdot, \cdot), \Bmu_{h, k-1}  \BoveroptV_{h+1}^{k}  - \Bmu_h^* \V_{h+1}^*\rangle  + \beta \|\Bphi(\cdot, \cdot)\|_{\H_{h, k-1}^{-1}}\\
		& \overset{(a)}{\ge} \langle \Bphi(\cdot, \cdot),  \Btheta_{h, k-1} - \Btheta_h^* \rangle
		+ \langle \Bphi(\cdot, \cdot), (\Bmu_{h, k-1} -\Bmu_h^*) \BoveroptV_{h+1}^{k} \rangle  + \beta \|\Bphi(\cdot, \cdot)\|_{\H_{h, k-1}^{-1}}\\
		&\overset{(b)}{\ge} \|\Bphi(\cdot, \cdot)\|_{\H_{h, k-1}^{-1}} \left[
		- \| \Btheta_{h, k-1} - \Btheta_h^*\|_{\H_{h ,k-1}}
		- \| (\Bmu_{h, k-1} -\Bmu_h^*) \BoveroptV_{h+1}^{k}\|_{\H_{h ,k-1}} + \beta
		\right] \overset{(c)}{\ge} 0
	\end{align*}
	where $(a)$ uses $\langle \Bphi(s, a), \Bmu_{h}^*  (\BoveroptV_{h+1}^{k}  -\V_{h+1}^*)\rangle = \PB_h(\overoptV_{h+1}^{k}  -V_{h+1}^*)(s, a) \ge 0$ from the hypothesis,
	$(b)$ follows from Cauchy-Schwarz inequality 
	and $(c)$ uses $\| \Btheta_{h, k-1} - \Btheta_h^*\|_{\H_{h ,k-1}}
	+ \| (\Bmu_{h, k-1} -\Bmu_h^*) \BoveroptV_{h+1}^{k}\|_{\H_{h ,k-1}}  \le \beta_R + \beta_V = \beta$ on the event $\BM_R \cap \BM_{h}$.
	
	As a result, by the last inequality and~\eqref{eq:Q-detial}, it follows that for all $k \in [K]$, $	\overoptQ_h^k(\cdot, \cdot)-Q_h^*(\cdot, \cdot) \ge 0$. Taking maximum over actions, we have $\overoptV_h^k(\cdot) \ge V_h^*(\cdot)$ for all $k \in [K]$, which implies the case of $h$ is also true.
\end{proof}

\subsection{Proof of Lemma~\ref{lem:var0}}
\label{proof:var0}
\begin{proof}[Proof of Lemma~\ref{lem:var0}]
	The proof technique has been used in Lemma C.13 in~\citep{hu2022nearly} and Lemma 7.2 in~\citep{he2022nearly}.
	We include the proof for completeness.
	By definition,
	\begin{gather*}
		[\VB_h V_{h+1}^*](s_{h, k}, a_{h, k})  = 	\langle \Bmu_{h}^* [\V_{h+1}^*]^2, \Bphi_{h, k} \rangle  - 	\langle \Bmu_{h}^* \V_{h+1}^*, \Bphi_{h, k} \rangle^2\\
		[\HVB_h \overoptV_{h+1}^k](s_{h, k}, a_{h, k})  = 	\langle \Bmu_{h,k-1} [\BoveroptV_{h+1}^k]^2, \Bphi_{h, k-1} \rangle_{[0, \HM^2]}  - 	\langle \Bmu_{h,k-1} \BoveroptV_{h+1}^k, \Bphi_{h, k} \rangle_{[0, \HM]}^2.
	\end{gather*}
	Therefore, it follows that
	\begin{align*}
		&\left|
		\left[\VB_h V_{h+1}^* - \HVB_h \overoptV_{h+1}^k\right](s_{h, k}, a_{h, k}) 
		\right| \\
		& \le
		\left|
		\left[ \VB_h \overoptV_{h+1}^k - \HVB_h \overoptV_{h+1}^k\right](s_{h, k}, a_{h, k}) 
		\right|+ \left|
		\left[\VB_h V_{h+1}^* - \VB_h \overoptV_{h+1}^k\right](s_{h, k}, a_{h, k}) 
		\right|.
	\end{align*}
 We then bound the two terms in the RHS of the last inequality as follows.
	\begin{align*}
		&\left|
		\left[ \VB_h \overoptV_{h+1}^k - \HVB_h \overoptV_{h+1}^k\right](s_{h, k}, a_{h, k}) 
		\right|\\
		&\le \left| \langle\Bmu_{h}^* [\BoveroptV_{h+1}^k]^2, \Bphi_{h, k} \rangle -\langle \Bmu_{h,k-1} [\BoveroptV_{h+1}^k]^2, \Bphi_{h, k} \rangle_{[0, \HM^2]}    \right| \\
		& \qquad +  \left| \langle \Bmu_{h}^* \BoveroptV_{h+1}^k, \Bphi_{h, k} \rangle^2 - \langle \Bmu_{h,k-1} \BoveroptV_{h+1}^k, \Bphi_{h, k} \rangle_{[0, \HM]}^2  \right| \\
		&\le  \left| \langle (\Bmu_{h}^*-\Bmu_{h,k-1}) [\BoveroptV_{h+1}^k]^2, \Bphi_{h, k} \rangle   \right|  + 2\HM \cdot \left| \langle \Bmu_{h}^* \BoveroptV_{h+1}^k, \Bphi_{h, k} \rangle - \langle \Bmu_{h,k-1} \BoveroptV_{h+1}^k, \Bphi_{h, k} \rangle_{[0, \HM]}  \right| \\
		&\le  \left\|(\Bmu_{h, k-1} - \Bmu_{h}^*) [\BoveroptV_{h+1}^k]^2 \right\|_{\H_{h, k-1}} \|\Bphi_{h, k}\|_{\H_{h, k-1}^{-1}}
		+ 2\HM \|\Bphi_{h, k}\|_{\H_{h, k-1}^{-1}} 
		\left\|(\Bmu_{h, k-1} - \Bmu_{h}^*) \BoveroptV_{h+1}^k \right\|_{\H_{h, k-1}} 
	\end{align*}
	where the second inequality uses the fact that both $\langle \Bmu_{h}^* \BoveroptV_{h+1}^k, \Bphi_{h, k} \rangle$ and $\langle \Bmu_{h,k-1} \BoveroptV_{h+1}^k, \Bphi_{h, k} \rangle_{[0, \HM]} $ lie between $0$ and $\HM$.
	Similarly, it follows that
	\begin{align*}
		&\left|
		\left[\VB_h V_{h+1}^* - \VB_h \overoptV_{h+1}^k\right](s_{h, k}, a_{h, k}) 
		\right| \\
		&\le \left| \PB_h[[V_{h+1}^*]^2 - [\overoptV_{h+1}^k]^2](s_{h, k}, a_{h, k}) 
		\right| + \left| [\PB_h V_{h+1}^*]^2(s_{h, k}, a_{h, k})  -  [\PB_h\overoptV_{h+1}^k]^2(s_{h, k}, a_{h, k}) 
		\right|  \\
		&\le \left| \PB_h[(\overoptV_{h+1}^k-V_{h+1}^*)(\overoptV_{h+1}^k+V_{h+1}^*)](s_{h, k}, a_{h, k}) 
		\right| \\
          &\qquad + \left| [\PB_h\overoptV_{h+1}^k-\PB_h V_{h+1}^*][[\PB_h\overoptV_{h+1}^k+\PB_h V_{h+1}^*]](s_{h, k}, a_{h, k}) 
		\right|  \\
		&\le 4 \HM \cdot \PB_h[\overoptV_{h+1}^k-V_{h+1}^*](s_{h, k}, a_{h, k})  \\
		&\le 4 \HM \cdot \PB_{h}[\overoptV_{h+1}^k-\overpesV_{h+1}](s_{h, k}, a_{h, k})   \\
		&\le 4 \HM \cdot \HPB_{h, k}[\overoptV_{h+1}^k-\overpesV_{h+1}](s_{h, k}, a_{h, k})   \\
		& \quad	+ 4\HM  \|\Bphi_{h, k}\|_{\H_{h, k-1}^{-1}}  \cdot \left[
		\left\|(\Bmu_{h, k-1} - \Bmu_{h}^*) \BoveroptV_{h+1}^k \right\|_{\H_{h, k-1}} 
		+	\left\|(\Bmu_{h, k-1} - \Bmu_{h}^*) \BoverpesV_{h+1}^k \right\|_{\H_{h, k-1}} 
		\right]
	\end{align*}
	where the third and forth inequality we use the optimism and pessimism in Lemma~\ref{lem:opt-pes} and the last inequality uses the following result.
	\begin{align*}
		\left| [\PB_h\overoptV_{h+1}^k-\HPB_{h, k}\overoptV_{h+1}^k](s_{h, k}, a_{h, k})   \right|
		&= \left| \langle (\Bmu_{h}^*-\Bmu_{h,k-1})\BoveroptV_{h+1}^k, \Bphi_{h, k} \rangle \right| \\
		&\le \|\Bphi_{h, k}\|_{\H_{h, k-1}^{-1}} 
		\left\|(\Bmu_{h, k-1} - \Bmu_{h}^*) \BoveroptV_{h+1}^k \right\|_{\H_{h, k-1}} .
	\end{align*}
	A similar inequality can be derived for $	\left| [\PB_h\overpesV_{h+1}^k-\HPB_{h, k}\overpesV_{h+1}^k](s_{h, k}, a_{h, k})   \right|$.
	Finally, we have
	\begin{align*}
		&\left|
		\left[\VB_h V_{h+1}^* - \HVB_h \overoptV_{h+1}^k\right](s_{h, k}, a_{h, k}) 
		\right|\\
		&\le \left\|(\Bmu_{h, k-1} - \Bmu_{h}^*) [\BoveroptV_{h+1}^*]^2 \right\|_{\H_{h, k-1}} \|\Bphi_{h, k}\|_{\H_{h, k-1}^{-1}}+ 4\HM  
		\HPB_{h, k}(\overoptV_{h+1}^k -\overpesV_{h+1}^k)(s_{h, k}, a_{h, k})  \\
		& \qquad + \HM\|\Bphi_{h, k}\|_{\H_{h, k-1}^{-1}} \cdot  \left[
		6\left\|(\Bmu_{h, k-1} - \Bmu_{h}^*) \BoveroptV_{h+1}^k \right\|_{\H_{h, k-1}} +
		4	\left\|(\Bmu_{h, k-1} - \Bmu_{h}^*) \BoverpesV_{h+1}^k \right\|_{\H_{h, k-1}} 
		\right].
	\end{align*}
	We complete the proof by noting that on the event $\BM_0$, we have
	\begin{gather*}
		\max \left\{ \left\| (\Bmu_h^*-\Bmu_{h, k-1}) \BoveroptV_{h+1}^k \right\|_{\H_{h, k-1}} ,  \left\| (\Bmu_h^*-\Bmu_{h, k-1}) \BoverpesV_{h+1}^k \right\|_{\H_{h, k-1}}
		\right\}\le  \beta_0, \\
		\left\| (\Bmu_{h}^*-\Bmu_{h, k-1}) [\BoveroptV_{h+1}^k]^2 \right\|_{\H_{h, k-1}}  \le \HM \beta_0.
	\end{gather*}
\end{proof}

\subsection{Proof of Lemma~\ref{lem:var1}}
	\label{proof:var1}
	\begin{proof}[Proof of Lemma~\ref{lem:var1}]
		For any $j \le k$, we have
		\begin{align*}
			\left[\VB_h(\overoptV_{h+1}^k - V_{h+1}^*)\right](s_{h, j}, a_{h, j})
			& \overset{(a)}{\le}	\left[ \PB_h(\overoptV_{h+1}^k - V_{h+1}^*)^2\right](s_{h, j}, a_{h, j})\\
			&\overset{(b)}{\le} \HM 	\left[ \PB_h(\overoptV_{h+1}^k - \overpesV_{h+1}^k)\right](s_{h, j}, a_{h, j})\\
				&\overset{(c)}{\le} \HM 	\left[ \PB_h(\overoptV_{h+1}^j - \overpesV_{h+1}^j)\right](s_{h, j}, a_{h, j})
		\end{align*}
		where $(a)$ uses the fact that $\Var(X) \le \EB X^2$ for any random variable $X$, $(b)$ uses $0 \le \overpesV_{h+1}^k(\cdot)  \le V_{h+1}^*(\cdot) \le \overoptV_{h+1}^k(\cdot) \le \HM$ on the event $\BM_R \cap \BM_{h+1}$ from Lemma~\ref{lem:opt-pes}, and $(c)$ uses that $\overoptV_{h+1}^j(\cdot) \ge \overoptV_{h+1}^k(\cdot)$ and $\overpesV_{h+1}^j(\cdot) \le \overpesV_{h+1}^k(\cdot)$ by definition.
		On the other hand, on the event $\BM_0$, we have
		\begin{align*}
			\max \left\{ \left\| (\Bmu_h^*-\Bmu_{h, j-1}) \BoveroptV_{h+1}^j \right\|_{\H_{h, j-1}} ,  \left\| (\Bmu_h^*-\Bmu_{h, j-1}) \BoverpesV_{h+1}^j \right\|_{\H_{h, j-1}}
			\right\}\le  \beta_0.
		\end{align*}
		As a result,
		\begin{align*}
			\left[\left( \PB_h- \HPB_{h, j} \right) \overoptV_{h+1}^j \right](s_{h, j}, a_{h, j})
			&= \langle\Bphi_{h, j}, (\Bmu_h^* - \Bmu_{h, j-1})   \BoveroptV_{h+1}^j\rangle\\
			&\le  \|\Bphi_{h, j}\|_{\H_{h, j-1}^{-1}}  \|(\Bmu_h^* - \Bmu_{h, j-1})   \BoveroptV_{h+1}^j\|_{\H_{h, j-1}}
			\le \beta_0 \|\Bphi_{h, j}\|_{\H_{h, j-1}^{-1}} .
		\end{align*}
		Similarily, we have $\left[\left( \PB_h- \HPB_{h, j} \right) \overpesV_{h+1}^j \right](s_{h, j}, a_{h, j}) \le  \beta_0 \|\Bphi_{h, j}\|_{\H_{h, j-1}^{-1}}$.
		Therefore,
		\begin{align*}
			&\left[\VB_h(\overoptV_{h+1}^k - V_{h+1}^*)\right](s_{h, j}, a_{h, j}) \\
			&\le \HM \left[2 \beta_0 \|\Bphi_{h, k}\|_{\H_{h, j-1}^{-1}}
			+ 	\left[ \HPB_{h, j}(\overoptV_{h+1}^j - \overpesV_{h+1}^j)\right](s_{h, j}, a_{h, j})
			\right] =: E_{h, j}
		\end{align*}	
		Repeating the above argument, we have a similar inequality for $\overoptV_{h+1}^k$ due to symmetry.
	\end{proof}

\subsection{Proof of Lemma~\ref{lem:fine-CI}}
\label{proof:fine-CI}
\begin{proof}[Proof of Lemma~\ref{lem:fine-CI}]	
	Due to the backward recursion structure, we will use induction (over horizon $h$) to prove this lemma.
	First, \eqref{eq:fine-CI} is true for $h=H$ since $\overoptV_{H+1}^k(\cdot) = \overpesV_{H+1}^k(\cdot) = 0$ for all $k \in [K]$.
	Therefore, we have $\BM_H$ holds.
	Assume \eqref{eq:fine-CI} holds for horizons no smaller than $h+1$, i.e., $\BM_{h+1}$ holds with $h+1 \le H$.
	In the following, we will show, once $\BM_{h+1} \cap  \BM_0$ holds, $\BM_{h}$ holds with probability at least than $1-\frac{2\delta}{H}$.
	Repeating the argument, we have, given $\BM_H \cap  \BM_0$ holds, with probability at least $1-2\delta$, $\BM_1 \cap  \BM_0$ holds.
	Hence, $\PB(\BM_0 \cap  \BM_1) \ge 1- 5 \delta$.
	
	Note that
	\begin{align*}
		\max &\left\{ \left\| (\Bmu_h^*-\Bmu_{h, k-1}) \BoveroptV_{h+1}^k \right\|_{\H_{h, k-1}} ,  \left\| (\Bmu_h^*-\Bmu_{h, k-1}) \BoverpesV_{h+1}^k \right\|_{\H_{h, k-1}} 
		\right\}
		\le  \left\| (\Bmu_h^*-\Bmu_{h, k-1}) \V_{h+1}^* \right\|_{\H_{h, k-1}}   \\
		& +  \max \left\{
		\left\| (\Bmu_h^*-\Bmu_{h, k-1}) (\BoveroptV_{h+1}^k-\V_{h+1}^*) \right\|_{\H_{h, k-1}},\left\| (\Bmu_h^*-\Bmu_{h, k-1}) (\BoverpesV_{h+1}^k-\V_{h+1}^*) \right\|_{\H_{h, k-1}}
		\right\}.
	\end{align*}
	we would analyze the two terms in the RHS separately to proceed with the proof.
	\paragraph{For the first term}
	Since $\V_{h+1}^*$ is a deterministic function, we apply the first item in Lemma~\ref{lem:value-ci} to bound it.
	In the following, we specify the parameters defined therein.
	First, we have $C_0 = \HM$ and $\AM_{h, k} = \left\{
	\sigma_{h, k}^2 \ge (\VB_h V_{h+1}^*)(s_{h, k}, a_{h, k}) 
	\right\}$ is $\FM_{h, k}$-measurable.
	By Lemma~\ref{lem:var0}, on the event $\BM_0 \cap \BM_{h+1}$, we have for all $k \in [K]$, $\left|
	\left[\VB_h V_{h+1}^* - \HVB_h \overoptV_{h+1}^k\right](s_{h, k}, a_{h, k}) 
	\right| \le U_{h, k}$ with $U_{h, k}$ defined in~\eqref{eq:U}.
	Hence, $\sigma_{h, k}^2 \ge [\HVB_h \optV_{h+1}^k](s_{h, k}, a_{h, k}) +  U_{h, k} \ge [\VB_h V_{h+1}^*](s_{h, k}, a_{h, k})$ for all $k \in [K]$, implying $\cap_{k \in [K]}\AM_{h, k}$ holds under $\BM_0 \cap \BM_{h+1}$ and $C_{\sigma} = 1$.
	By Lemma~\ref{lem:value-ci}, with probability at least $1-\frac{\delta}{H}$, $ \left\| (\Bmu_h^*-\Bmu_{h, k-1}) \V_{h+1}^* \right\|_{\H_{h, k-1}}  \le \beta_1$ for all $k \in [K]$ with $\beta_1$ defined in the following.
	Finally, we simplify $\beta_1$ as
	\begin{align*}
		\beta_1	&:= 8\sqrt{d \log \left( 1 + \frac{K}{\sigma_{\min}^2d\lambda} \right) \log \frac{4H K^2}{\delta} } + \frac{8}{ d^{2.5}H}\log\frac{4HK^2}{\delta} + \sqrt{d \lambda} \HM\\
		&\le 8\sqrt{d} \iota_1 + \frac{8\iota_1}{d^{2.5}H} +  \sqrt{d \lambda} \HM
		\le 16 \sqrt{d} \iota_1 +  \sqrt{d \lambda} \HM.
	\end{align*}
	
	\paragraph{For the second term}
	Since both $\BoveroptV_{h+1}^k-\V_{h+1}^*$ and $\BoverpesV_{h+1}^k-\V_{h+1}^*$ are $\FM_{H, k-1}$-measurable random functions, we apply the second item in Lemma~\ref{lem:value-ci} to analyze the second term.
	In the following, we specify parameters defined therein.
	First, $C_0 = \HM$ and $\AM_{h, k} = \left\{ \sigma_{h, k}^2 \ge d^3 H \cdot E_{h, k} \right\}$ is $\FM_{h, k}$-measurable.
	By Lemma~\ref{lem:var1}, on the event $\BM_0\cap \BM_R \cap \BM_{h+1}$, we have simultaneously $ \left[\VB_h(\overoptV_{h+1}^k - V_{h+1}^*)\right](s_{h, j}, a_{h, j}) \le E_{h, j}$ and $ \left[\VB_h(\overpesV_{h+1}^k - V_{h+1}^*)\right](s_{h, j}, a_{h, j}) \le E_{h, j}$ for all $j \le k \le K$ with $E_{h, j}$ defined in~\eqref{eq:E}.
	As a result, for all $j \le k$,
	\begin{align*}
		\sigma_{h, j}^2 
		&\ge  d^3 H \cdot E_{h, j} \ge d^3H \cdot \max \left\{
		\left[
		\VB_h(\overoptV_{h+1}^k - V_{h+1}^*)\right](s_{h, j}, a_{h, j})
		, 	\left[
		\VB_h(\overpesV_{h+1}^k - V_{h+1}^*)\right](s_{h, j}, a_{h, j})
		\right\}.
	\end{align*}
	It implies $C_{\sigma} = \frac{1}{\sqrt{d^3H}}$ and for any $j \in [k]$,
	\[
	\AM_{h, j} \subseteq \left\{ \sigma_{h, j}^2 \ge C_{\sigma}^{-2}  \max \left\{
	\left[
	\VB_h(\overoptV_{h+1}^k - V_{h+1}^*)\right](s_{h, j}, a_{h, j})
	, 	\left[
	\VB_h(\overpesV_{h+1}^k - V_{h+1}^*)\right](s_{h, j}, a_{h, j})
	\right\}  \right\}.
	\]
	Finally, with by Lemma~\ref{lem:covering-1} and~\ref{lem:rare-update}, the covering entropy for $\varepsilon_0 = \min\left\{ \frac{\sigma_{\min}}{\sqrt{d^3H}}, \frac{\lambda \HM \sqrt{d}}{K}\sigma_{\min}^2 \right\}$ and the function class to which $\overoptV_{h+1}^k - V_{h+1}^*$ and $\overpesV_{h+1}^k - V_{h+1}^*$ belong is 
	\begin{align*}
		\log N_0 &= |\NM(\VM^{\pm}, \varepsilon_0)|
		\le \left[d \log\left(
		1 + \frac{4L}{\varepsilon_0}
		\right) + d^2 \log \left( 1 + \frac{8\sqrt{d}B^2}{\lambda \varepsilon_0^2} \right) \right] \cdot dH \log_2\left( 1+ \frac{K}{\lambda \sigma_{\min}^2} \right) \\
		&= \OM(d^3H \iota_1^2)
	\end{align*}
	By Lemma~\ref{lem:value-ci}, with probability at least $1-\frac{\delta}{H}$, 
	\[
	\max \left\{
	\left\| (\Bmu_h^*-\Bmu_{h, k-1}) (\BoveroptV_{h+1}^k-\V_{h+1}^*) \right\|_{\H_{h, k-1}},\left\| (\Bmu_h^*-\Bmu_{h, k-1}) (\BoverpesV_{h+1}^k-\V_{h+1}^*) \right\|_{\H_{h, k-1}}
	\right\} \le \beta_2
	\]
	for all $k \in [K]$ with $\beta_2$ defined in the following.
	Finally, we simplify $\beta_2$ as
	\begin{align*}
		\beta_2&=\frac{8}{\sqrt{d^3H}}\sqrt{d\log \left( 1 + \frac{K}{\sigma_{\min}^2d\lambda} \right) \log \frac{4N_0H K^2}{\delta} } + \frac{8}{d^{2.5}H}\log\frac{4N_0HK^2}{\delta} + \sqrt{d \lambda} \HM \\
		&\le 8\sqrt{\frac{\iota_1}{d^2H} \cdot \left(  \iota_1 + \OM(d^3H \iota_1^2) \right)} + \frac{8}{d^{2.5}H} \left( \iota_1 +  \OM(d^3H \iota_1^2) \right)+ \sqrt{d \lambda} \HM \\
		&= \OM\left(  \sqrt{d}\iota_1^{1.5} +  \iota_1 + \sqrt{d} \iota_1^2 +  \sqrt{d \lambda} \HM  \right)
		=\OM\left(  \sqrt{d} \iota_1^2 +  \sqrt{d \lambda} \HM  \right).
	\end{align*}
	
	\paragraph{Putting pieces together} we have shown that given $\BM_{h+1} \cap \BM_0$ is true, with probability at least $1-2\delta$, for all $h \in [H]$ and $k \in [K]$,
	\[
	\max \left\{ \left\| (\Bmu_h^*-\Bmu_{h, k-1}) \BoveroptV_{h+1}^k \right\|_{\H_{h, k-1}} ,  \left\| (\Bmu_h^*-\Bmu_{h, k-1}) \BoverpesV_{h+1}^k \right\|_{\H_{h, k-1}} 	\right\} \le \beta_1 + \beta_2 = \OM\left(  \sqrt{d} \iota_1^2 +  \sqrt{d \lambda} \HM  \right).
	\]
	Therefore, $\BM_V:=\BM_1$ holds.
\end{proof}

\subsection{Proof of Lemma~\ref{lem:sub-gap}}
\label{proof:sub-gap}
\begin{proof}[Proof of Lemma~\ref{lem:sub-gap}]

	For a given $k$, let $\kl$ denote the latest update episode before episode $k$, that is $\kl \le k < \kl+1$.
	By Lemma~\ref{lem:matrix-ratio}, due to $\H_{h, k-1} \succeq \H_{h, \kl-1}$ and $\det(\H_{h, k-1}) \le 2 \det(\H_{h, \kl-1})$, it follows that for any $\x \in \RB^d$,
	\begin{equation}
		\label{eq:bonus-ratio}
			\|\x\|_{ \H_{h, \kl-1}^{-1}} \le 2 \|\x\|_{ \H_{h, k-1}^{-1}}.
	\end{equation}	
	By definition, $\overoptQ_h^k(\cdot, \cdot) \le \langle \Bphi(\cdot, \cdot),  \Btheta_{h, \kl-1} + \Bmu_{h, \kl-1}  \BoveroptV_{h+1}^{\kl} \rangle + \beta \|\Bphi(\cdot, \cdot)\|_{\H_{h, \kl-1}^{-1}}$ and $Q_h^{\pi_k}(s, a) = \langle \Bphi(s, a), \Btheta_h^* + \Bmu_h^* \V_{h+1}^{\pi_k} \rangle$.
	Using $a_{h, k} = \pi_h^k(s_{h, k}) = \argmax_{a \in \AM} \overoptQ_h^k(s_{h, k}, a)$, we then have
	\begin{align*}
		(&\overoptV_{h}^k - V_h^{\pi_k})(s_{h, k})
		\le (\overoptQ_h^k -Q_h^{\pi_k}) (s_{h, k}, a_{h, k}) \\
		&\le\langle\Bphi_{h, k},  \Btheta_{h, \kl-1} + \Bmu_{h, \kl-1}  \BoveroptV_{h+1}^{\kl} - (\Btheta_h^* + \Bmu_h^* \V_{h+1}^{\pi_k} ) \rangle +\beta \|\Bphi(s_{h, k}, a_{h, k})\|_{\H_{h, \kl-1}^{-1}}\\
		&\overset{(a)}{\le} \langle\Bphi_{h, k},  (\Btheta_{h, \kl-1} -\Btheta_h^*) +  (\Bmu_{h, \kl-1} -\Bmu_h^*) \BoveroptV_{h+1}^{\kl} \rangle 
		+ \langle\Bphi_{h, k}, \Bmu_h^*(  \BoveroptV_{h+1}^{\kl} -  \V_{h+1}^{\pi_k}
		) \rangle +2\beta \|\Bphi_{h, k}\|_{\H_{h, k-1}^{-1}}\\
		&\overset{(b)}{\le}4\beta \|\Bphi_{h, k}\|_{\H_{h, k-1}^{-1}} + \langle\Bphi_{h, k}, \Bmu_h^*(  \BoveroptV_{h+1}^{k} -  \V_{h+1}^{\pi_k}
		) \rangle \\
		&\overset{(c)}{=}
		4\beta \|\Bphi_{h, k}\|_{\H_{h, k-1}^{-1}} + \PB_h( \overoptV_{h+1}^{k} -  V_{h+1}^{\pi_k})(s_{h, k}, a_{h, k})\\
		&\overset{(d)}{=}4\beta \|\Bphi_{h, k}\|_{\H_{h, k-1}^{-1}} + ( \overoptV_{h+1}^{k} -  V_{h+1}^{\pi_k})(s_{h+1, k}) + X_{h, k}.
	\end{align*}
Here $(a)$ uses~\eqref{eq:bonus-ratio}, $(b)$ uses 
\begin{align*}
&|\langle\Bphi_{h, k},  (\Btheta_{h, \kl-1} -\Btheta_h^*) +  (\Bmu_{h, \kl-1} -\Bmu_h^*) \BoveroptV_{h+1}^{\kl} \rangle | \\
&\le 
\| \Bphi_{h, k}\|_{\H_{h, \kl-1}^{-1}} \|(\Btheta_{h, \kl-1} -\Btheta_h^*) +  (\Bmu_{h, \kl-1} -\Bmu_h^*) \BoveroptV_{h+1}^{\kl}\|_{\H_{h, \kl-1}} \\
&\le \beta  \| \Bphi_{h, k}\|_{\H_{h, \kl-1}^{-1}} \le 2\beta  \| \Bphi_{h, k}\|_{\H_{h, k-1}^{-1}}
\end{align*}
on $\BM_R \cap \BM_V$, $(c)$ uses $
\langle\Bphi_{h, k}, \Bmu_h^*(  \overoptV_{h+1}^{\kl} -  V_{h+1}^{\pi_k}
) \rangle  = \PB_h( \BoveroptV_{h+1}^{\kl} -  \V_{h+1}^{\pi_k})(s_{h, k}, a_{h, k})=\PB_h( \BoveroptV_{h+1}^{k} -  \V_{h+1}^{\pi_k})(s_{h, k}, a_{h, k})$, and $(d)$ uses the notation 
\[
X_{h, k} :=  \PB_h( \overoptV_{h+1}^{k} -  V_{h+1}^{\pi_k})(s_{h, k}, a_{h, k}) - ( \overoptV_{h+1}^{k} -  V_{h+1}^{\pi_k})(s_{h+1, k}).
\]
The last inequality implies
\begin{align*}
(\overoptV_{h}^k - V_h^{\pi_k})(s_{h, k})
\le ( \overoptV_{h+1}^{k} -  V_{h+1}^{\pi_k})(s_{h+1, k}) + X_{h, k} + 4\beta \|\Bphi_{h, k}\|_{\H_{h, k-1}^{-1}}.
\end{align*}
Iterating the above inequality over $h$ and using $ \overoptV_{H+1}^{k}(\cdot) = V_{H+1}^{\pi_k}(\cdot) = 0$, we have
\begin{equation}
\label{eq:final-iter}
(\overoptV_{h}^k - V_h^{\pi_k})(s_{h, k})
\le \sum_{i=h}^H \left[ X_{i, k} + 4\beta \|\Bphi_{i, k}\|_{\H_{i, k-1}^{-1}}\right].
\end{equation}
Therefore, setting $h=1$ and summing~\eqref{eq:final-iter} over $k \in [K]$, we have
\begin{equation}
\label{eq:final0}
\sum_{k =1}^K(\overoptV_{1}^k - V_1^{\pi_k})(s_{1, k})
\le \sum_{k=1}^K \sum_{h=1}^H \left[ X_{h, k} + 4\beta \|\Bphi_{h, k}\|_{\H_{h, k-1}^{-1}}\right].
\end{equation}

	We then need to analyze $ \sum_{k=1}^K \sum_{h=1}^H  X_{h, k} $.
	Since $s_{h+1, k}$ is $\FM_{h+1, k}$-measurable, $\pi_k = \{ \pi_h^k \}_{h \in [H]}, \overoptV_{h+1}^k$ is $\FM_{H, k-1}$-measurable, we have $X_{h, k}$ is $\FM_{h+1, k}$-measurable.
	We also have $\EB[X_{h, k}|\FM_{h, k}] = 0$, $|X_{h, k}| \le 2\HM$ and 
	\begin{align*}
		\EB[X_{h, k}^2|\FM_{h, k}]  &\le  \EB[ (\overoptV_{h+1}^k - V_{h+1}^{\pi_k})^2(s_{h+1, k}) |\FM_{h, k}] \overset{(a)}{\le}  \HM \EB[ |\overoptV_{h+1}^k - V_{h+1}^{\pi_k}|(s_{h+1, k}) |\FM_{h, k}]\\
		&\overset{(b)}{=}\HM \EB[ (\overoptV_{h+1}^k - V_{h+1}^{\pi_k})(s_{h+1, k}) |\FM_{h, k}]   
		=\HM \PB_h(\overoptV_{h+1}^k - V_{h+1}^{\pi_k})(s_{h, k}, a_{h, k}) 
	\end{align*}
	where $(a)$ uses $ |\overoptV_{h+1}^k - V_{h+1}^{\pi_k}|(\cdot) \le \HM$ and $(b)$ uses the optimism in Lemma~\ref{lem:opt-pes}.
	By the variance-aware Freedman inequality in Lemma~\ref{lem:bern}, with probability at least $1-\frac{\delta}{2}$, it follows that
	\begin{equation}
		\label{eq:final1}
		\left| \sum_{k=1}^K \sum_{h=1}^H  X_{h, k} \right|
		\le 3 \sqrt{\iota} \cdot  \sqrt{\HM \cdot \sum_{k=1}^K \sum_{h=1}^H \PB_h(\overoptV_{h+1}^k - V_{h+1}^{\pi_k})(s_{h, k}, a_{h, k})  } + 10\HM\cdot \iota
	\end{equation}
	where $\iota = \log\frac{4\ceil{\log_2 HK}}{\delta} $.
	On the other hand, it follows that
	\begin{align*}
		\sum_{k=1}^K \sum_{h=1}^H& \PB_h(\overoptV_{h+1}^k - V_{h+1}^{\pi_k})(s_{h, k}, a_{h, k}) 
		= 	\sum_{k=1}^K \sum_{h=2}^H (\overoptV_{h}^k - V_{h}^{\pi_k})(s_{h, k}) + \sum_{k=1}^K \sum_{h=1}^H  X_{h, k} \\
		&\overset{(a)}{\le}\sum_{k=1}^K \sum_{h=2}^H 
		\sum_{i=h}^H \left[ X_{i, k} + 4\beta \|\Bphi_{i, k}\|_{\H_{i, k-1}^{-1}}\right] + \sum_{k=1}^K \sum_{h=1}^H  X_{h, k}\\
		&=\sum_{k=1}^K \sum_{h=2}^H (H-h+1)\left[ X_{h, k} + 4\beta \|\Bphi_{h, k}\|_{\H_{h, k-1}^{-1}}\right] + \sum_{k=1}^K \sum_{h=1}^H  X_{h, k}\\
		&\overset{(b)}{\le} 4H\beta \sum_{k=1}^K \sum_{h=2}^H  \|\Bphi_{h, k}\|_{\H_{h, k-1}^{-1} }+ \sum_{k=1}^K \sum_{h=1}^H  X_{h, k}  b_{h}
	\end{align*}
	where $(a)$ uses~\eqref{eq:final-iter} and $(b)$ uses the notation $b_h = 1$ if $h=1$; otherwise $= H-h+2$ for $2 \le h \le H$.
	Clearly, we have $|b_h| \le H$ for all $h \in [H]$.
	By the variance-aware Freedman inequality in Lemma~\ref{lem:bern}, with probability at least $1-\frac{\delta}{2}$, it follows that
	\[
	\left| \sum_{k=1}^K \sum_{h=1}^H  X_{h, k} b_h \right|
	\le 3 H \sqrt{\iota} \cdot  \sqrt{\HM \cdot \sum_{k=1}^K \sum_{h=1}^H \PB_h(\overoptV_{h+1}^k - V_{h+1}^{\pi_k})(s_{h, k}, a_{h, k})  } + 10H\HM\cdot \iota.
	\]
	As a result, we have
	\begin{align*}
		\sum_{k=1}^K \sum_{h=1}^H\PB_h(\overoptV_{h+1}^k - V_{h+1}^{\pi_k})(s_{h, k}, a_{h, k}) 
		&	\le  3 H \sqrt{\iota} \cdot  \sqrt{\HM \cdot \sum_{k=1}^K \sum_{h=1}^H \PB_h(\overoptV_{h+1}^k - V_{h+1}^{\pi_k})(s_{h, k}, a_{h, k})  }  \\
		& \qquad +
		4H\beta \sum_{k=1}^K \sum_{h=2}^H  \|\Bphi_{h, k}\|_{\H_{h, k-1}^{-1} } + 10 H \HM \iota.
	\end{align*}
	Using the inequality that $x \le 2(a^2 +  b^2)$ for any $x \le |a| \sqrt{x} + b^2$, we have
	\begin{equation}
		\label{eq:final-expect-sum}
		\sum_{k=1}^K \sum_{h=1}^H\PB_h(\overoptV_{h+1}^k - V_{h+1}^{\pi_k})(s_{h, k}, a_{h, k}) 
		\le 8H\beta \sum_{k=1}^K \sum_{h=1}^H  \|\Bphi_{h, k}\|_{\H_{h, k-1}^{-1} } + 38 H^2 \HM \iota.
	\end{equation}
	Putting pieces together, we have
	\begin{align*}
		\sum_{k =1}^K&(\overoptV_{1}^k - V_1^{\pi_k})(s_{1, k})
		\overset{\eqref{eq:final0}}{\le} \sum_{k=1}^K \sum_{h=1}^H \left[ X_{h, k} + 4\beta \|\Bphi_{h, k}\|_{\H_{h, k-1}^{-1}}\right]\\
		&\overset{\eqref{eq:final1}}{\le} 4\beta  \sum_{k=1}^K \sum_{h=1}^H \|\Bphi_{h, k}\|_{\H_{h, k-1}^{-1}} + 3 \sqrt{\iota} \cdot  \sqrt{\HM \cdot \sum_{k=1}^K \sum_{h=1}^H \PB_h(\overoptV_{h+1}^k - V_{h+1}^{\pi_k})(s_{h, k}, a_{h, k})  } + 10\HM\cdot \iota \\
		&\overset{\eqref{eq:final-expect-sum}}{\le}
		4\beta  \sum_{k=1}^K \sum_{h=1}^H \|\Bphi_{h, k}\|_{\H_{h, k-1}^{-1}} + 3 \sqrt{\iota} \cdot  \sqrt{\HM \cdot \left[8H\beta \sum_{k=1}^K \sum_{h=1}^H  \|\Bphi_{h, k}\|_{\H_{h, k-1}^{-1} } + 38 H^2 \HM \iota\right]   } + 10\HM\cdot \iota \\
		&\le 6\beta  \sum_{k=1}^K \sum_{h=1}^H \|\Bphi_{h, k}\|_{\H_{h, k-1}^{-1}}   + 38 H \HM \iota
	\end{align*}
	where the last inequality uses $\sqrt{a+b} \le \sqrt{a} + \sqrt{b}$ and $2\sqrt{ab} \le a+ b$ for non-negative numbers $a, b \ge 0$.
\end{proof}

\subsection{Proof of Lemma~\ref{lem:opt-pes-gap}}
\label{proof:opt-pes-gap}

\begin{proof}[Proof of Lemma~\ref{lem:opt-pes-gap}]
	The proof main idea is similar to that in Lemma~\ref{lem:sub-gap}.
		For a given $k$, let $\kl$ denote the latest update episode before episode $k$, that is $\kl \le k < \kl+1$.
	By definition, $\overoptQ_h^k(\cdot, \cdot) \le \langle \Bphi(\cdot, \cdot),  \Btheta_{h, \kl-1} + \Bmu_{h, \kl-1}  \BoveroptV_{h+1}^{\kl} \rangle + \beta \|\Bphi(\cdot, \cdot)\|_{\H_{h, \kl-1}^{-1}}$ and  $\overpesQ_h^k(\cdot, \cdot) \ge \langle \Bphi(\cdot, \cdot),  \Btheta_{h, \kl-1} + \Bmu_{h, \kl-1}  \BoverpesV_{h+1}^{\kl} \rangle - \beta \|\Bphi(\cdot, \cdot)\|_{\H_{h, \kl-1}^{-1}}$.
	Using 
 \[
 a_{h, k} = \pi_h^k(s_{h, k}) = \argmax_{a \in \AM} \overoptQ_h^k(s_{h, k}, a),
 \]
we then have
	\begin{align*}
		(&\overoptV_{h}^k - \overpesV_h^k)(s_{h, k})
		\le (\overoptQ_h^k -\overpesQ_h^k) (s_{h, k}, a_{h, k}) \\
		&\le \langle\Bphi_{h, k},   \Bmu_{h, \kl-1}  (\BoveroptV_{h+1}^{\kl}-\BoverpesV_{h+1}^{\kl}) \rangle 
		+ 2\beta \|\Bphi_{h, k}\|_{\H_{h, \kl-1}^{-1}}\\
		&\overset{(a)}{\le}\langle\Bphi_{h, k},   (\Bmu_{h, k-1} - \Bmu_h^*)  (\BoveroptV_{h+1}^{\kl}-\BoverpesV_{h+1}^{\kl}) \rangle
		+	\langle\Bphi_{h, k},   \Bmu_h^*  (\BoveroptV_{h+1}^{\kl}-\BoverpesV_{h+1}^{\kl}) \rangle + 4\beta \|\Bphi_{h, k}\|_{\H_{h, k-1}^{-1}}\\
		&\overset{(b)}{\le}6\beta \|\Bphi_{h, k}\|_{\H_{h, k-1}^{-1}} + \langle\Bphi_{h, k}, \Bmu_h^*(  \BoveroptV_{h+1}^{\kl} -  \BoverpesV_{h+1}^{\kl}
		) \rangle \\
		&\overset{(c)}{=}6\beta \|\Bphi_{h, k}\|_{\H_{h, k-1}^{-1}} + \PB_h( \overoptV_{h+1}^{k} -  \overpesV_{h+1}^{k})(s_{h, k}, a_{h, k})\\
		&\overset{(d)}{=}6\beta \|\Bphi_{h, k}\|_{\H_{h, k-1}^{-1}} + ( \overoptV_{h+1}^{k} -  \overpesV_{h+1}^{k})(s_{h+1, k}) + X_{h, k}.
	\end{align*}
Here $(a)$ uses~\eqref{eq:bonus-ratio}, $(b)$ uses 
\begin{align*}
&|\langle\Bphi_{h, k},   (\Bmu_{h, \kl-1} - \Bmu_h^*)  (\BoveroptV_{h+1}^{\kl}-\BoverpesV_{h+1}^{\kl}) \rangle |\\
&\le  \| \Bphi_{h, k}\|_{\H_{h, \kl-1}^{-1}} \|(\Bmu_{h, \kl-1} - \Bmu_h^*)  (\BoveroptV_{h+1}^{\kl}-\BoverpesV_{h+1}^{\kl})\|_{\H_{h, \kl-1}} \\
&\le 2\beta  \| \Bphi_{h, k}\|_{\H_{h, \kl-1}^{-1}} \le 2\beta  \| \Bphi_{h, k}\|_{\H_{h, k-1}^{-1}}
\end{align*}
on $\BM_V \cap \BM_R$, $(c)$ uses $
	\langle\Bphi_{h, k}, \Bmu_h^*(  \overoptV_{h+1}^{\kl} -  \overpesV_{h+1}^{\kl}
	) \rangle  = \PB_h( \overoptV_{h+1}^{\kl} -  \overpesV_{h+1}^{\kl})(s_{h, k}, a_{h, k})  = \PB_h( \overoptV_{h+1}^{k} -  \overpesV_{h+1}^k)(s_{h, k}, a_{h, k})$, and $(d)$ uses the notation 
	\begin{equation}
		\label{eq:X-gap}
		X_{h, k} :=  \PB_h( \overoptV_{h+1}^{k} -  \overpesV_{h+1}^k)(s_{h, k}, a_{h, k}) - ( \overoptV_{h+1}^{k} -  \overpesV_{h+1}^{k})(s_{h+1, k}).
	\end{equation}
	The last inequality implies
	\begin{align*}
		(\overoptV_{h}^k - \overpesV_h^{k})(s_{h, k})
		\le ( \overoptV_{h+1}^{k} -  \overpesV_{h+1}^{k})(s_{h+1, k}) + X_{h, k} + 6\beta \|\Bphi_{h, k}\|_{\H_{h, k-1}^{-1}}.
	\end{align*}
	Iterating the above inequality over $h$ and using $ \overoptV_{H+1}^{k}(\cdot) = \overpesV_{H+1}^{k}(\cdot) = 0$, we have
	\begin{equation}
		\label{eq:gap-iter}
		(\overoptV_{h}^k - \overpesV_h^{k})(s_{h, k})
		\le \sum_{i=h}^H \left[ X_{i, k} + 6\beta \|\Bphi_{i, k}\|_{\H_{i, k-1}^{-1}}\right].
	\end{equation}
	Using the last inequality, it follows that
	\begin{align}
		\label{eq:gap0}
		\sum_{k=1}^K \sum_{h=1}^H& \PB_h(\overoptV_{h+1}^k - \overpesV_{h+1}^{k})(s_{h, k}, a_{h, k}) 
		= 	\sum_{k=1}^K \sum_{h=2}^H (\overoptV_{h}^k - \overpesV_{h}^{k})(s_{h, k}) + \sum_{k=1}^K \sum_{h=1}^H  X_{h, k} \nonumber \\
		&\overset{(a)}{\le}\sum_{k=1}^K \sum_{h=2}^H 
		\sum_{i=h}^H \left[ X_{i, k} + 8\beta \|\Bphi_{i, k}\|_{\H_{i, k-1}^{-1}}\right] + \sum_{k=1}^K \sum_{h=1}^H  X_{h, k} \nonumber \\
		&=\sum_{k=1}^K \sum_{h=2}^H (H-h+1)\left[ X_{h, k} + 6\beta \|\Bphi_{h, k}\|_{\H_{h, k-1}^{-1}}\right] + \sum_{k=1}^K \sum_{h=1}^H  X_{h, k} \nonumber \\
		&\overset{(b)}{\le} 6H\beta \sum_{k=1}^K \sum_{h=2}^H  \|\Bphi_{h, k}\|_{\H_{h, k-1}^{-1} }+ \sum_{k=1}^K \sum_{h=1}^H  X_{h, k}  b_{h}
	\end{align}
	where $(a)$ uses~\eqref{eq:final-iter} and $(b)$ uses the notation $b_h = 1$ if $h=1$; otherwise $= H-h+2$ for $2 \le h \le H$.
	Clearly, we have $|b_h| \le H$ for all $h \in [H]$.
	
	We then need to analyze $ \sum_{k=1}^K \sum_{h=1}^H  X_{h, k}  b_{h}$ with $X_{h, k}$'s defined in~\eqref{eq:X-gap}.
	Since $s_{h+1, k}$ is $\FM_{h+1, k}$-measurable,  $\overoptV_{h+1}^k, \overpesV_{h+1}^k$ is $\FM_{H, k-1}$-measurable, we have $X_{h, k}$ is $\FM_{h+1, k}$-measurable.
	We also have $\EB[X_{h, k}|\FM_{h, k}] = 0$, $|X_{h, k}| \le 2\HM$ and 
	\begin{align*}
		\EB[X_{h, k}^2|\FM_{h, k}]  &\le  \EB[ (\overoptV_{h+1}^k - \overpesV_{h+1}^{k})^2(s_{h+1, k}) |\FM_{h, k}]\\
		& \overset{(a)}{\le}  \HM \EB[ |\overoptV_{h+1}^k - \overpesV_{h+1}^{k}|(s_{h+1, k}) |\FM_{h, k}]
		=\HM \PB_h(\overoptV_{h+1}^k - \overpesV_{h+1}^{k})(s_{h, k}, a_{h, k}) 
	\end{align*}
	where $(a)$ uses $ |\overoptV_{h+1}^k - \overpesV_{h+1}^{k}|(\cdot) \le \HM$.
	By the variance-aware Freedman inequality in Lemma~\ref{lem:bern}, with probability at least $1-\delta$, it follows that
	\begin{equation}
		\label{eq:gap-final1}
		\left| \sum_{k=1}^K \sum_{h=1}^H  X_{h, k} b_h \right|
		\le 3 H \sqrt{\iota} \cdot  \sqrt{\HM \cdot \sum_{k=1}^K \sum_{h=1}^H \PB_h(\overoptV_{h+1}^k - \overpesV_{h+1}^{k})(s_{h, k}, a_{h, k})  } + 10H \HM\cdot \iota
	\end{equation}
	where $\iota = \log\frac{4\ceil{\log_2 HK}}{\delta}$.
	As a result, plugging~\eqref{eq:gap-final1} into~\eqref{eq:gap0}, we have
	\begin{align*}
		\sum_{k=1}^K \sum_{h=1}^H\PB_h(\overoptV_{h+1}^k - \overpesV_{h+1}^{k})(s_{h, k}, a_{h, k}) 
		&	\le  3 H \sqrt{\iota} \cdot  \sqrt{\HM \cdot \sum_{k=1}^K \sum_{h=1}^H \PB_h(\overoptV_{h+1}^k - \overpesV_{h+1}^{k})(s_{h, k}, a_{h, k})  }  \\
		& \qquad +
		6H\beta \sum_{k=1}^K \sum_{h=2}^H  \|\Bphi_{h, k}\|_{\H_{h, k-1}^{-1} } + 10 H \HM \iota.
	\end{align*}
	Using the inequality that $x \le 2(a^2 +  b^2)$ for any $x \le |a| \sqrt{x} + b^2$, we have
	\begin{equation*}
		\sum_{k=1}^K \sum_{h=1}^H\PB_h(\overoptV_{h+1}^k - \overpesV_{h+1}^{k})(s_{h, k}, a_{h, k}) 
		\le 12H\beta \sum_{k=1}^K \sum_{h=1}^H  \|\Bphi_{h, k}\|_{\H_{h, k-1}^{-1} } + 38 H^2 \HM \iota.
	\end{equation*}
\end{proof}

\subsection{Proof of Lemma~\ref{lem:sum-bonus}}
\label{proof:sum-bonus}

\begin{proof}[Proof of Lemma~\ref{lem:sum-bonus}]
	Recall that $b_{h, k} = \max\{
	\|\Bphi_{h, k}\|_{\H_{h, k-1}^{-1}}, \|\TBphi_{h, k}\|_{\TH_{h, k-1}^{-1}}
	\}, w_{h, k} = \sigma_{h,k}^{-1} \|\Bphi_{h, k}\|_{\H_{h, k-1}^{-1}}$ and $\tw_{h, k} = \sigma_{h,k}^{-1} \|\TBphi_{h, k}\|_{\TH_{h, k-1}^{-1}}$.
	As a result, we have $\sigma_{h, k}^{-1} b_{h, k} = \max \left\{ w_{h, k}, \tw_{h, k} \right\}$.
	On the other hand,
	\begin{equation}
 \tag{\ref{eq:sigma}}
		\sigma_{h,k}^2 = \max\left\{  \sigma_{\min}^2,
		d^3H \cdot E_{h, k}, J_{h, k}, 
		c_0^{-2}b_{h, k}^2, \left( \frac{W}{\sqrt{c_1 d}} + \HM d^{2.5}H \right) b_{h, k}
		\right\}.
	\end{equation}	
	Based on what value $\sigma_{h, k}$ takes, we compose the full index set $\IM := [H] \times [K]$ into three disjoint sets with ties broken arbitrarily:
	\begin{gather*}
		\JM_{1} 
		= \left\{
		(h, k) \subseteq [H] \times [K]: 	\sigma_{h,k}^2 \in \left\{  \sigma_{\min}^2,
		d^3H \cdot E_{h, k}, U_{h, k} \right\}
		\right\}, \\
		\JM_{2} 
		= \left\{
		(h, k) \subseteq [H] \times [K]: \sigma_{h,k}^2 =c_0^{-2} b_{h, k}^2
		\right\}, \\
		\JM_{3} 
		= \left\{
		(h, k) \subseteq [H] \times [K]: 	\sigma_{h,k}^2 = \left( \frac{W}{\sqrt{c_1 d}} + \HM d^{2.5}H \right) b_{h, k}
		\right\}.
	\end{gather*}
	For simplicity, we denote $z_{h, k} :=\frac{b_{h, k}}{\sigma_{h, k}} =  \max\{w_{h, k}, \tw_{h, k}\}$.
	Therefore,
	\begin{align}
		\label{eq:three-terms}
		\sum_{k=1}^K \sum_{h=1}^H b_{h, k}
		=\sum_{(h, k) \in \IM}\sigma_{h, k} z_{h, k} =\sum_{i=1}^3 \sum_{(h, k) \in \JM_i}\sigma_{h, k} z_{h, k}.
	\end{align}
	Recall that  $\kappa = d \log\left(1 + \frac{K}{d\lambda \sigma_{\min}^2}\right)$, we have $\sum_{(h, k) \in \IM } z_{h, k}^2 \le4 H \kappa$.
	This is because
	\begin{align*}
		\sum_{(h, k) \in \IM } z_{h, k}^2
		&\le \sum_{(h, k) \in \IM } \left(w_{h, k}^2 +\tw_{h, k}^2\right) \overset{(a)}{=}\sum_{k=1}^K \sum_{h=1}^H \min \left\{1, w_{h, k}^2 \right\} + \sum_{k=1}^K \sum_{h=1}^H \min \left\{1, \tw_{h, k}^2 \right\}\\
		&\overset{(b)}{\le} 4Hd \log \left( 1 + \frac{K}{d\lambda \sigma_{\min}^2} \right) =4 H \kappa.
	\end{align*}
	where $(a)$ uses $z_{h, k} \le c_0 \le 1$ due to $\sigma_{h, k} \ge c_0^{-1} b_{h, k}, c_0 \le 1$ and $(b)$ uses Lemma~\ref{lem:w-sum}.
	We will frequently use the above inequality.
	
	Now, we are ready to analyze the three terms in the RHS of~\eqref{eq:three-terms} respectively.
	\begin{itemize}[leftmargin=*]
		\item For the first term, it follows that
		\begin{align*}
			\sum_{(h, k) \in \JM_1}\sigma_{h, k}z_{h, k}
			&\le  \sqrt{ \sum_{(h, k) \in \JM_1}\sigma_{h, k}^2} \sqrt{ \sum_{(h, k) \in \JM_{1}}z_{h, k}^2}\\
			&\le \sqrt{ \sum_{(h, k) \in \JM_1}  (\sigma_{\min}^2 + 
				d^3H\cdot E_{h, k} +  J_{h, k}) } \sqrt{ \sum_{(h, k) \in \JM_1 }z_{h, k}^2}\\
			&\le  \sqrt{ \sum_{(h, k) \in \IM}  (\sigma_{\min}^2 + 
				d^3H \cdot E_{h, k} +  J_{h, k}) } \sqrt{ \sum_{(h, k) \in \IM }z_{h, k}^2}\\
			&\le  \sqrt{ HK\sigma_{\min}^2 +  \sum_{(h, k) \in \IM} 
				(d^3H \cdot E_{h, k} +  J_{h, k}) } \cdot \sqrt{ 4H \kappa}.
		\end{align*}
		
		We provide a upper bound for $\sum_{k=1}^K \sum_{h=1}^HE_{h, k}$ in Lemma~\ref{lem:sum-E} whose proof is deferred in Appendix~\ref{proof:sum-E}.
		\begin{lem}[Sum of $E_{h, k}$]
			\label{lem:sum-E}
			On the event $\BM_0 \cap \AM_0$,
			\[
			\sum_{k=1}^K \sum_{h=1}^HE_{h, k} = \OM \left(
			(\beta_0+H\beta) \HM\cdot \sum_{k=1}^K \sum_{h=1}^H
			\|\Bphi_{h, k}\|_{\H_{h, k-1}^{-1}}
			+ H^2 \HM^2 \log\frac{4\ceil{\log_2 HK}}{\delta}
			\right) .
			\]
			where $\OM(\cdot)$ hides universal positive constants.
		\end{lem}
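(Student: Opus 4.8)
The plan is to show that Lemma~\ref{lem:sum-E} is, up to event bookkeeping, an immediate consequence of the refined gap estimate of Lemma~\ref{lem:opt-pes-gap}, whose deterministic conclusion is in force on $\AM_0$. First I would peel off the truncation in~\eqref{eq:E}: since $\overoptV_{h+1}^k(\cdot)\ge\overpesV_{h+1}^k(\cdot)\ge 0$ by construction, replacing $E_{h,k}=\min\{\HM^2,\cdot\}$ by its second argument only enlarges it, so $E_{h,k}\le 2\HM\beta_0\|\Bphi_{h,k}\|_{\H_{h,k-1}^{-1}}+\HM\,[\HPB_{h,k}(\overoptV_{h+1}^k-\overpesV_{h+1}^k)](s_{h,k},a_{h,k})$.

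Next I would pass from the empirical kernel $\HPB_{h,k}(\cdot|s,a)=\Bmu_{h,k-1}^\top\Bphi(s,a)$ to the true kernel $\PB_h$ using the coarse confidence event $\BM_0$. On $\BM_0$ one has $\|(\Bmu_h^*-\Bmu_{h,k-1})\overoptV_{h+1}^k\|_{\H_{h,k-1}}\le\beta_0$ and $\|(\Bmu_h^*-\Bmu_{h,k-1})\overpesV_{h+1}^k\|_{\H_{h,k-1}}\le\beta_0$, so a triangle inequality together with Cauchy--Schwarz gives $[(\HPB_{h,k}-\PB_h)(\overoptV_{h+1}^k-\overpesV_{h+1}^k)](s_{h,k},a_{h,k})\le 2\beta_0\|\Bphi_{h,k}\|_{\H_{h,k-1}^{-1}}$. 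Summing the two resulting bounds over $h\in[H]$ and $k\in[K]$ yields $\sum_{k,h}E_{h,k}\le 4\HM\beta_0\sum_{k,h}\|\Bphi_{h,k}\|_{\H_{h,k-1}^{-1}}+\HM\sum_{k,h}[\PB_h(\overoptV_{h+1}^k-\overpesV_{h+1}^k)](s_{h,k},a_{h,k})$.

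Finally I would invoke Lemma~\ref{lem:opt-pes-gap} (available on $\AM_0$) to bound the last sum by $12H\beta\sum_{k,h}\|\Bphi_{h,k}\|_{\H_{h,k-1}^{-1}}+38H^2\HM\log\frac{4\ceil{\log_2 HK}}{\delta}$, substitute this in, and collect terms; since $\beta=\beta_R+\beta_V$, this gives $\sum_{k,h}E_{h,k}=\OM\big((\beta_0+H\beta)\HM\sum_{k,h}\|\Bphi_{h,k}\|_{\H_{h,k-1}^{-1}}+H^2\HM^2\log\frac{4\ceil{\log_2 HK}}{\delta}\big)$, which is exactly the claimed estimate.

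There is no genuine obstacle here: no new concentration inequality is needed, as all the probabilistic content is already packaged into the events $\BM_0$ and $\AM_0$. The only point requiring care is the event bookkeeping—one must confirm that $\BM_0\cap\AM_0$ simultaneously licenses the $\HPB\to\PB$ comparison (which relies on the two $\beta_0$-bounds contained in $\BM_0$) and the deterministic conclusion of Lemma~\ref{lem:opt-pes-gap} (which lives on $\AM_0$), and that dropping the $\min$ with $\HM^2$ is harmless since we seek only an upper bound.
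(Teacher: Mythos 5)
Your proposal is correct and follows essentially the same three-step route as the paper's own proof: drop the $\min$, convert $\HPB_{h,k}$ to $\PB_h$ at the cost of an extra $2\beta_0\|\Bphi_{h,k}\|_{\H_{h,k-1}^{-1}}$ per term using the $\BM_0$ bounds, then apply Lemma~\ref{lem:opt-pes-gap} on $\AM_0$. (The remark that dropping the $\min$ needs $\overoptV_{h+1}^k\ge\overpesV_{h+1}^k\ge 0$ is superfluous --- a minimum is always bounded above by either of its arguments --- but harmless.)
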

		We also provide a upper bound for $\sum_{k=1}^K \sum_{h=1}^H J_{h, k}$ in Lemma~\ref{lem:sum-J} whose proof is deferred in Appendix~\ref{proof:sum-J}.
		\begin{lem}[Sum of $J_{h, k}$]
			\label{lem:sum-J}
			Recall that $ J_{h, k} =  [\HVB_h R_h +  \HVB_h \overoptV_{h+1}^k](s_{h, k}, a_{h, k}) +R_{h, k} + U_{h, k}$ with $R_{h, k}, U_{h, k}$ defined in~\eqref{eq:R} and~\eqref{eq:U} respectively.
			On the event $\BM_R \cap \BM_V \cap \BM_0 \cap \AM_0$, with probability at least $1-2\delta$,
			\begin{align*}
		\sum_{k=1}^K \sum_{h=1}^H J_{h, k} 
		=\OM\left(
		\GM^*K  + [(\beta_0 + H \beta)  \HM  + \beta_{R^2}] \sum_{k=1}^K \sum_{h=1}^H  b_{h, k}+ H^2 \HM^2\log\frac{4\ceil{\log_2 HK}}{\delta} + H \sigma_{R}^2  \log \frac{1}{\delta}
		\right).
			\end{align*}
			
			where $\GM^*$ is defined in~\eqref{eq:G} and $\OM(\cdot)$ hides universal positive constants.
		\end{lem}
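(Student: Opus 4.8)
The plan is to split $J_{h,k}$ into its four summands in~\eqref{eq:J} --- the empirical reward variance $[\HVB_h R_h](s_{h,k},a_{h,k})$, the empirical value variance $[\HVB_h\overoptV_{h+1}^k](s_{h,k},a_{h,k})$, the reward slack $R_{h,k}$, and the value-variance slack $U_{h,k}$ --- and bound each separately. The two slacks are routine. Since $\|\TBphi_{h,k}\|_{\TH_{h,k-1}^{-1}}$ and $\|\Bphi_{h,k}\|_{\H_{h,k-1}^{-1}}$ are at most $b_{h,k}$, we get $\sum_{h,k}R_{h,k}\le(\beta_{R^2}+2\HM\beta_R)\sum_{h,k}b_{h,k}$. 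For $U_{h,k}$ I would use its explicit form in~\eqref{eq:U}, bound the empirical-transition term on $\BM_0$ via $\HPB_{h,k}(\overoptV_{h+1}^k-\overpesV_{h+1}^k)(s_{h,k},a_{h,k})\le\PB_h(\overoptV_{h+1}^k-\overpesV_{h+1}^k)(s_{h,k},a_{h,k})+2\beta_0\|\Bphi_{h,k}\|_{\H_{h,k-1}^{-1}}$ (Cauchy--Schwarz together with the transition confidence sets inside $\BM_0$), and then invoke Lemma~\ref{lem:opt-pes-gap} on $\AM_0$ to get $\sum_{h,k}\PB_h(\overoptV_{h+1}^k-\overpesV_{h+1}^k)(s_{h,k},a_{h,k})=\TOM(H\beta\sum_{h,k}b_{h,k}+H^2\HM)$. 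This already supplies the $[(\beta_0+H\beta)\HM+\beta_{R^2}]\sum_{h,k}b_{h,k}$ and $H^2\HM^2\log\frac{4\ceil{\log_2 HK}}{\delta}$ parts of the target. For the two empirical variances I would pass to the true ones: on $\BM_R$, $[\HVB_h R_h](s_{h,k},a_{h,k})\le[\VB_h R_h](s_{h,k},a_{h,k})+R_{h,k}$ by the definition of $\BM_R$; on $\BM_0\cap\BM_V$ (note $\BM_V\subseteq\BM_{h+1}$ for every $h$), Lemma~\ref{lem:var0} gives $[\HVB_h\overoptV_{h+1}^k](s_{h,k},a_{h,k})\le[\VB_h V_{h+1}^*](s_{h,k},a_{h,k})+U_{h,k}$. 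Hence, up to already-controlled terms, the whole sum reduces to bounding $\sum_{k=1}^K Z_k$ with $Z_k:=\sum_{h=1}^H[\VB_h R_h+\VB_h V_{h+1}^*](s_{h,k},a_{h,k})$.

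I would then write $Z_k=\EB[Z_k\mid\FM_{H,k-1}]+(Z_k-\EB[Z_k\mid\FM_{H,k-1}])$. The fluctuation is controlled by the variance-aware Freedman inequality (Lemma~\ref{lem:bern}): since $0\le Z_k\le H(\sigma_R^2+\HM^2)$ deterministically (using $\VB_h R_h\le\sigma_R^2$ and $\VB_h V_{h+1}^*\le\HM^2$), we have $\EB[Z_k^2\mid\FM_{H,k-1}]\le H(\sigma_R^2+\HM^2)\,\EB[Z_k\mid\FM_{H,k-1}]$, so an AM--GM step folds the Bernstein term into $\tfrac12\sum_k\EB[Z_k\mid\FM_{H,k-1}]$ plus an additive $\TOM(H(\sigma_R^2+\HM^2)\log\tfrac1\delta)$, avoiding any $\sqrt K$ dependence. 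It remains to bound $\sum_k\EB[Z_k\mid\FM_{H,k-1}]=K\sum_h\sum_{(s,a)}\widetilde d_h^K(s,a)[\VB_h R_h+\VB_h V_{h+1}^*](s,a)$ by $K\GM^*$ plus corrections. Using optimism $0\le V_{h+1}^{\pi_k}\le V_{h+1}^*\le\HM$ (Lemma~\ref{lem:opt-pes}) and the elementary perturbation bound $|\VB_h V_{h+1}^*-\VB_h V_{h+1}^{\pi_k}|(s,a)\le 4\HM\,\PB_h(V_{h+1}^*-V_{h+1}^{\pi_k})(s,a)$, combined with the law of total variance along the trajectory of $\pi_k$ --- valid because the random reward $R_h$ is independent of $s_{h+1}$ by Assumption~\ref{asmp:realizable} --- which gives $\sum_h\sum_{(s,a)}d_h^{\pi_k}(s,a)[\VB_h R_h+\VB_h V_{h+1}^{\pi_k}](s,a)=\Var(R_{\pi_k})\le\VM^2$ by Assumption~\ref{asmp:H-V}, we obtain $\EB[Z_k\mid\FM_{H,k-1}]\le\VM^2+4\HM\sum_h\sum_{(s,a)}d_h^{\pi_k}(s,a)\PB_h(V_{h+1}^*-V_{h+1}^{\pi_k})(s,a)$. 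Summing over $k$ and combining with the identity above through $\min\{A,B+C\}\le\min\{A,B\}+C$ (valid for $C\ge0$) yields $\sum_k\EB[Z_k\mid\FM_{H,k-1}]\le K\GM^*+4\HM\sum_k\sum_h\sum_{(s,a)}d_h^{\pi_k}(s,a)\PB_h(V_{h+1}^*-V_{h+1}^{\pi_k})(s,a)$.

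For the correction term I would use optimism once more, $V_{h+1}^*\le\overoptV_{h+1}^k$, to upper bound it by $4\HM\sum_k\sum_h\sum_{(s,a)}d_h^{\pi_k}(s,a)\PB_h(\overoptV_{h+1}^k-V_{h+1}^{\pi_k})(s,a)=4\HM\sum_k\EB[\sum_h\PB_h(\overoptV_{h+1}^k-V_{h+1}^{\pi_k})(s_{h,k},a_{h,k})\mid\FM_{H,k-1}]$; a second Freedman application (again AM--GM, the per-episode realized sum being at most $H\HM$) converts this into $\TOM(1)$ times the realized sum $\sum_{h,k}\PB_h(\overoptV_{h+1}^k-V_{h+1}^{\pi_k})(s_{h,k},a_{h,k})$ plus $\TOM(H^2\HM^2\log\tfrac1\delta)$, and the realized sum is at most $8H\beta\sum_{h,k}b_{h,k}+38H^2\HM\log\frac{4\ceil{\log_2 HK}}{\delta}$ by the second inequality of Lemma~\ref{lem:sub-gap}. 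Collecting everything --- two Freedman events costing a union bound of $2\delta$ on top of $\BM_R\cap\BM_V\cap\BM_0\cap\AM_0$, and using $\beta=\beta_R+\beta_V=\TOM(\sqrt d)$ together with $H\HM^2\log\tfrac1\delta\le H^2\HM^2\log\frac{4\ceil{\log_2 HK}}{\delta}$ --- produces exactly the stated bound. The main obstacle is the reduction of $\sum_k\EB[Z_k\mid\FM_{H,k-1}]$ to $K\GM^*$: it requires comparing $\VB_h V_{h+1}^*$ with $\VB_h V_{h+1}^{\pi_k}$ (not merely with the optimistic $\overoptV_{h+1}^k$), exploiting the $\min$-with-$\VM^2$ structure in the definition of $\GM^*$, and threading the several martingale concentrations so that all $\sqrt K$ terms are killed by AM--GM.
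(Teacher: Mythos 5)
Your proposal is correct and follows essentially the same route as the paper: the same decomposition of $J_{h,k}$, the same conversion of the empirical variances to the true ones via $\BM_R$ and Lemma~\ref{lem:var0}, the same variance-aware Freedman plus AM--GM step to kill the $\sqrt K$ terms, and the same use of the law of total variance and Lemma~\ref{lem:sub-gap} for the correction. The only (immaterial) difference is organizational: the paper derives two complete alternative bounds for $\sum_k Z_k$ (one through $\GM_0^*$, one through $\VM^2$ via its total-variance lemma) and takes their minimum at the very end, whereas you take the minimum already at the level of $\EB[Z_k\mid\FM_{H,k-1}]$ and absorb the resulting correction term with one extra martingale concentration.
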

		Putting pieces together and using $\sqrt{a+b+c} \le \sqrt{a} + \sqrt{b} + \sqrt{c}$, we have
		\begin{align}
			\begin{split}
				\label{eq:J1}
				\sum_{(h, k) \in \JM_1}  b_{h, k}&=\sum_{(h, k) \in \JM_1} \sigma_{h, k}z_{h, k}
				= \OM\left( \sqrt{ H \kappa} \cdot \sqrt{ 
					K  \left(H \sigma_{\min}^2 +   \GM^*  \right)
				} \right) \\
				& \quad +   \OM\left( \sqrt{ H \kappa} \cdot \sqrt{
				H^3d^3\HM^2\log\frac{4\ceil{\log_2 HK}}{\delta}  + H \sigma_{R}^2 \log \frac{1}{\delta}.
				}  \right) \\
				& \quad  +   \OM\left( \sqrt{ H \kappa} \cdot  \sqrt{
					[(\beta_0 + H \beta)  \HM d^3H +\beta_{R^2}] 	\sum_{(h, k) \in \IM}   b_{h, k}
				}  \right) .
			\end{split}
		\end{align}

		\item For the second term, due to $\sigma_{h,k} =c_0^{-1} b_{h, k}$, we have $z_{h, k} = b_{h, k}/\sigma_{h, k} = c_0 \le 1$ for all $(h, k) \in \JM_{2}$.
		Hence,
		\begin{align}
			\label{eq:J2}
			\sum_{(h, k) \in \JM_2}  b_{h, k}&=
			\sum_{(h, k) \in \JM_2}\sigma_{h, k}z_{h, k}
			= \frac{1}{c_0}\sum_{(h, k) \in \JM_2}\sigma_{h, k}z_{h, k}^2
			\le \frac{\sup_{(h, k) \in \IM} \sigma_{h, k} }{c_0}\sum_{(h, k) \in \JM_2}z_{h, k}^2\nonumber \\
			&\le \sup_{(h, k) \in \IM}  \frac{ \max\{ \|\Bphi_{h, k}\|_{\H_{h, k-1}^{-1}},\|\TBphi_{h, k}\|_{\TH_{h, k-1}^{-1}} \}}{c_0^2} \cdot \sum_{(h, k) \in \IM}z_{h, k}^2
			\le \frac{4H\kappa}{c_0^2 \sqrt{\lambda}}
		\end{align}
		where the last inequality uses $\|\Bphi_{h, k}\|_{\H_{h, k-1}^{-1}} \le \frac{1}{\sqrt{\lambda}}\|\Bphi_{h, k}\| \le \frac{1}{\sqrt{\lambda}}$ and  $\|\TBphi_{h, k}\|_{\TH_{h, k-1}^{-1}} \le \frac{1}{\sqrt{\lambda}}\|\TBphi_{h, k}\| \le \frac{1}{\sqrt{\lambda}}$ for any $(h, k) \in \IM$.

		\item For the third term, $\sigma_{h,k}^2 = \left( \frac{W}{\sqrt{c_1 d}} + \HM d^{2.5}H \right) b_{h, k}$ and thus $\sigma_{h,k} =  \left( \frac{W}{\sqrt{c_1 d}} + \HM d^{2.5}H \right)  z_{h, k}$.
		Hence,
		\begin{align}
			\label{eq:J3}
			\sum_{(h, k) \in \JM_3}  b_{h, k}&=
			\sum_{(h, k) \in \JM_3}\sigma_{h, k}z_{h, k}
			=  \left( \frac{W}{\sqrt{c_1 d}} + \HM d^{2.5}H  \right) \sum_{(h, k) \in \JM_3}z_{h, k}^2 \nonumber \\
			&\le  \left( \frac{W}{\sqrt{c_1 d}} + \HM d^{2.5}H  \right) \sum_{(h, k) \in \IM}z_{h, k}^2
			\le 4H \kappa \cdot \left( \frac{W}{\sqrt{c_1 d}} + \HM d^{2.5}H  \right) .
		\end{align}
	\end{itemize}
	Combing~\eqref{eq:J1},~\eqref{eq:J2} and~\eqref{eq:J3}, we have
	\[
	\sum_{(h, k) \in \IM}b_{h, k}
	= \OM \left( C + \sqrt{ H \kappa}  \sqrt{
		[(\beta_0+H\beta  ) \HM d^3 H + \beta_{R^2}]\cdot \sum_{(h, k) \in \IM} b_{h, k}
	} \right)
	\]
	where
	\begin{align*}
		C &= \sqrt{ H \kappa} \cdot \sqrt{ 
			K  \left(H \sigma_{\min}^2 +  \GM^* \right)} + H \kappa \cdot \left( \frac{W}{\sqrt{c_1 d}} + \frac{1}{c_0^2 \sqrt{\lambda}}  + \HM d^{2.5}H \right) \\
		& \quad + \sqrt{ H \kappa}  \cdot \sqrt{
		H^3 d^3 \HM^2  \log\frac{4\ceil{\log_2 HK}}{\delta} + H \sigma_{R}^2 \log \frac{1}{\delta}} .
	\end{align*}
	Using the inequality that $x \le 2(a^2 +  b^2)$ for any $x \le |a| \sqrt{x} + b^2$, we have
	\[
	\sum_{(h, k) \in \IM}b_{h, k}
	= \OM\left(
	C + H^2\HM\kappa d^3 \left(
	\beta_0+H\beta 
	\right) +  H \kappa \beta_{R^2}
	\right).
	\]
	In the following, we are going to simplify the last inequality.
	We will use $\TOM(\cdot)$ to hide logarithmic factors for simplicity.
	Notice that $\kappa = \TOM(d)$.
	By setting $\lambda = \frac{1}{\HM^2 + W^2}$, we have $\beta_R = \beta_V =  \TOM(\sqrt{d})$ and thus $\beta = \beta_V + \beta_R =  \TOM(\sqrt{d})$.
	Moreover, $\beta_{R^2} =  \TOM\left(\sqrt{d} + \sqrt{d} \frac{\sigma_{R^2}}{\sigma_{\min}}+ \sqrt{\lambda} W\right) =  \TOM\left(\sqrt{d} + \sqrt{d} \frac{\sigma_{R^2}}{\sigma_{\min}} \right)$ and $\beta_0 = \TOM\left( \frac{\sqrt{d^3H}\HM}{\sigma_{\min}} + \sqrt{d\lambda} \HM \right) =  \TOM\left( \frac{\sqrt{d^3H}\HM}{\sigma_{\min}} + \sqrt{d} \right) $.
	Therefore, 
	\begin{align*}
		\sum_{(h, k) \in \IM}b_{h, k}
		&	= \OM\left(
		C + H^2\HM\kappa d^3 \left(
		\beta_0+H\beta 
		\right) +  H \kappa \beta_{R^2}
		\right) \\
		&=  \OM(C) + \TOM \left( 
		\frac{H^{2.5}d^{5.5}\HM^2 + Hd^{1.5} \sigma_{R^2}}{\sigma_{\min}}
		+ H^3 d^{4.5} \HM + H d^{1.5} \right).
	\end{align*}
	We then analyze $C$.
	Using $\sqrt{a+b} \le \sqrt{a} + \sqrt{b}$ for non-negative numbers $a, b \ge 0$, we have
	\begin{align*}
		C = \TOM \left(
		\sqrt{d H K  \GM^*}  +
		H  d^{0.5} K^{0.5} \sigma_{\min} + H^2 d^{3.5} \HM + H^2 d^2 \HM + H d^{0.5} \sigma_{R} + Hd
		\right).
	\end{align*}
	Putting the results together, we have
	\begin{align*}
		\sum_{(h, k) \in \IM}b_{h, k}
		= \TOM \left(
		\sqrt{d H K  \GM^*}   + H  d^{0.5} K^{0.5} \sigma_{\min}  + \frac{H^{2.5}d^{5.5}\HM^2 + Hd^{1.5} \sigma_{R^2}}{\sigma_{\min}} +  H^3 d^{4.5} \HM +  H d^{0.5} \sigma_{R} + Hd^{1.5}
		\right).
	\end{align*}
\end{proof}

\subsubsection{Proof of Lemma~\ref{lem:sum-E}}
\label{proof:sum-E}
\begin{proof}[Proof of Lemma~\ref{lem:sum-E}]
	By the definition of $E_{h, k}$ in~\eqref{eq:E}, it follows that
	\begin{align*}
		\sum_{k=1}^K \sum_{h=1}^HE_{h, k}
		&\le \sum_{k=1}^K \sum_{h=1}^H
		\left[
		2\HM\beta_0 \|\Bphi_{h, k}\|_{\H_{h, k-1}^{-1}}
		+ \HM \cdot	\left[ \HPB_{h, k}(\overoptV_{h+1}^k - \overpesV_{h+1}^k)\right](s_{h, k}, a_{h, k})
		\right]\\
		&\overset{(a)}{\le} \sum_{k=1}^K \sum_{h=1}^H
		\left[
		4\HM\beta_0 \|\Bphi_{h, k}\|_{\H_{h, k-1}^{-1}}
		+ \HM \cdot	\left[ \PB_{h}(\overoptV_{h+1}^k - \overpesV_{h+1}^k)\right](s_{h, k}, a_{h, k})
		\right]\\
		&\overset{(b)}{\le}(4\beta_0+16H\beta) \HM\cdot \sum_{k=1}^K \sum_{h=1}^H
		\|\Bphi_{h, k}\|_{\H_{h, k-1}^{-1}}
		+ 38 H^2 \HM^2 \log\frac{4\ceil{\log_2 HK}}{\delta}\\
		&= \OM \left(
		(\beta_0+H\beta) \HM\cdot \sum_{k=1}^K \sum_{h=1}^H
		\|\Bphi_{h, k}\|_{\H_{h, k-1}^{-1}}
		+ H^2 \HM^2 \log\frac{4\ceil{\log_2 HK}}{\delta}
		\right) 
	\end{align*}
	where $(a)$ uses $\left| [(\HPB_{h, k}-\PB_h)\overoptV_{h+1}^k](s_{h, k}, a_{h, k})  \right|= |\langle \Bphi_{h, k}, (\Bmu_{h, k-1}-\Bmu_h^*) \overoptV_{h+1}^k \rangle| \le \beta_0  \|\Bphi_{h, k}\|_{\H_{h, k-1}^{-1}}$ on $\BM_0$ and $(b)$ follows from Lemma~\ref{lem:opt-pes-gap}.
\end{proof}

\subsubsection{Proof of Lemma~\ref{lem:sum-J}}
\label{proof:sum-J}
\begin{proof}[Proof of Lemma~\ref{lem:sum-J}]
	By Lemma~\ref{lem:CI-rewards}, on the event $\BM_R$, we have $\left| [\HVB_h \HR_h-\VB_h R_h](s_{h, k}, a_{h, k}) \right| \le R_{h, k}$ for all $h \in [H]$ and $k \in [K]$.
	By Lemma~\ref{lem:var0}, on the event $\BM_0 \cap \BM_V$, $[\HVB_h \overoptV_{h+1}^k](s_{h, k}, a_{h, k}) \le [\VB_h V_{h+1}^*](s_{h, k}, a_{h, k}) + U_{h, k}$ for all $h \in [H]$ and $k \in [K]$.
	Therefore,
	\begin{align*}
		\sum_{k=1}^K\sum_{h=1}^H J_{h, k} &\le \sum_{k=1}^K \sum_{h=1}^H  [\VB_h R_h+\VB_h V_{h+1}^*](s_{h, k}, a_{h, k})  +2 \sum_{k=1}^K \sum_{h=1}^H R_{h, k}+ 2 \sum_{k=1}^K \sum_{h=1}^H U_{h, k}\\
		& := (I) + (II) + (III).
	\end{align*}
	
	For the term $(III)$, we have
	\begin{align}
		\label{eq:sum-U}
		\sum_{k=1}^K \sum_{h=1}^H U_{h, k}
		&= \sum_{k=1}^K \sum_{h=1}^H \left[11\HM \beta_0\cdot \|\Bphi_{h, k}\|_{\H_{h, k-1}^{-1}} + 4\HM \cdot \HPB_{h, k}(\overoptV_{h+1}^k -\overpesV_{h+1}^k)(s_{h, k}, a_{h, k}) \right] \nonumber \\
		&\overset{(a)}{\le}  \sum_{k=1}^K \sum_{h=1}^H \left[19\HM \beta_0\cdot \|\Bphi_{h, k}\|_{\H_{h, k-1}^{-1}} + 4\HM \cdot \PB_{h}(\overoptV_{h+1}^k -\overpesV_{h+1}^k)(s_{h, k}, a_{h, k}) \right]\nonumber \\
		&\overset{(b)}{\le} (19\beta_0 + 64 H \beta) \HM \cdot	\sum_{k=1}^K \sum_{h=1}^H \|\Bphi_{h, k}\|_{\H_{h, k-1}^{-1}} + 152H^2 \HM^2 \log\frac{4\ceil{\log_2 HK}}{\delta},
	\end{align}
	where $(a)$ uses $\left| [(\HPB_{h, k}-\PB_h)\overoptV_{h+1}^k](s_{h, k}, a_{h, k})  \right|= |\langle \Bphi_{h, k}, (\Bmu_{h, k-1}-\Bmu_h^*) \overoptV_{h+1}^k \rangle| \le \beta_0  \|\Bphi_{h, k}\|_{\H_{h, k-1}^{-1}}$ on $\BM_0$; and $(b)$ follows from Lemma~\ref{lem:opt-pes-gap}.
	
	For the term $(II)$, we have
	\begin{equation}
		\label{eq:sum-R}
		\sum_{k=1}^K \sum_{h=1}^H R_{h, k} 
		=  \beta_{R^2} \sum_{k=1}^K \sum_{h=1}^H \| \TBphi_{h, k}\|_{\TH_{h, k-1}^{k-1}}  
		+  2\HM \beta_R  \sum_{k=1}^K \sum_{h=1}^H  \|\Bphi_{h, k}\|_{\H_{h, k-1}^{-1}}.
	\end{equation}
	
	We provide two ways to analyze the term $(I)$.
	\begin{itemize}[leftmargin=*]
		\item On one hand, we denote $X_k =  \sum_{h=1}^H [\VB_h R_h + \VB_h V_{h+1}^{*}](s_{h, k}, a_{h, k})$ for simplicity.
		Let $\GM_k:=\FM_{H, k}$ be the $\sigma$-field generated by all the random variables over the first $k$ episodes.
		Then $\pi_k$ is $\GM_{k-1}$-measurable, $X_k \ge 0$ is $\GM_k$-measurable, and $|X_k| \le H (\sigma_{R}^2+ \HM^2)$.
		Therefore, $|X_k - \EB[X_k|\GM_{k-1}]| \le H (\sigma_{R}^2+ \HM^2)$ and $\Var[X_k|\GM_{k-1}] \le H (\sigma_{R}^2+ \HM^2) \cdot \EB [X_k|\GM_{k-1}]$.
		By the variance-aware Freedman inequality in Lemma~\ref{lem:bern}, with probability at least $1-\delta$, we have
		\begin{align*}
		\sum_{k=1}^K X_k  
		&\le \sum_{k=1}^K \EB[X_k|\FM_{k-1}] 
		+ 3 \sqrt{H(\sigma_{R}^2+ \HM^2)  \sum_{k=1}^K \EB [X_k|\GM_{k-1}] \log \frac{2\ceil{\log_2K}}{\delta}} \\
		& \qquad + 5H (\sigma_{R}^2+ \HM^2) \log\frac{2\ceil{\log_2K}}{\delta} \\
		&\le 3\sum_{k=1}^K \EB[X_k|\FM_{k-1}] 
		+ 7H (\sigma_{R}^2+ \HM^2) \log\frac{2\ceil{\log_2K}}{\delta}.
		\end{align*}
		Notice that 
		\begin{align*}
		\EB[X_k|\FM_{k-1}]  &= \EB \left[ \sum_{h=1}^H [\VB_h R_h + \VB_h V_{h+1}^{*}](s_{h, k}, a_{h, k})\bigg| \GM_{k-1}\right]\\
		&=  \sum_{h=1}^H \sum_{(s, a)} d_h^{\pi_k}(s, a)[\VB_h R_h + \VB_h V_{h+1}^{*}](s, a)
		\end{align*}
		where $d_h^{\pi_k}(s, a) = \PB^{\pi_k}(s_h=s, a_h=a|s_0=s_{1,k})$ is the probability reaching $(s_{h,k}, a_{h, k}) = (s, a)$ at the $h$-th step when the agent starts from $s_{1, k}$ and follows the policy $\pi_k$.
		Therefore, we have
		\begin{align*}
		(I)  &\le 3 \sum_{k=1}^K\sum_{h=1}^H \sum_{(s, a)} d_h^{\pi_k}(s, a)[\VB_h R_h + \VB_h V_{h+1}^{*}](s, a) + 7H (\sigma_{R}^2+ \HM^2) \log\frac{2\ceil{\log_2K}}{\delta} \\
		&\le 3\GM_0^* K +  7H (\sigma_{R}^2+ \HM^2) \log\frac{2\ceil{\log_2K}}{\delta}
		\end{align*}
		where
		\[
		\GM_0^* = \frac{1}{K}\sum_{k=1}^K\sum_{h=1}^H \sum_{(s, a)} d_h^{\pi_k}(s, a)[\VB_h R_h + \VB_h V_{h+1}^{\pi_k}](s, a).
		\]
					

		\item On the other hand, we have
		\begin{align*}
			(I) &= \sum_{k=1}^K \sum_{h=1}^H \left[\VB_h V_{h+1}^* - \VB_h V_{h+1}^{\pi_k}	\right] (s_{h, k}, a_{h, k}) 
			+ \sum_{k=1}^K \sum_{h=1}^H  [\VB_h R_h + \VB_h V_{h+1}^{\pi_k}](s_{h, k}, a_{h, k})\\
			&\overset{\eqref{eq:var-diff}}{\le} 2\HM \cdot \sum_{k=1}^K \sum_{h=1}^H \PB_h (\overoptV_{h+1}^k - V_{h+1}^{\pi_k}) (s_{h, k}, a_{h, k})  + \sum_{k=1}^K \sum_{h=1}^H  [\VB_h R_h + \VB_h V_{h+1}^{\pi_k}](s_{h, k}, a_{h, k})\\
			&\le 2\HM \cdot \sum_{k=1}^K \sum_{h=1}^H \PB_h (\overoptV_{h+1}^k - V_{h+1}^{\pi_k}) (s_{h, k}, a_{h, k})  + 2 \VM^2 K + 2H ( \sigma_{R}^2 + \HM^2 )\log\frac{1}{\delta}\\
			&\le  2 \VM^2 K + 2H ( \sigma_{R}^2 + \HM^2 )\log\frac{1}{\delta} + 16H\beta \HM \sum_{k=1}^K \sum_{h=1}^H  \|\Bphi_{h, k}\|_{\H_{h, k-1}^{-1} } + 76 H^2 \HM^2 \log\frac{4\ceil{\log_2 HK}}{\delta}\\
			&\le 2 \VM^2 K  + 16H\beta \HM \sum_{k=1}^K \sum_{h=1}^H  \|\Bphi_{h, k}\|_{\H_{h, k-1}^{-1} } + 78H^2 \HM^2  \log\frac{4\ceil{\log_2 HK}}{\delta} + 2 H \sigma_{R}^2 \log \frac{1}{\delta}
		\end{align*}
		where the first inequality uses~\eqref{eq:var-diff}, the second inequality uses Lemma~\ref{lem:total-varaince}, and the third inequality uses Lemma~\ref{lem:sub-gap}.
		\begin{align}
			\label{eq:var-diff}
			\left[\VB_h V_{h+1}^* - \VB_h V_{h+1}^{\pi_k}	\right] (s_{h, k}, a_{h, k}) 
			&= \PB_h [ V_{h+1}^*]^2 (s_{h, k}, a_{h, k})  -[ \PB_h V_{h+1}^*(s_{h, k}, a_{h, k}) ]^2 \nonumber \\
			&\quad - \left(
			\PB_h [ V_{h+1}^{\pi_k}]^2 (s_{h, k}, a_{h, k})  -[ \PB_h V_{h+1}^{\pi_k}(s_{h, k}, a_{h, k}) ]^2 \right) \nonumber \\
			&\overset{(a)}{\le}  \PB_h [ V_{h+1}^*]^2 (s_{h, k}, a_{h, k}) - \PB_h [ V_{h+1}^{\pi_k}]^2 (s_{h, k}, a_{h, k}) \nonumber \\
			&\overset{(b)}{\le} 2\HM \cdot \PB_h (V_{h+1}^* - V_{h+1}^{\pi_k}) (s_{h, k}, a_{h, k}) \nonumber \\
			&\overset{(c)}{\le} 2\HM \cdot \PB_h (\overoptV_{h+1}^k - V_{h+1}^{\pi_k}) (s_{h, k}, a_{h, k})
		\end{align}
		where $(a)$ uses $ V_{h+1}^*(\cdot) \ge  V_{h+1}^{\pi_k}(\cdot)$, $(b)$ uses $V_{h+1}^{\pi_k}(\cdot) \le V_{h+1}^*(\cdot) \le \HM$, and $(c)$ uses Lemma~\ref{lem:opt-pes}.
	\end{itemize}
	
	Finally, we are going to put pieces together.
	In order to simplicity notation, we use $b_{h, k} = \max\{
	\|\Bphi_{h, k}\|_{\H_{h, k-1}^{-1}}, \|\TBphi_{h, k}\|_{\TH_{h, k-1}^{-1}}
	\}$ and $\beta = \beta_V + \beta_R$.
	From the first bullet point, we have
	\begin{align*}
		\sum_{k=1}^K \sum_{h=1}^H J_{h, k}
	=\OM\left(
		\GM_0^* \cdot K + [(\beta_0 + H \beta) \HM +  \beta_{R^2} ]\cdot	\sum_{k=1}^K \sum_{h=1}^Hb_{h, k}+ H^2 \HM^2\log\frac{4\ceil{\log_2 HK}}{\delta} + H \sigma_{R}^2  \log \frac{1}{\delta}
	\right) .
	\end{align*}
		From the second bullet point, we have
	\begin{align*}
		\sum_{k=1}^K \sum_{h=1}^H J_{h, k}
		=\OM\left(
		\VM^2 K  + [(\beta_0 + H \beta)  \HM  + \beta_{R^2}] \sum_{k=1}^K \sum_{h=1}^H  b_{h, k}+ H^2 \HM^2\log\frac{4\ceil{\log_2 HK}}{\delta} + H \sigma_{R}^2  \log \frac{1}{\delta}
		\right) .
	\end{align*}
Taking minimum of the last two inequalities and using $\min\left\{ \GM_0^* , \VM^2
\right\}  \le \GM^*$ complete the proof.
\end{proof}

\subsubsection{Proof of Lemma~\ref{lem:total-varaince}}
\begin{lem}[Total variance lemma]
	\label{lem:total-varaince}
	With probability at least $1-\delta$, we have
	\[
	\sum_{k=1}^K \sum_{h=1}^H [\VB_h R_h + \VB_h V_{h+1}^{\pi_k}](s_{h, k}, a_{h, k})  \le  2 \VM^2 K + 2H (\sigma_{R}^2+ \HM^2) \log\frac{1}{\delta} .
	\]
\end{lem}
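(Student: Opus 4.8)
The plan is to recognize the left-hand side as a sum over episodes of a per-episode quantity whose conditional expectation is exactly a trajectory variance, and then to apply a variance-aware Bernstein (Freedman) estimate. Write $X_k := \sum_{h=1}^H [\VB_h R_h + \VB_h V_{h+1}^{\pi_k}](s_{h,k},a_{h,k})$ and let $\GM_{k-1} := \FM_{H,k-1}$, so that $\pi_k$ and $s_{1,k}$ are $\GM_{k-1}$-measurable while $X_k$ is $\GM_k$-measurable. Using $[\VB_h R_h]\le\sigma_{R}^2$ and $[\VB_h V_{h+1}^{\pi_k}]\le\HM^2$ (the latter because $V_{h+1}^{\pi_k}$ takes values in $[0,\HM]$ under Assumption~\ref{asmp:H-V}), one gets the deterministic range bound $0\le X_k\le H(\sigma_{R}^2+\HM^2)=:R_0$.

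The first key step is the law of total variance for a single episode. Conditioned on $\GM_{k-1}$, the episode-$k$ trajectory is generated by running $\pi_k$ from $s_{1,k}$; I would introduce the Doob martingale $M_h = \sum_{i<h} r_{i,k} + V_h^{\pi_k}(s_{h,k})$ and note, via the Bellman identity $V_h^{\pi_k}(s_{h,k}) = [\PB_h R_h](s_{h,k},a_{h,k}) + [\PB_h V_{h+1}^{\pi_k}](s_{h,k},a_{h,k})$, that its increments split as
\[
M_{h+1}-M_h = \bigl(r_{h,k} - [\PB_h R_h](s_{h,k},a_{h,k})\bigr) + \bigl(V_{h+1}^{\pi_k}(s_{h+1,k}) - [\PB_h V_{h+1}^{\pi_k}](s_{h,k},a_{h,k})\bigr).
\]
By Assumption~\ref{asmp:realizable}(1) the reward noise and the transition noise are independent given $(s_{h,k},a_{h,k})$, so $\EB[(M_{h+1}-M_h)^2\mid\FM_{h,k}] = [\VB_h R_h + \VB_h V_{h+1}^{\pi_k}](s_{h,k},a_{h,k})$. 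Summing over $h$ and taking conditional expectations gives $\EB[X_k\mid\GM_{k-1}] = \Var(R_{\pi_k})\le\VM^2$, the final inequality by Assumption~\ref{asmp:H-V}(2); in particular $\sum_{k=1}^K\EB[X_k\mid\GM_{k-1}]\le\VM^2 K$ almost surely.

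The second step is concentration. Since $0\le X_k\le R_0$ we have $\Var[X_k\mid\GM_{k-1}]\le\EB[X_k^2\mid\GM_{k-1}]\le R_0\,\EB[X_k\mid\GM_{k-1}]$, hence $\sum_{k=1}^K\Var[X_k\mid\GM_{k-1}]\le R_0\VM^2 K$ deterministically. Applying the one-sided variance-aware Freedman inequality (Lemma~\ref{lem:bern}) with this fixed variance budget — no peeling is needed, so only $\log\frac1\delta$ appears — yields, with probability at least $1-\delta$,
\[
\sum_{k=1}^K X_k \le \VM^2 K + \sqrt{2 R_0\VM^2 K\log\tfrac1\delta} + \tfrac23 R_0\log\tfrac1\delta .
\]
Finishing is routine: AM--GM gives $\sqrt{2R_0\VM^2 K\log\frac1\delta}\le\frac12\VM^2 K + R_0\log\frac1\delta$, so $\sum_{k=1}^K X_k\le 2\VM^2 K + 2R_0\log\frac1\delta = 2\VM^2 K + 2H(\sigma_{R}^2+\HM^2)\log\frac1\delta$, which is the claim.

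The main obstacle is the first step: one must state the law-of-total-variance decomposition cleanly for the $H$-horizon heavy-tailed-reward setting and check that the per-step martingale increment variance equals exactly $[\VB_h R_h + \VB_h V_{h+1}^{\pi_k}]$ with no cross term — this is precisely where Assumption~\ref{asmp:realizable}(1) (independence of $R_h(s,a)$ and $s_{h+1}(s,a)$) is invoked. Everything after that is a textbook Bernstein estimate, so I expect the write-up to be short once the decomposition is in place.
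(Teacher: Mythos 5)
Your proof is correct and takes essentially the same route as the paper's: the identical per-episode law-of-total-variance identity $\EB[X_k\mid\GM_{k-1}]=\Var(R_{\pi_k})\le\VM^2$ (with the cross term killed by Assumption~\ref{asmp:realizable}(1)), the same self-bounding estimate $\sum_k\Var[X_k\mid\GM_{k-1}]\le H(\sigma_{R}^2+\HM^2)\VM^2 K$, and the same Freedman-plus-AM--GM finish. One cosmetic remark: the concentration bound you actually write down is the plain Freedman inequality (Lemma~\ref{lem:freedman}), which is also what the paper invokes, rather than the peeled variance-aware version of Lemma~\ref{lem:bern} that you cite.
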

\begin{proof}[Proof of Lemma~\ref{lem:total-varaince}]
	The	proof uses a similar argument as Lemma C.5 in~\citep{jin2018q}.
	Notice that the first state $s_{1, k}$ is fixed and $a_{h, k} = \pi_h^k(s_{h, k})$.
	Therefore, $(s_{2, k}, \cdots, s_{H, k})$ is a sequence generated by following policy $\pi_k$ starting at $s_{1, k}$.
	Let $\GM_k$ be the $\sigma$-field generated by all the random variables over the first $k$ episodes.
	$X_k =  \sum_{h=1}^H [\VB_h R_h + \VB_h V_{h+1}^{\pi_k}](s_{h, k}, a_{h, k})$.
	We have the following properties about $X_k$.
	Clearly $\pi_k$ is $\GM_{k-1}$-measurable, $X_k \ge 0$ is $\GM_k$-measurable, and $|X_k| \le H (\sigma_{R}^2+ \HM^2)$.
	
	Let $\EB_k(\cdot) := \EB[\cdot|\GM_k]$ for simplicity.
	\begin{align*}
		\VM^2 &\ge \EB_{k-1}\left[ \sum_{h=1}^H R_h(s_{h, k}, a_{h, k}) - V_1^{\pi_k}(s_{1, k})\right]^2\\
		& \overset{(a)}{=}\EB_{k-1}\left[ \sum_{h=1}^H \left( R_h(s_{h, k}, a_{h, k})  + V_{h+1}^{\pi_k}(s_{{h+1}, k}) - V_h^{\pi_k}(s_{h, k})  \right) \right]^2\\
		& \overset{(b)}{=} \sum_{h=1}^H \EB_{k-1}\left[R_h(s_{h, k}, a_{h, k})  + V_{h+1}^{\pi_k}(s_{{h+1}, k}) - V_h^{\pi_k}(s_{h, k})  \right]^2\\
		& \overset{(c)}{=} \sum_{h=1}^H \EB_{k-1} \left[ [R_h-r_h]^2(s_{h, k}, a_{h, k})
		+\left[r_h(s_{h, k}, a_{h, k})  + V_{h+1}^{\pi_k}(s_{{h+1}, k}) - V_h^{\pi_k}(s_{h, k})  \right]^2 \right]\\
		&\overset{(d)}{=}  \EB_{k-1}\sum_{h=1}^H  [\VB_h R_h+ \VB_h V_{h+1}^{\pi_k}](s_{h, k}, a_{h, k}) = \EB[X_k|\FM_{k-1}]
	\end{align*}
	where $(a)$ uses $V_{H+1}^{\pi_k}(\cdot) = 0$, $(b)$ uses the independence due to the Markov
	property, $(c)$ holds since $R_h(s_{h, k}, a_{h, k})$ is independent with $s_{h+1, k}$ conditioning on $(s_{h, k}, a_{h, k})$, and $(d)$ uses $V_h^{\pi_k}(s_{h, k}) = r_h(s_{h, k}, a_{h, k})  +\EB_{s_{h+1,k} \sim \PB_h(\cdot|s_{h, k}, a_{h, k})}[V_{h+1}^{\pi_k}(s_{h+1,k})]$.
	Using $\Var[X_k|\GM_{k-1}] \le H (\sigma_{R}^2+ \HM^2) \cdot \EB [X_k|\GM_{k-1}]$, we have
	\begin{align*}
		\sum_{k=1}^K \Var[X_k|\GM_{k-1}]
		\le H (\sigma_{R}^2+ \HM^2) \cdot	\sum_{k=1}^K  \EB [X_k|\GM_{k-1}] \le (\sigma_{R}^2+ \HM^2) \VM^2 HK.
	\end{align*}
	By the Freedman inequality in Lemma~\ref{lem:freedman}, with probability at least $1-\delta$, we have
	\begin{align*}
		\sum_{k=1}^K &\sum_{h=1}^H [\VB_h R_h + \VB_h V_{h+1}^{\pi_k}](s_{h, k}, a_{h, k}) \\
		&= \sum_{k=1}^K X_k   \le \sum_{k=1}^K \EB[X_k|\FM_{k-1}] 
		+ \sqrt{2	(\sigma_{R}^2+ \HM^2) \VM^2 HK \log \frac{1}{\delta}}
		+ \frac{2}{3} H (\sigma_{R}^2+ \HM^2) \log\frac{1}{\delta} 	\\
		&\le \VM^2K + 2 \sqrt{  \VM^2K \cdot H (\sigma_{R}^2+ \HM^2) \log\frac{1}{\delta} 	 } + \frac{2}{3} H (\sigma_{R}^2+ \HM^2) \log\frac{1}{\delta} 	\\
		&\le 2 \VM^2 K + 2H (\sigma_{R}^2+ \HM^2) \log\frac{1}{\delta} .
	\end{align*}
\end{proof}

\section{Auxiliary Lemmas}
\label{proof:auxiliary}
\subsection{Concentration Inequalities}
\begin{lem}[Freedman inequality~\citep{freedman1975tail}]
	\label{lem:freedman}
	Let $\{ X_t \}_{t \in [T]}$ be a stochastic process that adapts to the filtration $\FM_t$ so that $X_t$ is $\FM_t$-measurable, $\EB[X_t|\FM_{t-1}] = 0$, $|X_t| \le M$ and $\sum_{t=1}^T \EB[X_t^2|\FM_{t-1}] \le V$ where $M>0$ and $V > 0$ are positive constants.
	Then with probability at least $1-\delta$, we have
	\[
	\sum_{t=1}^T X_t \le \sqrt{2V\ln\frac{1}{\delta}} + \frac{2M}{3}\ln\frac{1}{\delta}.
	\]
\end{lem}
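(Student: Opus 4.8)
The final statement to prove is the Freedman inequality (Lemma~\ref{lem:freedman}), a classical concentration bound for martingale difference sequences with bounded increments and bounded conditional-variance sum.

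\textbf{Overall approach.} The plan is to prove this via the standard exponential supermartingale (Chernoff) method. The key idea is to construct, for a suitable parameter $\lambda > 0$, a supermartingale of the form $M_t = \exp\big(\lambda \sum_{s=1}^t X_s - g(\lambda)\sum_{s=1}^t \EB[X_s^2|\FM_{s-1}]\big)$ for an appropriate function $g$, show $\EB[M_t \mid \FM_{t-1}] \le M_{t-1}$, and then apply Markov's inequality together with $\EB[M_T] \le M_0 = 1$. Optimizing over $\lambda$ yields the stated bound.

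\textbf{Key steps.} First I would establish the pointwise inequality for the moment generating function of a single bounded centered increment: for any random variable $Y$ with $\EB[Y \mid \FM_{t-1}] = 0$ and $|Y| \le M$, one has $\EB[e^{\lambda Y}\mid\FM_{t-1}] \le \exp\!\big(\phi(\lambda M)\,\EB[Y^2\mid\FM_{t-1}]/M^2\big)$ where $\phi(u) = e^u - 1 - u$; this follows from the elementary bound $e^x \le 1 + x + (e^M-1-M)x^2/M^2$ for $x \le M$ (convexity of $(e^x-1-x)/x^2$), taking conditional expectation and using $1+z \le e^z$. Second, with $X_t$ playing the role of $Y$, I would define $M_t = \exp\!\big(\lambda\sum_{s\le t}X_s - \frac{\phi(\lambda M)}{M^2}\sum_{s\le t}\EB[X_s^2\mid\FM_{s-1}]\big)$ and verify it is a supermartingale with $\EB[M_0] = 1$. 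Third, using the almost-sure bound $\sum_{s=1}^T \EB[X_s^2\mid\FM_{s-1}] \le V$, I would conclude $\EB\big[\exp(\lambda\sum_{s\le T}X_s - \frac{\phi(\lambda M)}{M^2}V)\big] \le 1$. Fourth, Markov's inequality gives $\PB\big(\sum_{s\le T}X_s \ge t\big) \le \exp\!\big(\frac{\phi(\lambda M)}{M^2}V - \lambda t\big)$ for every $\lambda > 0$. Finally I would optimize: choosing $\lambda = \frac{1}{M}\log\!\big(1 + \frac{Mt}{V}\big)$ and using the standard inequality $\psi^{-1}(x) \le \sqrt{2x} + x/3$ where $\psi(\lambda) = \lambda^2/(2(1-\lambda/3))$ (equivalently, solving the resulting Bernstein-type quadratic) yields that with probability at least $1-\delta$, $\sum_{s\le T}X_s \le \sqrt{2V\log(1/\delta)} + \tfrac{2M}{3}\log(1/\delta)$.

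\textbf{Main obstacle.} None of the steps is genuinely hard — this is a textbook result and the authors explicitly attribute it to \citep{freedman1975tail}. The only mildly delicate point is the final algebraic inversion: one must pass cleanly from the exponential tail bound $\exp(\phi(\lambda M)V/M^2 - \lambda t) \le \delta$ to the explicit form $\sqrt{2V\log(1/\delta)} + \frac{2M}{3}\log(1/\delta)$, which requires the bound $\phi(u) \le \frac{u^2}{2(1-u/3)}$ for $0 \le u < 3$ and then a careful choice of $\lambda$; alternatively one invokes the known fact that the Cramér transform of $\psi(\lambda) = \lambda^2/(2(1-\lambda/3))$ inverts to $h(t)$ with $h^{-1}(x) \le \sqrt{2x}+x/3$. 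Since the lemma is quoted rather than developed, in the paper this would simply be cited, and the "proof" is really just a pointer to \citep{freedman1975tail}; I would present the self-contained argument above only if a full proof were wanted.
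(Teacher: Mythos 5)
Your proposal is the standard and correct proof of Freedman's inequality via the exponential supermartingale $\exp\big(\lambda\sum_s X_s - \tfrac{\phi(\lambda M)}{M^2}\sum_s \EB[X_s^2|\FM_{s-1}]\big)$, Markov's inequality, and optimization over $\lambda$; the paper itself offers no proof and simply cites \citet{freedman1975tail}, exactly as you anticipated. The only remark worth making is on the final inversion: the sharp Bennett-type inversion $\psi^{*-1}(u)\le\sqrt{2u}+u/3$ actually yields the constant $\tfrac{M}{3}\ln\tfrac{1}{\delta}$, which is stronger than the stated $\tfrac{2M}{3}\ln\tfrac{1}{\delta}$; the latter arises from the cruder route of solving the quadratic $t^2 = 2(V+Mt/3)\ln\tfrac{1}{\delta}$ and applying $\sqrt{a+b}\le\sqrt{a}+\sqrt{b}$, so either way the lemma as stated follows.
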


\begin{lem}[Variance-aware Freedman inequality]
	\label{lem:bern}
	Let $\{ X_t \}_{t \in [T]}$ be a stochastic process that adapts to the filtration $\FM_t$ so that $X_t$ is $\FM_t$-measurable, $\EB[X_t|\FM_{t-1}] = 0$, $|X_t| \le M$ and $\sum_{t=1}^T \EB[X_t^2|\FM_{t-1}] \le V^2$ where $M>0$ and $V > 0$ are positive constants.
	Then with probability at least $1-\delta$, we have
	\[
	\left|\sum_{t=1}^T X_t\right|  \le 3 \sqrt{\sum_{t=1}^T \EB[X_t^2|\FM_{t-1}]  \cdot \log\frac{2K}{\delta}} + 5M\log\frac{2K}{\delta}
	\]
	where $K = 1+\ceil{2\log_2 \frac{V}{M} } $.
\end{lem}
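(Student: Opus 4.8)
The plan is to prove this by a dyadic peeling (stratification) argument on the random conditional-variance proxy $W_T := \sum_{t=1}^T \EB[X_t^2\mid\FM_{t-1}]$, reducing everything to the deterministic-variance Freedman inequality (Lemma~\ref{lem:freedman}). First I would note $0 \le W_T \le V^2$ by hypothesis and introduce the events $\EM_0 := \{W_T \le M^2\}$ and, for $j = 1,\dots,J$ with $J := \ceil{2\log_2(V/M)}$ (taking $J := 0$ when $V \le M$), $\EM_j := \{M^2 2^{j-1} < W_T \le M^2 2^j\}$. Since $M^2 2^J \ge V^2$, these $J+1 = K$ events partition the sample space, and on $\EM_j$ one has the \emph{pointwise} bound $\sum_{t=1}^T \EB[X_t^2\mid\FM_{t-1}] \le V_j^2$, where $V_j^2 := M^2$ for $j=0$ and $V_j^2 := M^2 2^j$ for $j \ge 1$.

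Next, for each fixed $j \in \{0,1,\dots,K-1\}$ I would apply Lemma~\ref{lem:freedman} twice, once to $\{X_t\}$ and once to $\{-X_t\}$, with the deterministic variance bound $V_j^2$ and confidence $\delta/(2K)$; a union bound over the $2K$ resulting events shows that, with probability at least $1-\delta$, simultaneously for every $j$,
\[
\Big|\sum_{t=1}^T X_t\Big| \le \sqrt{2 V_j^2 \log\frac{2K}{\delta}} + \frac{2M}{3}\log\frac{2K}{\delta}.
\]
On the event $\EM_0$ I bound $\sqrt{2 V_0^2\log\frac{2K}{\delta}} = M\sqrt{2\log\frac{2K}{\delta}} \le 5M\log\frac{2K}{\delta}$ (valid since $\log\frac{2K}{\delta}\ge\log 2$), so the whole right-hand side is at most $3\sqrt{W_T\log\frac{2K}{\delta}} + 5M\log\frac{2K}{\delta}$. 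On $\EM_j$ with $j \ge 1$ I use $W_T > M^2 2^{j-1} = V_j^2/2$, i.e. $V_j^2 < 2W_T$, which turns the bound into $2\sqrt{W_T\log\frac{2K}{\delta}} + \frac{2M}{3}\log\frac{2K}{\delta} \le 3\sqrt{W_T\log\frac{2K}{\delta}} + 5M\log\frac{2K}{\delta}$. Since exactly one $\EM_j$ occurs on any sample point, this gives the stated inequality with $W_T = \sum_{t=1}^T\EB[X_t^2\mid\FM_{t-1}]$.

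The main obstacle is bookkeeping rather than anything deep: making the number of dyadic strata match $K = 1+\ceil{2\log_2(V/M)}$ exactly, correctly handling the low-variance stratum $\EM_0$ (where $\sqrt{W_T}$ can vanish and must be absorbed into the $M\log$ term) together with the degenerate case $V \le M$, and tracking the factor-$2$ losses from the two-sided extension and the union bound so that the final confidence is $\delta$ and the logarithmic argument is precisely $2K/\delta$. I would also double-check that on each $\EM_j$ the conditional variances obey the deterministic bound $V_j^2$ \emph{pointwise} on that event — which they do, so Lemma~\ref{lem:freedman} applies verbatim after restricting attention to $\EM_j$ — and that the constant $3$ and $5$ in the conclusion comfortably absorb the $\sqrt 2$ and $2/3$ constants produced above.
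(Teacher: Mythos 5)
Your plan is a genuinely different route from the paper's: the paper does no peeling of its own, but simply quotes Theorem 5 of \citet{li2021q} (which already contains the dyadic stratification) and finishes with $\max\{a,b\}\le a+b$, $\sqrt{a+b}\le\sqrt a+\sqrt b$ and $\ln\frac{2K}{\delta}\ge 1$. Reconstructing the peeling directly from Lemma~\ref{lem:freedman} is more self-contained, and your bookkeeping is right: $M^2 2^{J}\ge V^2$ so the $K=J+1$ strata cover $[0,V^2]$, the union bound over $2K$ one-sided applications yields the $\log\frac{2K}{\delta}$ factor, the stratum $\EM_0$ is absorbed into $5M\log\frac{2K}{\delta}$ using $\log\frac{2K}{\delta}\ge\log 2$, and $V_j^2<2W_T$ on $\EM_j$ for $j\ge1$ gives the factor $2\le 3$ in front of $\sqrt{W_T\log\frac{2K}{\delta}}$.

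There is, however, one genuine gap: the claim that Lemma~\ref{lem:freedman} ``applies verbatim after restricting attention to $\EM_j$.'' The event $\EM_j=\{M^2 2^{j-1}<W_T\le M^2 2^{j}\}$ depends on the entire trajectory (it is $\FM_{T-1}$-measurable, not $\FM_{t-1}$-measurable for each $t$), and conditioning on such a terminal-time event destroys the martingale-difference property $\EB[X_t\mid\FM_{t-1}]=0$; the hypothesis of Lemma~\ref{lem:freedman} is an \emph{almost-sure} bound on $\sum_t\EB[X_t^2\mid\FM_{t-1}]$, not a bound that holds merely on some event. The fix is standard: since $W_t:=\sum_{s\le t}\EB[X_s^2\mid\FM_{s-1}]$ is $\FM_{t-1}$-measurable and nondecreasing in $t$, set $\widetilde X_t^{(j)}:=X_t\,\1\{W_t\le V_j^2\}$. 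This is still a bounded martingale difference sequence, and $\sum_t\EB[(\widetilde X_t^{(j)})^2\mid\FM_{t-1}]\le V_j^2$ holds almost surely, so Lemma~\ref{lem:freedman} applies to it unconditionally; on $\{W_T\le V_j^2\}\supseteq\EM_j$ all the indicators equal one, so $\sum_t\widetilde X_t^{(j)}=\sum_t X_t$ there and the bound transfers. With this substitution the rest of your argument and your constants go through unchanged.
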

\begin{proof}[Proof of Lemma~\ref{lem:bern}]
	By Theorem 5 in~\citep{li2021q}, we have for any positive integer $K \ge 1$, 
	\[
	\PB\left(
	\left|\sum_{t=1}^T X_t\right|  \le \sqrt{8\max\left\{  \sum_{t=1}^T \EB[X_t^2|\FM_{t-1}]  ,  \frac{V^2}{2^K}\right\}\cdot  \ln\frac{2K}{\delta}}+ \frac{4M}{3}\ln\frac{2K}{\delta}
	\right) \ge 1-\delta.
	\]
	By setting $K = 1+\ceil{2\log_2 \frac{V}{M} } $, we have $\frac{V^2}{2^K} \le M^2$.
	Using $\max\{a, b\} \le a + b$, $\sqrt{a+b} \le \sqrt{a} + \sqrt{b}$ for any $a, b \ge0$ and $\ln \frac{2K}{\delta} \ge 1$, we complete the proof.
\end{proof}

The following two lemmas are the counterpart lemmas of Theorem~\ref{thm:heavy} under light-tail assumption.
\begin{lem}[Bernstein inequality for self-normalized martingales, Lemma F.4 in~\citep{hu2022nearly}]
	\label{lem:self-bern}
	Let $\{ \GM_t\}_{t \ge 0}$ be a filtration and $ \{\x_t, \eta_t \}_{t \ge 0}$ be a stochastic process so that $\x_t \in \RB^d$ is $\GM_t$-measurable and $\eta_t \in \RB$ is $\GM_{t+1}$-measurable..
	If $\|\x_t\| \le L$ and $\{\eta_t\}_{t \ge 1}$ satisfies that $\EB[\eta_t|\GM_t] = 0$, $	\EB[\eta_t^2|\GM_t] \le \sigma^2$ and $|\eta_t \min \left\{ 1, \| \x_t\|_{\Z_{t-1}^{-1}} \right\}| \le M$ for all $ t \ge 1$.
	Then, for any $\delta \in (0, 1)$, with probability at least $1-\delta$, we have for all $t \ge 1$,
	\[
	\left\|  \sum_{j=1}^t \x_j \eta_j \right\|_{\Z_t^{-1}} \le 
	8 \sigma \sqrt{d \log\left(1+\frac{tL^2}{d\lambda}\right)\log \frac{4t^2}{\delta}} + 4 M \log \frac{4t^2}{\delta}
	\]
	where $\Z_t = \lambda \I + \sum_{j=1}^t \x_j \x_j^\top$ for $t\ge 1$ and $\Z_0 = \lambda \I$.
\end{lem}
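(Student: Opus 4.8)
The plan is to prove this the way the self-normalized Bernstein bound is established in~\citep{zhou2021nearly,hu2022nearly}; the argument mirrors (and in fact simplifies) the reasoning already used for Lemma~\ref{lem:grad}, so I would recycle that machinery. First I would reduce the uniform-in-$t$ claim to a fixed horizon: prove the bound for each fixed $t$ with $\delta$ replaced by $\delta/(2t^2)$ and then union-bound over $t\ge1$, which costs only the harmless factor $\sum_{t\ge1}t^{-2}<2$ and accounts for the $\log(4t^2/\delta)$ in the statement. For a fixed $t$, set $S_t=\sum_{j=1}^t\x_j\eta_j$ and $w_j^2=\|\x_j\|_{\Z_{j-1}^{-1}}^2$, which is $\GM_j$-measurable since $\x_j\in\GM_j$. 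Applying the Woodbury identity $\Z_t^{-1}=\Z_{t-1}^{-1}-\Z_{t-1}^{-1}\x_t\x_t^\top\Z_{t-1}^{-1}/(1+w_t^2)$ (cf.~\eqref{eq:woodbury}) and expanding $\|S_t\|_{\Z_t^{-1}}^2$ recursively, exactly as in the proof of Lemma~\ref{lem:grad} but with the robustification parameters sent to $\infty$, yields $\|S_t\|_{\Z_t^{-1}}^2\le \sum_{j=1}^t \frac{2\eta_j}{1+w_j^2}\,S_{j-1}^\top\Z_{j-1}^{-1}\x_j + \sum_{j=1}^t \eta_j^2\frac{w_j^2}{1+w_j^2}$, a cross term plus a quadratic term.

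The quadratic term is the easy one. Taking $\GM_j$-conditional expectations and using $\EB[\eta_j^2\mid\GM_j]\le\sigma^2$ bounds its predictable part by $\sigma^2\sum_{j\le t}\min\{1,w_j^2\}\le 2\sigma^2 d\log(1+tL^2/(d\lambda))$ via the elliptical-potential Lemma~\ref{lem:w-sum}; the deviation from this predictable part is controlled by the variance-aware Freedman inequality (Lemma~\ref{lem:bern}), where each increment has range $\eta_j^2\min\{1,w_j^2\}=(\eta_j\min\{1,\|\x_j\|_{\Z_{j-1}^{-1}}\})^2\le M^2$ by the hypothesis, producing the $M\log(4t^2/\delta)$ contribution. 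The cross term requires the self-bootstrapping trick already used in Lemma~\ref{lem:I1}: truncate the $j$-th increment by $\1_{A_{j-1}}$, where $A_{j-1}=\{\|S_n\|_{\Z_n^{-1}}\le\alpha_n\ \text{for all}\ n\le j-1\}$ for a nondecreasing sequence $\alpha_n$ to be chosen, so that $|S_{j-1}^\top\Z_{j-1}^{-1}\x_j|\,\1_{A_{j-1}}\le\alpha_{j-1}w_j$; Freedman then applies with increment range of order $M\alpha_{j-1}$ and conditional variance of order $\sigma^2\alpha_{j-1}^2\min\{1,w_j^2\}$, whose sum is of order $\sigma^2\max_j\alpha_j^2\cdot d\log(1+tL^2/(d\lambda))$. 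A pleasant simplification relative to Lemma~\ref{lem:grad} is that the per-step conditional mean of the cross-term increment is exactly zero here, since $\EB[\eta_j\mid\GM_j]=0$, so there is no bias term to dominate. Choosing $\alpha_t$ of the stated order $8\sigma\sqrt{d\log(1+tL^2/(d\lambda))\log(4t^2/\delta)}+4M\log(4t^2/\delta)$ makes the recursion self-consistent: on $A_{t-1}$ the cross term is at most $\alpha_t^2/2$ and the quadratic term at most $\alpha_t^2/2$, whence $\|S_t\|_{\Z_t^{-1}}\le\alpha_t$ and $A_t$ holds, and induction closes the loop for all $t\ge1$.

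The main obstacle is the self-bootstrapping step, the technical heart of the estimate (the same crux as in the proofs of Lemma~\ref{lem:I1} and Lemma~\ref{lem:I2}): one must track the $\GM$-measurability of the indicators $\1_{A_{j-1}}$, and — because the Freedman bound for the cross term depends on the random level $\max_j\alpha_j$ — one must run Freedman simultaneously over a dyadic grid of candidate levels so that a single high-probability event covers all cases, and then verify that the announced $\alpha_t$ is a valid fixed point of the resulting inequality. Everything else — the elliptical-potential bound, the two applications of variance-aware Freedman, and the final algebra solving $\alpha_t^2\lesssim \sigma^2\rho+M^2\log^2+\alpha_t(\sigma\sqrt{\rho\log}+M\log)$ with $\rho=d\log(1+tL^2/(d\lambda))$ — is routine.
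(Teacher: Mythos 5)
The paper does not prove this lemma at all --- it is imported verbatim as Lemma~F.4 of \citet{hu2022nearly} --- so the relevant comparison is with the paper's own proof of its heavy-tailed analogue (Theorem~\ref{thm:heavy} via Lemmas~\ref{lem:grad}, \ref{lem:I1} and~\ref{lem:I2}), and your proposal is exactly that argument specialized to the least-squares case: the Woodbury recursion splitting $\|S_t\|_{\Z_t^{-1}}^2$ into a cross term and a quadratic term, the elliptical-potential bound on $\sum_j\min\{1,w_j^2\}$, Freedman on each piece, and the self-bootstrapping indicator $\1_{A_{j-1}}$ to close the fixed-point inequality in $\alpha_t$. This is correct, and your observation that the bias term of Lemma~\ref{lem:I1} vanishes here (since $\EB[\eta_j\mid\GM_j]=0$ kills the conditional mean of the cross-term increment) is the right simplification. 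One small correction: the dyadic grid over ``candidate levels'' you worry about in the last paragraph is unnecessary, because $\alpha_t$ is a deterministic function of $(t,\sigma,M,d,L,\lambda,\delta)$, so $\max_{j\le t}\alpha_j=\alpha_t$ is deterministic and a single application of Freedman per term suffices --- the only dyadic peeling lives inside the variance-aware Freedman inequality (Lemma~\ref{lem:bern}) itself, over the variance level, and is already handled there. The remaining work is constant bookkeeping (e.g.\ verifying that the range of the quadratic-term increment is $\eta_j^2\min\{1,w_j^2\}=(\eta_j\min\{1,\|\x_j\|_{\Z_{j-1}^{-1}}\})^2\le M^2$ and that its conditional variance is at most $M^2\sigma^2\min\{1,w_j^2\}$, so that the stated coefficients $8$ and $4$ come out), which is routine.
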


\begin{lem}[Hoeffding inequality for self-normalized martingales, Theorem 1 in~\citep{abbasi2011improved}]
	\label{lem:self-hoff}
	Let $\{ \GM_t\}_{t \ge 0}$ be a filtration and $ \{\x_t, \eta_t \}_{t \ge 0}$ be a stochastic process so that $\x_t \in \RB^d$ is $\GM_t$-measurable and $\eta_t \in \RB$ is $\GM_{t+1}$-measurable..
	If $\|\x_t\| \le L$ and $\{\eta_t\}_{t \ge 1}$ satisfies that $\EB[\eta_t|\GM_t] = 0$ and $|\eta_t| \le M$ for all $ t \ge 1$.
	Then, for any $\delta \in (0, 1)$, with probability at least $1-\delta$, we have for all $t \ge 1$,
	\[
	\left\|  \sum_{j=1}^t \x_j \eta_j \right\|_{\Z_t^{-1}} \le 
	M \sqrt{d \log\left(1+\frac{tL^2}{d\lambda}\right) + \log \frac{1}{\delta}} 
	\]
	where $\Z_t = \lambda \I + \sum_{j=1}^t \x_j \x_j^\top$ for $t\ge 1$ and $\Z_0 = \lambda \I$.
\end{lem}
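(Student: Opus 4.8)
The plan is to derive the stated bound from the self-normalized tail inequality for conditionally sub-Gaussian noise of \citet{abbasi2011improved}, after first observing that a bounded martingale difference sequence is automatically sub-Gaussian. So the first step is to invoke Hoeffding's lemma: since $\EB[\eta_t \mid \GM_t] = 0$ and $|\eta_t| \le M$ almost surely, we have $\EB[e^{s\eta_t}\mid \GM_t] \le e^{s^2 M^2/2}$ for every $s \in \RB$, i.e.\ $\eta_t$ is conditionally $M$-sub-Gaussian with respect to $\{\GM_t\}_{t\ge 0}$. This places us exactly in the setting of Theorem~1 of \citet{abbasi2011improved}, applied with the predictable sequence $\x_t$ (which is $\GM_t$-measurable, while $\eta_t$ is $\GM_{t+1}$-measurable) and regularizer $\lambda\I$.

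The second step is to apply that theorem: with probability at least $1-\delta$, simultaneously for every $t \ge 1$,
\[
\left\| \sum_{j=1}^t \x_j \eta_j \right\|_{\Z_t^{-1}}^2 \le 2 M^2 \log\!\left( \frac{\det(\Z_t)^{1/2}\,\det(\lambda\I)^{-1/2}}{\delta} \right),
\qquad \Z_t = \lambda\I + \sum_{j=1}^t \x_j\x_j^\top .
\]
The genuinely non-trivial content here is the uniformity over all $t$, which comes from the method-of-mixtures supermartingale argument underlying that result; I would cite it rather than reprove it. It then remains to control the determinant ratio: since $\mathrm{tr}(\Z_t) = d\lambda + \sum_{j=1}^t \|\x_j\|^2 \le d\lambda + tL^2$, the AM--GM inequality applied to the eigenvalues of $\Z_t$ gives $\det(\Z_t) \le (\lambda + tL^2/d)^d$, hence $\det(\Z_t)^{1/2}\det(\lambda\I)^{-1/2} \le (1 + tL^2/(d\lambda))^{d/2}$. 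Substituting this and simplifying yields
\[
\left\| \sum_{j=1}^t \x_j \eta_j \right\|_{\Z_t^{-1}} \le M\sqrt{ d\log\!\left(1 + \frac{tL^2}{d\lambda}\right) + \log\frac{1}{\delta} },
\]
which is the claim. (The cited bound strictly produces $2\log(1/\delta)$ under the square root; since all downstream applications are stated up to constants and $\TOM(\cdot)$, this is harmless, or one may appeal to a sharper version of the self-normalized inequality.)

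I do not expect a substantive obstacle, as the lemma is essentially a repackaging of a known result; the only care needed is to align the indexing/measurability conventions ($\x_t \in \GM_t$ versus $\eta_t \in \GM_{t+1}$) with those of \citet{abbasi2011improved} and to preserve the uniform-in-$t$ guarantee rather than fixing $t$ in advance. If a fully self-contained proof were preferred, one could instead combine a union bound over $t$ with the Azuma--Hoeffding inequality and the elliptical potential lemma (Lemma~\ref{lem:w-sum}); but that route loses an extra $\log t$ factor and is therefore weaker than the stated bound, so the method-of-mixtures route is the one I would actually carry out.
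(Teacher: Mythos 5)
Your derivation is correct and coincides with what the paper does: the lemma is stated as a citation of Theorem~1 of \citet{abbasi2011improved} with no proof given, and your route (Hoeffding's lemma to establish conditional $M$-sub-Gaussianity, the method-of-mixtures self-normalized bound for uniformity in $t$, then the trace/AM--GM determinant bound $\det(\Z_t)\le(\lambda+tL^2/d)^d$) is exactly the standard way to land on the stated form. The factor-of-$2$ discrepancy on $\log(1/\delta)$ that you flag is a genuine (minor) imprecision in the paper's statement rather than a gap in your argument, and it is immaterial to every downstream use.
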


\subsection{Elliptical Lemmas}

\begin{lem}[Lemma 11 in~\citep{abbasi2011improved}]
	\label{lem:w-sum}
	Let $\{\x_t\}_{t \ge 1} \subset \RB^d$ and assume $\|\x_t\| \le L$ for all $t \ge 1$.
	Set $\Z_t = \sum_{s=1}^t \x_t \x_t^\top + \lambda \I$.
	Then it follows that
	\[
	\sum_{t=1}^T \min \left\{ 1,  \|\x_t\|^2_{\Z_{t-1}} \right\}
	\le 2d \log\left(\frac{d\lambda + T L^2}{d\lambda}\right).
	\]
\end{lem}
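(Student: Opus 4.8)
The plan is to use the classical determinant (potential-function) argument, tracking the growth of $\det(\Z_t)$; here the weighted norm $\|\x_t\|^2_{\Z_{t-1}}$ is understood in the usual elliptical-potential sense as $\x_t^\top \Z_{t-1}^{-1}\x_t$. First I would record the rank-one update identity: since $\Z_t = \Z_{t-1} + \x_t\x_t^\top$ and $\Z_{t-1}\succeq \lambda\I\succ 0$ is invertible, the matrix determinant lemma gives
\[
\det(\Z_t) = \det(\Z_{t-1})\bigl(1 + \x_t^\top \Z_{t-1}^{-1}\x_t\bigr).
\]
Telescoping this identity over $t = 1,\dots,T$ and using $\Z_0 = \lambda\I$ yields $\prod_{t=1}^T \bigl(1 + \|\x_t\|_{\Z_{t-1}^{-1}}^2\bigr) = \det(\Z_T)/\lambda^d$.

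Second, I would convert the truncated sum into a sum of logarithms via the elementary inequality $\min\{1,x\}\le 2\log(1+x)$, valid for all $x\ge 0$: on $[0,1]$ it follows from concavity of $t\mapsto\log(1+t)$ together with $\log 2 > 1/2$, and for $x>1$ both sides are controlled by $1\le 2\log 2$. Applying it with $x = \|\x_t\|_{\Z_{t-1}^{-1}}^2$ and invoking the telescoped product,
\[
\sum_{t=1}^T \min\bigl\{1,\|\x_t\|_{\Z_{t-1}^{-1}}^2\bigr\}
\le 2\sum_{t=1}^T \log\bigl(1+\|\x_t\|_{\Z_{t-1}^{-1}}^2\bigr)
= 2\log\frac{\det(\Z_T)}{\lambda^d}.
\]

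Third, I would bound $\det(\Z_T)$ from above by applying the AM--GM inequality to its nonnegative eigenvalues: $\det(\Z_T)\le\bigl(\mathrm{tr}(\Z_T)/d\bigr)^d$, and $\mathrm{tr}(\Z_T) = d\lambda + \sum_{t=1}^T\|\x_t\|^2 \le d\lambda + T L^2$ by the hypothesis $\|\x_t\|\le L$. Substituting gives
\[
\sum_{t=1}^T \min\bigl\{1,\|\x_t\|_{\Z_{t-1}^{-1}}^2\bigr\}
\le 2d\log\frac{\lambda + T L^2/d}{\lambda}
= 2d\log\frac{d\lambda + T L^2}{d\lambda},
\]
which is exactly the claimed bound.

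The statement is classical, so there is no genuine obstacle; the only steps needing a line of justification are the scalar inequality $\min\{1,x\}\le 2\log(1+x)$ and the determinant--trace estimate, both routine.
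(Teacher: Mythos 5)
Your proof is correct and is exactly the classical elliptical-potential argument behind Lemma 11 of \citet{abbasi2011improved}, which the paper cites without reproving: the rank-one determinant update, the scalar bound $\min\{1,x\}\le 2\log(1+x)$, and the trace--determinant (AM--GM) estimate are all applied correctly, and you rightly read the (typographical) $\|\x_t\|^2_{\Z_{t-1}}$ in the statement as $\x_t^\top\Z_{t-1}^{-1}\x_t$. No gaps.
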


\begin{lem}[Lemma 12 in~\citep{abbasi2011improved}]
	\label{lem:matrix-ratio}
	Suppose $\A, \B \in \RB^{d \times d}$  are two positive definite matrices satisfying that $\A \succeq \B$, then for any $\x \in \RB^d$,
	\[
	\|\x\|_{\B^{-1}} \le \|\x\|_{\A^{-1}} \sqrt{\frac{\mathrm{det}(\A)}{\mathrm{det}(\B)}}.
	\]
\end{lem}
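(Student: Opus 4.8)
The final statement is Lemma~\ref{lem:matrix-ratio}, which I would establish as follows.

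The plan is to reduce the claimed bound to an elementary fact about the eigenvalues of $\A^{-1}\B$. Fix a nonzero $\x\in\RB^d$ and substitute $\y=\A^{1/2}\x$; then $\|\x\|_{\A^{-1}}^2=\y^\top\y$ while $\|\x\|_{\B^{-1}}^2=\y^\top\A^{1/2}\B^{-1}\A^{1/2}\y$, so
\[
\frac{\|\x\|_{\B^{-1}}^2}{\|\x\|_{\A^{-1}}^2}\le\lambda_{\max}\!\big(\A^{1/2}\B^{-1}\A^{1/2}\big).
\]
First I would observe that $\A^{1/2}\B^{-1}\A^{1/2}$ is similar (conjugate by $\A^{-1/2}$) to $\A\B^{-1}$, whose eigenvalues are the reciprocals of the eigenvalues $\mu_1,\dots,\mu_d$ of $\A^{-1}\B$. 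Because $\A\succeq\B\succ 0$, conjugating by $\A^{-1/2}$ shows $\A^{-1}\B$ is similar to the symmetric matrix $\A^{-1/2}\B\A^{-1/2}\preceq\I$, so every $\mu_i\in(0,1]$ and $\lambda_{\max}(\A^{1/2}\B^{-1}\A^{1/2})=1/\mu_{\min}$.

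Next I would compare $1/\mu_{\min}$ with $\det(\A)/\det(\B)$. Since $\det(\A)/\det(\B)=\det(\A\B^{-1})=1/\prod_{i=1}^d\mu_i$ and every $\mu_i\le 1$, for the index $i^*$ attaining the minimum we have $\prod_{j\ne i^*}\mu_j\le 1$ and hence
\[
\frac{1}{\mu_{\min}}=\frac{\prod_{j\ne i^*}\mu_j}{\prod_{j=1}^d\mu_j}\le\frac{1}{\prod_{j=1}^d\mu_j}=\frac{\det(\A)}{\det(\B)}.
\]
Chaining the two displays gives $\|\x\|_{\B^{-1}}^2\le\big(\det(\A)/\det(\B)\big)\,\|\x\|_{\A^{-1}}^2$, and taking square roots yields the stated inequality (the case $\x=\0$ being trivial, as both sides vanish).

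Since the statement is classical (it is Lemma~12 of Abbasi-Yadkori, P\'al and Szepesv\'ari), there is no genuine obstacle; the only care needed is to keep the positive-definite ordering straight — in particular that $\A\succeq\B$ forces $\A^{-1/2}\B\A^{-1/2}\preceq\I$ rather than $\succeq\I$, which is exactly what makes each $\mu_i\le 1$ and turns the determinant ratio into an \emph{upper} bound. An equivalent route that sidesteps the similarity argument is to write $\B=\A^{1/2}(\I-\M)\A^{1/2}$ with $\0\preceq\M\prec\I$, diagonalize $\M=\U\,\mathrm{diag}(m_1,\dots,m_d)\,\U^\top$, and use $(1-m_i)^{-1}\le\prod_{j}(1-m_j)^{-1}=\det(\I-\M)^{-1}=\det(\A)/\det(\B)$ to bound $(\I-\M)^{-1}\preceq\big(\det(\A)/\det(\B)\big)\I$; applied to $\u=\A^{-1/2}\x$ this is the same computation in different clothing, and I would keep whichever version is shorter.
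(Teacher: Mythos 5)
Your proof is correct, and in fact the paper offers no proof of this lemma at all---it is simply quoted from \citet{abbasi2011improved}---so your eigenvalue argument (reduce to $\lambda_{\max}(\A^{1/2}\B^{-1}\A^{1/2})=1/\mu_{\min}$ and bound $1/\mu_{\min}$ by $1/\prod_i\mu_i=\det(\A)/\det(\B)$ using $\mu_i\le 1$) is the standard one and fills that gap. One small slip to fix: the substitution should be $\y=\A^{-1/2}\x$, not $\y=\A^{1/2}\x$, since with your stated choice one gets $\y^\top\y=\|\x\|_{\A}^2$ rather than $\|\x\|_{\A^{-1}}^2$; the two displayed identities that follow are already written for $\y=\A^{-1/2}\x$, so nothing else changes. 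Note also that you are (correctly) reading $\A\succeq\B$ in the standard sense $\A-\B\succeq 0$, which is how the lemma is actually invoked in the paper (e.g.\ in the proof of Lemma~\ref{lem:sub-gap}), even though the paper's notation paragraph states the ordering backwards.
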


\subsection{Function Class and Covering Number}
This subsection collects important lemmas in~\citep{he2022nearly}.
Let $\KM = \left\{ k_1, k_2, \cdots\right\}$ denote the set of episodes where the algorithm updates the value function in Algorithm~\ref{algo:linear}.
For a given total number of episodes $K$, it definitely follows that $|\KM| \le K$.
Furthermore, due to the mechanism of rare-switching value function updates, $|\KM|$ is actually much smaller than $K$.
\begin{lem}
	\label{lem:rare-update}
	\[
	|\KM| \le  dH \log_2 \left( 1 + \frac{K}{\lambda \sigma_{\min}^2 }\right).
	\]
\end{lem}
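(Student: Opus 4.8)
The bound on $|\KM|$ follows from a standard potential / determinant-doubling argument applied to the Hessian matrices $\H_{h,k}$. The plan is to track, for each stage $h\in[H]$ separately, how many times the switching criterion in Line~6 of Algorithm~\ref{algo:linear} can be triggered, and then sum over $h$. Recall that the criterion that forces an update at episode $k$ is the existence of some $h'\in[H]$ with $\det(\H_{h',k-1})\ge 2\det(\H_{h',k_{\mathrm{last}}-1})$; so an episode $k\in\KM$ witnesses a determinant doubling in at least one stage relative to the previous update. Assigning each such episode to a witnessing stage, it suffices to show that each fixed stage $h$ can contribute at most $d\log_2\!\left(1+\frac{K}{\lambda\sigma_{\min}^2}\right)$ updates, since then $|\KM|\le H$ times this quantity.

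\textbf{Key steps.} First I would fix $h$ and consider the sequence of matrices $\H_{h,k}$ for $k=0,1,\dots,K$. Since $\H_{h,k}=\H_{h,k-1}+\sigma_{h,k}^{-2}\Bphi_{h,k}\Bphi_{h,k}^\top\succeq\H_{h,k-1}$, the determinant $\det(\H_{h,k})$ is nondecreasing in $k$. Second, I would bound the total growth of the determinant: at $k=0$ we have $\H_{h,0}=\lambda\I$ so $\det(\H_{h,0})=\lambda^d$, and at the end, using $\|\Bphi_{h,k}\|\le 1$ and $\sigma_{h,k}\ge\sigma_{\min}$, we get $\H_{h,K}\preceq\lambda\I+\sum_{k=1}^K\sigma_{\min}^{-2}\Bphi_{h,k}\Bphi_{h,k}^\top$, whose trace is at most $d\lambda+K\sigma_{\min}^{-2}$, hence by AM--GM $\det(\H_{h,K})\le\left(\lambda+\frac{K}{d\sigma_{\min}^2}\right)^{d}$. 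Third, whenever the criterion is triggered by stage $h$ at an update episode, the determinant of $\H_{h,\cdot}$ has at least doubled since the last update episode; since the determinant only increases and can at most multiply by $\frac{\det(\H_{h,K})}{\det(\H_{h,0})}\le\left(1+\frac{K}{\lambda d\sigma_{\min}^2}\right)^{d}$ in total, the number of doublings attributable to stage $h$ is at most $\log_2\!\left[\left(1+\frac{K}{\lambda d\sigma_{\min}^2}\right)^{d}\right]=d\log_2\!\left(1+\frac{K}{\lambda d\sigma_{\min}^2}\right)\le d\log_2\!\left(1+\frac{K}{\lambda\sigma_{\min}^2}\right)$. Summing over $h\in[H]$ yields $|\KM|\le dH\log_2\!\left(1+\frac{K}{\lambda\sigma_{\min}^2}\right)$.

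\textbf{Main obstacle.} The argument is essentially routine; the only point requiring a little care is the bookkeeping in step three — an update episode may be triggered by several stages simultaneously, so one must assign it to a single witnessing stage (e.g., the smallest index $h'$ satisfying the doubling condition) to avoid double counting, and one must verify that between two consecutive update episodes assigned to the same stage $h$ the determinant $\det(\H_{h,\cdot})$ genuinely at least doubles (this is immediate from the criterion, since at the later of the two episodes the determinant has doubled relative to $k_{\mathrm{last}}$, which is at or before the earlier episode, and $\det$ is monotone in between). With this bookkeeping in place the counting is exact and the stated bound follows.
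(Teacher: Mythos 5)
Your proof is correct and uses essentially the same determinant-doubling potential argument as the paper; the only cosmetic difference is that you assign each update to a witnessing stage and count doublings per stage, whereas the paper multiplies the determinants over all $H$ stages into a single potential $\prod_{h}\det(\H_{h,\cdot})$ that doubles at every update. Both aggregations yield the identical bound, and your bookkeeping for consecutive updates assigned to the same stage is handled correctly via monotonicity of the determinant.
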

\begin{proof}[Proof of Lemma~\ref{lem:rare-update}]
	The proof is almost identical to Lemma E.1 in~\citep{he2022nearly} except that we maintain the dependence on $\sigma_{\min}$.
According to the determinant-based criterion, for each episode $k_i$, there exists a stage $h' \in [H]$ such that $\mathrm{det}(
\H_{h',k_i-1}) \ge 2 \mathrm{det}( \H_{h',k_{i-1}-1})$.
Since we always have $\H_{h,k_i-1} \succeq \H_{h,k_{i-1}-1}$ for all $h \in [H]$, it then follows that
\[
\prod_{h \in [H]} \mathrm{det}(\H_{h,k_i-1}) \ge 2 \prod_{h \in [H]} \mathrm{det}(\H_{h,k_{i-1}-1}).
\]
By induction, it follows that
\[
\prod_{h \in [H]} \mathrm{det}(\H_{h,k_{|\KM|}-1}) \ge 2^{|\KM|}  \prod_{h \in [H]} \mathrm{det}(\H_{h,k_{1}-1})
\ge 2^{|\KM|} \prod_{h \in [H]} \mathrm{det}(\lambda \I) = 2^{|\KM|} \lambda^{d H }
\]
On the other hand, due to $\H_{h,k_{|\KM|}-1} \preceq \H_{h, K}$ the determinant $\mathrm{det}(\H_{h,k_{|\KM|}-1}$ is upper bounded by
\[
\prod_{h \in [H]} \mathrm{det}(\H_{h,k_{|\KM|}-1}) 
\le \prod_{h \in [H]} \mathrm{det}(\H_{h,K})  \le \left( \lambda + \frac{K}{\sigma_{\min}^2} \right)^{dH}.
\]
Combining the last two inequalities, we have
\[
	|\KM| \le  dH \log_2 \left( 1 + \frac{K}{\lambda \sigma_{\min}^2 }\right).
\]
\end{proof}
The optimistic value function $\overoptV_h^k(\cdot) = \min_{k_i \le k} \max_a \overoptQ_h^{k_i}(\cdot, a)$ belong to the function class $\VM^{+}$
\begin{equation}
\label{eq:function-pos}
\VM^{+} = \left\{
f | 	f (\cdot) = \max_{a \in \AM}\min_{i \le |\KM|} \min\left\{ \w_i^\top \Bphi(\cdot, a) + \beta \|\Bphi(\cdot, a) \|_{\H_i^{-1}}, \HM
\right\}, \beta \in [0, B],  \|\w_i \| \le L, \H_i \succeq \lambda \I
\right\}.
\end{equation}
while the pessimistic value function $\overpesV_h^k(\cdot) = \max_{k_i \le k} \max_a \overpesQ_h^{k_i}(\cdot, a)$ belong to the function class $\VM^{-}$,
\begin{equation}
	\label{eq:function-pes}
	\VM^{-} = \left\{
	f | 	f (\cdot) = \max_{a \in \AM}\max_{i \le |\KM|} \max\left\{ \w_i^\top \Bphi(\cdot, a) - \beta \|\Bphi(\cdot, a) \|_{\H_i^{-1}}, \HM
	\right\}, \beta \in [0, B], \|\w_i \| \le L, \H_i \succeq \lambda \I
	\right\}.
\end{equation}
Here $B$ upper bounds $\beta$ and $L = W + \HM \sqrt{\frac{dK}{\lambda}}$ is a uniformly bound for $\est_{h, k-1} + \Bmu_{h,k-1} \BoveroptV_{h+1}^k$ because
\[
\|\est_{h, k-1} + \Bmu_{h,k-1} \BoveroptV_{h+1}^k\|
\le \|\est_{h, k-1} \| +\| \Bmu_{h,k-1} \BoveroptV_{h+1}^k\|
\le W + \HM \sqrt{\frac{dK}{\lambda}}
\]
where the last inequality uses the boundedness of $\Btheta_{h, k-1}$'s and the inequality $\|\Bmu_{h,k-1} \BoveroptV_{h+1}^k\| \le \HM \sqrt{\frac{dK}{\lambda}}$ (whose proof can be found in Lemma E.2 of~\citet{he2022nearly}).

\begin{lem}[Covering number of value functions]
	\label{lem:covering-1}
	Let $\VM^{\pm}$ denote the class of optimistic or pessimistic value functions with definition in~\eqref{eq:function-pos} and~\eqref{eq:function-pes} respectively.
	Assume $\|\Bphi(s, a)\| \le 1$ for all $(s, a)$ pairs, and let $\NM(\VM, \varepsilon)$ be the $\varepsilon$-covering number of $\VM$ with respective to the distance $\mathrm{dist}(f, f'):= \sup_{s \in \SM} |f(s)-f'(s)|$.
	Then,
	\[
	\log \NM(\VM^\pm, \varepsilon) 
	\le  \left[ d \log \left(1+ \frac{4L}{\varepsilon}\right) + d^2 \log \left( 1 +\frac{ 8 d^{1/2} B^2}{\lambda \varepsilon^2} \right)\right] \cdot |\KM|.
	\]
\end{lem}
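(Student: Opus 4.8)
The plan is to run the standard parameter-covering argument for linear value function classes, adapted to the rare-switching structure, following \citep{jin2020provably,he2022nearly}. First I would re-parametrize each $f \in \VM^{+}$ by replacing the pair $(\beta,\H_i)$ inside the $i$-th bonus term by the single matrix $\A_i := \beta^2 \H_i^{-1}$, so that $\beta \|\Bphi(\cdot,a)\|_{\H_i^{-1}} = \sqrt{\Bphi(\cdot,a)^\top \A_i \Bphi(\cdot,a)}$. Since $\beta \le B$ and $\H_i \succeq \lambda \I$ forces $\H_i^{-1} \preceq \lambda^{-1}\I$, every eigenvalue of $\A_i$ is at most $B^2/\lambda$, hence $\|\A_i\|_F \le \sqrt{d}\,B^2/\lambda$. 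Thus $f$ is determined (with harmless over-counting, since allowing each $\A_i$ to range independently only enlarges the class) by a tuple $(\w_1,\A_1,\dots,\w_{|\KM|},\A_{|\KM|})$ with $\|\w_i\| \le L$ and $\|\A_i\|_F \le \sqrt{d}\,B^2/\lambda$ for each $i \le |\KM|$; the same re-parametrization works verbatim for $\VM^{-}$.

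Next I would establish a Lipschitz bound for the parameter map into $(\VM^{\pm},\mathrm{dist})$. Since $\max$ (over $a$) and $\min$ or $\max$ (over $i$) are non-expansive in the sup norm, for two tuples one obtains, writing $\Bphi$ for $\Bphi(s,a)$,
\[
\mathrm{dist}(f,f') \le \sup_{s,a}\max_{i\le|\KM|}\left[\, |\langle \w_i-\w_i', \Bphi\rangle| + \left| \sqrt{\Bphi^\top\A_i\Bphi} - \sqrt{\Bphi^\top\A_i'\Bphi}\,\right| \right].
\]
Using $\|\Bphi(s,a)\| \le 1$, the Cauchy–Schwarz inequality gives $|\langle \w_i-\w_i', \Bphi\rangle| \le \|\w_i-\w_i'\|$, while $|\sqrt{x}-\sqrt{y}| \le \sqrt{|x-y|}$ together with $|\Bphi^\top(\A_i-\A_i')\Bphi| \le \|\A_i-\A_i'\|_F \|\Bphi\Bphi^\top\|_F \le \|\A_i-\A_i'\|_F$ bounds the bonus difference by $\|\A_i-\A_i'\|_F^{1/2}$. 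Hence $\mathrm{dist}(f,f') \le \max_{i}\big( \|\w_i-\w_i'\| + \|\A_i-\A_i'\|_F^{1/2}\big)$.

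It follows that an $\varepsilon$-net of $\VM^{\pm}$ is obtained by taking, for each of the $|\KM|$ coordinates independently, an $(\varepsilon/2)$-net $\NM_{\w}$ of the $\ell_2$-ball of radius $L$ in $\RB^{d}$ and an $(\varepsilon^2/4)$-net $\NM_{\A}$ of the Frobenius ball of radius $\sqrt{d}\,B^2/\lambda$ in $\RB^{d\times d}$: if $\|\w_i-\w_i'\| \le \varepsilon/2$ and $\|\A_i-\A_i'\|_F \le \varepsilon^2/4$ for all $i$, then $\mathrm{dist}(f,f') \le \varepsilon$. By the standard volumetric estimate $|\NM_{\w}| \le (1+4L/\varepsilon)^{d}$ and $|\NM_{\A}| \le (1+8\sqrt{d}\,B^2/(\lambda\varepsilon^2))^{d^2}$, so $\NM(\VM^{\pm},\varepsilon) \le (|\NM_{\w}|\cdot|\NM_{\A}|)^{|\KM|}$, and taking logarithms yields the claimed bound. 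The case $\VM^{-}$ needs no extra work: $\max$ over $i$ and over $a$ are still non-expansive, and the sign flip in the bonus is invisible to $|{-\sqrt{x}}+\sqrt{y}| = |\sqrt{x}-\sqrt{y}|$.

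The only step requiring genuine care — and the main, though mild, obstacle — is the Lipschitz estimate: one must verify that composing $\min_i$ (or $\max_i$), $\max_a$, the linear term, and the square-root bonus does not degrade the modulus of continuity, which is precisely where the non-expansiveness of $\min/\max$ and the sub-additive inequality $|\sqrt{x}-\sqrt{y}| \le \sqrt{|x-y|}$ are invoked; everything else is routine volume counting.
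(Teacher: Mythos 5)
Your proof is correct and is precisely the standard parameter-covering argument (reparametrize the bonus via $\A_i=\beta^2\H_i^{-1}$, exploit non-expansiveness of $\max_a$, $\min_i$/$\max_i$ and the truncation, use $|\sqrt{x}-\sqrt{y}|\le\sqrt{|x-y|}$, then count volumes) that underlies Lemmas E.6--E.7 of \citet{he2022nearly}, which is all the paper itself invokes here. Your writeup is a faithful self-contained reconstruction of that same route; no gaps.
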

\begin{proof}[Proof of Lemma~\ref{lem:covering-1}]
	The result about $\VM_{f}^{+}$ follows from Lemma E.6 in~\citep{he2022nearly}.
	The result about $\VM_{f}^{-}$ follows from Lemma E.7 in~\citep{he2022nearly}.
\end{proof}

\begin{lem}[Covering number of squared functions, Lemma E.8 in~\citep{he2022nearly}]
	\label{lem:covering-square}
	For the squared function class $[\VM^{+}]^2 :=  \left\{ f^2| f \in \VM^{+}\right\}$, let $\NM([\VM^{+}]^2, \varepsilon)$ be the $\varepsilon$-covering number of $[\VM^{+}]^2$ with respective to the distance $\mathrm{dist}(f, f'):= \sup_{s \in \SM} |f(s)-f'(s)|$.
	Then
		\[
	\log \NM([\VM^{+}]^2, \varepsilon) 
	\le  \left[ d \log \left(1+ \frac{8HL}{\varepsilon}\right) + d^2 \log \left( 1 +\frac{ 32d^{1/2}H^2 B^2}{\lambda \varepsilon^2} \right)\right] \cdot |\KM|.
	\]
\end{lem}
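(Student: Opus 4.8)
The plan is to bound the covering number of the squared class $[\VM^{+}]^2$ by that of $\VM^{+}$ itself, which is already controlled by Lemma~\ref{lem:covering-1}, via a Lipschitz estimate for the squaring map on a uniformly bounded function class. First I would record that every value function in the relevant class is uniformly bounded: although the raw parametric form in~\eqref{eq:function-pos} allows the linear parts $\w_i^\top\Bphi(\cdot,a)$ to be large in magnitude, the truncation $\min\{\cdot,\HM\}$ caps each candidate from above, and optimism (Lemma~\ref{lem:opt-pes}) places every optimistic value function $\overoptV_h^k$ in $[0,\HM]$; restricting $\VM^{+}$ to this $[0,\HM]$-valued subclass, which is all that is ever instantiated by the algorithm, we may assume $0\le f\le \HM$ for all $f\in\VM^{+}$.

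The key elementary step is the product identity $f^2-g^2=(f-g)(f+g)$, which for $f,g\in\VM^{+}$ yields $\sup_{s}|f^2(s)-g^2(s)|\le \big(\sup_s|f(s)-g(s)|\big)\big(\sup_s|f(s)+g(s)|\big)\le 2\HM\cdot\mathrm{dist}(f,g)$. Consequently, if $\mathcal{C}$ is a minimal $\tfrac{\varepsilon}{2\HM}$-cover of $\VM^{+}$ in the sup-distance $\mathrm{dist}(\cdot,\cdot)$, then $\{f^2:f\in\mathcal{C}\}$ is an $\varepsilon$-cover of $[\VM^{+}]^2$, so that $\NM([\VM^{+}]^2,\varepsilon)\le \NM\!\left(\VM^{+},\tfrac{\varepsilon}{2\HM}\right)$.

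It then remains to invoke Lemma~\ref{lem:covering-1} with $\varepsilon$ replaced by $\tfrac{\varepsilon}{2\HM}$ and to simplify the constants, using $\tfrac{4L}{\varepsilon/(2\HM)}=\tfrac{8\HM L}{\varepsilon}$ and $\tfrac{8d^{1/2}B^2}{\lambda(\varepsilon/(2\HM))^2}=\tfrac{32 d^{1/2}\HM^2B^2}{\lambda\varepsilon^2}$, which gives
\[
\log\NM([\VM^{+}]^2,\varepsilon)\le\left[d\log\!\left(1+\frac{8\HM L}{\varepsilon}\right)+d^2\log\!\left(1+\frac{32 d^{1/2}\HM^2 B^2}{\lambda\varepsilon^2}\right)\right]\cdot|\KM|,
\]
the asserted bound (with $\HM$ in place of $H$; the two coincide in the uniformly bounded reward regime $\HM\le H$ relevant for the comparison corollaries, and $|\KM|$ is further controlled by Lemma~\ref{lem:rare-update}). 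I do not expect a deep obstacle here: the only point demanding care is the reduction to the uniformly bounded subclass — one must not apply the product bound to the unbounded parametric representatives in~\eqref{eq:function-pos} — after which the argument is just the product rule together with bookkeeping of the substituted constants.
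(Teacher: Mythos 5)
Your proposal is correct and is essentially the paper's own argument: the paper cites Lemma E.8 of \citet{he2022nearly} and, in the proof of Lemma~\ref{lem:CI-Value}, explicitly uses the same fact that an $\tfrac{\varepsilon}{2\HM}$-cover of $\VM^{+}$ yields an $\varepsilon$-cover of $[\VM^{+}]^2$ via $f^2-g^2=(f-g)(f+g)$ and the uniform bound $0\le f\le\HM$, then substitutes into Lemma~\ref{lem:covering-1}. Your two side remarks are also apt: the reduction to the $[0,\HM]$-valued subclass is the one point needing care given that \eqref{eq:function-pos} only truncates from above, and the $H$ versus $\HM$ appearing in the stated constants is a harmless bookkeeping inconsistency inherited from the cited lemma.
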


\end{appendix}

\end{document}